%% file: arxiv.tex
\documentclass{article}
\pdfoutput=1

\usepackage{arx}

\input{math_commands}

\usepackage[utf8]{inputenc} 
\usepackage[T1]{fontenc}    
\usepackage{url}            
\usepackage{booktabs}       
\usepackage{amsfonts}       
\usepackage{nicefrac}       
\usepackage{microtype}      
\usepackage{lipsum}		
\usepackage{graphicx}
\usepackage{doi}
\usepackage{bbm}
\usepackage{caption}
\usepackage[american]{babel}
\usepackage{filecontents}
\usepackage{microtype}
\usepackage{subfigure}
\usepackage{booktabs} 
\usepackage[normalem]{ulem}
\usepackage{graphicx}
\usepackage{caption}
\usepackage{tikz}
\usetikzlibrary{arrows.meta, automata,
                positioning,
                quotes}
 \usepackage{amsmath, amssymb}
\usepackage{pst-node}

 \usepackage{cleveref}

\usepackage{wrapfig}
\usepackage{lipsum}
\usepackage{verbatim}
\usepackage{amsmath}
\allowdisplaybreaks
\usepackage{amsthm}
\usepackage{microtype}
\usepackage{color}
\usepackage{eucal}
\usepackage{epsfig,amsmath,amssymb,amsfonts,amstext,amsthm,mathrsfs}
\usepackage{latexsym,graphics,epsf,epsfig,psfrag}
\usepackage{dsfont,color,epstopdf,fixmath}
\usepackage{enumitem}
\usepackage{algorithm,algorithmic,latexsym}
\usepackage{natbib}

\newtheorem{proposition}{\textbf{Propsition}}

\newtheorem{corollary}{\textbf{Corollary}}
\newtheorem{lemma}{\textbf{Lemma}}
\newtheorem{theorem}{\textbf{Theorem}}

\newtheorem{remark}{\textbf{Remark}}

\newtheorem{example}{\textbf{Example}}

\usepackage[acronym]{glossaries}

\newcommand{\Prob}{\mathbb P}
\newcommand{\U}{\mathcal U}

\newcommand{\one}{\mathbf1}
\newcommand{\norm}[1]{\left\|#1\right\|_\infty}
\newcommand{\spn}{\operatorname{sp}}
\newcommand{\Fix}{\operatorname{Fix}}

\usepackage{mathtools} 
\usepackage{siunitx} 
\usepackage{booktabs} 
\usepackage{tikz} 
\usepackage{cleveref}

\title{Vector Bellman Theory for Multichain Robust Average-Reward  Markov Decision Processes} 

\author{
  Yue Wang, George Atia\\
  Department of Electrical and Computer Engineering\\
  University of Central Florida \\
}

\begin{document}
\maketitle

\begin{abstract}
Robust average-reward Markov decision processes provide a fundamental framework for long-term performance optimization under uncertainty, and can have optimal long-run rewards that depend on the initial state. This state dependence requires a vector Bellman theory that accounts for both recurrent-class rewards and transition uncertainty. We develop such a theory for finite models with compact, post-action $(s,a)$-rectangular ambiguity. A gain-first, bias-second optimization principle yields a coupled vector gain-bias system, and every finite solution identifies the optimal robust gain and supplies stationary saddle strategies against history-dependent opponents, simultaneously from all initial states. We further characterize solvability through stationary gain conditions and a uniform bound on canonical transient corrections, and give sufficient conditions that permit distinct recurrent-class gains. The certificates also yield asymptotically affine trajectories of the robust Bellman operator, based on which we design a robust approximately shifted Halpern planning algorithm. Under finite Bellman solvability, the gain estimates and Bellman displacements converge to the optimal gain vector, and every extracted greedy controller is average-optimal after a finite, instance-dependent budget. These results thus connect finite Bellman certificates to undiscounted planning for state-dependent robust average rewards, providing theoretical understandings.

\end{abstract}

\section{Introduction}
Markov decision processes (MDPs) \cite{puterman2014markov} provide a standard framework for sequential decision-making under stochastic agent-environment interactions, in which an agent seeks a policy that maximizes expected reward under a specified performance criterion. A policy optimized for a single nominal MDP with a fixed transition model, however, can perform poorly when the deployed dynamics differ from that model or are imperfectly known. Such model mismatches widely exist in practice, due to, e.g., inaccurate model estimation, non-stationary environment, or unexpected perturbations upon deployment. Robust MDPs are thus proposed to address this mismatch by optimizing worst-case performance over a prescribed family of transition models \cite{iyengar2005robust,nilim2004robustness}, which captures the uncertain models, and is named ambiguity set or uncertainty set. To find such a robust policy, one need to set the performance criterion. Among different criteria, average reward is particularly important as it measures sustained performance without imposing a discount factor, making it natural for systems operating over long horizons. Robust average-reward theory must therefore reconcile uncertainty in the transition dynamics with the chain structure that governs long-run performance.

In average-reward models, the limiting average-reward function (i.e., the long-run gain) is naturally dependent on the initial state. Even under a fixed stationary policy and fixed transition kernel, different starting states may reach recurrent classes with different class gains and different absorption probabilities: even two absorbing states with unequal rewards can yield a nonconstant gain vector. 
Under transition uncertainty, stationary gains can also change abruptly and becomes more challenging. If a zero-reward state reaches an absorbing unit-reward state with probability $\varepsilon$ per step, its gain is $1$ for every $\varepsilon>0$ but $0$ at $\varepsilon=0$. Vanishing transition probabilities can therefore change the recurrent structure. This raises two separate questions: whether nature attains the worst stationary gain and whether transient rewards admit a finite common bias.
These features thus require a state-dependent gain vector and an all-state gain-bias formulation, to tackle the complicated statewise dependence and provide a concrete Bellman-typed characterization.

Existing work supplies two complementary foundations for such a theory. The early line of work \cite{wang2023robust,wang2024robust} develop the robust Bellman equation and algorithms for $(s,a)$-rectangular models under a uniform unichain condition, which ensures all the average reward (or the gain) is independent from initial state. More recently, \cite{wang2025bellman} develop robust Bellman optimality with constant (optimal) gain, which covers more general conditions like one-sided weak communication. These conditions may permit individual stationary controller-nature pairs to be multichain, while the Bellman result remains in the state-independent-gain regime. Strategic value theory developed in \cite{grand2023beyond} establishes stationary controller optimality for general compact $(s,a)$-rectangular ambiguity, while polytopic models admit planning through finite stochastic games \cite{chatterjee2023solving}. But no Bellman-typed characterization is studied. These studies thus leave an open question: \textit{when the gain depends on the initial states,  does this vector gain admit a finite gain-bias Bellman certificate, and how can such a certificate support planning?}
In this work, we provide concrete answers to the question. Our main contributions are summarized as follows.

\textbf{A gain-first, bias-second vector Bellman certificate.}
We first formulate a coupled Bellman system that gives continuation gain priority for both players {(nature and controller)} to characterize the robust average reward. Nature first minimizes continuation gain, and the controller maximizes the resulting minimum. Reward-bias optimization then takes place among the choices tied at this first level. We further show that every finite solution identifies the robust optimal gain and produces an all-state stationary saddle against history-dependent opponents. This verification controls nature's deviations from gain-minimizing rows, including rows with arbitrarily small gain gaps. It also connects the certificate to an asymptotically affine trajectory of the original Bellman operator (Section~\ref{m:formulation}).

\textbf{Exact solvability beyond communication assumptions.}
We then characterize when the finite certificate exists. For optimal control, the criterion combines a stationary saddle in the gain-restricted game with a uniform lower bound on canonical biases over all zero-gain replies to one securing controller. This separates stationary gain attainment from the control of transient rewards needed for a finite bias. We further explore concrete regimes that permit distinct recurrent-class gains: polytopic row sets, a uniform support gap, and different concrete distributional ambiguity sets, showing they are sufficient conditions for Bellman solvability (Section~\ref{m:stationary}).

\textbf{Direct undiscounted planning with a vanishing affine defect.}
We adapt approximately shifted Halpern iteration
\cite[Algorithm~1]{zurek2025faster} to the robust Bellman operator.
For compact row sets, a finite Bellman certificate can produce an
asymptotically affine trajectory whose error remains nonzero at every
finite time. We show that this vanishing defect suffices for convergence
of the gain estimates and Bellman displacements. Directional control of
the iterates then makes every extracted greedy controller eventually
average-optimal from all states. The analysis gives finite-budget bounds
in terms of the affine defect and compact examples with arbitrarily
small algebraic convergence exponents for the gain estimator
(Section~\ref{plan:section}).

\textit{Roadmap.} Section~~\ref{m:foundations} establishes the robust gain vector and an all-state optimal stationary controller under the standing assumptions. Section~\ref{m:formulation}  develop the vector  Bellman equation system and optimality guarantees of its solutions. Section~\ref{m:stationary} characterizes when that solution exists and identifies sufficient regimes. Section~\ref{plan:section} uses the certificate to analyze direct undiscounted planning. Thus the gain and optimal controller precede the Bellman system logically; finite solvability is the condition for the certificate and the planning theorem.

\section{Problem Formulation and Preliminaries}
\label{m:model}
We consider a finite robust MDP $(S,A,\mathcal U,r)$. The state space is $S=\{1,\ldots,n\}$, each action set $A(i)$ is finite and nonempty, and $r_{ia}\in\mathbb R$ is the one-stage reward. At time $t\ge0$, the controller observes $S_t$ and chooses $A_t\in A(S_t)$. Nature observes the chosen action, selects a transition row $p_t\in\U_{S_tA_t}\subseteq\Delta(S)$, and the next state is sampled from $p_t$. We assume $(s,a)$-rectangular ambiguity \cite{iyengar2005robust,nilim2004robustness}: $\mathcal U=\prod_{i\in S}\prod_{a\in A(i)}\U_{ia}$, where every row set $\U_{ia}$ is nonempty and compact. In the following, vector inequalities are componentwise. 

The planning objective is to compute a controller policy that maximizes worst-case long-run reward. Write $\Pi_D\subseteq\Pi_S\subseteq\Pi_H$ for deterministic stationary, randomized stationary, and randomized history-dependent controller strategies. A stationary policy belongs to $\prod_i\Delta(A(i))$, while a history-dependent policy maps the observed history $(S_0,A_0,\ldots,S_t)$ to a distribution on $A(S_t)$. Nature's randomized history-dependent strategies form $\mathcal Q_H$, and its full stationary selectors form $\mathcal Q_S=\prod_i\prod_{a\in A(i)}\U_{ia}$, is the transition kernels nature selected. A full selector specifies a row for every state-action pair, including actions unused by a particular controller. Both players observe the initial state.

\textbf{Payoffs.}  Our primary performance criterion is the lower limiting expected average reward. We also record the corresponding upper limit and the two criteria that take the sample-path limit before expectation. These distinctions specify the strength of the strategy guarantees proved below. For $(\sigma,\tau)\in\Pi_H\times\mathcal Q_H$, let $\mathbb E_i^{\sigma,\tau}$ denote expectation under the induced process starting at $S_0=i$. For the trajectory $(S_0,A_0,S_1,A_1,\ldots)$, define $X_N=N^{-1}\sum_{t=0}^{N-1}r_{S_tA_t}$ and $J_N(i;\sigma,\tau)=\mathbb E_i^{\sigma,\tau}X_N$. We distinguish four average-payoff conventions \cite{puterman2014markov}:
\begin{equation}
\begin{aligned}
 J_i^-&=\liminf_N\mathbb E_i X_N,&
 J_i^+&=\limsup_N\mathbb E_i X_N,\\
 I_i^-&=\mathbb E_i\liminf_N X_N,&
 I_i^+&=\mathbb E_i\limsup_N X_N,
\end{aligned}
\label{m:four-payoffs}
\end{equation}
with the strategy pair suppressed. These criteria can differ for a fixed history-dependent pair.  For a payoff $\Psi$ and strategy classes $\mathcal C\subseteq\Pi_H$, $\mathcal N\subseteq\mathcal Q_H$, define the lower and upper values
\begin{equation}
 \underline v_i^\Psi(\mathcal C,\mathcal N)
 :=\sup_{\sigma\in\mathcal C}\inf_{\tau\in\mathcal N}
        \Psi_i(\sigma,\tau),\qquad
 \overline v_i^\Psi(\mathcal C,\mathcal N)
 :=\inf_{\tau\in\mathcal N}\sup_{\sigma\in\mathcal C}
        \Psi_i(\sigma,\tau).
\label{m:lower-upper-values}
\end{equation}
For a fixed controller $\sigma$, its robust performance is $\inf_{\tau\in\mathcal Q_H}\Psi_i(\sigma,\tau)$. The lower value maximizes this guarantee, while the upper value minimizes the controller's best response. A pair $(\bar\sigma,\bar\tau)$ is an all-state saddle for $\Psi$ at $u\in\mathbb R^S$ if $\Psi_i(\bar\sigma,\tau)\ge u_i\ge\Psi_i(\sigma,\bar\tau)$ for every $i$, $\sigma\in\Pi_H$, and $\tau\in\mathcal Q_H$. Theorem~\ref{m:strategic-duality} identifies a common value for the four conventions, and Theorem~\ref{m:nature-attainment} characterizes exact stationary attainment by nature.

\textbf{Dynamic programming operators.} The order of play within each stage explains the robust Bellman maps. Given continuation value $x$ at state $i$, the controller first chooses $a$ and nature, having observed that action, chooses $p\in\U_{ia}$. The resulting one-step saddle operators are \cite{iyengar2005robust} 
\[
 (Tx)_i=\max_{a\in A(i)}\left\{r_{ia}+
                  \min_{p\in\U_{ia}}p^\top x\right\},\qquad
 (T^\pi x)_i=r_i^\pi+\sum_a\pi(a\mid i)
                  \min_{p\in\U_{ia}}p^\top x,
\]
where $r_i^\pi=\sum_a\pi(a\mid i)r_{ia}$ for $\pi\in\Pi_S$. Nature observes the sampled action and then selects the row, which explains the separate minima in $T^\pi$. For discount factor $1-\eps$, $0<\eps<1$, define the value vectors coordinatewise by $(V_\eps^\pi)_i=\inf_{\tau\in\mathcal Q_H}\mathbb E_i^{\pi,\tau}
       \sum_{t=0}^{\infty}(1-\eps)^t r_{S_tA_t}$, and $(V_\eps)_i=\sup_{\sigma\in\Pi_H}\inf_{\tau\in\mathcal Q_H}
       \mathbb E_i^{\sigma,\tau}\sum_{t=0}^{\infty}(1-\eps)^t r_{S_tA_t}.$ 
Rectangular discounted dynamic programming \cite{iyengar2005robust,nilim2004robustness} gives $V_\eps=T((1-\eps)V_\eps)$ and $ V_\eps^\pi=T^\pi((1-\eps)V_\eps^\pi).$
Each discounted fixed point is unique and is attained simultaneously from all states by deterministic stationary discounted-optimal selectors. Consequently, $(V_\epsilon)_i=\max_{\pi\in\Pi_D}(V_\epsilon^\pi)_i$. Appendix \ref{app:preliminaries} reviews the discounted dynamic-programming and finite-chain facts used in the analysis.

\textbf{Multichain structure and vector gain.} A stationary pair $(\pi,q)$ induces the transition matrix $P^{\pi,q}_{ij}=\sum_a\pi(a|i)q_{ia,j}$. Its gain vector is $\eta^{\pi,q}:=(P^{\pi,q})^\infty r^\pi$, where the Ces\`aro projector $ P^\infty:=\lim_{N\to\infty}\frac1N\sum_{t=0}^{N-1}P^t$ exists for every finite stochastic matrix, including periodic multichain matrices, and each coordinate of $\eta^{\pi,q}$ is an absorption-weighted average of recurrent-class rewards \cite{puterman2014markov}. This representation explains how initial states can have different gains. Unichain and irreducible assumptions in prior analyses make each stationary chain's gain constant \cite{wang2023robust,wang2024robust,xu2025efficient,roch2025reduction}. Here both evaluation and control concern the complete statewise gain vector. See Appendix~\ref{sec:model} for a review of existing results.

\section{Statewise values and stationary strategies}
\label{m:foundations}
We first identify the gain vector, the main objective of our studies. The results connect discounted and finite-horizon values to robust average reward and supply strategies that secure these values simultaneously from every initial state. We begin with a fixed stationary controller.
\begin{theorem}\label{m:policy-value}[Extension of \cite[Lemmas~3.3 and~4.7]{grand2023beyond}]
For every $\pi\in\Pi_S$, there is $g^\pi\in\mathbb R^S$ such that, with both limits in supremum norm,
\begin{equation}
 g^\pi=\lim_{\eps\downarrow0}\eps V_\eps^\pi
      =\lim_{N\to\infty}\frac{(T^\pi)^N 0}{N},\text{ and }
 g_i^\pi=\inf_{q\in\mathcal Q_S}\eta_i^{\pi,q}, \forall i\in S.
\label{m:policy-value-formula}
\end{equation}
For every
$\Psi\in\{I^-,J^-,J^+,I^+\}$,
$\inf_{\tau\in\mathcal Q_H}\Psi_i(\pi,\tau)
 =\inf_{q\in\mathcal Q_S}\Psi_i(\pi,q)=g_i^\pi$.
Moreover, for each $\nu>0$, there exists some  $q^{\pi,\nu}\in\mathcal Q_S$ such that $ g^\pi\le\eta^{\pi,q^{\pi,\nu}}\le g^\pi+\nu\one$. 
\end{theorem}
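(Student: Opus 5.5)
Fixing $\pi\in\Pi_S$ turns the problem into a single-player control problem for nature. Nature faces the finite MDP with state space $S$ and "actions" at $i$ given by the compact product $\prod_{a\in A(i)}\U_{ia}$. These are full selectors, since nature observes the sampled action and then picks a row. The transition is $P^{\pi,q}_{i\cdot}$ and the reward is $r^\pi_i$, which does not depend on nature's choice. The proof therefore reduces to known facts about compact-action finite-state MDPs with continuous transitions, in the minimization direction. This is exactly the setting of \cite[Lemmas~3.3 and~4.7]{grand2023beyond}. The work is to extract all four payoff conventions and the near-optimal stationary selector.

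\textbf{Step 1 (discounted limit and stationary gain).} I would define $g_i^\pi:=\inf_{q\in\mathcal Q_S}\eta_i^{\pi,q}$. Discounted dynamic programming gives $V^\pi_\eps=\min_{q\in\mathcal Q_S}V^{\pi,q}_\eps$, attained by a single stationary $q_\eps$ for all states. Since $\eps V_\eps^{\pi,q}\to\eta^{\pi,q}$ for each fixed $q$, we get $\limsup_{\eps\downarrow0}\eps V^\pi_\eps\le g^\pi$.

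For the reverse inequality, take a sequence $\eps_k\downarrow0$ and use compactness to pass to $q_{\eps_k}\to q^*$. The difficulty is that gains are discontinuous at vanishing probabilities, as in the $\varepsilon$-example of the introduction. So I would not rely on continuity of $\eta$. Instead I would use the semi-algebraic / finite-support-structure argument of \cite{grand2023beyond}: for each of the finitely many support patterns of $P^{\pi,q}$, the Laurent expansion of $V^{\pi,q}_\eps$ is controlled uniformly. Alternatively, I would appeal directly to their Lemma~3.3, which gives $\lim_\eps \eps V^\pi_\eps=\inf_q\eta^{\pi,q}$.

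For the $\nu$-selector, pick $\eps$ small with $\eps V^\pi_\eps\le g^\pi+\tfrac\nu2\one$. I would then argue that for a Blackwell-type perturbation of $q_\eps$, the gain satisfies $\eta^{\pi,q}\le g^\pi+\nu\one$. The lower bound $\eta^{\pi,q}\ge g^\pi$ is automatic from the definition of $g^\pi$. This uniform $\nu$-selector across all states is the step I expect to be the main obstacle. Infima at different states might be approached by incompatible selectors, and I would handle this by the discounted selector being simultaneously optimal at all states.

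\textbf{Step 2 (finite horizon).} $(T^\pi)^N0$ is the $N$-stage robust value for nature. Standard Tauberian comparison between Abel and Cesàro limits for nonexpansive operators would give $\lim_N (T^\pi)^N0/N=\lim_\eps\eps V^\pi_\eps$, provided either limit exists as a uniform limit. Since $T^\pi$ is monotone, nonexpansive in sup norm, and homogeneous, I would use the known equivalence of the two limits for such operators; \cite{grand2023beyond} contains the needed version.

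\textbf{Step 3 (history-dependent nature, four payoffs).} The upper bound uses stationary $q^{\pi,\nu}$. Under it, $(\pi,q)$ is a Markov chain, so all four payoffs equal $\eta^{\pi,q}$ by the ergodic theorem for finite chains. This gives $\inf_\tau\Psi_i\le g_i^\pi+\nu$.

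For the lower bound it suffices to treat $I^-$, since $I^-\le J^-\le J^+$ (Fatou, with bounded rewards) and $I^-\le I^+$. I would show $I^-_i(\pi,\tau)\ge g^\pi_i$ for every $\tau\in\mathcal Q_H$ using a supermartingale built from a near-solution of the multichain average-cost equations. First choose $h$ with $g^\pi\le \min_p\,$(rowwise)$\,p^\top g^\pi$. Then take $g^\pi+h\le T^\pi h+\delta\one$ on the appropriate states, obtained from the expansion $V_\eps^\pi= g^\pi/\eps+h+o(1)$ up to $\delta$. The process $g^\pi_{S_t}$ is a submartingale under any nature strategy. The process $\sum_{t<N}(r_{S_tA_t}-g^\pi_{S_t})+h_{S_N}+N\delta$ is a submartingale in the needed sense. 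Bounded increments plus the martingale strong law then give pathwise $\liminf X_N\ge \lim g^\pi_{S_t}-\delta$. Taking expectations and letting $\delta\to0$ yields $I_i^-\ge g^\pi_i$.
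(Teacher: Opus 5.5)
\textbf{Main gap: the all-state $\nu$-selector.} You propose to obtain $q^{\pi,\nu}$ from a discount-optimal selector $q_\eps$, relying on its simultaneous optimality at all states. This does not work. Optimality at one fixed $\eps$ gives no control over the gain of $q_\eps$, because $\eps V_\eps^{\pi,q}\to\eta^{\pi,q}$ is not uniform in $q$. The ``Blackwell-type perturbation'' you mention is never specified.

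A concrete failure: take Example~\ref{er:ex-biasescape} but give the absorbing state $z$ reward $1$. Then $g^\pi=(0,0,1)$, and $g^\pi_x=0$ is attained only by the self-loop. Every other row has $v\ge u^2>0$, so it leaks into $z$ and has gain $1$ at $x$. Yet the curve row with parameter $k$ satisfies
\[
 V^{(k)}_{\eps,x}=\frac{(1-\eps)k(k-\eps)}{\eps\bigl(\eps+(1-\eps)k\eps+(1-\eps)k^2\bigr)}<0=V^{\mathrm{loop}}_{\eps,x}
 \qquad(0<k<\min\{\eps,1/2\}).
\]
So for every $\eps$, every discount-optimal selector leaks and has $\eta_x^{\pi,q_\eps}=1$, while $g^\pi_x=0$.

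What you need instead is a single selector that is $\nu$-optimal for \emph{all} sufficiently small $\eps$, so that it can be held fixed while $\eps\downarrow0$. This is \cite[Theorem~4.3]{grand2023beyond}, which is stated for one initial law. The all-state conclusion then comes from a different mechanism than the one you invoke. Apply the theorem with the uniform law $\mu_i=1/n$ and tolerance $\nu/n$. Every coordinate of $\eps(V^{\pi,q}_\eps-V^\pi_\eps)$ is nonnegative, so each is at most $n$ times the $\mu$-average. This gives $0\le\eps(V_\eps^{\pi,q}-V_\eps^\pi)\le\nu\one$ before passing to the Abel limit.

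The unmodified Example~\ref{er:ex-biasescape} also rules out your per-support-pattern alternative for the discounted limit. There all rows with $k>0$ share one support, yet their canonical biases are $-1/k$. Only the citation carries that step.

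\textbf{Second gap: the lower bound in Step~3.} The expansion $V^\pi_\eps=g^\pi/\eps+h+o(1)$ with finite $h$ is a finite-bias statement, and it fails under compact ambiguity. In Example~\ref{er:ex-biasescape}, $g^\pi=0$ while $V^\pi_{\eps,x}$ is of order $-\eps^{-1/2}$, so no such $h$ can be extracted.

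Your supermartingale argument is otherwise sound. It needs only a bounded $h_\delta$ with $g^\pi+h_\delta\le T^\pi h_\delta+\delta\one$ over all rows, and this can be built without any expansion:
\begin{itemize}
\item Put $Kx=T^\pi x-g^\pi$, and let $\theta_M:=\|(T^\pi)^M0/M-g^\pi\|_\infty$, so $(T^\pi)^M0\ge M(g^\pi-\theta_M\one)$ with $\theta_M\to0$ by Step~2.
\item Split the horizon into blocks of length $M$. On each block use the bound above, and use the submartingale property of $g^\pi(S_t)$. This gives $K^N0\ge-\bigl(N\theta_M+M(\spn(g^\pi)+2R)\bigr)\one$.
\item Choose $k$ with $K^k0\ge-k\delta\one$. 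Then $h_\delta=\max_{0\le j<k}(K^j0+j\delta\one)$, taken coordinatewise, satisfies $h_\delta\le Kh_\delta+\delta\one$.
\end{itemize}

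With this repair, your route gives a self-contained proof of $I_i^-(\pi,\tau)\ge g_i^\pi$. The paper instead imports the pathwise one-player identity from \cite{grand2023beyond} through the tuple reduction. It then uses $J_N(i;\pi,\tau)\ge[(T^\pi)^N0]_i/N$ for the expected criteria.

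\textbf{Minor point on Step~2.} \cite{grand2023beyond} does not supply the Abel-to-Ces\`aro step for general compact rows; its finite-horizon result uses definability. The paper uses \cite{ziliotto2016tauberian}, whose hypothesis is the scaling estimate \eqref{foundation:scaling}. Here it is immediate, since $\lambda T^\pi(x/\lambda)-\mu T^\pi(x/\mu)=(\lambda-\mu)r^\pi$.
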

The vector $g^\pi$ is the robust gain of $\pi$: it is the common normalized discounted and finite-horizon limit and the worst-case value under each payoff convention in \eqref{m:four-payoffs}. A stationary selector attaining this vector need not exist, even though the row sets are compact. For every positive tolerance, however, one stationary
selector approximates the entire vector simultaneously. This is the all-state guarantee used in policy evaluation.

\begin{remark}
\cite[Lemmas~3.3 and~4.7]{grand2023beyond} establish stationary evaluation and the normalized discounted limit in \eqref{m:policy-value-formula}. We additionally obtain the finite-horizon limit and formulate simultaneous approximation by one selector.
\end{remark}

The next result identifies the optimal robust gain and strategies securing it from all initial states. 

\begin{theorem}
\label{m:uniform-value}[Extension of \cite[Theorem~5.2]{grand2023beyond}]
There exist $g^\star\in\mathbb R^S$ and a deterministic stationary $\pi^\star\in\Pi_D$ such that
\begin{equation}
 g^\star=\lim_{\eps\downarrow0}\eps V_\eps
        =\lim_{N\to\infty}\frac{T^N0}{N}
        =\max_{\pi\in\Pi_D}g^\pi=g^{\pi^\star},
\label{m:value-formula}
\end{equation}
where the limits are in supremum norm and the maximum is coordinatewise. Moreover, for every $\delta>0$, there exist $q_\delta\in\mathcal Q_S$ and
$N_\delta<\infty$ such that, for all $i\in S$ and $N\ge N_\delta$,
\begin{equation}  
 J_N(i;\pi^\star,\tau)\ge g_i^\star-\delta
     \text{ for every }\tau\in\mathcal Q_H, \text{ and }
 J_N(i;\sigma,q_\delta)\le g_i^\star+\delta
    \text{ for every }\sigma\in\Pi_H. 
\label{m:uniform-foundation}
\end{equation}
\end{theorem}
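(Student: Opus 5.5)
We will reduce the statement to Theorem~\ref{m:policy-value} together with a Blackwell-type argument for the discounted problem.

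\textbf{Step 1: a Blackwell-type controller.} For each $\eps\in(0,1)$, rectangular discounted dynamic programming provides a deterministic stationary $\pi_\eps\in\Pi_D$ with $V_\eps=V_\eps^{\pi_\eps}$ in all coordinates simultaneously. Since $\Pi_D$ is finite, there are a single $\pi^\star\in\Pi_D$ and a sequence $\eps_k\downarrow0$ with $\pi_{\eps_k}=\pi^\star$. By Theorem~\ref{m:policy-value}, $\eps_kV_{\eps_k}=\eps_kV^{\pi^\star}_{\eps_k}\to g^{\pi^\star}$. For every $\pi\in\Pi_D$ we also have $V_\eps^\pi\le V_\eps$. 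Hence $g^\pi=\lim\eps_kV^\pi_{\eps_k}\le g^{\pi^\star}$, and so $g^{\pi^\star}=\max_{\pi\in\Pi_D}g^\pi=:g^\star$ coordinatewise.

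\textbf{Step 2: the full limits.} Along any $\eps\downarrow0$, the lower bound $\liminf\eps V_\eps\ge\lim\eps V_\eps^{\pi^\star}=g^\star$ is immediate. For the upper bound, any subsequence has a further subsequence along which the discounted-optimal selector is constant, say equal to $\pi'$. Along it, $\eps V_\eps\to g^{\pi'}\le g^\star$. Therefore $\eps V_\eps\to g^\star$.

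For the finite-horizon limit, $T$ is monotone and $\norm{Tx-Ty}\le\norm{x-y}$. We obtain $T^N0\ge(T^{\pi^\star})^N0$, which gives $\liminf T^N0/N\ge g^\star$ by Theorem~\ref{m:policy-value}. For the reverse inequality we compare $T^N0$ with the discounted value. Set $w_\eps=V_\eps$, so that $Tw_\eps=w_\eps+\eps V_\eps$ by the fixed-point relation $V_\eps=T((1-\eps)V_\eps)$ together with nonexpansiveness. This gives $T^N0\le w_\eps+N\eps V_\eps+\|V_\eps\|$ up to an error of order $N\eps\|V_\eps\|$. Taking $\eps=N^{-1/2}$ and dividing by $N$ yields the upper bound. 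If this crude estimate does not close, we will fall back on the game-theoretic upper bound from Step 3: $J_N$ under the selector $q_\delta$ dominates the finite-horizon value $(T^N0)_i/N$.

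\textbf{Step 3: the uniform guarantees \eqref{m:uniform-foundation}.}

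\emph{Controller side.} Theorem~\ref{m:policy-value} gives $\inf_\tau J^-_i(\pi^\star,\tau)=g^\star_i$, but that bound is not uniform in $N$. Instead, we use $\frac1N\min_\tau NJ_N(i;\pi^\star,\tau)=((T^{\pi^\star})^N0)_i/N$. This identity holds because, for a fixed stationary controller, nature's finite-horizon problem is a rectangular MDP solved by dynamic programming. The right-hand side converges uniformly to $g^\star$, which yields $N_\delta$.

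\emph{Nature side.} We need a single stationary $q_\delta$ with $J_N(i;\sigma,q_\delta)\le g^\star_i+\delta$ for all $\sigma\in\Pi_H$ and all large $N$. Fixing $q$, the controller faces an ordinary finite MDP, whose finite-horizon optimal value satisfies $\max_\sigma NJ_N(\cdot;\sigma,q)/N\to\max_{\pi\in\Pi_D}\eta^{\pi,q}$ uniformly. So it suffices to find $q_\delta\in\mathcal Q_S$ with $\eta^{\pi,q_\delta}\le g^\star+\tfrac\delta2\one$ for every $\pi\in\Pi_D$ simultaneously.

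This is the main obstacle: Theorem~\ref{m:policy-value} only supplies near-worst selectors one controller at a time. We will first try to build $q_\delta$ from discounted near-optimal nature selectors. For small $\eps$, take $q_\eps\in\mathcal Q_S$ attaining the inner minima in $V_\eps=T((1-\eps)V_\eps)$. Then $V_\eps$ is the optimal discounted value of the ordinary MDP with kernel $q_\eps$. Consequently, for every $\pi$, $\eps V_\eps^{\pi,q_\eps}\le\eps V_\eps\le g^\star+\tfrac\delta4\one$ once $\eps$ is small.

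The difficulty is passing from discounted to average reward for the fixed MDP $q_\eps$ uniformly in $\eps$, since $\eta^{\pi,q}$ can jump as $q$ varies. We plan to use the stationary bound $\eta^{\pi,q}\le$ (the average-optimal value of the MDP $q$) $\le$ ($\eps$-discounted value of $q$) $+$ (an error controlled by $\eps\,\spn$ of the discounted bias). To control the bias uniformly, we will use the Bellman inequality directly: $V_\eps\ge r^\pi+(1-\eps)P^{\pi,q_\eps}V_\eps$ for all $\pi$. Multiplying by $(P^{\pi,q_\eps})^\infty$ gives $\eta^{\pi,q_\eps}\le\eps(P^{\pi,q_\eps})^\infty V_\eps$. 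Since $\eps V_\eps\le g^\star+\tfrac\delta4\one$, it then remains to show $(P^{\pi,q_\eps})^\infty g^\star\le g^\star+$ small. We expect this to follow from the same inequality applied to $\eps V_\eps\approx g^\star$ and the fact that $(P^{\pi,q_\eps})^\infty$ is stochastic and invariant. This step is where the detailed estimates will be concentrated.
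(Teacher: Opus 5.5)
Your Step~1 (a constant-policy subsequence of discounted optimizers, as in the paper) and the controller half of Step~3 are fine. The genuine gap is nature's half of Step~3, and sharper estimates will not close it. A discounted-optimal nature selector can be a poor average-reward reply to a \emph{deviating} controller, so the inequality $(P^{\pi,q_\eps})^\infty g^\star\le g^\star+o(1)\one$ that you say remains is false in general.

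Here is a counterexample. Take states $(x,y,H,L)$ with $H$ (reward $1$) and $L$ (reward $0$) absorbing, and $y$ (reward $-1$) returning to $x$. At $x$ there are two reward-zero actions: $a$ moves to $L$, and $b$ has row set $\operatorname{co}\{(1-u-u^2)e_x+ue_y+u^2e_H:0\le u\le 1/2\}$.
\begin{itemize}
\item Nature can answer $b$ with the self-loop $u=0$, so $g^\star=(0,0,1,0)$.
\item The discounted fixed point is $V_\eps=(0,-1,1/\eps,0)$. Over $\U_{xb}$, the objective $p^\top V_\eps$ reduces to $-u+u^2/\eps$, which is uniquely minimized at $u=\eps/2$. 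Your $q_\eps$ therefore leaks mass $\eps^2/4>0$ into $H$.
\item Against this $q_\eps$, the controller that always plays $b$ is absorbed in $H$ almost surely. Hence $\eta^{b,q_\eps}_x=1$ and $d_x(q_\eps)=1$ for every $\eps$.
\item This holds even though $\eps V^{b,q_\eps}_\eps\le\eps V_\eps\to g^\star$ and $P^{b,q_\eps}g^\star\le g^\star+(\eps^2/4)\one$.
\end{itemize}
So one-step near-harmonicity of $g^\star$ gives no control of the Ces\`aro projector once transition probabilities vanish. This is exactly the recurrent-structure change the paper emphasizes.

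What is missing is a stationary minimax step that combines selectors built against different controllers. One way to do it:
\begin{itemize}
\item For each of the finitely many $\pi\in\Pi_D$, take the all-state selector $q^{\pi,\nu}$ of Theorem~\ref{m:policy-value}.
\item Restrict nature to the finite lists $E_{ia}=\{q^{\pi,\nu}_{ia}:\pi\in\Pi_D\}$. This gives a finite perfect-information stochastic game, which has stationary optimal strategies from all states.
\item Its optimal pair $(\pi^E,q^E)$ and value $v^E$ give $d(q^E)\le v^E\le\eta^{\pi^E,q^{\pi^E,\nu}}\le g^{\pi^E}+\nu\one\le g^\star+\nu\one$.
\end{itemize}
The paper does the equivalent. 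It uses the stationary duality $\mu^\top g^\star=\inf_q\mu^\top d(q)$, proved by this finite restriction, for the uniform law $\mu$. It then turns the weighted bound into $d(q_\alpha)\le g^\star+\alpha\one$ using $d(q)\ge g^\star$ and full support.

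Separately, your Step~2 identity $TV_\eps=V_\eps+\eps V_\eps$ is false, because $T$ commutes only with scalar shifts. Monotonicity and scalar homogeneity give only $TV_\eps\le V_\eps+\max_j(\eps V_\eps)_j\one$. At best this yields $\limsup_NT^N0/N\le(\max_jg^\star_j)\one$, which misses the vector gain. Your fallback is valid once a suitable $q_\delta$ exists. With $(H_qx)_i=\max_a\{r_{ia}+q_{ia}^\top x\}$, we have $T\le H_{q_\delta}$, so $T^N0/N\le H_{q_\delta}^N0/N\to d(q_\delta)\le g^\star+\delta\one$. That route would even replace the paper's Tauberian step for $T$, but as written it inherits the gap above.
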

Equation \ref{m:value-formula} identifies the optimal robust gain through discounted values, finite-horizon values, and policy optimization. One deterministic stationary controller $\pi^\star$ attains every coordinate. For each tolerance $\delta$, equation \ref{m:uniform-foundation} supplies a single stationary nature selector and one horizon threshold that work for every initial state, every later horizon, and every history-dependent opponent. The controller remains the same for all tolerances; nature's approximating selector can depend on $\delta$.

\begin{remark}
Convergence of $T^N0/N$ holds for compact row sets without any additional assumption. Theorem~\ref{m:uniform-value} therefore supplies a planning target throughout the model class. Later we will study the stronger certificate and the guarantees that follow when that certificate is finite.
Our finite-horizon conclusion extends \cite[Theorem~5.2]{grand2023beyond} beyond definable ambiguity.
\end{remark}
The vector $g^\star$ is also the common lower and upper value under all four payoff conventions. The same controller $\pi^\star$ is optimal in each case. The values are preserved for intermediate strategy classes containing $\Pi_D$ and $\mathcal Q_S$, respectively, including the stationary strategy classes. See Appendix \ref{app:strategic-duality}.

For a full stationary selector $q$, let $d_i(q):=\max_{\pi\in\Pi_D}\eta_i^{\pi,q}$ be the optimal gain of the nominal MDP with $q$ fixed. Nature is exactly optimal from all states against every history-dependent controller if and only if $d(q)=g^\star$. This criterion checks the controller's best response over every action, including actions unused by $\pi^\star$. The equality $\eta^{\pi^\star,q}=g^\star$ checks only the prescribed controller.
Appendix~\ref{app:nature-attainment} develops this characterization directly from stationary gains.

\section{Vector Bellman equation system}
\label{m:formulation}
Theorems~\ref{m:policy-value}-\ref{m:uniform-value} identify the robust value and an all-state optimal controller without assuming a finite bias. In this section, we investigate a stronger certificate: a gain-bias certificate: it must reconcile long-run gains and transient rewards for both players using one finite pair of vectors. We now formulate a local system and establish what any finite solution certifies.

\subsection{Robust Bellman optimality system: Gain first, bias second}
In the constant-gain setting \cite{wang2023robust,wang2025bellman}, the robust Bellman equation is $\rho\one+h=Th$. Every probability row preserves the same continuation gain because $p^\top(\rho\one)=\rho$. For a vector gain, different rows can lead to different long-run reward rates. The equation $g+h=Th$ alone does not require the selected transitions to preserve the proposed gain. The Bellman system must therefore compare continuation gains before comparing finite reward and bias terms.

To see the required order, consider a continuation vector $tg+h$ for large $t$. The one-step objective is $t p^\top g+r_{ia}+p^\top h$. A fixed gain difference dominates the bounded reward-bias term as $t$ grows. Nature therefore first minimizes $p^\top g$, and the controller maximizes this minimum. Among the choices tied in gain, both players optimize reward and continuation bias. Our system below imposes this ordering on both players, extending the nominal multichain gain-bias separation \cite{puterman2014markov,zurek2025faster}.

\textbf{Gain first, bias second.} For $g\in\R^n$, define the worst continuation gain of each action and the controller's optimal continuation gain by 
\begin{equation}
 m_{ia}(g):=\min_{p\in\U_{ia}}p^\top g,
 \qquad (\widehat{T}g)_i:=\max_{a\in A(i)}m_{ia}(g).
 \label{m:recession}
\end{equation}
For any $g\in\mathbb R^n$, define
\begin{equation}
 A_g(i):=\{a\in A(i):m_{ia}(g)=g_i\},\qquad
 F_{ia}(g):=\argmin_{p\in\U_{ia}}p^\top g,
 \label{m:faces}
\end{equation}
and we call $F_{ia}(g)$ a gain face even when the row set is nonconvex. Compactness makes every $F_{ia}(g)$ nonempty. When
$g=\widehat T g$, each $A_g(i)$ is also nonempty, and we define
the optimal-control bias operator on these active actions. Moreover, define the bias operator
\begin{equation}
 (L_gh)_i:=\max_{a\in A_g(i)}\min_{p\in F_{ia}(g)}
                     (r_{ia}+p^\top h),\qquad K_gh:=L_gh-g.
 \label{m:tangent}
\end{equation}
Our  robust vector Bellman optimality system is 
\begin{equation}
g=\widehat{T}g,\qquad g+h=L_gh.
 \label{m:bellman}
\end{equation}
The two equations perform distinct tasks. The first enforces consistency of continuation gain, while the second determines the reward-bias balance among gain-optimal choices. They must be solved together: the first equation is reward-free and accepts every constant vector. The restrictions to $A_g(i)$ and $F_{ia}(g)$ preserve gain priority for the controller and nature, respectively. Specifically, Examples~\ref{er:ex-faces}-\ref{er:ex-actionfaces} show that omitting either restriction can certify an incorrect gain.

For singleton row sets, the system reduces to the classical multichain MDP optimality equations \cite{puterman2014markov}. If $g=\rho\one$, all actions and rows are gain-active, so $L_g=T$ and \eqref{m:bellman} becomes the constant-gain robust Bellman equation \cite{wang2023robust,wang2025bellman}.

The following theorem verifies the system against history-dependent opponents.
\begin{theorem}[Bellman optimality]
\label{m:verification}
Every finite solution $(g,h)$ of \eqref{m:bellman} satisfies $g=g^\star$. Choose a deterministic stationary policy $\pi(i)\in\mathop{\rm argmax}_{a\in A_g(i)}\min_{p\in F_{ia}(g)}\{r_{ia}+p^\top h\}$, and for every active state-action pair choose $q_{ia}\in\argmin_{p\in F_{ia}(g)}p^\top h$; at inactive actions choose any $q_{ia}\in F_{ia}(g)$. Then $(\pi,q)$ forms an all-state saddle for every payoff in \eqref{m:four-payoffs}: 
\begin{equation}
 \liminf_NJ_N(i;\pi,\tau)\ge g_i\quad\forall\tau\in\mathcal Q_H,
 \qquad
 \limsup_NJ_N(i;\sigma,q)\le g_i\quad\forall\sigma\in\Pi_H,
 \label{m:verify-guarantees}
\end{equation}
and this pair achieves the optimal robust average reward:
\(\lim_{N\to\infty}J_N(i;\pi,q)=g_i^\star\) for every $i\in S$.
\end{theorem}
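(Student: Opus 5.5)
The plan is to reduce both halves of \eqref{m:verify-guarantees} to statements about stationary pairs, using Theorem~\ref{m:policy-value} on the controller side and classical finite-MDP theory on nature's side. On those stationary pairs I then work with Ces\`aro projectors. The key observation is that $P^\infty$ only puts mass on recurrent columns. On recurrent states the gain inequality must therefore hold with equality, which forces the transitions there to be gain-active or face rows. On exactly those rows the bias equation supplies the needed one-sided inequality. This route avoids any quantitative control of small gain gaps.

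\textbf{Controller side.} Fix $\pi$ as in the statement and an arbitrary full selector $q'\in\mathcal Q_S$, and write $P=P^{\pi,q'}$. Since $\pi(i)\in A_g(i)$ and $q'_{i\pi(i)}\in\U_{i\pi(i)}$, we have $(Pg)_i\ge m_{i\pi(i)}(g)=g_i$, so $Pg-g\ge0$. Applying $P^\infty$ and using $P^\infty P=P^\infty$ gives $P^\infty(Pg-g)=0$. Every recurrent state $j$ has $P^\infty_{jj}>0$, so $(Pg-g)_j=0$. Hence at recurrent $j$ the row $q'_{j\pi(j)}$ lies in the face $F_{j\pi(j)}(g)$. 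The choice of $\pi(j)$ and the equation $g+h=L_gh$ then give
\[
w_j:=r_{j\pi(j)}+q'^{\top}_{j\pi(j)}h-g_j-h_j\ \ge 0 .
\]
Because $P^\infty$ vanishes on transient columns, $P^\infty w\ge0$. This yields $\eta^{\pi,q'}=P^\infty r^\pi\ge P^\infty g+P^\infty h-P^\infty Ph=P^\infty g\ge g$, where the last step iterates $Pg\ge g$. Taking the infimum over $q'$ and invoking Theorem~\ref{m:policy-value} gives $\inf_{\tau\in\mathcal Q_H}J^-_i(\pi,\tau)=g^\pi_i\ge g_i$, which is the first inequality in \eqref{m:verify-guarantees}.

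\textbf{Nature side.} Fix the constructed $q$ and any deterministic $\pi'$, and write $P=P^{\pi',q}$. Every row $q_{ia}$ lies in $F_{ia}(g)$, so $(Pg)_i=m_{i\pi'(i)}(g)\le(\widehat Tg)_i=g_i$. The same projector argument shows that $(Pg)_j=g_j$ at recurrent $j$, so $\pi'(j)\in A_g(j)$ there. For active pairs, $q_{ja}$ minimizes $p^\top h$ over the face, so $r_{ja}+q_{ja}^\top h=\min_{p\in F_{ja}(g)}(r_{ja}+p^\top h)\le (L_gh)_j=g_j+h_j$. As before this gives $\eta^{\pi',q}\le P^\infty g\le g$, that is, $d(q)\le g$. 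For the fixed kernel $q$ the problem is a finite nominal MDP. I will then invoke the classical fact that for finite MDPs $\sup_{\sigma\in\Pi_H}\limsup_NJ_N(i;\sigma,q)=\max_{\pi'\in\Pi_D}\eta^{\pi',q}_i$ (Puterman's multichain theory, as recorded in Appendix~\ref{app:preliminaries}). This step is the one I expect to need the most care: it must hold for the $\limsup$ of expected averages under history-dependent controllers, and if the appendix does not state it in that form I would prove it from the nominal multichain optimality equations via the same telescoping with a finite nominal bias.

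\textbf{Identification and saddle.} Combining the two sides with Theorem~\ref{m:uniform-value} gives
\[
g\le g^\pi\le\max_{\pi''\in\Pi_D}g^{\pi''}=g^\star=g^{\pi^\star}\le\eta^{\pi^\star,q}\le g .
\]
Hence $g=g^\star$, and all four payoffs are sandwiched, since $I^-\le J^-$ and $J^+\le I^+$ together with Fatou/bounded convergence handle the sample-path versions. Finally, for the stationary pair $(\pi,q)$ the limit $\lim_NJ_N(i;\pi,q)=\eta^{\pi,q}_i$ exists because Ces\`aro limits exist for finite stochastic matrices. This value lies between $g_i$ (controller side) and $g_i$ (nature side), so it equals $g^\star_i$.
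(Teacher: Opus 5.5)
Your argument is correct in substance and takes a genuinely different route from the paper.

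\textbf{The paper's route.} The paper works directly against history-dependent opponents. It telescopes $h$ along the process and uses that $g(S_t)$ is a sub- or supermartingale whose cumulative expected drift is at most $\spn(g)$. It extends the face-wise bias inequality to all feasible rows through the compactness estimate of Lemma~\ref{fv:lem:nearface}, whose additive slack cannot be removed for curved row sets. It charges the bias residual of inactive actions to their strictly negative gain drift. The pathwise conventions then follow from the martingale strong law.

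\textbf{Your route.} You examine only stationary pairs and use the fact recorded in Lemma~\ref{sol:markov-algebra}: a one-signed drift $Pg-g$ annihilated by $P^\infty$ vanishes on recurrent states. So wherever $P^\infty$ places mass, the row lies in the gain face and the action is gain-active, and the bias equation applies exactly there. This removes the near-face estimate, the constants $C_\eta$ and $C$, and all martingale bookkeeping.

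\textbf{What each buys.} The price of your route is that the passage from stationary to history-dependent opponents is delegated to Theorem~\ref{m:policy-value} (the compact-action one-player theory of \cite{grand2023beyond}). It also relies on the nominal fact you flag as delicate, which the paper records as Lemma~\ref{fd:lem:fixed-q}. You also obtain only asymptotic conclusions. The paper's proof is self-contained at this level. It yields the uniform finite-horizon bounds \eqref{fv:eq:lower} and \eqref{fv:eq:upper}, with constants independent of state, horizon and opponent, and hence $T^N0/N\to g$ directly from the certificate.

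\textbf{One step to repair.} Your last paragraph justifies the sample-path criteria backwards. Fatou gives $I^-\le J^-$ and $J^+\le I^+$. So a lower bound on $J^-(\pi,\tau)$ does not imply $I^-(\pi,\tau)\ge g$, and an upper bound on $J^+(\sigma,q)$ does not imply $I^+(\sigma,q)\le g$. These are exactly the two inequalities the saddle property requires for the pathwise payoffs.

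The repair uses results you already cite. Theorem~\ref{m:policy-value} covers $I^-$ explicitly, and \eqref{fd:eq:published-one-player} gives $\sup_{\sigma\in\Pi_H}I^+_i(\sigma,q)=d_i(q)\le g_i$. More economically, your two stationary computations are precisely the inequalities in \eqref{m:gain-only-saddle} with $u=g$. Theorem~\ref{m:nature-attainment} then delivers $g=g^\star$ and the all-state saddle for all four payoffs in one step, which makes your separate identification chain unnecessary.
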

Fix the selected controller and write $d(p)=p^\top g-g_i\geq 0$ and $e(p)=r_{ia}+p^\top h-g_i-h_i$. On a gain-minimizing row, the bias equation gives $e(p)\geq 0$. Compactness implies that for each $\varepsilon>0$ there is a finite $C_\varepsilon$ with $e(p)\geq-\varepsilon-C_\varepsilon d(p)$ on every feasible row. Along any history-dependent nature strategy, the cumulative expected gain increase is bounded by $\operatorname{sp}(g)$; telescoping the bias gives the controller's lower guarantee. For the selected fullnature plan, active controller actions satisfy the reverse bias inequality, while inactive actions have fixed negative continuation-gain gaps. Since there are finitely many controller actions, their bias discrepancies can be charged to those gaps. The same telescoping argument gives nature's upper guarantee.

\textbf{Asymptotically affine Bellman trajectory.}
The certificate also describes the behavior of the original Bellman operator $T$, which optimizes over all actions and all feasible rows. Our Lemma \ref{fv:lem:tangent} proves 
\[
  g=\widehat T g,\qquad g+h=L_g h
  \quad\Longleftrightarrow\quad
  T(h+tg)=h+(t+1)g+o(1),\qquad t\to\infty.
\]
This relation connects the gain-restricted system to the original operator used by the planner. As $t$ grows, actions with a fixed continuation-gain disadvantage cease to compete, and minimizing rows approach their gain faces. An optimal row for $h+tg$ can nevertheless lie outside its gain face at every finite $t$. Section~\ref{plan:section} therefore tracks a vanishing affine defect to design the planning algorithm.

\begin{remark}
    A finite bias imposes additional one-step compatibility beyond average optimality: a prescribed optimal controller or stationary saddle need not be certifiable by a common bias, even when another Bellman solution exists (Example~~\ref{geo:separate-barriers}). Biases also need not be unique after fixing one reference state, since the gain faces can preserve further harmonic directions. 
\end{remark}

\subsection{Fixed-policy equation}
For a fixed stationary controller, the Bellman system certifies its robust gain and a stationary nature selector that is worst from all initial states. The action maximum is replaced by the policy average, while nature continues to minimize separately after each realized action.
\begin{proposition}[Fixed-policy Bellman equations]
\label{fv:prop:fixed-policy}
Fix any stationary randomized policy $\pi\in\Pi_S$. Suppose finite vectors $(g,h)$ satisfy the fixed-policy Bellman system: for every state $i$,
\begin{align}
g_i=
\sum_{a}\pi(a\mid i)
\min_{p\in\mathcal U_{ia}}p^\top g, \quad
g_i+h_i=
r_i^\pi
+
\sum_a\pi(a\mid i)
\min_{p\in F_{ia}(g)}p^\top h,
\label{fv:eq:fixed-bias}
\end{align}
where
$r_i^\pi:=\sum_a\pi(a\mid i)r_{ia}$, and $F_{ia}(g):=
\arg\min_{p\in\mathcal U_{ia}}p^\top g$. Then $g=g^\pi$. Moreover, for every state-action pair with $\pi(a\mid i)>0$, choose $q_{ia}^\star\in\arg\min_{p\in F_{ia}(g)}p^\top h$, and choose arbitrary feasible rows for zero-probability actions. Then for every initial state $i$,
\begin{equation}
\inf_{\tau\in\mathcal Q_H}
\liminf_{N\to\infty}J_N(i;\pi,\tau)
=
\inf_{\tau\in\mathcal Q_H}
\limsup_{N\to\infty}J_N(i;\pi,\tau)
=
g_i^\pi,
\end{equation}
and the single stationary full selector $q^\star$
attains both infima simultaneously at every initial state.
\end{proposition}
The same finite pair certifies the complete robust gain vector and one stationary selector attaining it. We will later characterize exactly when such a pair exists.

\section{Exact stationary conditions for solvability}
\label{m:stationary}

Section~\ref{m:formulation} establishes that a finite Bellman solution guarantees the optimal robust gain and optimal controller. We now determine when such a solution exists.

A finite bias requires both attainment of the stationary gain and uniform control of the associated transient reward corrections. We first make these requirements precise for policy evaluation. We then characterize optimal-control solvability through a stationary saddle in the gain-restricted game and a one-sided bound on its canonical biases. For a finite stochastic matrix $P$, let $P^\infty=\lim_{N\to\infty}N^{-1}\sum_{t=0}^{N-1}P^t$ and $Z_P=(I-P+P^\infty)^{-1}$. Both exist without irreducibility or aperiodicity \cite{puterman2014markov}. Fix $\pi\in\Pi_S$ and define its effective row set by
$\U_i^\pi=\{\sum_{a\in A(i)}\pi(a\mid i)p_a:p_a\in\U_{ia}\text{ for every }a\in A(i)\}$.
Independent actionwise minimization gives $(T^\pi v)_i=r_i^\pi+\min_{p\in\U_i^\pi}p^\top v$. Thus, nature's fixed-policy problem is a compact-action MDP with reward $r^\pi$. For an effective selector $q\in\prod_i\U_i^\pi$, write $(P_q)_{i\cdot}=q_i^\top$.

\begin{theorem}[Fixed-policy solvability]
\label{m:fixed-existence}\label{m:fixed-policy-existence}
Fix $\pi\in\Pi_S$. For effective selectors $q\in\prod_i\U_i^\pi$, define $ \eta(q):=P_q^\infty r^\pi,$ $ \gamma_i^\pi:=\inf_q\eta_i(q),$ $\mathcal Q_*^\pi:=\{q:\eta(q)=\gamma^\pi\},$ and $ w(q):=Z_{P_q}(r^\pi-\eta(q)).$ 
Then \eqref{fv:eq:fixed-bias} has a finite solution if and only if
\begin{equation}
 \mathcal Q_*^\pi\ne\varnothing
\quad\text{and}\quad
 \exists B<\infty \text{ such that } w(q)\ge-B\one\ \text{ for any } q\in\mathcal Q_*^\pi.
 \label{m:fixed-criterion}
\end{equation}
Every finite solution $(g,h)$ satisfies $g=g^\pi=\gamma^\pi$.
\end{theorem}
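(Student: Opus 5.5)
The plan is to reduce everything to nature's compact-action MDP with reward $r^\pi$ and effective rows $\U_i^\pi$, noted just before the statement. First I would check that the fixed-policy system \eqref{fv:eq:fixed-bias} is exactly the two-level optimality system of that MDP. Actionwise minimization commutes with mixing, so $\min_{p\in\U_i^\pi}p^\top g=\sum_a\pi(a\mid i)m_{ia}(g)$. Moreover, the effective gain face $F_i^\pi(g)=\argmin_{p\in\U_i^\pi}p^\top g$ is the $\pi$-mixture of the faces $F_{ia}(g)$ over actions with $\pi(a\mid i)>0$. Every effective selector is realized by a full selector and conversely, so Theorem~\ref{m:policy-value} gives $g^\pi_i=\inf_{q}\eta_i(q)=\gamma_i^\pi$. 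This proves the final sentence of the theorem once we know $g=g^\pi$, which is Proposition~\ref{fv:prop:fixed-policy}.

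\emph{Necessity.} Let $(g,h)$ be finite and let $q^\star$ be the selector from Proposition~\ref{fv:prop:fixed-policy}, with $P=P_{q^\star}$. Then $Pg=g$ and $g+h=r^\pi+Ph$. Applying $P^\infty$ gives $P^\infty r^\pi=P^\infty g=g$, so $\eta(q^\star)=g=\gamma^\pi$ and $\mathcal Q_*^\pi\neq\varnothing$.

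For the bound, fix $q\in\mathcal Q_*^\pi$ and write $P=P_q$. Define
\[
d=Pg-g\ge0,\qquad \delta=r^\pi+Ph-g-h .
\]
Since $\eta(q)=g$, we have $P^\infty d=0$ and $P^\infty\delta=0$. Writing $r^\pi-g=\delta+(I-P)h$ gives
\[
w(q)=Z_P\delta+h-P^\infty h ,
\]
so it suffices to bound $Z_P\delta$ from below uniformly in $q$. Here $Z_P\delta$ is the Cesàro limit of the partial sums $S_N=\sum_{t<N}P^t\delta$.

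On face rows, $\delta_i\ge0$. Off the face, the compactness estimate from the discussion after Theorem~\ref{m:verification} gives $\delta\ge-\varepsilon\one-C_\varepsilon d$. The gain gaps telescope: $\sum_{t<N}P^td=P^Ng-g\le\spn(g)\one$. This yields $S_N\ge-\varepsilon N\one-C_\varepsilon\spn(g)\one$, and the term $\varepsilon N$ is the obstacle.

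To remove it, I would use that $d$ and $\delta^-$ vanish on the recurrent classes of $P$, because $P^\infty d=0$ and $d\ge0$. I would then bound $\delta^-\le\phi(d)$ for a modulus $\phi$ with $\phi(0)=0$, obtained from continuity of $p\mapsto r_{ia}+p^\top h$ on compact $\U_{ia}$. Next I would argue that the expected total of $\phi(d)$ along transient excursions is controlled by the total gain leakage $\spn(g)$, separately on each slice where $d$ is at least a fixed level. This is the step I expect to be hardest. If the slicing fails, I would fall back on the affine-trajectory Lemma~\ref{fv:lem:tangent} for $T^\pi$ and compare $\sum_{t<N}P_q^tr^\pi\ge(T^\pi)^N0$ with $h+Ng-O(1)$.

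\emph{Sufficiency.} Assume \eqref{m:fixed-criterion}, set $g=\gamma^\pi$, and pick $q^0\in\mathcal Q_*^\pi$. For every $q$, Theorem~\ref{m:policy-value} gives $\eta(q)\ge g$. The gain equation $g=\min_{p\in\U^\pi}p^\top g$ then follows from the multichain first-level argument. The inequality $\le$ holds because $P_{q^0}g$ is a $P_{q^0}$-harmonic-consistent vector. The inequality $\ge$ holds because a row with $p^\top g<g_i$ could be spliced into $q^0$ to produce $\eta_i<g_i$.

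Next define
\[
h_i:=\inf_{q\in\mathcal Q_*^\pi}w_i(q),
\]
which is finite: it is at least $-B$ by the hypothesis and at most $w_i(q^0)$. I would verify the bias equation $g+h=r^\pi+\min_{p\in F^\pi(g)}p^\top h$ by the standard second-level policy-improvement argument. For $\ge$, take near-minimizers $q^k$ of $w_i$ and use $w(q^k)=r^\pi-g+P_{q^k}w(q^k)$ together with the fact that rows of gain-optimal selectors lie in $F^\pi(g)$ on the relevant states. For $\le$, use a one-row splice of a face minimizer into near-optimal selectors, which stays in $\mathcal Q_*^\pi$.

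Finally, Proposition~\ref{fv:prop:fixed-policy} applied to this $(g,h)$ confirms $g=g^\pi$.
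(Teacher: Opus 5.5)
Your reduction to nature's compact-action problem on the effective rows $\U_i^\pi$ is the same as the paper's Corollary~\ref{sol:fixed-policy}, including the identification of the effective gain face as the $\pi$-mixture of the faces $F_{ia}(g)$. The paper then invokes Schweitzer's criterion with reversed rewards (Theorem~\ref{sol:schweitzer}) for both directions. You try to prove both directions directly, and each has a gap.

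\textbf{Necessity.} You stall on an $\varepsilon N$ term that is not actually there. For $q\in\mathcal Q_*^\pi$ you have $g=\eta(q)=P_q^\infty r^\pi$, so $P_qg=P_qP_q^\infty r^\pi=P_q^\infty r^\pi=g$ by Lemma~\ref{sol:markov-algebra}. Thus $d\equiv0$ at every state, not merely on recurrent classes, and every row of such a $q$ lies in its effective gain face. The bias equation then gives $\delta\ge0$ everywhere. Together with $P^\infty\delta=0$, Lemma~\ref{sol:markov-algebra} gives $Z_P\delta=\sum_{t\ge0}P^t\delta\ge0$, hence $w(q)=Z_P\delta+(I-P^\infty)h\ge-\operatorname{sp}(h)\one$. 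This is the computation in the necessity half of Theorem~\ref{sol:tangent-saddle}. As written, however, your argument does not close. The slicing step is left open, and the fallback cannot work: $(T^\pi)^N0\ge h+Ng-O(1)$ fails for curved row sets even when a finite certificate exists. In the one-action model of Proposition~\ref{rev:slow-gain-estimation}, $(T^N0)_x-Ng_x\to-\infty$.

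\textbf{Sufficiency.} The gain equation is fine once you swap your two justifications: the splice gives $g_i\le\min_{p}p^\top g$, and $P_{q^0}g=g$ gives the reverse. The inequality $g+h\ge r^\pi+\min_{p\in F^\pi(g)}p^\top h$ for $h=\inf_{q\in\mathcal Q_*^\pi}w(q)$ is also fine. The reverse bias inequality is the substantive step, and your sketch does not supply it.

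Splicing a face row $p$ at state $i$ into a selector $q\in\mathcal Q_*^\pi$ can only yield $h_i\le r_i^\pi-g_i+p^\top w(q)$ for that particular $q$. Even this needs an argument that the splice stays in $\mathcal Q_*^\pi$, which fails in general: a self-loop $e_i\in\U_i^\pi$ always lies on the gain face and has gain $r_i^\pi$. Under the contradiction hypothesis you must show that the spliced state becomes transient.

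Because each $h_j$ is an infimum taken separately, replacing $p^\top w(q)$ by $p^\top h$ requires a single $q\in\mathcal Q_*^\pi$ with $w(q)\le h+\epsilon\one$ on $\operatorname{supp}(p)$. Such simultaneous near-attainment is not automatic for infinitely many selectors whose canonical biases are normalized class by class. One way to obtain it is to restrict each state to the finitely many rows of coordinatewise $\epsilon$-minimizers and take a Blackwell-optimal stationary policy of that finite MDP. That policy has gain $\gamma^\pi$, hence lies in $\mathcal Q_*^\pi$, and its canonical bias is no larger than that of every restricted minimizer. You need to add this step, or cite Schweitzer's theorem as the paper does.
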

The two conditions identify separate requirements. The set $Q_*^\pi$ contains the stationary kernels that attain the robust gain simultaneously from every state. For such a kernel, $w(q)$ is its canonical transient reward correction, with normalization $P_q^\infty w(q)=0$. The uniform lower bound prevents these normalized corrections from becoming arbitrarily negative across gain-attaining kernels. Average-gain attainment and finite transient corrections are therefore distinct parts of Bellman solvability.

For optimal control, the candidate gain supplies the appropriate reward centering. Every gain-active controller-nature pair satisfies $P^{\pi,q}g=g$, so replacing $r_{ia}$ by $c_{ia}=r_{ia}-g_i$ subtracts $g$ from its original gain. The bias equation is therefore a zero-gain problem on the gain-restricted action and row sets. A stationary saddle secures this zero gain, and a bound on transient corrections determines whether the saddle value has a finite Bellman representation.  The active controller policies and nature selectors are
$\Pi_g:=\prod_iA_g(i)$, $\mathcal Q_g:=\prod_i\prod_{a\in A_g(i)}F_{ia}(g)$, and $\mathcal Q_g^\pi:=\prod_iF_{i,\pi(i)}(g)$.
A full plan $q\in\mathcal Q_g$ specifies rows for every active action, while a reply $q\in\mathcal Q_g^\pi$ specifies only the rows used by $\pi$. Accordingly, $P_{\pi,q}$ has row $q_{i,\pi(i)}^\top$ for a full plan and row $q_i^\top$ for a restricted reply. Set $c_i^\pi=c_{i,\pi(i)}$, $\eta^{\pi,q}=P_{\pi,q}^\infty c^\pi$, and $w^{\pi,q}=Z_{P_{\pi,q}}c^\pi$ when $\eta^{\pi,q}=0$. Here $\eta^{\pi,q}$ uses the centered rewards $c$, whereas $\eta(q)$ in Theorem~\ref{m:fixed-existence} uses the original rewards $r^\pi$.

\begin{theorem}[Optimal-control solvability]
\label{m:optimal-existence}
Fix $g\in\mathbb R^S$ with $g=\widehat{T}g$. There is a finite $h$ with $K_gh=h$ if and only if there exist $\bar{\pi}\in\Pi_g$, a full plan $\bar{q}\in\mathcal Q_g$, and $B<\infty$ such that
\begin{align*}
 &(i)  \eta^{\bar{\pi},q}\ge0,~  \text{for every }q\in\mathcal Q_g^{\bar{\pi}};\quad (ii)  \eta^{\pi,\bar{q}}\le0,~ \text{for every }\pi\in\Pi_g;\\
 &(iii) w^{\bar{\pi},q}\ge-B\one, ~\text{for every }q\in\mathcal Q_g^{\bar{\pi}}
                    \text{ with }\eta^{\bar{\pi},q}=0.
\end{align*}
Equivalently, finite solvability is equivalent to $\sup_{N\ge0}\|K_g^N0\|_\infty<\infty$.
Another equivalent condition is the existence of finite $\ell,u$ with $\ell\le K_g\ell$ and $K_gu\le u$.
\end{theorem}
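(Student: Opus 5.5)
The plan is to prove a cycle of implications. First, a finite fixed point implies (i)--(iii). Second, (i)--(iii) imply that $K_g$ has a sub-solution $\ell\le K_g\ell$ and a super-solution $K_gu\le u$. Third, sub- and super-solutions imply $\sup_N\|K_g^N0\|_\infty<\infty$. Fourth, bounded orbits give a fixed point. Throughout I use three properties of $K_g$:
\begin{itemize}
\item it is monotone;
\item it commutes with constant shifts, $K_g(h+c\one)=K_gh+c\one$, hence it is sup-norm nonexpansive;
\item it is continuous, because $L_g$ is a finite maximum of minima of linear functions over the compact faces $F_{ia}(g)$.
\end{itemize}

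\textbf{Fixed point implies (i)--(iii).} Let $K_gh=h$. Take $\bar\pi(i)$ attaining the maximum in $L_gh$, and $\bar q_{ia}\in\argmin_{p\in F_{ia}(g)}p^\top h$ for every active $a$.

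For any reply $q\in\mathcal Q_g^{\bar\pi}$, write $P=P_{\bar\pi,q}$. Then $\delta:=c^{\bar\pi}+Ph-h\ge0$. Applying $P^\infty$ gives $\eta^{\bar\pi,q}=P^\infty\delta\ge0$, which is (i).

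If moreover $\eta^{\bar\pi,q}=0$, then $\delta\ge0$ together with $P^\infty\delta=0$ forces $\delta$ to vanish on recurrent states. Using $Z_P(I-P)=I-P^\infty$, we get
\[
w^{\bar\pi,q}=Z_P\bigl((I-P)h+\delta\bigr)=(I-P^\infty)h+\sum_{t\ge0}P^t\delta\ge-2\|h\|_\infty\one .
\]
The series converges because $\delta$ is supported on transient states. This gives (iii) with $B=2\|h\|_\infty$.

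Symmetrically, for every active action we have $c_{ia}+\bar q_{ia}^\top h\le h_i$. Hence every $\pi\in\Pi_g$ satisfies $c^\pi+P_{\pi,\bar q}h\le h$, and applying $P_{\pi,\bar q}^\infty$ yields $\eta^{\pi,\bar q}\le0$, which is (ii).

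\textbf{(i)--(iii) imply sub- and super-solutions.} The upper side uses the full plan $\bar q$. We have $K_gx\le\max_{\pi\in\Pi_g}(c^\pi+P_{\pi,\bar q}x)=:M x$, where $M$ is a nominal finite multichain MDP operator on the active actions. By (ii) every stationary gain of this MDP is $\le0$. Classical multichain theory \cite{puterman2014markov} therefore provides a solution $(g',u)$ of the nominal optimality equations with $g'\le0$, and one can take $u$ with $Mu\le u$ after absorbing $g'\le0$. Then $K_gu\le Mu\le u$, so $u$ is a super-solution.

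The lower side, building a sub-solution from (i) and (iii), is where I expect the main difficulty. Nature's restricted problem against $\bar\pi$ is a compact-action MDP on the faces $F_{i\bar\pi(i)}(g)$ with reward $c^{\bar\pi}$. Its gain is $\ge0$ everywhere, but $0$ need not be attained at every state, and replies can have arbitrarily slowly vanishing transient probabilities.

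My first attempt is the candidate
\[
\ell_i:=\min\Bigl\{-B,\ \inf\bigl\{w_i^{\bar\pi,q}:q\in\mathcal Q_g^{\bar\pi},\ \eta^{\bar\pi,q}=0\bigr\}\Bigr\},
\]
and then verifying $\ell\le K_g\ell$ via a one-step decomposition of $w$, namely $w^{\bar\pi,q}=c^{\bar\pi}+P w^{\bar\pi,q}$ when $\eta=0$. At states from which every reply yields strictly positive gain, I would instead compare with finite-horizon values. By Theorem~\ref{m:policy-value} applied to the restricted model, these grow like $N\cdot(\text{positive})$, so they cannot pull the iterates to $-\infty$. This is the step where I would need to patch together the two regimes, zero-gain and positive-gain, using compactness of the faces.

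\textbf{Sub- and super-solutions imply bounded orbits, and bounded orbits imply a fixed point.} Given $\ell\le K_g\ell$ and $K_gu\le u$, monotonicity and shift-invariance give
\[
\ell-\|\ell\|_\infty\one\le K_g^N\ell-\|\ell\|_\infty\one\le K_g^N0\le K_g^Nu+\|u\|_\infty\one\le u+\|u\|_\infty\one ,
\]
so the orbit of $0$ is bounded.

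Conversely, a bounded orbit yields a fixed point. I would invoke the Gaubert--Gunawardena theorem that a continuous topical map on $\mathbb R^n$ with a bounded orbit has a fixed point. Alternatively, the fixed points $x_\lambda$ of the contractions $\lambda K_g$ are uniformly bounded in span, and their limit as $\lambda\uparrow1$ after normalization is a fixed point.

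Once sub- and super-solutions are available, there is also a direct route. Shift $\ell$ below $u$; then $K_g^N\ell$ is nondecreasing and bounded above by $u$, so it converges, and by continuity its limit $h$ satisfies $K_gh=h$. This closes the cycle and gives all three equivalences.
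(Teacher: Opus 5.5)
Your necessity argument, the super-solution built from the full plan $\bar q$, and the closing steps are sound and follow the paper. The closing steps are: barriers give a bounded orbit, and monotone iteration from $\ell$ below a shifted $u$ (or the Gaubert--Gunawardena theorem) gives a fixed point. For the super-solution, ``absorbing $g'\le0$'' needs the standard fact that the multichain bias can be replaced by $u+\kappa g'$ for a large scalar $\kappa$, so that $g'+u=Mu$ holds with the maximum over all active actions. Alternatively, note that $\bar\pi$ forces $g'=0$.

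The genuine gap is the sub-solution, which is the substantive half of the sufficiency direction. Your concern that nature's zero gain against $\bar\pi$ ``need not be attained at every state'' is misplaced. The restriction of $\bar q$ to the actions of $\bar\pi$ is itself a reply in $\mathcal Q_g^{\bar\pi}$, so (i) gives $\eta^{\bar\pi,\bar q}\ge0$, while (ii) with $\pi=\bar\pi$ gives $\eta^{\bar\pi,\bar q}\le0$. Hence nature's compact-action problem on the faces $F_{i,\bar\pi(i)}(g)$ with reward $c^{\bar\pi}$ has optimal gain vector exactly $0$. One stationary reply attains it simultaneously from every state, and its simultaneously optimal replies are precisely the zero-gain replies appearing in (iii). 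So there is no separate positive-gain regime to patch at the level of nature's optimum.

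Your candidate for $\ell$ does not work. Because (iii) gives $w^{\bar\pi,q}\ge-B\one$, the formula $\ell_i=\min\{-B,\inf w_i^{\bar\pi,q}\}$ collapses to $\ell=-B\one$. But $-B\one\le K_g(-B\one)$ holds if and only if $\max_{a\in A_g(i)}c_{ia}\ge0$ at every state. This fails at state $y$ of Example~\ref{geo:separate-barriers}, where $c_y=-1$ even though that model has a Bellman solution.

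Dropping the truncation does not rescue the one-step verification you propose. The Poisson identity $w(q)=c^{\bar\pi}+P_qw(q)$ combined with $w(q)\ge\ell$ only yields $\ell_i\le w_i(q)=c_i+q_i^\top w(q)$, whose right side dominates $c_i+q_i^\top\ell$. That is the wrong direction for proving $\ell_i\le c_i+p^\top\ell$. Moreover, an arbitrary face row $p$ cannot be spliced into a zero-gain reply without possibly destroying zero gain. The finite-horizon heuristic for replies with positive gain coordinates is also not uniform over the compact reply family, so it gives no horizon-free lower bound.

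The missing ingredient is the compact-action one-player criterion, Schweitzer's Theorem~1 in the form of Theorem~\ref{sol:schweitzer}. The zero-gain attainment shown above and the uniform canonical lower bound (iii) are exactly its hypotheses. It then produces a finite $\ell$ with $\ell_i=\min_{p\in F_{i,\bar\pi(i)}(g)}\{c_{i,\bar\pi(i)}+p^\top\ell\}$; the second-level gain restriction is vacuous because the gain is zero. Finally $\ell\le K_g\ell$, since the outer maximum in $K_g$ may select $\bar\pi(i)$. Without invoking this theorem or reproducing its proof, the implication from (i)--(iii) to a finite fixed point is not established.
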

Conditions $(i)$ and $(ii)$ give a zero-gain saddle within the gain-restricted game: one controller secures nonnegative gain against every restricted nature reply, and one full nature plan holds every active controller to nonpositive gain. Condition $(iii)$ supplies the remaining transient control. Each fixed zero-gain reply has a finite canonical bias, but the biases can lack a common lower bound as transition probabilities vanish and recurrent classes change. The bound is one-sided because, after fixing nature's full plan, finitely many deterministic controller policies remain. The theorem thus locates the gap between stationary gain optimality and a common finite Bellman bias. Appendix~\ref{sol:finite-bias-section} further treats the case in which the gain is not prescribed.

A prescribed average-optimal controller or stationary saddle can fail to share a Bellman bias even when another pair supports a finite certificate (Example~\ref{geo:separate-barriers}). Appendix~\ref{geo:section} characterizes which active controller-nature pairs share one bias satisfying both players' Bellman inequalities. The characterization determines the exact minimum span for each compatible pair and, after optimization over pairs, for the full Bellman system. It also describes the remaining bias freedom through recurrent-class offsets, which can persist after fixing one reference state.

\textbf{Concrete sufficient regimes for solvability.}
The following conditions ensure finite Bellman solvability while allowing different recurrent classes to have different gains (beyond constant gains). They provide concrete model classes covered by the preceding criterion and the planning result later.

\begin{proposition}[Sufficient regimes for Bellman solvability] \label{m:sufficient-regimes}
Under the standing assumptions, each of the following conditions ensures a finite solution $(g^\star,h)$ to \eqref{m:bellman}:

(A) \texttt{Polytopic ambiguity:}
Every row set $\mathcal U_{ia}$ is a polytope.

(B) \texttt{Uniform support gap:}
There exists $\delta>0$ such that every feasible row satisfies
$p_j=0$ or $p_j\ge\delta$, for all $i,a$, $p\in\mathcal U_{ia}$,
and $j\in S$.

(C) \texttt{Continuous stationary projections:}
For every $\pi\in\Pi_D$, the map $P\mapsto P^\infty$ is
continuous on the compact induced-kernel family
$\mathcal P^\pi=\{P^{\pi,q}:q\in\mathcal Q_S\}$.
\end{proposition}
Polytopic rows make $T$ piecewise affine, so Kohlberg's invariant half-line theorem gives a finite pair \cite{kohlberg1980invariant}. Under (C), stationary gains vary continuously with the kernel: compactness upgrades simultaneous stationary approximations to an exact full nature plan, while continuity of the Ces\`aro projections uniformly bounds the canonical corrections of zero-gain replies. These are the requirements in Theorem~\ref{m:optimal-existence}; condition (B) implies (C), as proved in Appendix~~\ref{app:supporting}. These regimes accommodate a nonconstant optimal gain. In that case, $\epsilon\operatorname{sp}(V_\epsilon)\to \operatorname{sp}(g^\star)>0$, so the uncentered discounted span grows as the discount vanishes. The existence analysis in Appendix \ref{sol:structural-section}  controls the finite remainder $V_\epsilon-g^\star/\epsilon$ in these regimes. This vector centering isolates transient rewards while preserving the distinct long-run gains of recurrent classes. It extends the role played by bounded discounted span in constant-gain robust theory \cite[Theorem~4]{wang2025bellman}.

\begin{remark}[Solvability under concrete distributional uncertainty sets]
Our results imply solvability under several distributional uncertainty sets that are extensively studied in robust RL with additional unichain assumption \cite{roch2025reduction,roch2026finite}, defined as $\mathcal{U}_{ia}:=\{q\in\Delta_S: D(q||p^0_{ia})\leq \rho_{ia}\}$, where $D$ is some probability divergence, like total variation or KL-divergence, $p^0$ is some nominal kernel, and $\rho_{ia}$ is the radius. Since  total-variation balls are polytopes, so Proposition~\ref{m:sufficient-regimes}(A) applies at every radius. For forward KL balls $D_{\mathrm{KL}}(p\|p^0_{ia})\le\rho_{ia}$, condition~(B) applies when $\rho_{ia}<\min_{j:p^0_{ia,j}>0}-\log(1-p^0_{ia,j})$ for every row (we further provide an unsolvable example when this condition fails). Support-restricted reverse KL balls satisfy condition~(B) at every finite radius. Appendix~\ref{sol:divergence-balls} proves these and extends to other divergences.
\end{remark}

\textbf{Further structure of Bellman certificates.} Appendix \ref{geo:section} refines the existence result by characterizing which active controller-nature pairs share one bias satisfying both players' Bellman inequalities. An average-optimal pair can fail this compatibility test even when a different pair supports a finite certificate. The characterization determines the exact minimum span for each compatible pair and, after optimization over pairs, for the full Bellman system. It also describes the remaining bias freedom through recurrent-class offsets, which can persist after fixing one reference state.

\begin{remark}
Although the robust gain and an all-state optimal stationary controller exist under our standing assumptions (as Section \ref{m:foundations} proves), the vector Bellman system need not admit a finite solution. This is different from non-robust cases:  general compact ambiguity can obstruct stationary worst-gain attainment or a uniform lower bound on canonical biases, as illustrated in Appendix \ref{sec:examples}; In contrast, nominal MDPs with finite state and action spaces always admit solutions to their multichain gain-bias system \cite{puterman2014markov,schweitzer1985undiscounted}. This nominal solvability also follows from Proposition~\ref{m:sufficient-regimes}(A), since every nominal transition row corresponds to a singleton ambiguity set, thus a polytope. Our vector Bellman system thus provides a finite certificate with an exact solvability criterion of the multichain robust average-reward MDPs, generalizing the non-robust Bellman theory.
\end{remark}

\section{Planning from Finite Bellman Certificates}
\label{plan:section}
A finite Bellman certificate supplies asymptotic comparison points for undiscounted planning with a state-dependent gain. Algorithm~\ref{alg:robust_halpern}is inspired by the approximately shifted Halpern update of\cite[Algorithm~1]{zurek2025faster}, with the robust operator $T$. Phase~I computes $x_N=T^N0$ and uses it both to estimate the gain as $\widehat g_N=x_N/N$ and to initialize Phase~II. The second phase anchors at $x_N$ and shifts Bellman updates by this estimate. We assume $T$ can be exactly computed and applied.

\begin{algorithm}[!htb]
\caption{Approximately Shifted Robust Halpern Iteration}
\label{alg:robust_halpern}\label{alg:robust-halpern}\label{ashi:algorithm}
\begin{algorithmic}[1]
\STATE \textbf{Input:}  $N\ge 1$ and an exact robust Bellman oracle $T$.
\STATE \textbf{Initialize:} $x_0\gets 0$.
\STATE \texttt{Phase I: Gain estimation and warm start.}
\FOR{$j=0,1,\ldots,N-1$}
    \STATE $x_{j+1}\gets T x_j$.
\ENDFOR
\STATE $\widehat g_N\gets x_N/N$ and $z_0\gets x_N$.

\STATE \texttt{Phase II: Approximately shifted Halpern iteration.}
\FOR{$t=0,1,\ldots,N-1$}
    \STATE
    $\displaystyle
    z_{t+1}\gets
    \frac{2}{t+3}z_0+
    \frac{t+1}{t+3}\bigl(Tz_t-\widehat g_N\bigr)$.
\ENDFOR
\STATE $Z_N\gets z_N$.
\STATE Evaluate the action scores at $Z_N$ to obtain $TZ_N$ and choose
$\pi_N(i)\in\arg\max_{a\in A(i)}
 \{r_{ia}+\min_{p\in\U_{ia}}p^\top Z_N\}$ for every $i\in S$.
\STATE \textbf{Output:} $\widehat g_N$, $Z_N$, $TZ_N-Z_N$, and $\pi_N$.
\end{algorithmic}
\end{algorithm}

\begin{theorem}
\label{ashi:main}
Under the standing finite-state, finite-action, compact post-action $(s,a)$-rectangular model, suppose \eqref{m:bellman} has a finite solution. Then Algorithm~\ref{ashi:algorithm} satisfies, in supremum norm,
\begin{equation}
 \widehat g_N\longrightarrow g^\star,\qquad
 TZ_N-Z_N\longrightarrow g^\star,\qquad
 Z_N/N\longrightarrow g^\star.
 \label{ashi:limits}
\end{equation}
Moreover, there is a finite $N_0$ such that every output $\pi_N$ at any $N\ge N_0$ satisfies $g^{\pi_N}=g^\star$, and each such controller is optimal from every initial state against history-dependent nature.
\end{theorem}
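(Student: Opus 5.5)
We will prove the three limits in \eqref{ashi:limits} first, and only then extract eventual optimality of the greedy controllers $\pi_N$.

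\textbf{Gain estimate.} The first limit $\widehat g_N\to g^\star$ is already given by Theorem~\ref{m:uniform-value}. With a finite solution $(g^\star,h)$ we can say more. Lemma~\ref{fv:lem:tangent} gives
\[
 T(h+tg^\star)=h+(t+1)g^\star+\epsilon_t,\qquad \epsilon_t\to0 .
\]
$T$ is monotone and nonexpansive in $\|\cdot\|_\infty$. By induction,
\[
 \|T^N0-(h+Ng^\star)\|_\infty\le\|h\|_\infty+\sum_{t<N}\|\epsilon_t\|_\infty .
\]
This yields the quantitative rate $\|\widehat g_N-g^\star\|_\infty\le (2\|h\|_\infty+\sum_{t<N}\|\epsilon_t\|_\infty)/N=:\beta_N\to0$, since it is a Cesàro average of a null sequence. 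It also shows that $z_0=x_N$ lies within $o(N)$ of the affine ray $h+Ng^\star$.

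\textbf{Phase II.} We compare the Halpern iterates with the reference trajectory $y_t:=h+(N+t)g^\star$. The key tool is the standard Halpern residual analysis for nonexpansive maps in the style of \cite{zurek2025faster}. It is applied to the shifted operator $\widetilde T x:=Tx-\widehat g_N$, which is again nonexpansive. Along the reference trajectory,
\[
 \widetilde T y_t - y_t = (g^\star-\widehat g_N)+\epsilon_{N+t}.
\]
So $\widetilde T$ is an approximate translation by the common vector $g^\star$. Since the shift only approximately cancels $g^\star$, we need care; we follow Zurek--Sidford's approximately shifted argument and treat $g^\star-\widehat g_N$ and $\epsilon_{N+t}$ as perturbations. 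The steps are as follows.
\begin{enumerate}
\item Write $u_t:=z_t-y_0$. Use nonexpansiveness to obtain $\|u_t\|_\infty\le\|z_0-y_0\|_\infty+t(\beta_N+\sup_{s\ge N}\|\epsilon_s\|_\infty)$.
\item Derive the Halpern recursion for the residual $Tz_t-z_t-g^\star$. This gives a bound of order
\[
 \frac{\|z_0-y_0\|_\infty}{N}+\beta_N+\sup_{s\ge N}\|\epsilon_s\|_\infty .
\]
\end{enumerate}
The quantity $\|z_0-y_0\|_\infty/N$ is $o(1)$ by the Phase I estimate. This yields $TZ_N-Z_N\to g^\star$. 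The limit $Z_N/N\to g^\star$ then follows from $\|Z_N-y_0\|_\infty=o(N)$ together with $y_0/N\to g^\star$.

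\textbf{Main obstacle.} The main difficulty is that the residual bound is only vectorial. The Halpern argument naturally controls spans or norms of displacements rather than the direction along the recession ray $g^\star$. Moreover, the anchor term $\tfrac{2}{t+3}z_0$ mixes vectors at different "heights" along $g^\star$. We would first try to show directional control in the form
\[
 Z_N=h_N+c_Ng^\star+o(1),\qquad c_N\to\infty ,
\]
with $h_N$ in a bounded set of approximate biases. To do this, we would exploit that $g^\star$ is preserved by all gain-active rows: $P^{\pi,q}g^\star=g^\star$ for active pairs, so the $g^\star$ component telescopes exactly through the anchoring convex combination.

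\textbf{Greedy policy extraction.} With $Z_N=h_N+c_Ng^\star+o(1)$ and $c_N\to\infty$, consider any action $a\notin A_{g^\star}(i)$. Its score is at least a fixed gap $\kappa c_N$ below the best active score, where $\kappa>0$ is the minimal positive gap $g^\star_i-m_{ia}(g^\star)$ over finitely many pairs. Hence for $N\ge N_0$ the greedy action is gain-active. Minimizing rows then approach $F_{ia}(g^\star)$. Consequently $\pi_N(i)$ maximizes $\min_{p\in F_{ia}(g^\star)}(r_{ia}+p^\top h_N)$ up to $o(1)$. This gives
\[
 T^{\pi_N}\text{-affinity}: \quad T^{\pi_N}(h_N+tg^\star)\ge h_N+(t+1)g^\star-o(1),
\]
uniformly in $t$. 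Iterating this inequality and using Theorem~\ref{m:policy-value} yields $g^{\pi_N}\ge g^\star-o(1)$. Because $\Pi_D$ is finite, the set $\{g^\pi\}$ is finite, so eventually $g^{\pi_N}=g^\star$ exactly. Optimality against history-dependent nature then follows from the payoff identities in Theorem~\ref{m:policy-value}.
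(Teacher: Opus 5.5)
Your derivation of the three limits follows the paper's route. You sum the affine defect in Phase~I, then run the Halpern analysis for the shifted map $x\mapsto Tx-\widehat g_N$ around the fixed comparison point $y_0=h+Ng^\star$, whose defect is at most $\beta_N+\|\epsilon_N\|_\infty\to0$. Step~2 is only asserted, though. What you need is a Halpern bound, valid in the sup norm, around an \emph{approximate} fixed point. This is the paper's Lemma~\ref{ashi:approx-halpern}: $\|z_t-z_*\|\le D+te/3$ and $\|Sz_N-z_N\|\le 8D/(N+3)+\tfrac43e$. It should be proved, not imported, because here the comparison point is not a fixed point.

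\textbf{The genuine gap is the policy extraction.} It rests on the decomposition $Z_N=h_N+c_Ng^\star+o(1)$ with $h_N$ bounded, which you never prove. Your own Step~1 bound gives only $\|Z_N-y_0\|_\infty\le\|z_0-y_0\|_\infty+N(\beta_N+\sup_{s\ge N}\|\epsilon_s\|_\infty)=o(N)$, and nothing better is available in general. In the instance of Proposition~\ref{rev:slow-gain-estimation}, where $g^\star_x=0$, the anchor already satisfies $(z_0)_x=(T^N0)_x\le-cN^{1-1/(k-1)}$. Phase~II moves it by at most the same order, and your estimates give no mechanism that returns $Z_N$ to within $O(1)$ of the ray. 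Normalizing modulo $\one$ does not obviously help once several recurrent-class offsets drift at different rates. Your heuristic that the $g^\star$-component ``telescopes exactly through the anchoring'' fails because $T$ selects rows outside the gain faces at every finite height; this is exactly why the affine defect need not vanish at any finite $t$. Everything downstream inherits the gap: the uniform approach of minimizing rows to $F_{ia}(g^\star)$, $h_N$ being an approximate bias, and the $T^{\pi_N}$-affinity inequality. That inequality could in any case hold only beyond some threshold in $t$, since the tangent expansion is asymptotic.

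The missing observation is that no such decomposition is needed: the limit $Z_N/N\to g^\star$ you already proved suffices. Uniformly in $(i,a)$, $\bigl|(r_{ia}+\min_{p\in\U_{ia}}p^\top Z_N)/N-m_{ia}(g^\star)\bigr|\le R/N+\|Z_N/N-g^\star\|_\infty$. The positive minimal gap $g^\star_i-m_{ia}(g^\star)$ over inactive pairs therefore forces every greedy action into $A_{g^\star}(i)$ for large $N$. For such an action, every feasible row satisfies $p^\top g^\star\ge g^\star_i$. By greediness it also satisfies $r_{i\pi_N(i)}+p^\top Z_N-Z_{N,i}\ge g^\star_i-\eta_N$, where $\eta_N=\|TZ_N-Z_N-g^\star\|_\infty$. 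Now use $Z_N$ itself as the potential. Against any history-dependent nature, $g^\star(S_t)$ is a submartingale, and telescoping over a horizon $H$ gives $\mathbb E_i\sum_{t<H}r_{S_t\pi_N(S_t)}\ge H(g^\star_i-\eta_N)-\operatorname{sp}(Z_N)$. Keep $N$ fixed and let $H\to\infty$, so the possibly growing span is irrelevant. Theorem~\ref{m:policy-value} then gives $g^\star-\eta_N\one\le g^{\pi_N}\le g^\star$. Finiteness of $\Pi_D$ yields exact optimality for all $N\ge N_0$, uniformly over greedy tie-breaking.
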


For a finite solution $(g,h)$, Theorem \ref{m:verification}  and Lemma  \ref{fv:lem:tangent}  imply  the affine defect $\omega_h(t):=
  \|T(h+tg)-h-(t+1)g\|_\infty\longrightarrow0.$  This relation, however, need not become exact at any finite $t$: under general compact ambiguity, rows outside the gain face can remain preferable when their bias advantage offsets a small loss in continuation gain  (Appendix~\ref{plan:appendix} gives an example with $\omega_h(t)>0$ for every sufficiently large finite $t$). We therefore retain the vanishing defect $\omega_h(t)$ in the Halpern comparison and establish both $TZ_N-Z_N\to g^\star$ and $Z_N/N\to g^\star$. The first limit controls one-step reward balance; the second eventually excludes each controller action with $m_{ia}(g^\star)<g_i^\star$ from the greedy rule. At every remaining action, every feasible nature row satisfies $p^\top g^\star\geq g_i^\star$, so the displacement error bounds the controller's statewise robust gain loss. There are finitely many deterministic controllers; hence vanishing loss makes every sufficiently late greedy controller exactly optimal from all states. This gain-active identification is the additional step required when the optimal gain is a vector. The affine defect also affects the gain estimator's rate, which is discussed in Appendix~\ref{plan:appendix}.

\begin{remark}[Relation to discounted and constant-gain planning]
    Discounted reductions connect robust average reward to discounted planning \cite{wang2023robust,grand2023reducing,roch2025reduction,yang2026robust,grand2023beyond}. Discounted-based planning typically directly solves for a discounted robust MDP with large enough discount factor \cite{roch2025reduction,grand2023reducing}, or with an increasing factor \cite{wang2023model,grand2023beyond}. The planner studied here uses the undiscounted operator and estimates its vector drift directly. Existing direct anchored value iteration and robust Bellman methods provide direct approaches to average-reward planning \cite{wang2023robust,wang2023model,roch2026finite,xu2025efficient,xu2025finite} under the constant gain settings, where the translation identity $T(x+c\mathbf 1)=Tx+c\mathbf 1$ turns $Th=h+\rho\mathbf 1$ into a fixed-point equation modulo constant vectors. However, a state-dependent gain retains its drift after this scalar normalization. Ours instead controls the vector displacement $TZ_N-Z_N\to g^\star$ through an estimated shift and asymptotically affine comparison points. 
\end{remark} 

\textbf{Numerical verification.} We further numerically evaluate Algorithm~\ref{alg:robust_halpern} against a nominal counterpart using the same anchored updates \cite{zurek2025faster} on three multichain models: boundary leakage, periodic recurrent classes, and a safe-risky decision. Figure~\ref{fig:main-paired-experiments} supports the theoretical convergence of Algorithm~~\ref{alg:robust_halpern} and shows that its extracted controllers attain the optimal robust gain in all three models. The nominal method converges for its reference model but selects controllers with strictly smaller worst-case gains. Appendix~\ref{sec:numerical} provides all details and further empirical analysis.

\begin{figure}[!htb]
\centering\includegraphics[width=\linewidth]{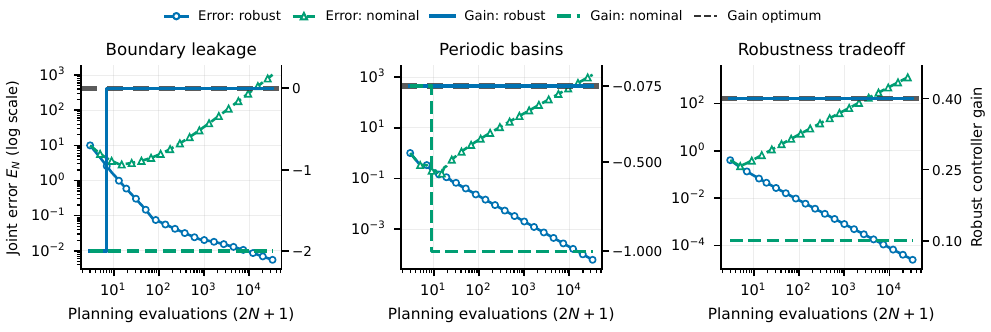} 
  \caption{Robust and nominal planning against iterations. Left logarithmic axis: robust-target joint error  $E_N=\max\{\|\widehat g_N-g^\star\|_\infty, \|Tv_N-v_N-g^\star\|_\infty\}$ (lower is better). Right linear axis: exact robust gain for a specific initial state $g^{\pi_N}_x$   of the extracted controller (higher is better); the black dashed line marks the optimum obtained by solving Bellman equations. }
\label{fig:main-paired-experiments}
\end{figure}

\section{Conclusion}
\label{m:discussion}
We developed a vector Bellman certificate for multichain robust average-reward MDPs. Our coupled gain-bias system preserves gain priority for both players and certifies stationary saddle strategies against history-dependent opponents from every initial state. Its solvability criterion connects stationary gain attainment to uniform control of canonical transient corrections, with concrete sufficient conditions permitting distinct recurrent-class gains. This certificate also provides the asymptotic structure needed for direct undiscounted planning. Under finite solvability, our approximately shifted Halpern iteration recovers the vector gain through its estimates and Bellman displacements, and its greedy controllers are eventually average-optimal. Our studies thus provided comprehensive and systemic understandings of robust average-reward MDPs beyond constant gains.

\bibliography{ref}
\bibliographystyle{abbrv}

\newpage

\appendix
\section{Related work}
\label{app:related-work}
The paper connects robust dynamic programming, multichain average-reward theory, and the analysis of nonexpansive operators. The central distinction is between the existence of a long-run strategic value and its representation by a finite gain-bias pair. We organize the literature around this distinction and its consequences for planning.

\subsection{Transition ambiguity, rectangularity, and dynamic consistency}

\textbf{Robust and distributionally robust MDPs.}
Robust MDPs optimize a policy against a family of plausible transition models. Classical robust dynamic programming identifies rectangularity assumptions under which local worst-case transition choices yield a recursive description of the value \cite{iyengar2005robust,nilim2004robustness,wiesemann2013robust}. Distributionally robust formulations also model uncertainty through distributions over model parameters and allow statistical information to enter the ambiguity description \cite{xu2010distributionally}. The precise uncertainty object and the information available to nature matter: an uncertainty set over transition rows, a distribution over kernels, and a single unknown kernel chosen at the outset need not define the same control problem. Here nature selects a transition row after observing the current state and action, with independent admissibility constraints across state-action pairs. This post-action $(s,a)$-rectangular structure determines the order of optimization in our Bellman operator.

\textbf{Dynamic consistency and risk-averse control.}
Rectangularity has a broader interpretation in sequential decision theory. \cite{epstein2003recursive} connect rectangular sets of priors to recursive multiple-priors preferences and dynamic consistency. In Markov control, \cite{ruszczynski2010risk} develop dynamic programming with Markov risk measures for finite-horizon and discounted problems. The connection to our operator is visible at the one-step level: the lower expectation $\min_{p\in\mathcal U_{sa}}p^\top v$ equals the negative of the upper expectation $\max_{p\in\mathcal U_{sa}}p^\top(-v)$. Thus worst-case reward evaluation has a natural risk-averse interpretation after reversing signs. These connections explain the recursive structure of robust evaluation; the existence of a finite bias for an undiscounted, state-dependent gain requires additional long-run analysis.

\textbf{Coupled uncertainty and the timing of nature's choices.}
Rectangularity can also be imposed on a representation of uncertainty rather than directly on individual transition rows. \cite{goyal2023beyond} study factor-matrix uncertainty that couples transitions across states while retaining tractability under rectangularity in the factor representation. \cite{li2025rectangularity} examine the relationship between static and game formulations of distributionally robust MDPs and the role of rectangularity in their equivalence and duality. For average reward, \cite{wang2026non} study nonrectangular uncertainty with a stationary kernel chosen by nature and history-dependent controller policies. These models address different forms of dependence and information. Our results concern the stagewise post-action model; this specification is essential to the gain-restricted row sets and the controller-nature comparisons used below.

\subsection{Multichain average reward and stochastic-game values}

\textbf{Classical multichain optimality equations.}
Average-reward MDPs model continuing decisions without an exogenous discount factor \cite{puterman2014markov}. The distinction between long-run gain and transient bias is classical. \cite{blackwell1962discrete} establish the connection between discounted optimization near discount factor one and undiscounted optimality in finite models. \cite{denardo1968multichain} develop multichain Markov renewal programming, while \cite{schweitzer1978functional} study the solution structure of undiscounted functional equations, including the degrees of freedom associated with optimal recurrent behavior. In general multichain models the gain can depend on the initial state. Gain-bias equations then compare continuation gains first and rewards and biases among gain-optimal actions second. Under a fixed policy and kernel, recurrent-class rewards and absorption probabilities determine the gain vector. Transition ambiguity adds a second optimization that can change those classes and probabilities. Our vector Bellman system uses the same gain-first principle while requiring compatible comparisons for both players.

\textbf{Finite stochastic games and mean payoff.}
The long-run value problem also belongs to the theory of zero-sum stochastic games. For finite state and action spaces, \cite{bewley1976asymptotic} establish a common asymptotic limit of normalized finite-horizon and discounted values, and \cite{mertens1981stochastic} prove existence of the uniform value. The latter is a strategic guarantee across sufficiently long horizons; it does not in general imply that both players have stationary optimal strategies. Perfect-information games have additional structure, with classical stationary-strategy results for time-average payoff \cite{liggett1969stochastic}. Their multichain gain-bias structure and policy iteration are developed further by \cite{akian2012policy}. These are direct precedents for robust models with finitely many effective nature actions.

For polytopic $(s,a)$-rectangular ambiguity, minimizing a linear continuation value can be reduced to the finitely many extreme rows. \cite{chatterjee2023solving} exploit the resulting connection to finite turn-based stochastic games to obtain long-run robust planning and complexity results. Consequently, direct average-reward planning beyond scalar-gain assumptions already has precedents in the polytopic case. Our analysis also permits compact curved row sets, where a finite reduction need not be available and finite Bellman solvability must be examined separately.

\textbf{Compact-action games and asymptotic values.}
Finiteness of the state space alone does not replace assumptions on the action sets in general stochastic-game value theory. \cite{bolte2015definable} use definability and additional structural conditions to establish uniform values for classes of compact-action games, including definable perfect-information games. Definability includes semialgebraic examples and supplies regularity beyond compactness. A complementary operator approach relates convergence of normalized discounted and finite-horizon values through Tauberian theorems \cite{ziliotto2016tauberian}. We use the latter connection after establishing discounted convergence for the present robust model. The relevant conclusion is convergence of normalized values for arbitrary compact row sets, without imposing definability. It should be distinguished from both a finite gain-bias representation and stronger uniform-strategy conclusions for general compact-action games.

\textbf{Robust average-reward theory.}
Robust average-reward Bellman equations and algorithms have been developed under unichain assumptions on the policy-kernel family \cite{wang2023robust,wang2024robust}. More recent theory gives conditions for scalar robust Bellman solvability under broader communication and information structures, including one-sided weak communication \cite{wang2025bellman}. Such assumptions can allow multiple recurrent classes for some choices while still producing an optimal gain independent of the initial state. Thus the relevant distinction for this paper is state dependence of the optimal gain, rather than simply whether any multichain transition matrix is admissible.

For compact $(s,a)$-rectangular sets, \cite{grand2023beyond} establish deterministic stationary controller optimality, strong duality, equivalence of the principal average-payoff conventions, and normalized discounted convergence without a unichain assumption. They also show that nature's stationary worst case need not be attained. These results provide the strategic foundation for our study. We formulate the all-state guarantees needed by the Bellman analysis and obtain normalized finite-horizon convergence using the Tauberian connection. The subsequent questions are whether the value admits a finite vector gain-bias certificate, which stationary choices such a certificate supports, and how to recover the gain and a controller policy by direct iteration.

\subsection{Finite biases, nonlinear operators, and feasibility geometry}

\textbf{Compact-action Bellman solvability.}
The fixed-policy criterion comes from classical compact-action MDP theory. After reversing the reward sign, \cite[Theorem~1]{schweitzer1985undiscounted} characterizes finite one-player gain-bias solvability through stationary gain attainment and a uniform lower bound on canonically normalized biases. The bound controls transient corrections as transition kernels vary; pointwise finiteness for each kernel is insufficient. We apply this criterion to the nature problem induced by a fixed controller policy. For robust optimal control, the proof combines a lower barrier obtained from this one-player result with an upper barrier supplied by a full nature plan. The two-player formulation makes explicit which policies, replies, and common bias bounds must be compatible.

\textbf{Nonlinear spectral theory and recurrent classes.}
Undiscounted Bellman operators are monotone, additively homogeneous with respect to scalar constants, and nonexpansive in the sup norm. Nonlinear Perron-Frobenius theory provides bounded-orbit criteria for additive eigenvectors and fixed points \cite{gaubert2004perron}. For convex monotone homogeneous maps, critical classes describe the structure and degrees of freedom of eigenspaces \cite{akian2003spectral}. These results connect recurrent behavior to Bellman solvability. A robust max-min operator, however, need not be convex, so the convex spectral theorem does not apply to it directly. Our analysis instead fixes controller-nature selector pairs, uses the classical Poisson representation for their induced chains, and imposes both players' deviation inequalities on the remaining recurrent-class offsets. This identifies when one finite bias supports all required comparisons.

\textbf{Linear programming and semi-infinite certificates.}
Linear programming is another classical route to average-reward control. \cite{hordijk1979linear} formulate finite MDP average optimization through a single linear program and relate its feasible solutions to stationary policies. In the present compact-row model, requiring a bias inequality for every admissible deviation produces a semi-infinite feasibility problem: there are finitely many bias coordinates but potentially infinitely many constraints. Projection and duality methods for such systems are developed by \cite{basu2015projection}. Our mixed constraint cone combines controller and nature deviations. Its closure records limiting inconsistencies that can arise even when no finite combination gives an exact contradiction, and a reward-to-flow ratio determines the minimum compatible bias span. The separation principles are standard; their role here is to characterize a common two-player Bellman certificate and quantify its size.

\subsection{Direct planning and anchored iterations}

\textbf{Invariant half-lines and approximate affine behavior.}
For finite-action perfect-information games and polytopic robust MDPs, the Bellman operator is piecewise affine. \cite{kohlberg1980invariant} show that a nonexpansive piecewise-linear map admits an invariant half-line, which describes an eventual affine trajectory with a fixed growth direction. This structure underlies multichain game algorithms \cite{akian2012policy}. For general compact row sets, a finite Bellman solution can instead yield an asymptotically affine trajectory: its one-step defect tends to zero, but the trajectory need not become exactly invariant after a finite threshold. The distinction matters for planning because an argument based on eventual exact equality does not automatically cover curved ambiguity sets. Our convergence analysis tracks this vanishing defect explicitly.

\textbf{Halpern iteration and multichain planning.}
Anchored fixed-point methods originate in the iteration of \cite{halpern1967fixed}. Quantitative analyses include sharp residual bounds for nonexpansive maps in Hilbert spaces \cite{lieder2021convergence}. Those results explain the general anchoring mechanism, while Bellman planning requires estimates in the operator's relevant norm and may involve a nonzero growth direction rather than an ordinary fixed point. Recent nominal MDP work develops anchored and shifted methods that address these issues \cite{lee2025optimal,zurek2025faster}. In particular, our update follows the approximately shifted iteration of \cite{zurek2025faster}. The robust analysis controls the additional affine defect and connects gain and displacement estimates to controller-policy extraction. The iteration's origin, its robust convergence argument, and the structural conditions ensuring a finite bias are therefore separate parts of the comparison.

\textbf{Computing robust Bellman updates.}
Iteration complexity and the cost of each inner minimization are complementary questions. \cite{ho2018fast} develop efficient exact Bellman updates for $\ell_1$ ambiguity, and \cite{ho2021partial} combine efficient updates with partial policy iteration for discounted robust MDPs. Such methods can serve as computational components when the row sets in our model have the corresponding structure. They do not by themselves provide an undiscounted multichain convergence argument. Conversely, an operator-level convergence result for arbitrary compact row sets does not imply a uniformly efficient implementation of every inner optimization problem; its computational use depends on how the ambiguity sets are represented.

\subsection{Average-reward and robust reinforcement learning}

\textbf{Nominal average-reward learning.}
Average-reward reinforcement learning includes differential value estimation, temporal-difference methods, and $Q$-learning \cite{mahadevan1996average,mahadevan_average-reward_1996,abounadi2001learning,zhang2021average,ma2021average}. Regret-based work such as \cite{jaksch2010near} studies exploration in unknown communicating MDPs using a diameter parameter. Other analyses develop convergence under weak communication and finite-sample guarantees governed by bias span or related structural quantities \cite{wan2021learning,wan2022convergence,wan2024convergence,zurek2024span}. These results show why communication, recurrent structure, and bias size are central to both learning and planning. Our setting isolates the deterministic robust planning and solvability questions with access to the Bellman operator; a statistical learning guarantee would additionally need to control how transition-estimation errors affect those quantities.

\textbf{Robust average-reward learning.}
Existing methods include relative-value TD and $Q$-learning, policy optimization, discounted reductions, anchored procedures, and stochastic approximation \cite{wang2023model,sun2024policy,roch2025reduction,chen2025sample,xu2025finite,xu2025efficient,roch2026finite,yang2026robust,wang2025provable}. Their assumptions vary, with many guarantees using unichain, irreducibility, or uniform ergodicity conditions that yield a state-independent robust gain. These conditions provide ways to control long-run sensitivity and transient behavior. Our finite-bias characterization addresses the structural question that arises when the gain can vary across initial states: which robust models still admit a finite certificate on which a direct planning analysis can be based?

\textbf{Discounted and finite-horizon robust learning.}
A large literature studies statistical estimation of worst-case values under discounted or finite-horizon criteria. Early sample-based approaches include robust temporal-difference learning, approximate dynamic programming, and linear policy evaluation \cite{lim2013reinforcement,tamar2014scaling,badrinath2021robust,wang2021online}. Subsequent tabular analyses cover model-based estimation and plug-in planning, as well as model-free procedures, for several ambiguity families and data-access models \cite{yang2022toward,panaganti2022sample,xu2023improved,shi2023curious,zhou2021finite,wang2023finite,liang2023single,liu2022distributionally,wang2023sample,wang2024modelfree,wang2023bring,kumar2023efficient,derman2021twice}. Online robust learning additionally treats exploration \cite{wang2021online,lu2024distributionally,ghosh2026orvit,he2025sample}. Related model-free methods use robust $Q$-learning, multilevel Monte Carlo, or variance reduction \cite{liu2022distributionally,wang2023finite,wang2024modelfree,wang2024sample,ghosh2026scaling}, while policy-based approaches analyze robust policy-gradient and actor-critic methods \cite{wang2022policy,kumar2023policy}.

\textbf{Offline learning and function approximation.}
Robust offline methods combine ambiguity-aware pessimism with tabular or fitted value iteration and structured function approximation \cite{zhou2021finite,panaganti2022robust,shi2022distributionally,blanchet2023double,liu2024minimax,panaganti2024model,wang2024sample,wang2024unified}. Other work considers distributional robustness with function approximation or simulator access in large or continuous state spaces \cite{ramesh2024distributionally}. These literatures address estimation, coverage, and approximation errors. Their discounted contraction or finite-horizon recursion controls the propagation of those errors. In the undiscounted multichain problem studied here, finite-bias solvability and state-dependent growth must first be understood to obtain an analogous foundation for algorithmic analysis.

\section{Numerical experiments}
\label{sec:numerical}

We examine convergence of the computed gain and Bellman displacement,
and robust average-reward performance of the extracted controller.
The three models are designed stress tests in which midpoint nominal
parameters favor an action with a smaller worst-case gain. The first two
have curved, nonpolytopic ambiguity and isolate boundary leakage and
periodicity, respectively. The third is a polytopic safe-risky decision.
Each model has a finite gain-bias certificate, specified below, so the
solvability hypothesis of Theorem~\ref{ashi:main} is satisfied.

\subsection{Methods, budgets, and evaluation}
\label{num:paired-protocol}

\textbf{Methods and initialization.}
Let $T$ denote the robust Bellman operator and let $T_0$ replace each
uncertain row set by the nominal reference row specified below. We compare
Algorithm \ref{alg:robust_halpern}, using $T$, with a nominal ablation using $T_0$. The latter is
the fixed-reference multichain method of \cite{zurek2025faster}.
For either planning operator $\mathcal T\in\{T,T_0\}$, the run starts at zero,
forms $x_N=\mathcal T^N0$, sets $\widehat g_N=x_N/N$ and $z_0=x_N$, and
performs
\[
 z_{t+1}=\frac{2z_0+(t+1)(\mathcal Tz_t-\widehat g_N)}{t+3},
 \qquad t=0,\ldots,N-1.
\]
The output vector is $v_N=z_N$. One final evaluation of $\mathcal T v_N$
extracts a controller greedy for that method's own operator. Both methods
therefore use $2N+1$ planning evaluations. Tied action values are resolved
in favor of $c$ in Models~I and II and the risky action in Model~III.
The nominal controller is evaluated under the robust model with its
selected action fixed.

\textbf{Budget convention.}
We evaluate every integer $N=1,\ldots,16384$, including both parities.
Each point represents a complete run from zero with that budget; the anchor
and estimated gain depend on $N$. Thus $2N+1$ measures the planning budget
of a run. The implementation shares common first-phase iterates and batches
the independent second-phase calculations, producing the same outputs as
individually initialized runs. Batched execution time is recorded separately
from the per-run oracle count.
The nominal method uses one additional robust operator evaluation to measure
its robust displacement. This external diagnostic is excluded from its
nominal planning budget and is never fed into its updates or policy selection.

\textbf{Convergence errors.}
All methods are evaluated against the exact robust gain $g^\star$. Define
\begin{equation}
 E_g(N)=\|\widehat g_N-g^\star\|_\infty,\qquad
 E_d(N)=\|Tv_N-v_N-g^\star\|_\infty,\qquad
 E_N=\max\{E_g(N),E_d(N)\}.
 \label{num:paired-errors}
\end{equation}
The main figure uses $E_N$ to show both convergence targets in one panel per
model. The top and middle rows of Figure~\ref{fig:appendix-experiments}
display the two components separately. For Algorithm \ref{alg:robust_halpern}, Theorem~\ref{ashi:main} gives $E_N\to0$ under
finite Bellman solvability. This is an asymptotic statement and does not
require the errors to decrease at every finite budget. For the nominal method, this robust-target error includes disagreement
between the nominal and robust objectives. We separately measure each
solver's own-objective error:
\[
 E_N^{\mathrm{own}}=
 \max\bigl\{\|\widehat g_N-g_{\mathcal T}^\star\|_\infty,
 \|\mathcal T v_N-v_N-g_{\mathcal T}^\star\|_\infty\bigr\},
\]
where $g_{\mathcal T}^\star=g^\star$ for $\mathcal T=T$ and
$g_{\mathcal T}^\star=g_0^\star$ for $\mathcal T=T_0$, with $g_0^\star$
the optimal gain of the nominal reference model.
The bottom row of Figure~\ref{fig:appendix-experiments} reports this diagnostic.
The data retain full output vectors and statewise robust policy gains.

\textbf{Robust evaluation of the output controller.}
Each model has one decision state $x$. For the controller $\pi_N$ selected by
each method, we plot its actual worst-case average reward
\begin{equation}
 G_N=g_x^{\pi_N}=\inf_{q\in Q_S}\eta_x^{\pi_N,q},
 \qquad G^\star=g_x^\star.
 \label{num:paired-policy-gain}
\end{equation}
This is evaluated analytically from the selected controller, rather than
estimated by its value iterate or by a finite simulated rollout.
In the first two models, state $y$ has the same gain as $x$, and all remaining
states have policy-independent gains. The latter property also holds in the
third model. Consequently, in every experiment,
\[
 \|g^\star-g^{\pi_N}\|_\infty=G^\star-G_N.
\]
Reaching the horizontal reference $G^\star$ therefore verifies optimality
from every initial state for these models. Policy gains are plotted on linear axes, including negative values.

\textbf{Exact row minimization and implementation.}
Linear minimization over the convex hull of a curve has the same value as
minimization over the generating curve. In Model~I, the row objective is
$v_x+u(v_y-v_x)+u^3(v_z-v_x)$; we compare both endpoints and any real stationary
point in $[0,1/2]$. In Model~II it is quadratic, so both endpoints and any
interior minimizing vertex suffice. The alternative-action row objectives and the Model~III objective are affine,
so their minima occur at interval endpoints. All calculations use these analytic
minimizers in double precision, with no discretization of the ambiguity set
and no Monte Carlo policy evaluation. The computations are deterministic.
The nominal reference rows are specified as part of each model.

\subsection{Model I: boundary leakage}
\label{num:paired-boundary}

The ordered state space is $(x,y,z,w,H)$. State $x$ has actions $c$
(curved) and $t$ (risky); every other state has one action. The rewards are
$r(x,c)=0$, $r(x,t)=10$, $r(y)=-1$, $r(z)=1$, $r(w)=-2$, and $r(H)=5$.
The uncertain rows are
\[
\begin{aligned}
 U(x,c)&=\operatorname{co}\{(1-u-u^3,u,u^3,0,0):0\le u\le1/2\},\\
 U(x,t)&=\{(0,0,0,1-\rho,\rho):0\le\rho\le1\}.
\end{aligned}
\]
State $y$ returns to $x$, and $z,w,H$ are absorbing. These remaining rows
are known. The nominal parameters are the interval midpoints
$u_0=1/4$ and $\rho_0=1/2$.

The exact robust gain and a gain-face bias are
\[
 g^\star=(0,0,1,-2,5),\qquad h=(0,-1,0,0,0).
\]
The curved continuation gain is $u^3$, minimized at $u=0$, whereas the
risky continuation gain is $-2+7\rho$, minimized at $\rho=0$.
Thus only $c$ is gain-active, and its minimizing gain face is the self-loop.
The stated bias satisfies the restricted equation there and the
deterministic equations at the other states. The curved controller has
robust gain $g^\star$, while the risky controller has gain
$(-2,-2,1,-2,5)$. Hence $G_N\in\{0,-2\}$.

Under the nominal model, action $c$ eventually reaches $z$ and has gain
$1$ at $x,y$. Action $t$ has nominal gain
$(1-\rho_0)(-2)+5\rho_0=3/2$, so it is strictly preferable nominally.
The nominal optimal gain is $g_0^\star=(3/2,3/2,1,-2,5)$, and its optimal controller
has robust loss $2$. The immediate reward $10$ affects transient decisions
but contributes zero to the long-run average after absorption.

The curved rows create long transients near $u=0$: every fixed $u>0$
eventually leads to $z$, while $u=0$ leaves $x$ recurrent. This change in
recurrent structure makes the model a test of planning near the boundary
of the uncertainty set.

\subsection{Model II: periodic basins}
\label{num:paired-periodic}

The ordered states are $(x,y,h_0,h_1,m,\ell)$. State $x$ has actions $c$
(curved) and $f$ (risky), both with reward zero. Every other state has one
action. The rewards at $(y,h_0,h_1,m,\ell)$ are $(-0.3,1,-1,-1,1)$.
For the curved action,
\[
 U(x,c)=\operatorname{co}\{p(u):0\le u\le1\},
\]
where, in the stated coordinate order,
\[
 p(u)=(0,0,0.55+0.10u,0,0.25-0.05u^2,0.20-0.10u+0.05u^2).
\]
The risky row set is
\[
 U(x,f)=\{(0,0,0,0,1-\rho,\rho):0\le\rho\le1\}.
\]
State $y$ goes to $x$, and $h_0,h_1$ alternate deterministically. States
$m,\ell$ are absorbing. The nominal parameters are $u_0=\rho_0=1/2$.

The exact robust certificate is
\[
 g^\star=(-3/40,-3/40,0,0,-1,1),\qquad
 h=(27/40,9/20,1,0,0,0).
\]
Indeed, $p(u)^\top g^\star=-3/40+(u-1/2)^2/10$, uniquely minimized at
$u=1/2$, and $p(1/2)^\top h=3/5=g_x^\star+h_x$.
The risky continuation gain is $2\rho-1$, whose worst value is $-1$.
Thus $c$ is strictly gain-preferred, and the remaining bias equations follow
from the deterministic transitions. The curved and risky controllers have
robust gains $g^\star$ and $(-1,-1,0,0,-1,1)$, respectively.
Consequently, $G_N\in\{-3/40,-1\}$.

Nominally, the risky action has gain $2\rho_0-1=0$, which exceeds the curved
action's $-3/40$. The nominal optimal gain is $g_0^\star=(0,0,0,0,-1,1)$, and the
nominal optimal controller has robust loss $37/40$.
The deterministic two-cycle has alternating rewards and zero average
gain. This model tests convergence with periodic recurrent dynamics and
different gains across recurrent classes.

\subsection{Model III: a safe-risky decision}
\label{num:paired-tradeoff}

This model gives a direct safe-risky comparison. There are three
states $(x,L,H)$. States $L,H$ are absorbing with rewards $0,1$, respectively.
At $x$, both available actions have reward zero. The safe action has known
row $(0,0.6,0.4)$, while the risky action has row set
\[
 U(x,\mathrm{risky})=\{(0,1-p,p):0.1\le p\le0.9\}.
\]
The nominal risky row uses $p_0=0.5$; the safe row is unchanged.
The robust gain is $g^\star=(0.4,0,1)$, achieved by the safe action, with bias
$h=(-0.4,0,0)$. These vectors directly satisfy the gain-first system; the
row sets are also polytopes. The risky controller has robust gain
$(0.1,0,1)$, so its robust loss is $0.3$.
In contrast, nominal planning prefers the risky action and has optimal
nominal gain $g_0^\star=(0.5,0,1)$. Thus the nominal objective changes the selected
controller in this model. This polytopic example complements the two
nonpolytopic constructions by making the robustness distinction explicit.

\subsection{Results and interpretation}
\label{num:paired-results}

Figure~\ref{fig:main-paired-experiments} pairs each model's joint error with
robust controller gain, and the top and middle rows of Figure~\ref{fig:appendix-experiments}
separate the gain and displacement errors. Algorithm \ref{alg:robust_halpern}'s errors are consistent with
the predicted convergence to zero, and its controllers attain the optimal
robust gain in all three models. In Model~I it selects $t$ at $N=1,2$ and
$c$ at every tested $N\ge3$. It selects an optimal controller at every tested
budget in Models~II and III.

The nominal baseline selects the risky controller at every tested budget in
Models~I and III. In Model~II it selects $c$ at $N=1,2,3$ and $f$ at every
tested $N\ge4$. Its final robust gains are therefore $-2$, $-1$, and $0.1$,
compared with the optimal gains $0$, $-3/40$, and $0.4$.
These gaps follow from the different objectives: the nominal controller
optimizes its reference model, whose favorable outcomes are less reliable
under worst-case transition evaluation.

At $N=16384$, Algorithm \ref{alg:robust_halpern} has
joint errors approximately $5.51\times10^{-3}$, $6.10\times10^{-5}$, and
$2.44\times10^{-5}$ in Models~I-III, respectively.
The bottom row of Figure~\ref{fig:appendix-experiments} shows both methods approaching zero
error for their own objectives. The nominal method's robust displacement
can nevertheless grow because its output follows the nominal gain direction.
Its robust policy losses are $2$, $37/40$, and $0.3$, respectively. Thus
convergence for the reference model and robust controller performance are
distinct properties.

\begin{figure}[htbp]
  \centering
  \includegraphics[width=\linewidth]{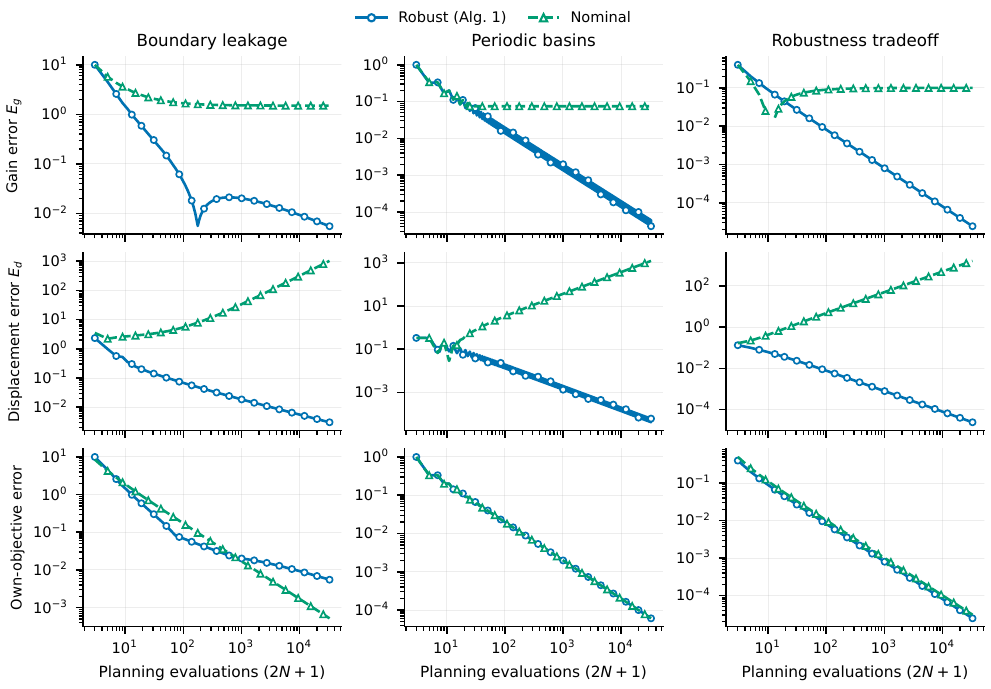}
  \caption{Additional convergence diagnostics at every integer budget.
  Columns correspond to Models~I-III. Top: robust gain-estimation error
  $E_g(N)$. Middle: robust displacement error $E_d(N)$.
  Bottom: joint error $E_N^{\mathrm{own}}$ against each method's own
  operator and optimal gain. Both methods converge for their own planning
  objectives, while the nominal controllers retain the robust policy losses
  shown in Figure~\ref{fig:main-paired-experiments}.}
  \label{fig:appendix-experiments}
\end{figure}

\section{Preliminaries and applicability of prior results}
\label{app:preliminaries}
\label{sec:model}

We use the model and notation of Section~\ref{m:model}, with $n=|S|$, $R=\max_{i,a}|r_{ia}|$, $\|\cdot\|_\infty$ the supremum norm, and $\spn(x)=\max_i x_i-\min_i x_i$. Vector inequalities and extrema are understood coordinatewise. A coordinatewise infimum need not be attained by one selector. Whenever one selector works for every state, we establish this separately.

The argument has three inputs: rectangular discounted dynamic programming, compact-action one-player average-payoff results, and a nonexpansive Tauberian theorem. We cite the standard results and verify their applicability to the present post-action model. References to theorem numbers in \cite{grand2023beyond} use arXiv:2312.03618v3, dated January~14, 2025.

\textbf{Nonconvex row sets.}
The compactness hypotheses of \cite[Theorems~3.4-3.5]{grand2023beyond} permit nonconvex sets. Their finite-restriction argument should then retain the selected rows themselves. To check this point, fix an initial law $\mu$ and tolerance $\xi>0$. For each of the finitely many $\pi\in\Pi_D$, choose $q^\pi\in\mathcal Q_S$ such that
\[
 \mu^\top\eta^{\pi,q^\pi}
 \le \inf_{q\in\mathcal Q_S}\mu^\top\eta^{\pi,q}+\xi.
\]
Set $E_{ia}=\{q^\pi_{ia}:\pi\in\Pi_D\}$ and $\mathcal Q_E=\prod_{i,a}E_{ia}$. Then $\mathcal Q_E\subseteq\mathcal Q_S$, and it contains each selected $q^\pi$. The restricted model is a finite perfect-information stochastic game, so the finite-game stationary duality used in their proof gives
\[
\begin{aligned}
 \inf_{q\in\mathcal Q_S}\max_{\pi\in\Pi_D}\mu^\top\eta^{\pi,q}
 &\le \min_{q\in\mathcal Q_E}\max_{\pi\in\Pi_D}\mu^\top\eta^{\pi,q}\\
 &=\max_{\pi\in\Pi_D}\min_{q\in\mathcal Q_E}\mu^\top\eta^{\pi,q}\\
 &\le\max_{\pi\in\Pi_D}\inf_{q\in\mathcal Q_S}\mu^\top\eta^{\pi,q}+\xi.
\end{aligned}
\]
Weak duality and $\xi\downarrow0$ prove the stationary duality needed below for the original row sets. Convexification is unnecessary for this average-reward argument.

\subsection{Rectangular dynamic programming and the fixed-policy reduction}

\begin{lemma}
\label{lem:basic}
The operators $T$ and $T^\pi$ are order preserving, additively homogeneous, and nonexpansive in $\|\cdot\|_\infty$. For $F\in\{T,T^\pi\}$, the map $x\mapsto F((1-\eps)x)$ is a $(1-\eps)$-contraction. Its unique fixed point is, respectively, $V_\eps$ or $V_\eps^\pi$, with norm at most $R/\eps$. These vectors are discounted values against history-dependent opponents. In the control problem, both players have deterministic stationary discounted-optimal selectors that work simultaneously from every state. For fixed $\pi\in\Pi_S$, nature has such a selector. The finite-horizon total values are $T^N0$ and $(T^\pi)^N0$.
\end{lemma}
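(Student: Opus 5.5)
The three operator properties are checked coordinatewise from the definitions, and the remaining claims then follow from the contraction principle and the standard rectangular discounted and finite-horizon dynamic programming results for the post-action game.

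\emph{Order preservation, additive homogeneity, nonexpansiveness.} For each $(i,a)$ the map $x\mapsto r_{ia}+\min_{p\in\U_{ia}}p^\top x$ is an infimum of affine maps with probability weights. It is therefore monotone and satisfies $\min_p p^\top(x+c\one)=\min_p p^\top x+c$. Taking a maximum over $a$, or a $\pi(\cdot\mid i)$-average, preserves both properties. Monotonicity and additive homogeneity together give nonexpansiveness: from $x\le y+\|x-y\|_\infty\one$ we get $Fx\le Fy+\|x-y\|_\infty\one$, and symmetrically. Consequently $x\mapsto F((1-\eps)x)$ is a $(1-\eps)$-contraction, and Banach's theorem gives a unique fixed point. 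If $\|x\|_\infty\le R/\eps$, then $\|F((1-\eps)x)\|_\infty\le R+(1-\eps)R/\eps=R/\eps$, so this ball is invariant and contains the fixed point.

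\emph{Identification with game values.} Compactness makes every minimum and maximum attained. Choose a deterministic $\pi$ greedy at the fixed point $V$ of $x\mapsto T((1-\eps)x)$, and a nature selector $q_{ia}\in\argmin_{p\in\U_{ia}}p^\top V$ for every $(i,a)$. Against any history-dependent nature, the process $(1-\eps)^tV_{S_t}+\sum_{s<t}(1-\eps)^s r_{S_sA_s}$ is a submartingale under $\pi$. The reason is that at each stage nature's row gives continuation at least the minimum, by rectangularity and because nature acts after the action is observed. Symmetrically, it is a supermartingale under $q$ against any history-dependent controller, since each action yields at most $V_i$. Letting $t\to\infty$ with bounded rewards gives $\Psi\ge V_i\ge\Psi$ for all initial states simultaneously. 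Hence $V$ is the discounted value and $V=V_\eps$, with both selectors optimal from every state. For fixed $\pi\in\Pi_S$, only nature's inequality is needed: the argmin selector over each $\U_{ia}$ at $V^\pi$ shows $V^\pi=V_\eps^\pi$. The same one-step induction without discount, started from the terminal value $0$, shows that the $N$-stage totals are $T^N0$ and $(T^\pi)^N0$.

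\emph{Main obstacle.} The only delicate step is justifying the supermartingale and submartingale inequalities against history-dependent randomized opponents. For a history-dependent nature, I would condition on the history and the realized action. Then nature's conditional row lies in $\U_{S_tA_t}$, so its expected continuation is at least $\min_p p^\top V$; measurability of this choice is harmless because the inequality holds pointwise. For a randomized controller, the expected one-step value is an average over actions, each of which is at most $V_i$ by maximality. This is the post-action rectangularity used in \cite{iyengar2005robust,nilim2004robustness}, and the argument above reproduces their proof without change.
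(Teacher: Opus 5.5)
Your proof is correct, but it takes a more self-contained route than the paper. The paper does not reprove these facts. It cites the rectangular dynamic-programming results of \cite{grand2023beyond} (Section~2.1, Proposition~2.2, and Appendix~B for history-dependent nature) and only checks that they apply:
\begin{itemize}
\item the cited proposition assumes convex row sets, so each $\U_{ia}$ is replaced by its compact convex hull, which leaves every linear minimum and hence $T$ and $T^\pi$ unchanged;
\item compactness then lets the minimizing rows be chosen back in the original sets;
\item randomized $\pi$ is handled through the effective-row (tuple) reduction;
\item contraction, the fixed point, and the finite-horizon recursion are taken as standard.
\end{itemize}

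You instead prove everything directly. You verify the operator properties and the invariance of the $R/\eps$ ball explicitly, and you identify the fixed point with the game value by a sub/supermartingale argument with greedy stationary selectors. Your key inequality $p_t^\top V\ge\min_{p\in\U_{S_tA_t}}p^\top V$ holds pointwise for every realized row. It survives nature's randomization and the controller's averaging over actions once you condition on the pre-action history. So you need neither the convexification step nor the tuple reduction; only compactness (for attainment) and the post-action timing are used.

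The paper's route is shorter and leaves the measure-theoretic details of history-dependent randomized strategies to a published treatment. Yours makes clear exactly which hypotheses the lemma rests on, and it supplies the $R/\eps$ bound that the paper leaves implicit.

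One small clarification on the finite-horizon claim: the verifying strategies there are the time-dependent Markov choices greedy at $T^{N-t-1}0$ (respectively $(T^\pi)^{N-t-1}0$), not the stationary discounted selectors. Your ``same one-step induction'' evidently intends this.
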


\begin{proof}[Applicability of standard dynamic programming]
These are the rectangular dynamic-programming results summarized in \cite[Section~2.1, equations~(2.3)-(2.5), and Proposition~2.2]{grand2023beyond}, including its Appendix~B for history-dependent nature. The next-state-dependent reward in that reference is specialized here to $r_{iaj}=r_{ia}$.
For the convexity assumption in Proposition~2.2, each row set may first be replaced by its compact convex hull. Linear minimization, and hence both Bellman maps, is unchanged. Compactness then permits every minimizing Bellman row to be chosen in the original $\U_{ia}$, for every action, while finiteness attains the controller's maximum. These selectors satisfy the discounted Bellman inequalities against every admissible original-model opponent. The fixed-policy reduction below gives the same conclusion for randomized $\pi$. The finite-horizon statement uses the same recursion with terminal value zero. We use the standard contraction and fixed-point results without reproving them.
\end{proof}

For $\pi\in\Pi_S$, let nature's effective action at state $i$ be a tuple $b=(p_a)_a\in B_i:=\prod_{a\in A(i)}\U_{ia}$, with reward $r_i^\pi$ and transition $P_i(b)=\sum_a\pi(a\mid i)p_a$. Thus
\begin{equation}
 \U_i^\pi=\left\{\sum_a\pi(a\mid i)p_a:p_a\in\U_{ia}\right\},
 \qquad (T^\pi x)_i=r_i^\pi+\min_{b\in B_i}P_i(b)^\top x.
\label{eq:aggregate}
\end{equation}
The finite product $B_i$ is compact, and $b\mapsto P_i(b)$ is continuous. The reward $r_i^\pi$ is constant in $b$. Rectangularity gives the equality because each positive-weight summand can be minimized independently. A deterministic stationary tuple policy specifies a full selector in the original row sets, including arbitrary feasible rows at zero-weight actions.

This reduction also respects history-dependent randomization. Given a
pre-action history $H$ and nature's conditional row laws $\kappa_{H,a}$,
sample a tuple from $\bigotimes_a\kappa_{H,a}$, draw the current action
from $\pi(\cdot\mid S_t)$, and use the corresponding component.
Conditional on $H$ and the tuple $b=(p_a)_a$, the next-state law is
$\bar p(b)=\sum_a\pi(a\mid S_t)p_a$. The original action history can
be retained as auxiliary randomization with its correct conditional law:
after observing a next state $j$ with $\bar p_j(b)>0$, the conditional
probability of its action label $a$ is
$\pi(a\mid S_t)p_{a,j}/\bar p_j(b)$. Labels on zero-probability events
can be chosen arbitrarily. Marginalizing these auxiliary labels
conditional on the tuple and state history gives an admissible
history-dependent randomized policy in the compact-action MDP with
transition law $\bar p(b)$. Conversely, a tuple is implemented by using
its component after the sampled action is observed. The state-process
law is preserved. For the pre-action filtration $\mathscr F_t$, stationarity of the controller gives
$\mathbb E[r_{S_tA_t}-r^\pi_{S_t}\mid\mathscr F_t]=0$ and
$|r_{S_tA_t}-r^\pi_{S_t}|\le2R$.
The martingale strong law therefore makes its sample average converge to zero almost surely.
This justifies applying the one-player results to the tuple model.

\subsection{Finite-chain facts used by the Bellman arguments}

For a finite stochastic matrix $P$ and reward vector $c$, define
\begin{equation}
 P^\infty=\lim_{N\to\infty}\frac1N\sum_{t=0}^{N-1}P^t,
 \quad Z_P=(I-P+P^\infty)^{-1},\quad
 \eta=P^\infty c,\quad w=Z_P(c-\eta).
\label{sol:gain-bias-def}
\end{equation}

\begin{lemma}
\label{sol:markov-algebra}
The projector $P^\infty$ is stochastic, $PP^\infty=P^\infty P=(P^\infty)^2=P^\infty$, and $Z_P$ exists. The canonical bias is the unique solution of
\begin{equation}
 (I-P)w=c-P^\infty c,\qquad P^\infty w=0.
\label{sol:poisson-normalization}
\end{equation}
If $d\ge0$ and $P^\infty d=0$, then $d$ vanishes on recurrent states and
\begin{equation}
 Z_Pd=\sum_{t=0}^\infty P^td\ge0.
\label{sol:transient-series}
\end{equation}
\end{lemma}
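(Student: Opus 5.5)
The plan is to prove the three assertions in turn from the Cesàro definition of $P^\infty$ and the matrix identities it satisfies.

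\emph{The projector $P^\infty$.} Let $A_N=N^{-1}\sum_{t=0}^{N-1}P^t$. Each $A_N$ is stochastic, so the set of stochastic matrices being closed makes any limit stochastic. For convergence, use the identity $PA_N-A_N=N^{-1}(P^N-I)$, which tends to $0$. Any two subsequential limits $Q,Q'$ of the bounded sequence $(A_N)$ therefore satisfy $PQ=QP=Q$, and the same for $Q'$. Averaging gives $A_NQ=Q$, hence $Q'Q=Q$; symmetrically $QQ'=Q'$, and with $Q^2=Q$ this forces $Q=Q'$. Alternatively, one may simply cite the standard existence of the Cesàro limit \cite{puterman2014markov}. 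The identities $PP^\infty=P^\infty P=P^\infty$ and $(P^\infty)^2=P^\infty$ then follow by passing to the limit in $PA_N-A_N\to0$ and $A_NP^\infty=P^\infty$.

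\emph{Invertibility of $Z_P$.} Suppose $(I-P+P^\infty)x=0$. Applying $P^\infty$ gives $P^\infty x-P^\infty x+P^\infty x=0$, so $P^\infty x=0$ and hence $Px=x$. Then $A_Nx=x$ for all $N$, so $x=P^\infty x=0$. Therefore $Z_P$ exists.

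\emph{Canonical bias.} Set $w=Z_P(c-\eta)$ with $\eta=P^\infty c$. Since $P^\infty$ commutes with $I-P+P^\infty$, it commutes with $Z_P$, and $P^\infty(I-P+P^\infty)=P^\infty$ gives $P^\infty Z_P=P^\infty$. Hence $P^\infty w=P^\infty(c-\eta)=0$. From $(I-P+P^\infty)w=c-\eta$ and $P^\infty w=0$ we get $(I-P)w=c-P^\infty c$. For uniqueness, the difference $x$ of two solutions satisfies $(I-P)x=0$ and $P^\infty x=0$, so $(I-P+P^\infty)x=0$ and $x=0$.

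\emph{Transient series.} This is the only step with real content. Let $d\ge0$ with $P^\infty d=0$. For a recurrent state $j$, the diagonal entry $P^\infty_{jj}$ is positive, being the stationary mass of $j$ in its class. Then $0=(P^\infty d)_j\ge P^\infty_{jj}d_j\ge0$ forces $d_j=0$.

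For the series, a chain started at $i$ leaves the transient states geometrically fast in a finite chain: there are $m$ and $\theta<1$ with $\Pr_i(S_m\text{ transient})\le\theta$ for every $i$. Since $d$ is supported on transient states, $0\le(P^td)_i\le\|d\|_\infty\Pr_i(S_t\text{ transient})$, so $\sum_t P^td$ converges absolutely and is nonnegative.

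Call the sum $y$. Then $(I-P)y=d$, since the telescoping remainder $P^Nd\to0$. Also $P^\infty y=\sum_t P^\infty d=0$, using $P^\infty P^t=P^\infty$. Hence $(I-P+P^\infty)y=d$, so $y=Z_Pd$.

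The main care point is the geometric decay of transient mass, which is the one step using finite-chain structure beyond pure algebra. Everything else is linear algebra driven by $P^\infty P=P^\infty$.
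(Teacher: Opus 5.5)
Your proposal is correct. For the part the paper actually proves, it follows the paper's route: positivity of the stationary mass on each recurrent class forces $d=0$ there, your geometric decay of transient mass is the probabilistic form of the paper's $\rho(Q)<1$ for the transient block, and your identity $(I-P+P^\infty)y=d$ is the same identification the paper obtains from uniqueness of the normalized Poisson system; the projector identities, invertibility of $Z_P$, and the Poisson normalization, which the paper simply cites from Schweitzer, you prove directly and correctly. The one slip is in your uniqueness of the Ces\`aro limit: $Q'Q=Q$, $QQ'=Q'$ and $Q^2=Q$ do not by themselves force $Q=Q'$ (any two stochastic projections $\one\mu^\top$ and $\one\nu^\top$ satisfy all three), but you also have $QP=Q$, which gives $QA_N=Q$ and hence $QQ'=Q$, so $Q=QQ'=Q'$.
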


\begin{proof}
The projection and fundamental-matrix identities are the finite-chain specialization of \cite[Section~2, equations~(2.2)-(2.9)]{schweitzer1985undiscounted}, with unit holding times. They apply without irreducibility or aperiodicity. For the last assertion, the stationary distribution of each recurrent class is strictly positive on that class. Its mean of the nonnegative vector $d$ is zero, so $d$ is zero there. If $Q$ is the transient block and $d_{\rm tr}$ is the restriction of $d$ to that block, then $\rho(Q)<1$ and $v=\sum_{t\ge0}P^td$ equals $(I-Q)^{-1}d_{\rm tr}$ on the transient states and zero elsewhere. Hence $v\ge0$, $(I-P)v=d$, and $P^\infty v=0$. Uniqueness in \eqref{sol:poisson-normalization} gives $v=Z_Pd$.
\end{proof}

\begin{lemma}
\label{fd:lem:chain-limits}
For each fixed finite chain, $\eps(I-(1-\eps)P)^{-1}c\to P^\infty c$. Under a stationary pair $(\pi,q)$, $X_N$ converges almost surely to the invariant mean reward of the recurrent class eventually entered. Its expected limit is $\eta_i^{\pi,q}$; all four payoffs in \eqref{m:four-payoffs} equal this number.
\end{lemma}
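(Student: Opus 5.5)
The plan is to prove the three assertions of Lemma~\ref{fd:lem:chain-limits} in order, using only the finite-chain algebra of Lemma~\ref{sol:markov-algebra} and the standard decomposition of a finite chain into recurrent classes and transient states.

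\textbf{Abelian limit.} Since $P$ is stochastic, for $0<\eps<1$ we have the Neumann series
\[
 \eps(I-(1-\eps)P)^{-1}=\eps\sum_{t\ge0}(1-\eps)^tP^t .
\]
I will split $c=P^\infty c+(c-P^\infty c)$. For the first piece, $PP^\infty=P^\infty$ gives $P^tP^\infty c=P^\infty c$ for every $t$, so the series returns exactly $P^\infty c$.

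For the second piece, I write $c-P^\infty c=(I-P)w$ with $w$ the canonical bias from \eqref{sol:poisson-normalization}. Then
\[
 \eps(I-(1-\eps)P)^{-1}(I-P)w
 = \eps(I-(1-\eps)P)^{-1}\bigl[(I-(1-\eps)P)-\eps P\bigr]w
 = \eps w-\eps^2(I-(1-\eps)P)^{-1}Pw .
\]
The resolvent has sup-norm at most $1/\eps$, so this expression is $O(\eps\|w\|_\infty)$ and tends to $0$. This route avoids any aperiodicity assumption, since only Ces\`aro-type identities are used.

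\textbf{Almost sure limit of $X_N$.} For a stationary pair $(\pi,q)$, the state process is a finite Markov chain with kernel $P=P^{\pi,q}$.
\begin{itemize}
\item Transient states are visited finitely often almost surely, so the chain enters some recurrent class $C$ after an almost surely finite time.
\item On $C$ the chain is irreducible, so the ergodic theorem for positive recurrent chains (regeneration at returns to a fixed state) gives $N^{-1}\sum_{t<N}r^\pi_{S_t}\to \mu_C^\top r^\pi$, where $\mu_C$ is the invariant law of $C$. Periodicity does not affect this time-average statement.
\item The realized rewards $r_{S_tA_t}$ differ from $r^\pi_{S_t}$ by bounded martingale differences with respect to the pre-action filtration. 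This is the same argument as in the fixed-policy reduction above, and the martingale strong law removes the difference.
\end{itemize}
Hence $X_N\to\mu_C^\top r^\pi$ almost surely, which is the invariant mean reward of the class eventually entered.

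\textbf{Equality of the four payoffs.} Because $|X_N|\le R$, dominated convergence gives
\[
 \mathbb E_iX_N\to\mathbb E_i\lim_N X_N .
\]
Therefore $I^-=I^+=J^-=J^+$, all equal to this common limit. Directly, $\mathbb E_iX_N=N^{-1}\sum_{t<N}(P^tr^\pi)_i\to(P^\infty r^\pi)_i=\eta_i^{\pi,q}$ by the definition of the Ces\`aro projector, which identifies the common value.

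The main obstacle is the almost sure statement. Its care lies in handling periodic recurrent classes and the action randomization, and both are resolved by using Ces\`aro/regeneration arguments together with the martingale strong law rather than any convergence of $P^t$.
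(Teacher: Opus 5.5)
Your proposal is correct and follows the same route as the paper. Both arguments use four ingredients: an Abelian limit for the fixed chain, the recurrent-class ergodic theorem for $r^\pi_{S_t}$ after the chain enters a class, the bounded martingale-difference strong law to remove the action randomization, and bounded convergence to equate the four payoffs. The only difference is the Abel step: you prove it directly from the Poisson identity $c-P^\infty c=(I-P)w$, which also gives an explicit $O(\epsilon\|w\|_\infty)$ rate, whereas the paper simply cites the fact that the finite-chain Ces\`aro limit implies the Abel limit.
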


\begin{proof}
The finite-chain Ces\`aro limit gives its Abel limit. The recurrent-class ergodic theorem identifies the almost-sure average of $r^\pi_{S_t}$. For sampled actions, the differences $r_{S_tA_t}-r^\pi_{S_t}$ are bounded martingale differences, so their averages converge to zero almost surely. Finally $|X_N|\le R$ permits bounded convergence. These are finite-chain statements and allow periodic recurrent classes; see \cite[Chapters~8-9]{puterman2014markov}.
\end{proof}

\subsection{The pathwise one-player input}
\label{app:one-player-input}

For bounded rewards, Fatou's inequalities give
\begin{equation}
 I_i^-\le J_i^-\le J_i^+\le I_i^+.
\label{fd:eq:payoff-order}
\end{equation}
We use the compact one-player payoff result recorded in \cite{grand2023beyond} Lemma~3.3, Appendix~E, and the proof of Corollary~3.7 in Appendix~G. In the two applications needed here it reads
\begin{equation}
\begin{aligned}
 \inf_{\tau\in\mathcal Q_H}I_i^-(\pi,\tau)
    &=\inf_{q\in\mathcal Q_S}\eta_i^{\pi,q}
        &&(\pi\in\Pi_S),\\
 \sup_{\sigma\in\Pi_H}I_i^+(\sigma,q)
    &=\max_{\pi\in\Pi_D}\eta_i^{\pi,q}=:d_i(q)
        &&(q\in\mathcal Q_S).
\end{aligned}
\label{fd:eq:published-one-player}
\end{equation}
Apply the cited one-player theorem with initial law $e_i$. In the first line, nature controls the compact-action tuple MDP in \eqref{eq:aggregate}. Its hypotheses are finite state space, compact actions, and continuous rewards and transitions. The expected-limit-inferior version is explicitly covered in the cited Appendix~G. The reward martingale argument above preserves this pathwise criterion for randomized $\pi$. In the second line, fixing $q$ leaves a finite nominal MDP. Sign reversal changes minimizing expected limit inferior into maximizing expected limit superior, and finiteness attains the stationary maximum. Thus \eqref{fd:eq:published-one-player} supplies the two pathwise endpoints needed below, independently of convergence of expected finite-horizon averages.

\section{Proof of Theorem~\ref{m:policy-value}: policy evaluation}
\label{app:policy-evaluation}

\begin{lemma}
\label{gc:lem:policy-limit}
For every $\pi\in\Pi_S$,
\begin{equation}
 \eps V_\eps^\pi\longrightarrow g^\pi,
 \qquad g_i^\pi=\inf_{q\in\mathcal Q_S}\eta_i^{\pi,q}.
\label{gc:eq:policy-limit}
\end{equation}
For every $\nu>0$, one full selector satisfies
\begin{equation}
 g^\pi\le\eta^{\pi,q^{\pi,\nu}}\le g^\pi+\nu\one.
\label{gc:eq:stationary-approx}
\end{equation}
\end{lemma}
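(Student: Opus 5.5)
The plan is to reduce to nature's one-player compact-action MDP via the tuple model \eqref{eq:aggregate}. Fix $\pi\in\Pi_S$. By Lemma~\ref{lem:basic}, $V_\eps^\pi$ is nature's minimal discounted value in this model. Define $\gamma_i:=\inf_{q\in\mathcal Q_S}\eta_i^{\pi,q}$, where a full selector $q$ is exactly a deterministic stationary tuple policy. The goal is to show that $\eps V_\eps^\pi\to\gamma$ and that a single selector approximates $\gamma$ simultaneously at all states.

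\textbf{Lower bound via the payoff identity.} For every initial state $i$, \eqref{fd:eq:published-one-player} gives $\inf_{\tau\in\mathcal Q_H}I_i^-(\pi,\tau)=\gamma_i$. I want to compare the normalized discounted value with $I^-$. Fix $\tau$. By Fatou, the Abel mean of rewards has $\liminf$ at least $\mathbb E\liminf X_N$, because the Abel average is a convex combination of Cesàro averages $X_N$ with weights concentrating on large $N$ as $\eps\downarrow0$. That comparison is only pointwise in $\tau$, however, and I need uniformity over $\tau$. So I will argue differently: the discounted optimal nature strategy $q_\eps$ is stationary deterministic (Lemma~\ref{lem:basic}), hence $\eps V_\eps^\pi=\eps(I-(1-\eps)P^{\pi,q_\eps})^{-1}r^\pi$. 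To prove $\liminf_\eps \eps V_\eps^\pi\ge\gamma$, I will use the Tauberian/Abelian link cited in the related work (Bewley–Kohlberg type for compact one-player MDPs; in fact \cite[Lemma~3.3]{grand2023beyond} gives the existence and identification of $\lim\eps V^\pi_\eps$ with the infimum of stationary gains). I will therefore invoke that lemma, whose applicability to the tuple model and to randomized $\pi$ is verified in Appendix~\ref{app:preliminaries}. This yields $\eps V_\eps^\pi\to\gamma=:g^\pi$.

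\textbf{Simultaneous approximation.} This is the main obstacle, since infima at different states may be approached by different selectors. The approach is to take $q_\eps$ discounted-optimal from all states simultaneously. Then for every stationary $q$, $V_\eps^{\pi,q_\eps}\le V_\eps^{\pi,q}$ coordinatewise. Fix $\nu>0$ and choose $\eps$ so small that $\|\eps V_\eps^\pi-g^\pi\|_\infty\le\nu/2$. The difficulty is that $\eta^{\pi,q_\eps}$ is not $\eps V_\eps^{\pi,q_\eps}$. To handle this, I will use the discounted Bellman equality $V=r^\pi+(1-\eps)P_\eps V$ with $P_\eps:=P^{\pi,q_\eps}$ and apply $P_\eps^\infty$. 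Since $P_\eps^\infty P_\eps=P_\eps^\infty$, this gives $\eps P_\eps^\infty V_\eps^\pi=P_\eps^\infty r^\pi=\eta^{\pi,q_\eps}$. Hence
\[
\eta^{\pi,q_\eps}=P_\eps^\infty(\eps V_\eps^\pi)\le P_\eps^\infty g^\pi+\tfrac{\nu}{2}\one .
\]
It remains to bound $P_\eps^\infty g^\pi\le g^\pi+\tfrac{\nu}{2}\one$. For this I will show $g^\pi$ is superharmonic for every feasible row: $\min_{p\in\U_i^\pi}p^\top g^\pi\ge g^\pi_i$ would give $P_\eps g^\pi\ge g^\pi$, the wrong direction. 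Instead I will use $P_\eps(\eps V^\pi_\eps)\approx \eps V^\pi_\eps$ up to $O(\eps\|V\|)$, which follows from the Bellman equation. Iterating and averaging then gives $P_\eps^\infty(\eps V_\eps^\pi)$ within $\nu/2$ of $\eps V_\eps^\pi$ once the geometric corrections are controlled. If this is too crude, the fallback is a direct recurrent-class argument: on each recurrent class of $P_\eps$, the identity above already shows $\eta$ equals the class average of $\eps V_\eps^\pi$, and transient states inherit absorption-weighted averages.

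\textbf{Finishing.} Together these give $\eta^{\pi,q_\eps}\le g^\pi+\nu\one$. The lower inequality $g^\pi\le\eta^{\pi,q}$ holds for every $q$ by the definition of $g^\pi$ as a coordinatewise infimum.
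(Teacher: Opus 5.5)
Your first step, invoking the one-player normalized-discount result for nature's tuple MDP, is also how the paper handles the limit. One citation point: the paper cites \cite[Lemma~4.7]{grand2023beyond} for this limit, while Lemma~3.3 there is the payoff identity. The gap is in the simultaneous-approximation step. A discounted-optimal selector $q_\eps$ need not be nearly average-optimal, no matter how small $\eps$ is.

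Your identity $\eta^{\pi,q_\eps}=P_\eps^\infty(\eps V_\eps^\pi)$ is correct, but the remaining claim $P_\eps^\infty g^\pi\le g^\pi+\frac{\nu}{2}\one$ is false in general. The one-step estimate $\|P_\eps(\eps V_\eps^\pi)-\eps V_\eps^\pi\|_\infty\le2R\eps/(1-\eps)$ controls only about $1/\eps$ steps. The kernel $P_\eps$ moves with $\eps$, and it can need far more steps than that to reach its recurrent classes. Your recurrent-class fallback does not help, because the absorption-weighted class averages it produces are exactly $\eta^{\pi,q_\eps}$, the quantity at issue.

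Concretely, take the model of Proposition~\ref{rev:slow-gain-estimation} with $k=2$: states $(x,y,z)$, rewards $(0,-1,1)$, $y\to x$, $z$ absorbing, and $\mathcal U_x=\operatorname{co}\{(1-u-u^2,u,u^2):0\le u\le1/2\}$. Here $g^\pi=(0,0,1)$, attained by the self-loop. A finite Bellman pair with $h=(0,-1,0)$ even exists. For the curve row with parameter $u$, solving the discounted chain gives
\[
 \eps V^{u}_{\eps,x}=\frac{(1-\eps)\,u\,(u-\eps)}{\eps+(1-\eps)\eps u+(1-\eps)u^2}.
\]
This is negative for $0<u<\eps$, so the self-loop is not discounted-optimal. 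Every other row of $\mathcal U_x$ puts positive mass on $z$, since on the hull $v\ge u^2$. Hence every discounted-optimal $q_\eps$ is eventually absorbed in $z$ (after roughly $\eps^{-2}$ steps when $u_\eps\approx\eps/2$), and $\eta^{\pi,q_\eps}=(1,1,1)$. Thus $\eta^{\pi,q_\eps}-g^\pi=(1,1,0)$ for every small $\eps$.

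The missing ingredient is uniformity in the discount. You need one selector, chosen independently of $\eps$, whose normalized discounted gap is small for all sufficiently small $\eps$. You can then fix it and let $\eps\downarrow0$ through the finite-chain Abel limit of Lemma~\ref{fd:lem:chain-limits}. The paper obtains such a selector from \cite[Theorem~4.3]{grand2023beyond}, applied to the tuple MDP with uniform initial law $\mu_i=1/n$ and tolerance $\nu/n$. It then converts the weighted bound $\eps\mu^\top(V_\eps^{\pi,q^{\pi,\nu}}-V_\eps^\pi)\le\nu/n$ into $\eps(V_\eps^{\pi,q^{\pi,\nu}}-V_\eps^\pi)\le\nu\one$, because every coordinate gap is nonnegative. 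In the example, the self-loop is such a selector (its gap is $O(\eps)$), yet it is never discounted-optimal.
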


\begin{proof}
\cite[Lemma~4.7]{grand2023beyond} applies to the fixed stationary randomized policy and compact row sets. Applying it with initial law $e_i$ gives \eqref{gc:eq:policy-limit} coordinatewise, hence in supremum norm because $S$ is finite. The tuple reduction ensures that nature's selectors belong to the original row sets.

For one selector that works from all states, apply \cite[Theorem~4.3]{grand2023beyond} to the tuple-action minimizing MDP, the uniform initial law $\mu_i=1/n$, and scalar tolerance $\nu/n$. Its compactness and continuity assumptions were checked in Appendix~\ref{app:preliminaries}. Discounted stationary optimality gives $\inf_q\mu^\top V_\eps^{\pi,q}=\mu^\top V_\eps^\pi$, so the theorem provides one $q^{\pi,\nu}$ such that, for all sufficiently small $\eps$,
\[
 0\le\eps\mu^\top(V_\eps^{\pi,q^{\pi,\nu}}-V_\eps^\pi)
       \le\nu/n.
\]
Each discounted coordinate gap is nonnegative. Since $\mu_i=1/n$, each coordinate is at most $n$ times their $\mu$-weighted mean. Therefore
\begin{equation}
 0\le\eps(V_\eps^{\pi,q^{\pi,\nu}}-V_\eps^\pi)\le\nu\one.
\label{gc:eq:vector-blackwell-approx}
\end{equation}
Keep the selector fixed and pass to the established discounted and finite-chain Abel limits. This yields \eqref{gc:eq:stationary-approx}.
\end{proof}

\textbf{Tauberian applicability under compactness.}
For either $F=T$ or $F=T^\pi$ and $\lambda,\mu\in(0,1]$,
\begin{equation}
 \|\lambda F(x/\lambda)-\mu F(x/\mu)\|_\infty
       \le R|\lambda-\mu|.
\label{foundation:scaling}
\end{equation}
Indeed, $\lambda T_i(x/\lambda)=\max_a\{\lambda r_{ia}+ \min_p p^\top x\}$; changing $\lambda$ changes each expression by at most $R|\lambda-\mu|$. For $T^\pi$ the difference is exactly $(\lambda-\mu)r^\pi$. Together with nonexpansiveness, this verifies Assumption~1 of \cite{ziliotto2016tauberian} on the Banach space $(\mathbb R^S,\|\cdot\|_\infty)$. If $R=0$, any positive constant also satisfies that assumption. The normalized fixed point in Theorem~1.2 of that reference is
\[
 v_\eps=\eps F\bigl((1-\eps)v_\eps/\eps\bigr),
\]
namely $\eps V_\eps$ or $\eps V_\eps^\pi$. Whenever this normalized discounted vector has a supremum-norm limit, the theorem gives convergence of $F^N0/N$ to the same vector. No definability assumption enters \eqref{foundation:scaling}.

\begin{proposition}
\label{app:policy-complete}
For each $\pi\in\Pi_S$, $(T^\pi)^N0/N\to g^\pi$. Consequently, for every $\delta>0$, all states and all sufficiently large $N$ satisfy $J_N(i;\pi,\tau)\ge g_i^\pi-\delta$ for every $\tau\in\mathcal Q_H$. The stationary and history-dependent infima of all four average payoffs equal $g_i^\pi$.
\end{proposition}

\begin{proof}
The first conclusion follows from \eqref{gc:eq:policy-limit} and the Tauberian application. Finite-horizon dynamic programming gives
\[
 J_N(i;\pi,\tau)\ge\frac{[(T^\pi)^N0]_i}{N}
       \quad(\tau\in\mathcal Q_H),
\]
which proves the uniform lower bound. For the four payoffs, use \eqref{fd:eq:published-one-player}, \eqref{fd:eq:payoff-order}, and stationary-pair convergence to obtain
\[
 g_i^\pi\le\inf_\tau\Psi_i(\pi,\tau)
 \le\inf_q\Psi_i(\pi,q)
 \le\eta_i^{\pi,q^{\pi,\nu}}\le g_i^\pi+\nu.
\]
Let $\nu\downarrow0$. This completes Theorem~\ref{m:policy-value}.
\end{proof}

\section{Proof of Theorem~\ref{m:uniform-value}: uniform control}
\label{gc:sec:value}
\label{app:optimal-control}

\begin{theorem}
\label{gc:thm:value}
The limits and common controller in \eqref{m:value-formula} exist, and they satisfy the uniform bounds \eqref{m:uniform-foundation}.
\end{theorem}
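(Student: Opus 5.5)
The proof has three stages: first the limit $g^\star$ and the optimal controller $\pi^\star$, then the controller half of \eqref{m:uniform-foundation}, and finally nature's uniform selector $q_\delta$.

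\emph{Limit and controller.} Discounted stationary optimality (Lemma~\ref{lem:basic}) gives $V_\eps=\max_{\pi\in\Pi_D}V_\eps^\pi$ coordinatewise. By Lemma~\ref{gc:lem:policy-limit}, each $\eps V_\eps^\pi$ converges to $g^\pi$, and $\Pi_D$ is finite. Since a coordinatewise maximum of finitely many convergent families converges to the maximum of their limits, $\eps V_\eps\to\max_{\pi\in\Pi_D}g^\pi=:g^\star$ in supremum norm. The Tauberian application after \eqref{foundation:scaling}, used with $F=T$, then gives $T^N0/N\to g^\star$. It remains to find one $\pi^\star\in\Pi_D$ with $g^{\pi^\star}=g^\star$ in every coordinate simultaneously. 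For this I would use a Blackwell-type argument. For each small $\eps$ choose a deterministic discounted-optimal selector $\pi_\eps$, which works from all states by Lemma~\ref{lem:basic}. Finiteness of $\Pi_D$ lets us pick a single $\pi^\star$ with $\pi_\eps=\pi^\star$ along a sequence $\eps_k\downarrow0$. Then $V_{\eps_k}=V_{\eps_k}^{\pi^\star}$, and multiplying by $\eps_k$ and passing to the limit gives $g^\star=g^{\pi^\star}$ in every coordinate. So no separate statewise patching is needed.

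\emph{Controller guarantee.} Proposition~\ref{app:policy-complete} applied to $\pi^\star$ gives, for each $\delta$, a threshold beyond which $J_N(i;\pi^\star,\tau)\ge [(T^{\pi^\star})^N0]_i/N\ge g_i^\star-\delta$ for all $i$ and all $\tau\in\mathcal Q_H$. This is the first half of \eqref{m:uniform-foundation}.

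\emph{Nature's guarantee.} This is the main obstacle, because nature need not attain its value with a stationary selector. I would first produce a single $q\in\mathcal Q_S$ with $d(q)\le g^\star+\delta\one$, where $d(q)=\max_{\pi\in\Pi_D}\eta^{\pi,q}$. The plan is to use the stationary duality verified in Appendix~\ref{app:preliminaries} with the uniform initial law $\mu_i=1/n$, which gives
\[
\inf_{q}\max_{\pi}\mu^\top\eta^{\pi,q}=\max_\pi\inf_q\mu^\top\eta^{\pi,q}.
\]
The right-hand side is at most $\mu^\top g^\star$, because $\inf_q\mu^\top\eta^{\pi,q}\le\mu^\top\eta^{\pi,q^{\pi,\nu}}\le\mu^\top g^\pi+\nu$ and $g^\pi\le g^\star$. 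Choosing $q$ within $\delta/n$ of the infimum gives, for the particular policy $\pi_q$ that is optimal for the fixed-$q$ nominal MDP, the bound $\mu^\top(\eta^{\pi_q,q}-g^\star)\le\delta/n$.

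To turn this averaged bound into a coordinatewise one, I need $d(q)\ge g^\star$ coordinatewise. This holds because the Bellman-optimal deterministic policy of the fixed-$q$ nominal MDP is optimal from all states, and $\eta^{\pi^\star,q}\ge g^{\pi^\star}=g^\star$. Since $d(q)=\eta^{\pi_q,q}$ dominates $g^\star$ in every coordinate, each coordinate gap is nonnegative, and averaging over $\mu$ yields $d(q)\le g^\star+\delta\one$. This mirrors the argument behind \eqref{gc:eq:vector-blackwell-approx}.

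With $q_\delta:=q$ fixed, the problem becomes a finite nominal MDP. Its finite-horizon values $T_q^N0/N$ converge to $d(q)$, where $T_q$ is the Bellman operator with rows fixed at $q$; this follows from classical multichain theory, or again from the Tauberian argument. Moreover $J_N(i;\sigma,q_\delta)\le [T_q^N0]_i/N$ for every history-dependent $\sigma$. Choosing $N_\delta$ so that both convergences are within $\delta$ for all $N\ge N_\delta$, and replacing $\delta$ by $\delta/2$ throughout, gives \eqref{m:uniform-foundation}.
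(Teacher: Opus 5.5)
Your proof is correct and follows essentially the same route as the paper. You obtain $\pi^\star$ from a constant-policy subsequence of discounted optimizers, the controller bound from Proposition~\ref{app:policy-complete}, and a common $q_\delta$ from the uniform-law stationary duality combined with $d(q)\ge g^\star$ and full-support averaging, then finish with the fixed-$q$ finite-horizon comparison. The only difference is minor: you derive $\eps V_\eps\to\max_{\pi\in\Pi_D}g^\pi$ directly from $V_\eps=\max_{\pi\in\Pi_D}V_\eps^\pi$ and Lemma~\ref{gc:lem:policy-limit} instead of citing the external Lemma~4.8, which is a legitimate simplification.
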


\begin{proof}
\emph{The value and one common controller.} \cite[Lemma~4.8]{grand2023beyond}, applied to each $p_0=e_i$, gives $\eps V_\eps\to g^\star=\max_{\pi\in\Pi_D}g^\pi$ in supremum norm. The fixed-policy identity \eqref{gc:eq:policy-limit} identifies its scalar limit at $e_i$ with $\max_{\pi\in\Pi_D}g_i^\pi$. The normalized discounted value is the same Bellman vector in every application. To select one all-state optimizer, choose deterministic stationary discounted-optimal policies along $\eps_k\downarrow0$. Since $\Pi_D$ is finite, a policy $\pi^\star$ occurs on an infinite subsequence. Reindexing that subsequence,
\[
 g^{\pi^\star}=\lim_k\eps_k V_{\eps_k}^{\pi^\star}
              =\lim_k\eps_k V_{\eps_k}=g^\star.
\]
The Tauberian application \eqref{foundation:scaling} gives $T^N0/N\to g^\star$. Applying Proposition~\ref{app:policy-complete} to $\pi^\star$ proves the lower inequality in \eqref{m:uniform-foundation}.

\emph{One common selector for nature.} It remains to construct a stationary upper strategy that works from all states. For $q\in\mathcal Q_S$, set $d_i(q)=\max_{\pi\in\Pi_D}\eta_i^{\pi,q}$. Fixing $q$ leaves a finite nominal MDP. Its discounted values are the coordinatewise maximum of finitely many policy values; their normalized limit is $d(q)$ by the finite-chain Abel limit. A constant-policy subsequence of its discounted optimizers therefore yields one policy attaining $d(q)$ at every state. In particular, $\max_\pi\mu^\top\eta^{\pi,q}=\mu^\top d(q)$ for every $\mu$. Also
\[
 d(q)\ge\eta^{\pi^\star,q}\ge g^{\pi^\star}=g^\star.
\]

Take the uniform initial law $\mu_i=1/n$. The common fixed-policy approximations in \eqref{gc:eq:stationary-approx} imply $\inf_q\mu^\top\eta^{\pi,q}=\mu^\top g^\pi$. The common controller gives $\max_{\pi\in\Pi_D}\mu^\top g^\pi=\mu^\top g^\star$, and the common nominal optimizer above gives $\max_{\pi\in\Pi_D}\mu^\top\eta^{\pi,q}=\mu^\top d(q)$. Thus \cite[Theorem~3.5, equation~(3.6)]{grand2023beyond}, with the nonconvex applicability check above, gives
\begin{equation}
 \mu^\top g^\star
 =\max_{\pi\in\Pi_D}\inf_{q\in\mathcal Q_S}
                      \mu^\top\eta^{\pi,q}
 =\inf_{q\in\mathcal Q_S}\mu^\top d(q).
\label{foundation:weighted-duality}
\end{equation}
For any $\alpha>0$, choose an approximate minimizer with $\mu^\top(d(q_\alpha)-g^\star)\le\alpha/n$. Every coordinate gap is nonnegative, so the full-support averaging argument yields
\begin{equation}
 g^\star\le d(q_\alpha)\le g^\star+\alpha\one.
\label{fd:eq:dual-approx}
\end{equation}

\emph{Uniform finite-horizon guarantees.} Let $(H_qx)_i=\max_a\{r_{ia}+q_{ia}^\top x\}$. This operator also satisfies \eqref{foundation:scaling}; its discounted limit just identified therefore gives $H_q^N0/N\to d(q)$. Finite-horizon dynamic programming yields
\[
 J_N(i;\sigma,q)\le[H_q^N0]_i/N
       \quad(\sigma\in\Pi_H).
\]
Use $q_{\delta/2}$ and take $N$ large enough that $\|H_{q_{\delta/2}}^N0/N-d(q_{\delta/2})\|_\infty\le\delta/2$. This proves the upper bound. Taking the larger of the lower and upper horizon thresholds makes both guarantees simultaneous.
\end{proof}

The constant-policy subsequence produces one optimal controller, and the full-support initial law produces one approximate minimizing selector. These are the two steps that strengthen pointwise value identities to simultaneous all-state guarantees.

\section{Payoff conventions and strategy-class duality}
\label{fd:section}
\label{app:strategic-duality}

\begin{theorem}
\label{m:strategic-duality}
Let $\Pi_D\subseteq\mathcal C\subseteq\Pi_H$ and $\mathcal Q_S\subseteq\mathcal N\subseteq\mathcal Q_H$. For every $\Psi\in\{I^-,J^-,J^+,I^+\}$ and $i\in S$,
\begin{equation}
 \sup_{\sigma\in\mathcal C}\inf_{\tau\in\mathcal N}
       \Psi_i(\sigma,\tau)
 =
 \inf_{\tau\in\mathcal N}\sup_{\sigma\in\mathcal C}
       \Psi_i(\sigma,\tau)
 =g_i^\star.
\label{m:all-duality}
\end{equation}
For an initial distribution $\mu$, the value is $\mu^\top g^\star$. The controller $\pi^\star$ from Theorem~\ref{m:uniform-value} is optimal for every stated criterion, state, and initial distribution.
\end{theorem}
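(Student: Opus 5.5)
The plan is a sandwich argument: bound the lower value from below using $\pi^\star$, bound the upper value from above using nature's approximate selector, and close the gap with weak duality. Fix a payoff $\Psi$ and a state $i$. Since every term is sandwiched, the chain
\[
 g_i^\star\le\sup_{\sigma\in\mathcal C}\inf_{\tau\in\mathcal N}\Psi_i(\sigma,\tau)\le\inf_{\tau\in\mathcal N}\sup_{\sigma\in\mathcal C}\Psi_i(\sigma,\tau)\le g_i^\star
\]
forces equality throughout. The middle inequality is weak duality, which holds for any classes.

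\emph{Lower bound.} Since $\pi^\star\in\Pi_D\subseteq\mathcal C$ and $\mathcal N\subseteq\mathcal Q_H$, we have
\[
 \sup_{\mathcal C}\inf_{\mathcal N}\Psi_i\ge\inf_{\tau\in\mathcal Q_H}\Psi_i(\pi^\star,\tau).
\]
By Theorem~\ref{m:policy-value} applied to $\pi^\star$, this infimum equals $g^{\pi^\star}_i$ for every one of the four conventions. By Theorem~\ref{m:uniform-value}, $g^{\pi^\star}_i=g_i^\star$.

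\emph{Upper bound.} This is the delicate step, because it needs the strongest payoff $I^+$ and an all-state selector. For $\alpha>0$, take $q_\alpha\in\mathcal Q_S\subseteq\mathcal N$ from \eqref{fd:eq:dual-approx}, so that $d(q_\alpha)\le g^\star+\alpha\one$. Using \eqref{fd:eq:payoff-order} and the second line of \eqref{fd:eq:published-one-player}, for every $\sigma\in\Pi_H$ and every $\Psi$,
\[
 \Psi_i(\sigma,q_\alpha)\le I_i^+(\sigma,q_\alpha)\le d_i(q_\alpha)\le g_i^\star+\alpha .
\]
Hence $\inf_{\mathcal N}\sup_{\mathcal C}\Psi_i\le g_i^\star+\alpha$, and letting $\alpha\downarrow0$ gives the upper bound. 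The substantive input is the pathwise one-player endpoint, which controls $I^+$ and not merely $J^+$, together with the full-support averaging argument that yields one selector for all states. Both are already established, so the main care lies in verifying that \eqref{fd:eq:published-one-player} is applicable. The second line concerns a fixed $q$ with a finite nominal MDP, so its use against history-dependent $\sigma$ is legitimate.

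\emph{Initial distributions and optimality of $\pi^\star$.} For an initial law $\mu$, the payoffs are $\mu$-mixtures of the statewise payoffs, since the strategies observe $S_0$. For the lower bound, use $\pi^\star$: each coordinate satisfies $\Psi_j(\pi^\star,\tau)\ge g_j^\star$, so the mixture is at least $\mu^\top g^\star$. For the upper bound, the same $q_\alpha$ bounds every coordinate, giving $\mu^\top g^\star+\alpha$. The same sandwich therefore gives the value $\mu^\top g^\star$. Finally, $\pi^\star$ attains the lower value in every case, because its guarantee $g^\star$ was obtained without any $\sup$ over $\sigma$. This establishes its optimality for every criterion, state, and initial distribution.
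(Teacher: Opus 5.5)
Your statewise argument is the paper's: the lower bound comes from $\pi^\star$ through fixed-policy evaluation (the paper cites the first line of \eqref{fd:eq:published-one-player} directly, which is what Theorem~\ref{m:policy-value} packages). The upper bound comes from the all-state selector $q_\alpha$ through the pathwise endpoint $I_i^+(\sigma,q_\alpha)\le d_i(q_\alpha)$ and the ordering \eqref{fd:eq:payoff-order}. Weak duality then closes the sandwich. That part is correct, including the attainment by $\pi^\star$.

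The extension to an initial law $\mu$ rests on a false premise. The payoffs $J^-$ and $J^+$ under $\mu$ are not in general $\mu$-mixtures of the statewise payoffs. Since $J^-_\mu(\sigma,\tau)=\liminf_N\sum_j\mu_jJ_N(j;\sigma,\tau)$, one only has $J^-_\mu\ge\sum_j\mu_jJ^-_j$ and $J^+_\mu\le\sum_j\mu_jJ^+_j$. Both inequalities can be strict under history-dependent pairs whose statewise expected averages oscillate out of phase. Hence the inference ``each coordinate satisfies $\Psi_j(\pi^\star,\tau)\ge g_j^\star$, so the mixture is at least $\mu^\top g^\star$'' does not follow for $\Psi=J^+$. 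Likewise ``the same $q_\alpha$ bounds every coordinate'' does not give the $\mu$-upper bound for $\Psi=J^-$.

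The repair is the one the paper uses; it explicitly notes that no interchange of a limit inferior with a weighted sum is needed. Decompose only the pathwise payoffs. These are expectations of trajectory functionals and therefore genuine mixtures once you condition on $S_0$:
\[
I^-_\mu(\pi^\star,\tau)=\sum_j\mu_jI^-_j(\pi^\star,\tau)\ge\mu^\top g^\star,
\qquad
I^+_\mu(\sigma,q_\alpha)=\sum_j\mu_jI^+_j(\sigma,q_\alpha)\le\mu^\top g^\star+\alpha.
\]
Then apply Fatou's ordering $I^-_\mu\le J^-_\mu\le J^+_\mu\le I^+_\mu$ under $\mu$ to bracket all four criteria. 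Equivalently, pair superadditivity of $\liminf$ with the coordinatewise $J^-$ lower bounds, and subadditivity of $\limsup$ with the coordinatewise $J^+$ upper bounds. With that change the $\mu$-sandwich and the optimality of $\pi^\star$ under every initial law go through.
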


The proof makes the simultaneous statewise guarantees explicit in the setting of \cite[Theorems~3.5-3.6 and Corollary~3.7]{grand2023beyond}. It combines the one-player pathwise bounds with the common stationary strategies constructed above.

\begin{proof}[Proof of Theorem~\ref{m:strategic-duality}] Fix a tolerance $\delta>0$ and choose a common selector from \eqref{fd:eq:dual-approx}. The one-player endpoints \eqref{fd:eq:published-one-player} give
\[
 I_i^-(\pi^\star,\tau)\ge g_i^\star
       \quad(\tau\in\mathcal Q_H),\qquad
 I_i^+(\sigma,q_\delta)\le d_i(q_\delta)\le g_i^\star+\delta
       \quad(\sigma\in\Pi_H).
\]
The payoff ordering \eqref{fd:eq:payoff-order} makes these lower and upper bounds valid for each $\Psi\in\{I^-,J^-,J^+,I^+\}$. Since $\pi^\star\in\mathcal C$ and $q_\delta\in\mathcal N$, weak duality yields
\[
 g_i^\star\le
 \sup_{\sigma\in\mathcal C}\inf_{\tau\in\mathcal N}\Psi_i
 \le\inf_{\tau\in\mathcal N}\sup_{\sigma\in\mathcal C}\Psi_i
 \le g_i^\star+\delta.
\]
Let $\delta\downarrow0$. The lower guarantee proves that the same $\pi^\star$ attains every outer supremum in the lower value.

For an initial law $\mu$, condition the pathwise endpoints on $S_0=i$ and sum their bounds with weights $\mu_i$. This gives lower and upper guarantees $\mu^\top g^\star$ and $\mu^\top g^\star+\delta$. The payoff ordering holds under this initial law as well, so the same squeeze proves the claim for all four criteria. Only the pathwise endpoint expectations are decomposed in this argument. No equality between a limit inferior and a weighted sum of limit inferiors is needed.
\end{proof}

\section{Discounted characterizations of stationary policies}
\label{app:policy-optimality}

\begin{theorem}
\label{m:policy-characterization}
For every $\pi\in\Pi_S$,
\begin{equation}
 \lim_{\eps\downarrow0}
 \eps\|V_\eps-V_\eps^\pi\|_\infty
 =
 \|g^\star-g^\pi\|_\infty.
\label{m:normalized-policy-gap}
\end{equation}
Consequently, $\pi$ is average optimal from all states if and only if this limit is zero. Moreover, there exists $\eps_0>0$ such that
\begin{equation}
 \pi\in\Pi_D,\qquad
 0<\eps<\eps_0,\qquad
 V_\eps^\pi=V_\eps
 \quad\Longrightarrow\quad
 g^\pi=g^\star.
\label{m:discount-threshold}
\end{equation}
\end{theorem}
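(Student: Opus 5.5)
The first claim follows from the two normalized discounted limits established earlier: Lemma~\ref{gc:lem:policy-limit} gives $\eps V_\eps^\pi\to g^\pi$, and Theorem~\ref{gc:thm:value} gives $\eps V_\eps\to g^\star$, both in supremum norm. Then $\eps(V_\eps-V_\eps^\pi)\to g^\star-g^\pi$ in supremum norm. Continuity of $\|\cdot\|_\infty$ yields \eqref{m:normalized-policy-gap}. Since $g^\pi\le g^\star$ coordinatewise by \eqref{m:value-formula} (and $V_\eps^\pi\le V_\eps$), the limit vanishes exactly when $g^\pi=g^\star$. This is average optimality from all states, by Theorem~\ref{m:strategic-duality}, which identifies $g^\star$ as the value under every payoff convention, and Theorem~\ref{m:policy-value}, which identifies $g^\pi$ as the robust performance of $\pi$. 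Randomized $\pi$ needs no extra care, because Lemma~\ref{gc:lem:policy-limit} covers $\Pi_S$.

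For the threshold statement \eqref{m:discount-threshold}, I will use finiteness of $\Pi_D$ and argue by contradiction policy by policy. Let $\Pi_D^{\rm bad}=\{\pi\in\Pi_D:g^\pi\ne g^\star\}$. For each such $\pi$, set $\Delta_\pi:=\|g^\star-g^\pi\|_\infty>0$. By \eqref{m:normalized-policy-gap}, there is $\eps_\pi>0$ with $\eps\|V_\eps-V_\eps^\pi\|_\infty\ge\Delta_\pi/2>0$ for all $0<\eps<\eps_\pi$. In particular $V_\eps^\pi\ne V_\eps$ on that range. Take $\eps_0:=\min\{\eps_\pi:\pi\in\Pi_D^{\rm bad}\}$, or $\eps_0=1$ if the set is empty. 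This minimum is positive because $\Pi_D$ is finite. Then any $\pi\in\Pi_D$ with $V_\eps^\pi=V_\eps$ for some $\eps<\eps_0$ cannot lie in $\Pi_D^{\rm bad}$, so $g^\pi=g^\star$.

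I expect no serious obstacle. The only point to check is that the threshold must be uniform over policies, which is why the second statement is restricted to $\Pi_D$ rather than $\Pi_S$. Over the infinite set $\Pi_S$, the gaps $\Delta_\pi$ can accumulate at zero and the minimum of the $\eps_\pi$ could be zero. Finiteness of $\Pi_D$ is exactly what makes $\eps_0$ positive. I will also note that the hypothesis needs $V_\eps^\pi=V_\eps$ at only a single $\eps<\eps_0$, not along a sequence, which the contradiction argument handles directly.
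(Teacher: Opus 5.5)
Your proof is correct. The first part, the gap identity and the characterization of all-state optimality, matches the paper's argument. For the threshold in \eqref{m:discount-threshold} you take a genuinely different and more elementary route.

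You read the threshold directly off \eqref{m:normalized-policy-gap}. For each of the finitely many deterministic policies with $g^\pi\ne g^\star$, you pick an $\eps_\pi$ below which $\eps\|V_\eps-V_\eps^\pi\|_\infty\ge\Delta_\pi/2>0$, and you take the minimum over these policies.

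The paper instead imports the discount-to-average transfer \cite[Theorem~4.10]{grand2023beyond}, applied with a full-support initial law $\mu$. Below a common threshold, every deterministic policy that is discount optimal for $\mu$ is also average optimal for $\mu$. The stationary approximations \eqref{gc:eq:stationary-approx} and Theorem~\ref{m:strategic-duality} then identify the two scalar average values with $\mu^\top g^\pi$ and $\mu^\top g^\star$. Finally, $g^\pi\le g^\star$ together with full support of $\mu$ upgrades $\mu^\top(g^\star-g^\pi)=0$ to vector equality.

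Your argument avoids both the external theorem and this scalar-to-vector bookkeeping. It shows that the threshold claim is a formal consequence of the gap identity plus finiteness of $\Pi_D$. The paper's route ties the statement to the existing deterministic-policy theory in \cite{grand2023beyond}, but gains nothing extra here. Both arguments give a single, non-computable threshold valid for all deterministic discounted optimizers.
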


The gap identity is a direct consequence of the two discounted limits in Theorems~\ref{m:policy-value}-\ref{m:uniform-value}. It records the exact limiting all-state loss, including stationary randomized policies. The corresponding deterministic-policy connections are \cite[Theorems~4.6 and~4.10]{grand2023beyond}. The threshold in \eqref{m:discount-threshold} is existential and supplies no computable stopping rule.

\begin{proof}[Proof of Theorem~\ref{m:policy-characterization}] The two discounted limits imply
\[
 \eps(V_\eps-V_\eps^\pi)\longrightarrow g^\star-g^\pi
       \quad\text{in }\|\cdot\|_\infty.
\]
Continuity of the norm gives \eqref{m:normalized-policy-gap}. Theorems~\ref{m:policy-value} and~\ref{m:strategic-duality} identify $g^\pi=g^\star$ with all-state average optimality, including randomized stationary $\pi$.

For \eqref{m:discount-threshold}, apply \cite[Theorem~4.10]{grand2023beyond} with a full-support initial law $\mu$. A vector-discount-optimal policy is discount optimal for this law; their theorem makes it average optimal for that law whenever $\eps$ is small enough. By \eqref{gc:eq:stationary-approx}, its scalar robust average reward is $\mu^\top g^\pi$, while Theorem~\ref{m:strategic-duality} identifies the scalar optimal value with $\mu^\top g^\star$. Hence $\mu^\top(g^\star-g^\pi)=0$. Discounted domination implies $g^\star-g^\pi\ge0$, and full support implies $g^\pi=g^\star$. The cited theorem supplies one threshold for all deterministic stationary discounted optimizers.
\end{proof}

\section{Exact nature attainment and stationary saddles}
\label{app:nature-attainment}

For $q\in\mathcal Q_S$, let $d_i(q)=\max_{\pi\in\Pi_D}\eta_i^{\pi,q}$ denote the optimal gain of the nominal MDP obtained by fixing the full selector $q$.

\begin{theorem}
\label{m:nature-attainment}
The stationary performance vectors satisfy
\begin{equation}
\begin{aligned}
 g_i^\star
 &=
 \max_{\pi\in\Pi_D}\inf_{q\in\mathcal Q_S}\eta_i^{\pi,q}
 =
 \inf_{q\in\mathcal Q_S}d_i(q),
 \qquad i\in S,\\
 d(q)&\ge g^\star,
 \qquad q\in\mathcal Q_S.
\end{aligned}
\label{m:stationary-duality}
\end{equation}
For every $\delta>0$, one $q\in\mathcal Q_S$ satisfies $d(q)\le g^\star+\delta\one$.

A stationary selector $q$ is exactly optimal for nature against every history-dependent controller, from every state and for every payoff in \eqref{m:four-payoffs}, if and only if
\begin{equation}
 d(q)=g^\star.
\label{m:nature-attainment-test}
\end{equation}

For $\bar\pi\in\Pi_S$, $\bar q\in\mathcal Q_S$, and $u\in\mathbb R^S$, the stationary gain inequalities
\begin{equation}
 \eta^{\bar\pi,q}\ge u
       \quad\forall q\in\mathcal Q_S,
 \qquad
 \eta^{\pi,\bar q}\le u
       \quad\forall\pi\in\Pi_D
\label{m:gain-only-saddle}
\end{equation}
hold if and only if $u=g^\star$ and $(\bar\pi,\bar q)$ is an all-state stationary saddle against history-dependent opponents for all four payoffs.
\end{theorem}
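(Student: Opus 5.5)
The first display \eqref{m:stationary-duality} and the $\delta$-approximation are essentially already contained in the proof of Theorem~\ref{gc:thm:value}. The left equality $g_i^\star=\max_{\pi\in\Pi_D}\inf_q\eta_i^{\pi,q}$ follows from $g^\star=\max_{\pi\in\Pi_D}g^\pi$ in Theorem~\ref{m:uniform-value} combined with $g_i^\pi=\inf_q\eta_i^{\pi,q}$ from Theorem~\ref{m:policy-value}. The inequality $d(q)\ge g^\star$ holds because $d(q)\ge\eta^{\pi^\star,q}\ge g^{\pi^\star}=g^\star$. The selector with $d(q)\le g^\star+\delta\one$ is \eqref{fd:eq:dual-approx}. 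Taking the infimum over $q$ coordinatewise then gives $\inf_q d_i(q)=g_i^\star$: the bound $\ge$ comes from $d(q)\ge g^\star$, and $\le$ comes from the $\delta$-selectors with $\delta\downarrow0$.

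For the attainment criterion \eqref{m:nature-attainment-test}, I would use the one-player endpoint \eqref{fd:eq:published-one-player} together with the ordering \eqref{fd:eq:payoff-order}. These give $\sup_{\sigma\in\Pi_H}\Psi_i(\sigma,q)\le\sup_\sigma I_i^+(\sigma,q)=d_i(q)$. Conversely, a deterministic stationary nominal optimizer $\pi_q$ achieves $\Psi_i(\pi_q,q)=\eta_i^{\pi_q,q}=d_i(q)$ for all four payoffs by Lemma~\ref{fd:lem:chain-limits}. Hence $\sup_\sigma\Psi_i(\sigma,q)=d_i(q)$ exactly, for every $\Psi$. By Theorem~\ref{m:strategic-duality} the value is $g_i^\star$, so $q$ is exactly optimal for nature from all states if and only if $d(q)=g^\star$. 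The equation \eqref{m:nature-attainment-test} is a vector identity; one coordinate failing means failure from that state.

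For the saddle characterization, I would argue the two directions separately.

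\emph{From \eqref{m:gain-only-saddle} to a saddle.} The first inequality gives $g_i^{\bar\pi}=\inf_q\eta_i^{\bar\pi,q}\ge u_i$. Theorem~\ref{m:policy-value} then turns this into $\inf_{\tau\in\mathcal Q_H}\Psi_i(\bar\pi,\tau)=g_i^{\bar\pi}\ge u_i$ for all four payoffs. The second inequality gives $d(\bar q)\le u$, so by the previous paragraph $\sup_{\sigma\in\Pi_H}\Psi_i(\sigma,\bar q)=d_i(\bar q)\le u_i$. Combined with the value $g^\star$ from Theorem~\ref{m:strategic-duality}, this forces $u\le g^{\bar\pi}\le g^\star\le d(\bar q)\le u$. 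Hence $u=g^\star$ and $(\bar\pi,\bar q)$ is an all-state saddle.

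\emph{From a saddle with $u=g^\star$ to \eqref{m:gain-only-saddle}.} Restrict the saddle inequalities to stationary opponents. Lemma~\ref{fd:lem:chain-limits} identifies each payoff of a stationary pair with $\eta^{\pi,q}$. One point needs care: $\bar\pi$ is randomized, but $\eta^{\bar\pi,q}$ is still the common limit of all four payoffs, as recorded in that lemma.

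The main obstacle is the exactness claim $\sup_\sigma\Psi_i(\sigma,q)=d_i(q)$ for history-dependent $\sigma$. This rests entirely on the pathwise one-player result \eqref{fd:eq:published-one-player}, which holds for a finite nominal MDP. So I expect no real difficulty beyond citing it correctly for each payoff through the Fatou ordering.
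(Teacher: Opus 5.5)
Your proposal is correct and takes essentially the same route as the paper. You obtain the duality display from Theorems~\ref{m:policy-value}--\ref{m:uniform-value} together with \eqref{fd:eq:dual-approx}, and the attainment test from the exact identity $\sup_{\sigma\in\Pi_H}\Psi_i(\sigma,q)=d_i(q)$, rederived from \eqref{fd:eq:published-one-player} and the payoff ordering as in Lemma~\ref{fd:lem:fixed-q}. The saddle test then follows from the squeeze $u\le g^{\bar\pi}\le g^\star\le d(\bar q)\le u$ plus restriction to stationary opponents via Lemma~\ref{fd:lem:chain-limits}; your appeal to Theorem~\ref{m:strategic-duality} for $g^{\bar\pi}\le g^\star$ with randomized $\bar\pi$ makes explicit a step the paper's chain uses implicitly.
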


The dual representation builds on \cite[Theorem~3.5]{grand2023beyond}. The formulation through $d(q)$ isolates simultaneous exact attainment, while \eqref{m:gain-only-saddle} expresses the full saddle property using only stationary-chain gains.

\begin{lemma}
\label{fd:lem:fixed-q}
For each $q\in\mathcal Q_S$, the nominal MDP with operator $H_q$ has all-state value $d(q)$, attained by one policy in $\Pi_D$. Moreover, for every payoff $\Psi$ and every state,
\[
 \sup_{\sigma\in\Pi_H}\Psi_i(\sigma,q)=d_i(q),
 \qquad H_q^N0/N\longrightarrow d(q).
\]
For every $\delta>0$, all sufficiently large $N$ satisfy $J_N(i;\sigma,q)\le d_i(q)+\delta$ for all $i$ and $\sigma\in\Pi_H$.
\end{lemma}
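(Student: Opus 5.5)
Fixing $q\in\mathcal Q_S$ leaves a finite nominal MDP with operator $(H_qx)_i=\max_a\{r_{ia}+q_{ia}^\top x\}$. The plan is to combine three ingredients already assembled in the proof of Theorem~\ref{gc:thm:value}: classical finite-MDP Blackwell optimality, the pathwise one-player endpoint \eqref{fd:eq:published-one-player}, and the Tauberian transfer \eqref{foundation:scaling}.

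\emph{Step 1: the all-state value and a common optimal policy.} Let $V_\eps^q$ be the discounted value of the fixed-$q$ MDP. It is the fixed point of $x\mapsto H_q((1-\eps)x)$ and equals the coordinatewise maximum of $V_\eps^{\pi,q}$ over $\pi\in\Pi_D$, with one deterministic discount-optimal policy working for all states. Choose discount-optimal policies along $\eps_k\downarrow0$. Since $\Pi_D$ is finite, some $\pi_q$ recurs on a subsequence. Lemma~\ref{fd:lem:chain-limits} gives $\eps V_\eps^{\pi,q}\to\eta^{\pi,q}$ for each $\pi$. Hence $\eps_kV_{\eps_k}^q\to\max_\pi\eta^{\pi,q}=d(q)$ along the full sequence, because a finite maximum of convergent sequences converges. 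Along the subsequence we get $\eta^{\pi_q,q}=d(q)$, so $\pi_q$ attains $d(q)$ at every state simultaneously.

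\emph{Step 2: the payoff identity.} For the upper bound, the second line of \eqref{fd:eq:published-one-player} gives $\sup_{\sigma}I_i^+(\sigma,q)=d_i(q)$. The ordering \eqref{fd:eq:payoff-order} then bounds every $\Psi_i(\sigma,q)$ by $d_i(q)$. For the lower bound, take $\sigma=\pi_q$. Lemma~\ref{fd:lem:chain-limits} shows that all four payoffs equal $\eta_i^{\pi_q,q}=d_i(q)$. Together these give $\sup_\sigma\Psi_i(\sigma,q)=d_i(q)$ for each $\Psi$.

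\emph{Step 3: finite horizons.} $H_q$ is monotone, additively homogeneous and nonexpansive. It also satisfies \eqref{foundation:scaling} with constant $R$, because $\lambda H_q(x/\lambda)_i=\max_a\{\lambda r_{ia}+q_{ia}^\top x\}$. Since $\eps V^q_\eps\to d(q)$ holds for every $\eps\downarrow0$, not merely along a subsequence, Theorem~1.2 of \cite{ziliotto2016tauberian} yields $H_q^N0/N\to d(q)$ in supremum norm. Finite-horizon dynamic programming for the nominal MDP gives $J_N(i;\sigma,q)\le[H_q^N0]_i/N$ for every history-dependent $\sigma$. Choose $N$ large enough that the sup-norm error is at most $\delta$; this threshold works for all $i$ and $\sigma$ at once.

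The main obstacle is Step 1's claim that the discounted limit exists along every sequence and not only along a subsequence. Taking the limit of a finite pointwise maximum handles this. It should be stated explicitly, because the Tauberian theorem requires genuine convergence of $\eps V_\eps^q$.
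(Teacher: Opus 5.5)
Your proposal is correct and takes the same route as the paper, which proves this lemma by pointing back to the proof of Theorem~\ref{gc:thm:value}: a constant-policy subsequence of discount-optimal policies plus the finite-policy Abel limit gives the common optimizer; the pathwise endpoint \eqref{fd:eq:published-one-player}, the payoff ordering and that optimizer give the four payoff values; and the Tauberian transfer \eqref{foundation:scaling} with finite-horizon dynamic programming gives the finite-horizon claims. Your explicit observation that $\eps V^q_\eps\to d(q)$ along every sequence, because it is a finite maximum of convergent Abel limits, is the point the paper uses (stated more briefly) when it invokes Ziliotto's theorem.
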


\begin{proof}
The all-state optimizer and finite-horizon limit were established in the proof of Theorem~\ref{gc:thm:value} by the finite-policy Abel limit and Tauberian argument. The upper pathwise endpoint is \eqref{fd:eq:published-one-player}; the payoff ordering and the common stationary optimizer identify all four values. Finite-horizon dynamic programming gives the last assertion.
\end{proof}

\begin{proof}[Proof of Theorem~\ref{m:nature-attainment}] The primal representation follows from Theorem~\ref{m:policy-value} and $g^{\pi^\star}=g^\star$. The inequality $d(q)\ge g^\star$ and the common approximation \eqref{fd:eq:dual-approx} imply $\inf_q d_i(q)=g_i^\star$, proving \eqref{m:stationary-duality}.

If $d(q)=g^\star$, Lemma~\ref{fd:lem:fixed-q} bounds every controller's payoff by $g^\star$ from every state. Conversely, an upper guarantee for even one of the four payoff conventions bounds in particular the stationary gains $\eta^{\pi,q}$ for all $\pi\in\Pi_D$. Taking their maximum gives $d(q)\le g^\star$; the reverse inequality always holds. This proves \eqref{m:nature-attainment-test} for every payoff convention.

For the gain-only saddle test, the two inequalities imply
\[
 u\le\inf_q\eta^{\bar\pi,q}=g^{\bar\pi}
   \le g^\star\le d(\bar q)=\max_{\pi\in\Pi_D}\eta^{\pi,\bar q}
   \le u.
\]
All inequalities are therefore equalities. The fixed-policy and fixed-nature evaluations extend the two stationary guarantees to all history-dependent opponents, for all four payoffs. At $(\bar\pi,\bar q)$ the payoff is $u=g^\star$, so this is an exact saddle. Conversely, restricting any such saddle guarantees to stationary opponents and using Lemma~\ref{fd:lem:chain-limits} gives \eqref{m:gain-only-saddle}.
\end{proof}

\textbf{A worst reply can leave a profitable deviation.}
Consider states $s,L,H$, with $L,H$ absorbing and rewards $0,0,1$, respectively. At $s$, action $a$ goes to $L$ and action $b$ has row set $\{e_L,e_H\}$. The two deterministic policies satisfy $g^{\pi_a}=g^{\pi_b}=g^\star=(0,0,1)$. Choose the full selector with $q_{sb}=e_H$. Then $\eta^{\pi_a,q}=(0,0,1)$ but $\eta^{\pi_b,q}=(1,0,1)$. Thus $q$ is exactly worst against the optimal controller $\pi_a$, yet $d(q)=(1,0,1)\ne g^\star$: its unused action permits a deviation. The same example works with row set $\operatorname{co}\{e_L,e_H\}$.

\section{Proof of Theorem~\ref{m:verification}: vector Bellman verification}
\label{fv:sec:finite}

The proof has two ingredients. First, the large-$t$ expansion of $T(tg+h)$ identifies the gain-restricted bias operator. Second, a compactness estimate controls the bias loss when nature leaves a gain-minimizing face. Combining this estimate with a finite budget for cumulative gain drift proves verification against arbitrary history-dependent opponents. Fixed-policy specializations and two counterexamples follow.

The leading gain and the finite bias require two successive optimizations. Define the recession map by
\begin{equation}
 (\widehat T g)_i=\max_{a\in A(i)}\min_{p\in\U_{ia}}p^\top g.
 \label{fv:eq:recession}
\end{equation}
For every vector $g\in\R^S$, define the row minima and their minimizing sets by
\[
 m_{ia}(g)=\min_{p\in\U_{ia}}p^\top g,
 \qquad F_{ia}(g)=\{p\in\U_{ia}:p^\top g=m_{ia}(g)\}.
\]
If $g=\widehat Tg$, additionally put
\begin{equation}
 A_g(i)=\{a\in A(i):m_{ia}(g)=g_i\},
 \qquad (L_gh)_i=\max_{a\in A_g(i)}
          \min_{p\in F_{ia}(g)}\{r_{ia}+p^\top h\}.
 \label{fv:eq:faces}
\end{equation}
Compactness of each row set attains its linear minimum and makes
$F_{ia}(g)$ nonempty and compact. Finiteness of $A(i)$ and
$g_i=\max_a m_{ia}(g)$ make $A_g(i)$ nonempty.
We call $a\in A_g(i)$ a gain-active action and use ``gain face'' for
$F_{ia}(g)$ even when $\U_{ia}$ is nonconvex. Defining $m_{ia}(g)$ and
$F_{ia}(g)$ for arbitrary $g$ also permits their use in fixed-policy
evaluation, where $g$ need not solve the optimal-control gain equation.

A finite vector gain-bias certificate is a pair $(g,h)\in\R^S\times\R^S$ satisfying
\begin{equation}
       g=\widehat Tg,\qquad g+h=L_gh.
       \label{fv:eq:gb}
\end{equation}
The first equation compares the leading gain.  The second compares the one-step reward and continuation bias after both players' choices have been restricted to the first equation's optimizers.  In particular, $F_{ia}(g)$ is defined relative to $m_{ia}(g)$ for every action, including inactive actions for which $m_{ia}(g)<g_i$.

\begin{lemma}
\label{fv:lem:tangent}
If $g=\widehat Tg$, then, for every finite $h$,
\begin{equation}
       T(tg+h)-tg\longrightarrow L_gh
       \quad\text{as }t\longrightarrow\infty.
       \label{fv:eq:tangent}
\end{equation}
The convergence is uniform when $h$ ranges over any fixed compact subset of $\R^S$. Consequently, \eqref{fv:eq:gb} is equivalent to
\begin{equation}
       T(tg+h)=(t+1)g+h+o(1).
       \label{fv:eq:asymray}
\end{equation}
If every ambiguity set is a polytope, the error in \eqref{fv:eq:tangent} is zero for all sufficiently large $t$.
\end{lemma}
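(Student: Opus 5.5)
The argument is local: it works one state and one action at a time, and then uses finiteness of $S$ and $A(i)$.

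\emph{Row minimum.} Fix $i$, $a$ and $h$, and write
\[
 \phi_t(p)=t\,p^\top g+p^\top h=t\bigl(p^\top g-m_{ia}(g)\bigr)+t\,m_{ia}(g)+p^\top h.
\]
I will show that $\min_{p\in\U_{ia}}\phi_t(p)-t\,m_{ia}(g)\to\min_{p\in F_{ia}(g)}p^\top h$.

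The upper bound is immediate: restrict the minimum to $F_{ia}(g)$.

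For the lower bound, set $d(p)=p^\top g-m_{ia}(g)\ge0$ and argue by compactness. Suppose the claim fails. Then there are $\epsilon>0$, $t_k\to\infty$ and $p_k\in\U_{ia}$ with
\[
 t_k\,d(p_k)+p_k^\top h\le \min_{F_{ia}(g)}p^\top h-\epsilon .
\]
Since $|p^\top h|\le\|h\|_\infty$, we get $d(p_k)\le C/t_k\to0$. Pass to a convergent subsequence $p_k\to\bar p\in\U_{ia}$. Continuity gives $d(\bar p)=0$, so $\bar p\in F_{ia}(g)$. On the other hand, $\limsup_k p_k^\top h\le \min_F p^\top h-\epsilon$, so $\bar p^\top h\le\min_F p^\top h-\epsilon$. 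This contradicts $\bar p\in F_{ia}(g)$.

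For uniformity over $h$ in a compact set $K$, run the same argument with sequences $h_k\in K$ as well. The constant $C$ is uniform because $\|h\|_\infty$ is bounded on $K$. The only additional fact needed is that $h\mapsto\min_{F_{ia}(g)}p^\top h$ is continuous (indeed $1$-Lipschitz), so the limit is taken at a converging $h_k\to\bar h$.

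\emph{Action maximum.} Now
\[
 (T(tg+h))_i-tg_i=\max_a\Bigl\{r_{ia}+t\bigl(m_{ia}(g)-g_i\bigr)+\bigl[\min_{\U_{ia}}\phi_t-t\,m_{ia}(g)\bigr]\Bigr\}.
\]
For active actions $a\in A_g(i)$, the gap term vanishes and the bracket converges to $\min_F p^\top h$. For inactive actions, $m_{ia}(g)-g_i<0$ is a fixed negative number, while the bracket stays bounded by $\|h\|_\infty$. These terms therefore tend to $-\infty$ and eventually drop out of the maximum. By $g=\widehat Tg$, $A_g(i)$ is nonempty, so the maximum converges to $(L_gh)_i$, uniformly on compact sets of $h$.

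\emph{Equivalence.} By the convergence just proved, $T(tg+h)=(t+1)g+h+o(1)$ holds iff $L_gh=g+h$. This is the second equation of \eqref{fv:eq:gb} when $g=\widehat Tg$. For the converse direction one also needs $g=\widehat Tg$ from the asymptotic relation alone. Divide by $t$ and use $T(tx)/t\to\widehat Tx$. That limit holds because $T(tx)/t=\max_a\{r_{ia}/t+m_{ia}(x)\}$. Applied to $x=g+h/t$ together with nonexpansiveness, it gives $\widehat Tg=g$.

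\emph{Polytopes.} When $\U_{ia}$ is a polytope with vertex set $V_{ia}$, the minimum of $\phi_t$ is attained at a vertex. For a vertex outside $F_{ia}(g)$, $d(p)\ge\delta>0$, so once $t\delta>2\|h\|_\infty$ that vertex loses to any vertex of $F_{ia}(g)$. Hence for large $t$ the minimum equals the minimum over the vertices of $F_{ia}(g)$, which is $t\,m_{ia}(g)+\min_{F}p^\top h$ exactly. The inactive actions are likewise excluded once $t$ exceeds a finite threshold, so the error is zero for all sufficiently large $t$.

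The main obstacle is the lower bound in the row-minimum step, since rows near but outside the gain face can beat the face at every finite $t$. That is why the general case gets only a limit via the compactness argument, while the polytope case gets exact equality.
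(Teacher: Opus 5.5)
Your proof is correct and follows essentially the same route as the paper: compactness forces near-minimizing rows into the gain face, inactive actions drop out through their linear gain penalty, a rescaling bound of order $(R+\|h\|_\infty)/t$ recovers $g=\widehat Tg$ in the converse direction, and in the polytopic case vertex enumeration with a finite threshold gives exactness. The only cosmetic difference is how you obtain compact uniformity: you run the contradiction argument along sequences $h_k\to\bar h$, whereas the paper combines the $1$-Lipschitz property of both maps with a finite $\delta$-net.
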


\begin{proof}
\textbf{The row limit.}
Fix $(i,a)$ and abbreviate $m=m_{ia}(g)$, $F=F_{ia}(g)$, and
$b=\min_{p\in F}(r_{ia}+p^\top h)$. Write
\[
 b_t=\min_{p\in\U_{ia}}
        \{t(p^\top g-m)+r_{ia}+p^\top h\}.
\]
The gain gap is nonnegative, and testing a bias-minimizing row in $F$
gives $r_{ia}-\|h\|_\infty\le b_t\le b$.
For any minimizer $p_t$, it follows that
\[
 0\le t(p_t^\top g-m)
   =b_t-r_{ia}-p_t^\top h\le2\|h\|_\infty.
\]
Thus every cluster point of minimizing rows as $t\to\infty$ lies in $F$.
To identify the value, take a sequence along which $b_t$ tends to its
limit inferior and a further subsequence with $p_t\to p_0\in F$.
Discarding the nonnegative gain gap yields
\[
 \liminf_{t\to\infty}b_t\ge r_{ia}+p_0^\top h\ge b.
\]
Together with $b_t\le b$, this proves $b_t\to b$.

\smallskip\noindent\textbf{The action maximum and compact uniformity.}
Restoring row indices gives
\[
 [T(tg+h)-tg]_i
 =\max_{a\in A(i)}\{t(m_{ia}(g)-g_i)+b_{ia,t}(h)\}.
\]
For an active action the expression tends to
$b_{ia}(h)=\min_{p\in F_{ia}(g)}(r_{ia}+p^\top h)$.
For an inactive action, $m_{ia}(g)-g_i<0$, so it tends to $-\infty$.
There are finitely many actions and states. Taking their maxima proves
\eqref{fv:eq:tangent} in the supremum norm.

Both maps $h\mapsto T(tg+h)-tg$ and $h\mapsto L_gh$ are
$1$-Lipschitz in that norm, by the stochastic-row estimate in
Lemma~\ref{lem:basic}. If $h^1,\ldots,h^M$ form a finite
$\delta$-net of a compact set $K$, then
\[
 \sup_{h\in K}\|T(tg+h)-tg-L_gh\|_\infty
 \le 2\delta+\max_{j\le M}
     \|T(tg+h^j)-tg-L_gh^j\|_\infty.
\]
First let $t\to\infty$ for this finite net and then
$\delta\downarrow0$. This proves compact uniformity.

\smallskip\noindent\textbf{The equivalence and polytopic exactness.}
The two Bellman equations imply \eqref{fv:eq:asymray} by the limit
just proved. Conversely, that asymptotic identity implies
$T(tg+h)/t\to g$, while
\[
 \left\|\frac{T(tg+h)}t-\widehat Tg\right\|_\infty
 \le\frac{R+\|h\|_\infty}{t}.
\]
The latter bound follows by deleting the uniformly bounded
reward-bias perturbation inside every row optimization.
Hence $g=\widehat Tg$, and the tangent limit then gives
$L_gh=g+h$.

If each row set is a polytope, minimize over its finitely many vertices.
A vertex $p$ outside $F_{ia}(g)$ has positive gap
$d_p=p^\top g-m_{ia}(g)$. Its centered objective exceeds the face minimum
as soon as
\[
 t>\frac{[b_{ia}(h)-r_{ia}-p^\top h]_+}{d_p}.
\]
Take a common threshold over the finitely many outside vertices.
Above it, every row minimum equals its face minimum exactly.
A further finite threshold excludes all inactive actions, since their
gain gaps $g_i-m_{ia}(g)$ are strictly positive. This proves eventual
exactness for fixed $g,h$.
\end{proof}

\begin{remark}
\label{fv:rem:scalar}
If $g=\rho\one$, then $p^\top g=\rho$ for every probability row. Therefore $A_g(i)=A(i)$, $F_{ia}(g)=\U_{ia}$, and $L_g=T$. The system becomes the scalar equation $\rho\one+h=T(h)$. The vector system thus extends scalar-gain Bellman equations without requiring a common gain across recurrent classes.
\end{remark}
\subsection{A compactness estimate for rows close to a gain face}

\begin{lemma}
\label{fv:lem:nearface}
Let $X$ be compact, let $d,e:X\to\R$ be continuous, and suppose $d\ge0$ on $X$ and $e\ge0$ on $\{x:d(x)=0\}$.  For every $\eta>0$ there is $C_\eta<\infty$ such that
\begin{equation}
             e(x)\ge-\eta-C_\eta d(x)
             \qquad(x\in X).
             \label{fv:eq:nearface}
\end{equation}
If $X$ is a polytope and $d,e$ are affine, there is $C_0<\infty$ for which the same inequality holds with $\eta=0$.
\end{lemma}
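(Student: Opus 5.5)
For the general compact case we argue by contradiction and compactness. Fix $\eta>0$ and suppose no finite constant works. Then for every $k\in\mathbb N$ there is $x_k\in X$ with
\[
 e(x_k)<-\eta-k\,d(x_k).
\]
Since $e$ is continuous on the compact set $X$, it is bounded below by some $-M$. Because $d\ge0$, the display gives $k\,d(x_k)<-e(x_k)-\eta\le M$, so $d(x_k)\le M/k\to0$.

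Pass to a subsequence with $x_k\to x_0\in X$. Continuity of $d$ gives $d(x_0)=0$, so the hypothesis yields $e(x_0)\ge0$. On the other hand, $e(x_k)<-\eta$ for every $k$, and continuity of $e$ gives $e(x_0)\le-\eta<0$. This contradiction produces $C_\eta$.

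Equivalently, one can argue directly. The set $\{x:e(x)\le-\eta\}$ is compact and disjoint from the zero set of $d$, so $d$ attains a positive minimum $\delta_\eta$ there. Then $C_\eta=M/\delta_\eta$ works: on that set $C_\eta d(x)\ge M\ge -e(x)$, and off it $e(x)>-\eta$ already.

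For the polytope case with $\eta=0$, write $X=\operatorname{co}\{v_1,\dots,v_m\}$. By affinity, $e+C d$ is affine for every constant $C$, so it suffices to have $e(v_j)+C\,d(v_j)\ge0$ at each vertex.
\begin{itemize}
\item At a vertex with $d(v_j)=0$, the hypothesis gives $e(v_j)\ge0$, so any $C\ge0$ works there.
\item At a vertex with $d(v_j)>0$, we need $C\ge [-e(v_j)]_+/d(v_j)$.
\end{itemize}
Taking $C_0$ as the maximum of these finitely many ratios (and $0$) gives $e(v_j)+C_0 d(v_j)\ge0$ at every vertex. A convex combination of nonnegative values of an affine function is nonnegative, so the inequality extends to all of $X$.

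The only subtle point is that the vertex reduction uses affinity of $e+C_0d$ on the whole polytope; this holds because both $e$ and $d$ are affine. No real obstacle is expected. The main care needed is in the general case, where the contradiction must be run with $\eta$ fixed, since for $\eta=0$ the claim fails for curved sets.
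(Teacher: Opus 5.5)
Your proof is correct and matches the paper's. Your direct argument is exactly the paper's proof: the compact set $\{e\le-\eta\}$ misses the zero set of $d$, so $d\ge\delta_\eta>0$ there and $C_\eta=M/\delta_\eta$ works. The sequential contradiction is an equivalent repackaging, and your vertex constant $C_0=\max_{d(v)>0}[-e(v)]_+/d(v)$ is the paper's polytope argument verbatim. The one detail to add is that if $\{e\le-\eta\}$ is empty then $\delta_\eta$ is undefined, and you simply take $C_\eta=0$, as the paper does.
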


\begin{proof}
For fixed $\eta>0$, let
$B_\eta=\{x\in X:e(x)\le-\eta\}$ and
$M=\max_{x\in X}[-e(x)]_+$, treating empty $X$ separately as vacuous.
If $B_\eta$ is empty, take $C_\eta=0$.
Otherwise $B_\eta$ is compact and contains no zero of $d$, because
$e\ge0$ wherever $d=0$. Therefore
$\delta_\eta=\min_{x\in B_\eta}d(x)>0$.
With $C_\eta=M/\delta_\eta$, on $B_\eta$ we have
$e\ge-M\ge-C_\eta d$, and outside $B_\eta$ we have
$e>-\eta\ge-\eta-C_\eta d$. This proves the bound.

For a polytope with vertex set $\mathcal V$, take
\[
 C_0=\max_{v\in\mathcal V:\,d(v)>0}
                      \frac{[-e(v)]_+}{d(v)},
\]
with an empty maximum equal to zero. This finite constant gives
$e(v)+C_0d(v)\ge0$ at vertices with $d(v)>0$, and the hypothesis gives
the same inequality at vertices with $d(v)=0$.
Affineness extends it to every convex combination of the vertices.
\end{proof}

The additive $\eta$ cannot generally be removed: on $X=[0,1]$,
$d(x)=x^2$ and $e(x)=-x$ satisfy the hypotheses, but no finite $C_0$
can satisfy $-x\ge-C_0x^2$ for all $x>0$.
For curved row sets this distinction leads to an $o(N)$, rather than
necessarily bounded, finite-horizon error.

\subsection{Verification against history-dependent opponents}

Write $\operatorname{sp}(x)=\max_i x_i-\min_i x_i$.  Nature's stationary strategy must specify a row for every state and every controller action. Specifying rows only for the actions used by the selected controller would not define a strategy against a different controller.

\begin{theorem}
\label{fv:thm:verification}
Suppose \eqref{fv:eq:gb} has a finite solution.  At every state choose
\begin{align}
 \pi^*(i)&\in\mathop{\rm argmax}_{a\in A_g(i)}
       \min_{p\in F_{ia}(g)}\{r_{ia}+p^\top h\},
       \label{fv:eq:controller}\\
 q^*_{ia}&\in
 \begin{cases}
   \mathop{\rm argmin}_{p\in F_{ia}(g)}p^\top h,
                    &a\in A_g(i),\\
   F_{ia}(g),       &a\notin A_g(i),
 \end{cases}
       \label{fv:eq:nature}
\end{align}
which specify a deterministic stationary policy and a stationary kernel. Then, there exist constants $C<\infty$ and, for every $\eta>0$, $C_\eta<\infty$, such that for every initial state $i$, horizon $N\ge1$, and randomized history-dependent strategies $\sigma,\tau$,
\begin{align}
 \E_i^{\pi^*,\tau}\sum_{t=0}^{N-1}r_{S_tA_t}
 &\ge N(g_i-\eta)-\operatorname{sp}(h)
                         -C_\eta\operatorname{sp}(g),
                         \label{fv:eq:lower}\\
 \E_i^{\sigma,q^*}\sum_{t=0}^{N-1}r_{S_tA_t}
 &\le Ng_i+\operatorname{sp}(h)+C\operatorname{sp}(g).
                         \label{fv:eq:upper}
\end{align}
The constants can be chosen uniformly over all selectors satisfying \eqref{fv:eq:controller}-\eqref{fv:eq:nature}, with dependence
\begin{equation}
 C=C(r,\U,g,h)<\infty,
 \qquad
 C_\eta=C_\eta(r,\U,g,h,\eta)<\infty.
 \label{fv:eq:constant-dependence}
\end{equation}
They are independent of the initial state, horizon, and opposing strategies.  In general $C_\eta$ need not remain bounded as $\eta\downarrow0$. In particular, it holds that
\begin{equation}
 \liminf_{N\to\infty}J_N(i;\pi^*,\tau)\ge g_i,
 \qquad
 \limsup_{N\to\infty}J_N(i;\sigma,q^*)\le g_i.
 \label{fv:eq:guarantees}
\end{equation}
For each $\Psi\in\{I^-,J^-,J^+,I^+\}$, both the max-min and min-max values equal $g_i$, and $(\pi^*,q^*)$ is an all-state stationary saddle against history-dependent opponents. The equalities remain valid when either or both strategy classes are restricted to stationary strategies.  Moreover,
\begin{equation}
 \E_i^{\pi^*,q^*}\sum_{t=0}^{N-1}r_{S_tA_t}
       =Ng_i+h_i-\E_i^{\pi^*,q^*}h(S_N),
       \label{fv:eq:saddleidentity}
\end{equation}
and
\begin{equation}
             \frac{T^N0}{N}\longrightarrow g.
             \label{fv:eq:cesaro}
\end{equation}
Thus the gain component is unique across finite certificates. If the ambiguity sets for the actions $\pi^*(i)$ are polytopes, then \eqref{fv:eq:lower} also holds with $\eta=0$ and a finite $C_0$.
\end{theorem}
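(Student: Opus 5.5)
The plan is a supermartingale/telescoping argument with the potential $h$, in which continuation-gain drift is charged to a total budget of $\operatorname{sp}(g)$. Throughout, $\mathscr F_t$ denotes the history up to $S_t$. For each state-action pair and row $p$, set $d_{ia}(p)=p^\top g-g_i$ and $e_{ia}(p)=r_{ia}+p^\top h-g_i-h_i$.

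\emph{Lower bound \eqref{fv:eq:lower}.} Under $\pi^*$ every played action $a=\pi^*(i)$ lies in $A_g(i)$.
\begin{itemize}
\item Since $m_{ia}(g)=g_i$, every feasible row satisfies $d_{ia}(p)\ge0$, with equality exactly on $F_{ia}(g)$.
\item By the choice \eqref{fv:eq:controller} and the bias equation $g+h=L_gh$, we get $e_{ia}(p)\ge0$ on $F_{ia}(g)$.
\item Apply Lemma~\ref{fv:lem:nearface} on the compact set $X=\U_{i\pi^*(i)}$, once for each of the finitely many states. Taking the maximum of the constants gives one $C_\eta$ with $e\ge-\eta-C_\eta d$ on every row nature may use.
\end{itemize}
Conditioning on $\mathscr F_t$ and on nature's possibly randomized row $p_t$ gives
\[
\mathbb E[r_{S_tA_t}\mid\mathscr F_t]\ge g(S_t)+h(S_t)-\mathbb E[h(S_{t+1})\mid\mathscr F_t]-\eta-C_\eta D_t,
\]
where $D_t=\mathbb E[g(S_{t+1})-g(S_t)\mid\mathscr F_t]\ge0$. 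Hence $g(S_t)$ is a submartingale, so $\mathbb E g(S_t)\ge g_i$. The drift sum telescopes: $\sum_{t<N}\mathbb E D_t=\mathbb E g(S_N)-g_i\le\operatorname{sp}(g)$. Summing over $t<N$, the $h$-terms telescope to $h_i-\mathbb E h(S_N)\ge-\operatorname{sp}(h)$. This yields \eqref{fv:eq:lower}. In the polytopic case the $\eta=0$ branch of Lemma~\ref{fv:lem:nearface} gives $C_0$.

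\emph{Upper bound \eqref{fv:eq:upper}.} Against $q^*$ the controller may play any action, and we split into two cases.
\begin{itemize}
\item If $a\in A_g(i)$: then $q^{*\top}_{ia}g=g_i$. Since $q^*_{ia}$ minimizes $p^\top h$ on the face, $r_{ia}+q^{*\top}_{ia}h$ equals the face minimum, which is at most $(L_gh)_i=g_i+h_i$. So $e\le0$ and $d=0$.
\item If $a\notin A_g(i)$: $q^*_{ia}\in F_{ia}(g)$ gives $d=m_{ia}(g)-g_i<0$, a strictly negative gap drawn from finitely many values. Moreover $e\le 2R+2\|h\|_\infty$. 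So $e\le C(-d)$ with $C=\max(2R+2\|h\|_\infty)/\min(g_i-m_{ia}(g))$, the minimum taken over inactive pairs.
\end{itemize}
Now $g(S_t)$ is a supermartingale, so $\mathbb E g(S_t)\le g_i$, and the total expected downward drift is at most $\operatorname{sp}(g)$. The same telescoping gives \eqref{fv:eq:upper}. These constants depend only on $(r,\U,g,h)$ and not on which selectors were chosen.

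\emph{Remaining conclusions.}
\begin{itemize}
\item Dividing by $N$ and letting $N\to\infty$, then $\eta\downarrow0$, gives \eqref{fv:eq:guarantees}.
\item Under $(\pi^*,q^*)$ both cases hold with $d=e=0$, so the telescoping is exact and gives \eqref{fv:eq:saddleidentity}.
\item For \eqref{fv:eq:cesaro}: finite-horizon dynamic programming (Lemma~\ref{lem:basic}) gives $[T^N0]_i=\sup_\sigma\inf_\tau\mathbb E\sum r$. The lower bound with $\pi^*$ and the upper bound with $q^*$ sandwich this between $N(g_i-\eta)-O(1)$ and $Ng_i+O(1)$. Hence $T^N0/N\to g$, and Theorem~\ref{m:uniform-value} gives $g=g^\star$; in particular the gain is unique across certificates.
\item For the four payoff conventions: restricting to stationary opponents and using Lemma~\ref{fd:lem:chain-limits}, the guarantees give $\eta^{\pi^*,q}\ge g$ for all $q$ and $\eta^{\pi,q^*}\le g$ for all $\pi\in\Pi_D$. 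Theorem~\ref{m:nature-attainment} then yields the all-state saddle for all four payoffs. Theorem~\ref{m:strategic-duality} covers the restricted strategy classes.
\end{itemize}

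The main obstacle is the lower bound for curved row sets. There the bias loss of off-face rows is not linearly dominated by the gain gap (the $d=x^2$, $e=-x$ example after Lemma~\ref{fv:lem:nearface}). This is why the argument needs both the additive $\eta$ and the submartingale budget $\operatorname{sp}(g)$ on the cumulative gap. The delicate point to check is conditioning on nature's randomized row, so that the near-face inequality is applied rowwise before taking expectations.
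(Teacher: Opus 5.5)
Your proposal is correct. For the finite-horizon bounds (\ref{fv:eq:lower})--(\ref{fv:eq:upper}), the Poisson identity (\ref{fv:eq:saddleidentity}), and the sandwich giving $T^N0/N\to g$, you follow the paper's argument. The ingredients are the same: Lemma~\ref{fv:lem:nearface} on the played rows, the $\operatorname{sp}(g)$ budget on cumulative gain drift from the sub/supermartingale $g(S_t)$, and telescoping of $h$.

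You genuinely depart from the paper for the four payoff conventions. You restrict the expected-average guarantees to stationary opponents, read off $\eta^{\pi^*,q}\ge g$ for all $q\in\mathcal Q_S$ and $\eta^{\pi,q^*}\le g$ for all $\pi\in\Pi_D$ via Lemma~\ref{fd:lem:chain-limits}, and then invoke the gain-only saddle criterion of Theorem~\ref{m:nature-attainment}. This is valid: that theorem is proved beforehand without using verification, and $q^*$ is a full selector. It even gives $g=g^\star$ a second way.

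The paper instead proves $I^-_i(\pi^*,\tau)\ge g_i$ and $I^+_i(\sigma,q^*)\le g_i$ directly. It uses the sample-path identity with the martingale $M_N=\sum_{t<N}(h(S_{t+1})-p_t^\top h)$, the bounded-increment strong law $M_N/N\to 0$, bounded sub/supermartingale convergence $g(S_t)\to G_\infty$, and almost-sure finiteness of $\sum_t d_t$.

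What each route buys:
\begin{itemize}
\item The paper's route keeps the certificate self-contained. It does not rest on the cited compact-action pathwise one-player results and the Tauberian foundation behind Theorems~\ref{m:policy-value} and~\ref{m:nature-attainment}.
\item It also yields the almost-sure refinement $\liminf_N N^{-1}\sum_{t<N}r_{S_tA_t}\ge G_\infty$ with $\mathbb E_iG_\infty\ge g_i$, against arbitrary history-dependent nature.
\item Your route is shorter, but the pathwise conclusions of the verification theorem then inherit those external inputs.
\end{itemize}

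Two small repairs are needed:
\begin{itemize}
\item \textbf{Uniform $C_\eta$.} To make $C_\eta$ uniform over controller tie-breaking, as the statement claims, take the maximum of the near-face constants over all pairs $(i,a)$ with $a$ in the argmax set of (\ref{fv:eq:controller}). Taking it only over the chosen $\pi^*(i)$ is not enough.
\item \textbf{Inactive-row bound.} Your bound $e\le 2R+2\|h\|_\infty$ on inactive rows tacitly uses $\|g\|_\infty\le R$. This is true, since the selected equations give $g=(P^{\pi^*q^*})^\infty r^{\pi^*}$, but it should be stated. Alternatively, replace it by the compactness bound $\max_{p\in F_{ia}(g)}[r_{ia}+p^\top h-g_i-h_i]_+$, as the paper does.
\end{itemize}
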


\begin{proof}
A gain-face inequality controls only gain-minimizing rows. The first
part of the proof extends it to all feasible rows, paying for departures
with their nonnegative gain drift. The cumulative drift is bounded
because $g(S_t)$ remains in the finite interval
$[\min_jg_j,\max_jg_j]$.

Let $\mathcal F_t$ denote the full process history before $A_t$ is drawn,
and let $\mathcal G_t$ additionally contain the realized action $A_t$
and nature's selected row $p_t$, but not $S_{t+1}$.
Thus $\mathcal F_t\subseteq\mathcal G_t\subseteq\mathcal F_{t+1}$ and
\[
 \E[f(S_{t+1})\mid\mathcal G_t]=p_t^\top f
 \qquad(f\in\R^S).
\]
Using a full filtration for the analysis does not enlarge either
player's admissible information. It simply includes all already
realized randomizations in the joint process.

\smallskip\noindent\textbf{1. Controller inequalities on all feasible rows.}
Let $A_{g,h}^\star(i)$ be the maximizer set in
\eqref{fv:eq:controller}. For $a\in A_{g,h}^\star(i)$, put
\[
 d_{ia}(p)=p^\top g-g_i,
 \qquad e_{ia}(p)=r_{ia}+p^\top h-g_i-h_i.
\]
Gain activity gives $d_{ia}\ge0$ on $\U_{ia}$, with zero set
$F_{ia}(g)$. Bias optimality gives
$\min_{p\in F_{ia}(g)}e_{ia}(p)=0$.
Lemma~\ref{fv:lem:nearface} therefore applies.
Taking the maximum of its constants over the finitely many pairs
$(i,a)$ with $a\in A_{g,h}^\star(i)$ gives, for every allowed selector,
\begin{equation}
 e_{i,\pi^*(i)}(p)\ge-\eta-C_\eta d_{i,\pi^*(i)}(p)
 \quad(p\in\U_{i,\pi^*(i)}).
 \label{fv:eq:controwineq}
\end{equation}
This choice of $C_\eta$ is uniform over controller tie-breaking.

\smallskip\noindent\textbf{2. The controller's finite-horizon guarantee.}
Fix any nature strategy $\tau$ and use $\pi^*$.
Write $d_t=d_{S_tA_t}(p_t)$ and $e_t=e_{S_tA_t}(p_t)$.
Then $d_t\ge0$, and the transition rule gives
\[
 \E[g(S_{t+1})-g(S_t)\mid\mathcal G_t]=d_t.
\]
The tower property makes $g(S_t)$ a bounded submartingale, so
$\E_i g(S_t)\ge g_i$. Taking expectations and summing gives
\begin{equation}
 \sum_{t=0}^{N-1}\E_i d_t
   =\E_i g(S_N)-g_i\le\spn(g).
 \label{fv:eq:totalupdrift}
\end{equation}
Similarly, the definition of $e_t$ yields the exact expected reward
identity
\[
 \E_i\sum_{t=0}^{N-1}r_{S_tA_t}
 =\sum_{t=0}^{N-1}\E_i g(S_t)
    +h_i-\E_i h(S_N)+\sum_{t=0}^{N-1}\E_i e_t.
\]
Here the bias terms telescope, without requiring a limit of the state
process. Substituting $\E_i g(S_t)\ge g_i$,
$h_i-\E_i h(S_N)\ge-\spn(h)$, and
\eqref{fv:eq:controwineq}-\eqref{fv:eq:totalupdrift} proves
\eqref{fv:eq:lower}. Dividing by $N$, taking the limit inferior at fixed
$\eta$, and then letting $\eta\downarrow0$ proves the controller half
of \eqref{fv:eq:guarantees}. Neither the selector nor the opponent
changes with $\eta$.

\smallskip\noindent\textbf{3. Nature's full selector and upper guarantee.}
For every action, including those unused by $\pi^*$, define
\[
 \bar d_{ia}=(q^*_{ia})^\top g-g_i,
 \qquad \bar e_{ia}=r_{ia}+(q^*_{ia})^\top h-g_i-h_i.
\]
For active actions, the gain-face and bias-minimizing choices give
$\bar d_{ia}=0$ and
\[
 \bar e_{ia}
 =\min_{p\in F_{ia}(g)}(r_{ia}+p^\top h)-(L_gh)_i\le0.
\]
For inactive actions, $\bar d_{ia}=m_{ia}(g)-g_i<0$.
Their possibly positive bias residual can be charged to this strictly
negative gain drift. Specifically, set
\[
 C=\max_{(i,a):\,a\notin A_g(i)}
 \frac{\max_{p\in F_{ia}(g)}[r_{ia}+p^\top h-g_i-h_i]_+}
      {g_i-m_{ia}(g)},
\]
with an empty maximum equal to zero. Compactness bounds the numerators,
and there are finitely many positive denominators. Thus $C$ is finite,
independent of nature's tie-breaking, and
\begin{equation}
 \bar d_{ia}\le0,\qquad
 \bar e_{ia}\le C(-\bar d_{ia})
 \quad\text{for every }(i,a).
 \label{fv:eq:naturerowineq}
\end{equation}

Against any randomized history-dependent controller $\sigma$,
put $\bar d_t=\bar d_{S_tA_t}$ and $\bar e_t=\bar e_{S_tA_t}$.
Conditioning on its realized action makes the preceding inequalities
applicable. Consequently $g(S_t)$ is a bounded supermartingale, with
\[
 \E_i g(S_t)\le g_i,
 \qquad \sum_{t<N}\E_i(-\bar d_t)
       =g_i-\E_i g(S_N)\le\spn(g).
\]
The same reward identity as in part 2 now gives
\[
 \E_i^{\sigma,q^*}\sum_{t<N}r_{S_tA_t}
 \le Ng_i+\spn(h)+C\spn(g),
\]
proving \eqref{fv:eq:upper} and the nature half of
\eqref{fv:eq:guarantees}.

\smallskip\noindent\textbf{4. The two pathwise payoff conventions.}
The expected bounds above alone do not imply bounds on
$\E\liminf$ or $\E\limsup$. We establish those directly.
For either one-sided strategy pair define
\[
 \xi_{t+1}=h(S_{t+1})-p_t^\top h,
 \qquad M_N=\sum_{t=0}^{N-1}\xi_{t+1}.
\]
The transition rule and tower property give
$\E[\xi_{t+1}\mid\mathcal F_t]=0$, and
$|\xi_{t+1}|\le\spn(h)$.
Hence $M_N/N\to0$ almost surely, by the bounded martingale-difference
strong law (equivalently, Azuma-Hoeffding and Borel-Cantelli).
The exact sample-path identity is
\[
 \sum_{t<N}r_{S_tA_t}
 =\sum_{t<N}g(S_t)+h_i-h(S_N)+\sum_{t<N}e_t+M_N,
\]
using $\bar e_t$ for the nature pair.

Under $(\pi^*,\tau)$, bounded-submartingale convergence gives
$g(S_t)\to G_\infty$ almost surely and in $L^1$, with
$\E_iG_\infty\ge g_i$.
Monotone convergence applied to \eqref{fv:eq:totalupdrift} gives
$\E_i\sum_{t\ge0}d_t\le\spn(g)$, hence
$\sum_{t\ge0}d_t<\infty$ almost surely.
For each $\eta>0$, the row inequality thus implies
\[
 \liminf_N\frac1N\sum_{t<N}r_{S_tA_t}\ge G_\infty-\eta
 \quad\text{almost surely}.
\]
Taking a countable sequence $\eta\downarrow0$ proves the same bound
with $\eta=0$. After expectations, this is
$I_i^-(\pi^*,\tau)\ge g_i$.

Under $(\sigma,q^*)$, bounded-supermartingale convergence instead gives
$g(S_t)\to\bar G_\infty$ with $\E_i\bar G_\infty\le g_i$, and
$\sum_t(-\bar d_t)<\infty$ almost surely. From
\eqref{fv:eq:naturerowineq} and the same path identity,
\[
 \limsup_N\frac1N\sum_{t<N}r_{S_tA_t}\le\bar G_\infty
 \quad\text{almost surely},
\]
so $I_i^+(\sigma,q^*)\le g_i$.
For bounded rewards, Fatou's lemma and its reverse give
$I^-\le J^-\le J^+\le I^+$.
Thus these two guarantees bracket every payoff $\Psi$ in the theorem.
Weak duality then gives
\[
 g_i\le\sup_\sigma\inf_\tau\Psi_i(\sigma,\tau)
 \le\inf_\tau\sup_\sigma\Psi_i(\sigma,\tau)\le g_i.
\]
The same argument holds for any restricted strategy classes containing
$\pi^*$ and $q^*$, including the stated stationary classes.

\smallskip\noindent\textbf{5. The selected pair, finite-horizon limit, and uniqueness.}
When both selected strategies are used, gain and bias residuals vanish:
\[
 P^{\pi^*q^*}g=g,
 \qquad r^{\pi^*}+P^{\pi^*q^*}h=g+h.
\]
The reward identity from part 2 is therefore exactly
\eqref{fv:eq:saddleidentity}.
The finite-horizon dynamic-programming value is $T^N0$
(Lemma~\ref{lem:basic}). The two uniform guarantees yield
\[
 g_i-\eta-\frac{\spn(h)+C_\eta\spn(g)}N
 \le\frac{(T^N0)_i}{N}
 \le g_i+\frac{\spn(h)+C\spn(g)}N.
\]
First send $N\to\infty$ at fixed $\eta$ and then
$\eta\downarrow0$. There are finitely many states, so this proves
\eqref{fv:eq:cesaro} in norm and uniqueness of the gain in any finite
certificate. It also identifies that gain with the robust value
$g^\star$. Finally, if the selected-action row sets are polytopes,
the affine part of Lemma~\ref{fv:lem:nearface} supplies $C_0<\infty$.
Repeating part 2 with $\eta=0$ proves the final assertion.
\end{proof}

\begin{remark}
\label{fv:rem:nonunique}
Theorem~\ref{fv:thm:verification} proves uniqueness of the gain by identifying it with $\lim_N T^N0/N$.  The bias has a different status.  Adding any constant multiple of $\one$ preserves its equation.  More generally, put
\[
 E_g=\{d\in\R^S:p^\top d=d_i
       \text{ for all }i,\ a\in A_g(i),\ p\in F_{ia}(g)\}.
\]
For $d\in E_g$, every expression in the active bias optimization changes by exactly $d_i$.  Hence $L_g(h+d)=L_gh+d$, so $h+d$ is another bias whenever $h$ is.  The space $E_g$ contains both $\one$ and $g$. For an MDP consisting of absorbing states, $g_i=r_i$ and every vector $h$ solves the bias equation.  Thus a single reference-state normalization cannot in general determine a multichain bias.  Even classwise normalizations require an additional uniqueness argument in a particular model.
\end{remark}

\begin{remark}
\label{fv:rem:uniformity}
The selected controller and nature strategy do not depend on the initial state, horizon, or opponent.  The constants in \eqref{fv:eq:lower}-\eqref{fv:eq:upper} also do not depend on those quantities.  Thus, for any accuracy, one horizon threshold makes the expected-average guarantees valid against all opponents simultaneously. These are expected-payoff statements.  They do not assert that every trajectory against every opponent has average reward at least or at most the deterministic number $g_i$.
\end{remark}

\begin{corollary}
\label{fv:cor:pathwise}
Under $(\pi^*,q^*)$, there is a bounded random variable $G_\infty$, the recurrent classwise gain, such that
\begin{equation}
 g(S_t)\longrightarrow G_\infty\quad\text{almost surely},
 \qquad
 \frac1N\sum_{t=0}^{N-1}r_{S_tA_t}
          \longrightarrow G_\infty\quad\text{almost surely},
 \qquad \E_iG_\infty=g_i.
 \label{fv:eq:pathwise}
\end{equation}
In every recurrent class of the selected finite Markov chain, $g$ is constant and equals that class's average reward.  From a transient state, $g_i$ is the absorption-probability weighted average of these class gains.
\end{corollary}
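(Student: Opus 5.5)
The statement to prove is Corollary~\ref{fv:cor:pathwise}: under the selected pair $(\pi^*,q^*)$ of Theorem~\ref{fv:thm:verification}, the vector $g$ converges along the chain, the pathwise average reward converges almost surely to the same limit, and $g$ restricted to recurrent classes is the class average reward. The plan is to specialise part~4 of that proof to the pair $(\pi^*,q^*)$, where both residuals vanish exactly.

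\textbf{Step 1: identities along the chain.} Part~5 gives $P^{\pi^*q^*}g=g$ and $r^{\pi^*}+P^{\pi^*q^*}h=g+h$. The first identity makes $g(S_t)$ a bounded martingale under $\mathbb P_i^{\pi^*,q^*}$. Martingale convergence then yields a bounded limit $G_\infty$ with $g(S_t)\to G_\infty$ almost surely and in $L^1$, so $\mathbb E_iG_\infty=g_i$.

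\textbf{Step 2: pathwise averages.} Take $M_N=\sum_{t<N}(h(S_{t+1})-(P^{\pi^*q^*}h)(S_t))$. Using the second identity, the sample-path identity of part~4 becomes
\[
\sum_{t<N}r_{S_tA_t}=\sum_{t<N}g(S_t)+h_i-h(S_N)+M_N,
\]
since $\pi^*$ is deterministic and $e_t\equiv0$. Three facts then finish the step:
\begin{itemize}
\item $M_N/N\to0$ almost surely by the bounded martingale-difference strong law;
\item $(h_i-h(S_N))/N\to0$ because $h$ is bounded;
\item the Ces\`aro average of $g(S_t)$ tends to $G_\infty$.
\end{itemize}
Together these give $\frac1N\sum_{t<N}r_{S_tA_t}\to G_\infty$ almost surely.

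\textbf{Step 3: structure on recurrent classes.} Here I would invoke Lemma~\ref{sol:markov-algebra} for $P=P^{\pi^*q^*}$.

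\emph{Constancy of $g$ on a recurrent class.} The vector $g$ is $P$-harmonic, hence $P^\infty g=g$. Because $P^\infty$ has identical rows within each recurrent class $C$, namely that class's stationary law $\mu_C$, $g$ is constant on $C$.

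\emph{Identification with the class average reward.} Applying $P^\infty$ to the bias identity gives $P^\infty r^{\pi^*}=P^\infty g=g$. Hence on $C$ the constant value of $g$ equals $\mu_C^\top r^{\pi^*}$, which is the class's average reward.

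\emph{Transient states.} The chain enters some recurrent class almost surely. On that event, $G_\infty$ equals the constant value of $g$ on the class entered. Therefore $g_i=\mathbb E_iG_\infty$ is the absorption-probability-weighted average of the class gains.

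\textbf{Expected obstacle.} The only delicate point is periodicity, which blocks any argument through convergence of $P^t$. The plan avoids it throughout by using only the Ces\`aro projector $P^\infty$ and martingale limits, both of which exist without aperiodicity. Everything else is routine bookkeeping that reuses part~4 of the proof of Theorem~\ref{fv:thm:verification}.
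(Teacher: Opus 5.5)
Your proposal is correct. Your Step~3 is exactly the paper's algebra. The paper also writes $P^*g=g$ and $(I-P^*)h=r^*-g$, then applies the Ces\`aro projector to get $(P^*)^\infty g=g$ and $(P^*)^\infty r^*=g$. It concludes that $g$ is the ordinary stationary gain vector of the selected chain.

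The difference is how you obtain the almost-sure limits. Once $g=(P^*)^\infty r^*$ is known, the paper invokes the finite-chain ergodic theorem and absorption formula of Lemma~\ref{fd:lem:chain-limits}. The reward average converges almost surely to the invariant mean reward of the recurrent class eventually entered. So $G_\infty$ is the classwise gain by construction, and $\E_iG_\infty=((P^*)^\infty r^*)_i=g_i$.

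You instead derive the limits from the certificate itself, treating the selected pair as the zero-residual case of part~4 of the verification proof. You use bounded martingale convergence for $g(S_t)$, and the martingale strong law in the exact Poisson path identity.

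What each route buys:
\begin{itemize}
\item Your route avoids the recurrent-class ergodic theorem. It also shows that the pathwise statement is a degenerate case of the verification argument.
\item Your route still needs Step~3, namely constancy of $g$ on recurrent classes plus almost-sure absorption, to identify the martingale limit $G_\infty$ with the class gain. Once you have Step~3, your Step~1 is nearly redundant, since $g(S_t)$ is then eventually constant almost surely.
\item The paper's route is shorter, because the ergodic theorem gives both the limit and its identification at once.
\end{itemize}
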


\begin{proof}
Write $P^*=P^{\pi^*q^*}$ and $r_i^*=r_{i,\pi^*(i)}$.
The selections give
\begin{equation}
 P^*g=g,\qquad (I-P^*)h=r^*-g.
 \label{fv:eq:selectedpoisson}
\end{equation}
Let $(P^*)^\infty$ be its Ces\`aro projector from
Lemma~\ref{sol:markov-algebra}.
The first equality implies $(P^*)^\infty g=g$.
Applying the projector to the second, and using
$(P^*)^\infty(I-P^*)=0$, gives $(P^*)^\infty r^*=g$.
Thus $g$ is the ordinary stationary gain of the selected chain.
The finite-chain limit and absorption formula in
Lemma~\ref{fd:lem:chain-limits} now prove both almost-sure limits and
$\E_iG_\infty=g_i$. Periodic recurrent classes require no extra
assumption, since the reward averages are Ces\`aro averages.
\end{proof}

\subsection{Fixed-policy specializations of the verification theorem}
\label{fv:sec:fixed}

For a deterministic stationary controller $\pi$, put
\[
 (T^\pi x)_i=r_{i,\pi(i)}
             +\min_{p\in\U_{i,\pi(i)}}p^\top x,
 \qquad
 (\widehat T^\pi g)_i
             =\min_{p\in\U_{i,\pi(i)}}p^\top g.
\]
Given $g=\widehat T^\pi g$, define $F_i^\pi(g)=\{p\in\U_{i,\pi(i)}:p^\top g=g_i\}$. The two evaluation equations are
\begin{equation}
 g=\widehat T^\pi g,
 \qquad
 g_i+h_i=r_{i,\pi(i)}+\min_{p\in F_i^\pi(g)}p^\top h
 \quad(i\in S).
 \label{fv:eq:fixeddet}
\end{equation}
Both equations are necessary for the certificate.  In particular, merely substituting a vector for the scalar gain in $g+h=T^\pi h$ does not impose the first-level row restriction.

\begin{proposition}
\label{fv:prop:fixeddet}
If $(g,h)$ solves \eqref{fv:eq:fixeddet}, choose
\[
q_i^*\in\mathop{\rm argmin}_{p\in F_i^\pi(g)}p^\top h.
\]
Then, for every initial state,
\begin{equation}
 \inf_\tau\liminf_N J_N(i;\pi,\tau)
 =\inf_\tau\limsup_N J_N(i;\pi,\tau)=g_i,
 \qquad
 \frac{(T^\pi)^N0}{N}\longrightarrow g.
 \label{fv:eq:fixedvalue}
\end{equation}
The infima may be taken over all history-dependent randomized nature strategies or only stationary strategies.  The one stationary selector $q^*$ attains both infima simultaneously at every initial state. Under $(\pi,q^*)$, the identity \eqref{fv:eq:saddleidentity} and the pathwise interpretation in Corollary~\ref{fv:cor:pathwise} hold.
\end{proposition}

\begin{proof}
Restrict the action set at state $i$ to $\{\pi(i)\}$.
The gain and bias equations of this one-action model are precisely
\eqref{fv:eq:fixeddet}, so Theorem~\ref{fv:thm:verification} applies.
For every $\eta>0$ and every history-dependent $\tau$,
\[
 J_N(i;\pi,\tau)\ge g_i-\eta
 -\frac{\spn(h)+C_\eta\spn(g)}N,
 \qquad
 J_N(i;\pi,q^*)=g_i+
 \frac{h_i-\E_i^{\pi,q^*}h(S_N)}N.
\]
The second numerator is bounded by $\spn(h)$ in absolute value.
Taking limits proves both infimum identities and simultaneous
stationary attainment. The finite-horizon and pathwise conclusions are
the corresponding conclusions of the same verification theorem and
Corollary~\ref{fv:cor:pathwise}.
\end{proof}

\subsubsection{Randomized stationary policies and action-contingent nature}

Let $\pi(a\mid i)$ be a fixed stationary randomized policy.  Nature observes the realized action.  Define the expected one-step reward and the set of effective transition rows by
\begin{equation}
 r_i^\pi=\sum_{a\in A(i)}\pi(a\mid i)r_{ia},
 \qquad
 \U_i^\pi=\left\{
    \sum_{a\in A(i)}\pi(a\mid i)p_a:
               p_a\in\U_{ia}\text{ for all }a\right\}.
 \label{fv:eq:aggregate}
\end{equation}
The set $\U_i^\pi$ is nonempty and compact as the continuous image of a finite product of compact sets.  Zero-probability actions can be omitted from this product without changing $\U_i^\pi$.

\begin{proposition}
\label{fv:prop:fixedrandom}
For the post-action nature model, the fixed-policy Bellman operator is
\begin{equation}
 (T^\pi x)_i
  =\sum_a\pi(a\mid i)
       \left(r_{ia}+\min_{p\in\U_{ia}}p^\top x\right)
  =r_i^\pi+\min_{\bar p\in\U_i^\pi}\bar p^\top x.
 \label{fv:eq:randomoperator}
\end{equation}
Its finite vector evaluation certificate is
\begin{align}
 g_i&=\sum_a\pi(a\mid i)m_{ia}(g),
                                          \label{fv:eq:randomgain}\\
 g_i+h_i&=\sum_a\pi(a\mid i)
       \left(r_{ia}+\min_{p\in F_{ia}(g)}p^\top h\right).
                                          \label{fv:eq:randombias}
\end{align}
Every finite solution has all the expected-value conclusions in Proposition~\ref{fv:prop:fixeddet}.  A worst stationary nature strategy is obtained by choosing, separately for every positive-probability action,
\begin{equation}
 q^*_{ia}\in\mathop{\rm argmin}_{p\in F_{ia}(g)}p^\top h.
 \label{fv:eq:randomselector}
\end{equation}
In this statement $g$ may be nonconstant, and the individual quantities $m_{ia}(g)$ need not equal $g_i$.
\end{proposition}

\begin{proof}
The effective-row reduction in \eqref{eq:aggregate} applies because
nature sees the realized action. We spell out the gain-face calculation,
which is the additional point needed for vector gains.

For each vector $x$, independent minimization of positive-weight
summands gives
\[
 \min_{\bar p\in\U_i^\pi}\bar p^\top x
   =\sum_a\pi(a\mid i)\min_{p\in\U_{ia}}p^\top x.
\]
Indeed, any tuple gives at least the right-hand side, and compactness
allows each component minimum to be attained simultaneously.
This proves \eqref{fv:eq:randomoperator} and its recession equation
\eqref{fv:eq:randomgain}.

Assume that gain equation. For every tuple representing $\bar p$,
\[
 \bar p^\top g-g_i
 =\sum_a\pi(a\mid i)\bigl(p_a^\top g-m_{ia}(g)\bigr).
\]
Every summand is nonnegative. The sum vanishes exactly when
$p_a\in F_{ia}(g)$ for every positive-weight action.
This statement holds for every representation of $\bar p$.
Consequently the effective gain face is the set of weighted sums of
these component faces, and its bias minimum is
\[
 \min_{\bar p\in\U_i^\pi:\,\bar p^\top g=g_i}\bar p^\top h
 =\sum_a\pi(a\mid i)\min_{p\in F_{ia}(g)}p^\top h.
\]
This identifies \eqref{fv:eq:randombias} as the one-action model's bias
equation and proves feasibility of the selector
\eqref{fv:eq:randomselector}. For actions of zero probability, fill the
unused selector entries with arbitrary feasible rows.

For completeness, the row inequalities also survive history-dependent
post-action randomization directly. Conditional on the pre-action
history $H$ and $S_t=i$, let $\kappa_{H,a}$ be nature's conditional law
on $\U_{ia}$ after action $a$.
Integrate the effective-row inequalities over the product law
$\bigotimes_a\kappa_{H,a}$. For every vector $f$, the resulting
continuation term is
\[
 \sum_a\pi(a\mid i)\int p_a^\top f\,\kappa_{H,a}(dp_a)
   =\E[f(S_{t+1})\mid H],
 \qquad
 r_i^\pi=\E[r_{S_tA_t}\mid H].
\]
The controller lower-bound proof of
Theorem~\ref{fv:thm:verification} therefore applies after this
pre-action conditioning. This argument integrates inequalities valid
for every tuple. It does not require a mean row to belong to a
nonconvex row set.

Conversely, the chosen effective row is implemented by its selected
component $q^*_{ia}$ after the sampled action. The resulting stationary
pair satisfies
$P^{\pi,q^*}g=g$ and $r^\pi+P^{\pi,q^*}h=g+h$, yielding the expected
Poisson identity and attainment. Its finite-horizon operator is
\eqref{fv:eq:randomoperator}, so the normalized finite-horizon limit
also follows. These are all the expected-value conclusions claimed.
\end{proof}

\begin{remark}[Why one must average before using the randomized-policy gain]
\label{fv:rem:randomdrift}
For a randomized fixed policy, \eqref{fv:eq:randomgain} only equates the weighted average of $m_{ia}(g)$ with $g_i$.  It need not imply $p^\top g\ge g_i$ separately for every realized action. Accordingly, the submartingale argument conditions on the history before the action is sampled, or equivalently uses the aggregate row model. Applying the deterministic-policy argument to each realized action separately would be incorrect.
\end{remark}

\subsection{Why both gain restrictions are necessary}
The following two counterexamples show separately that nature's minimizing face and the controller's active action set are necessary. They use the same transition geometry, so the controller example can reuse the absorption calculation from the nature example.

\begin{example}[Nature's gain restriction cannot be omitted]
\label{er:ex-faces}
There are states $(s,m,z)$, rewards $(0,-1,0)$, and absorbing states $m,z$. At $s$, let
\[
 \U_s=\{p^\lambda=(1-\lambda/2,\lambda/4,\lambda/4):0\le\lambda\le1\}.
\]
The robust gain is $g=(-1/2,-1,0)$ and $h=(0,-2,0)$ is a Bellman bias. However, the unrestricted equations $\widehat T\widetilde g=\widetilde g$ and $T\widetilde h=\widetilde g+\widetilde h$ also accept the incorrect pair $\widetilde g=(-3/4,-1,0)$, $\widetilde h=(0,-3,0)$.
\end{example}
\begin{proof}
Let $\tau_m,\tau_z$ be the hitting times of the absorbing states. For any adaptive choice $\lambda_t$, the two first-entry probabilities at time $t+1$ are equal:
\[
 \Prob_s(\tau_m=t+1)
 =\E_s[\mathbf1_{\{S_t=s\}}\lambda_t/4]
 =\Prob_s(\tau_z=t+1).
\]
Summing over $t$ shows that each eventual absorption probability is at most $1/2$. The sample average converges to $-\mathbf1_{\{\tau_m<\infty\}}$; bounded convergence therefore gives expected average at least $-1/2$. Taking $\lambda_t=1$ at every visit to $s$ attains $-1/2$, because the survival probability after $t$ transitions is $2^{-t}$. This proves the stated gain. Moreover,
\[
 (p^\lambda)^\top g=-1/2,\qquad
 (p^\lambda)^\top h=-\lambda/2,
\]
so every row is gain-minimizing and the gain-face bias equation holds. The absorbing-state equations are identities.

For the proposed incorrect pair, direct substitution gives
\[
 (p^\lambda)^\top\widetilde g=-3/4+\lambda/8,
 \qquad (p^\lambda)^\top\widetilde h=-3\lambda/4.
\]
Thus the gain minimum is $-3/4$, attained only at $\lambda=0$, whereas the unrestricted bias minimum is $-3/4$, attained at $\lambda=1$. The unrestricted equations combine these incompatible choices. On the actual gain-minimizing face $\{p^0\}$, the bias equation would instead require $-3/4+0=0$, which is impossible.
\end{proof}

\begin{example}[The controller's gain restriction cannot be omitted]
\label{er:ex-actionfaces}
Use states $(s,m,z)$ with rewards $(0,1,0)$, and make $m,z$ absorbing. At $s$, the controller chooses between the nominal rows $p^0=(1,0,0)$ and $p^1=(1/2,1/4,1/4)$. The true gain is $(1/2,1,0)$, but the unrestricted equations accept $\widetilde g=(3/4,1,0)$ and $\widetilde h=(0,3,0)$.
\end{example}
\begin{proof}
Under either action, the probabilities of entering $m$ and $z$ on the next step are equal. The first-entry calculation in Example~\ref{er:ex-faces} therefore gives $\Prob_s(\tau_m<\infty)\le1/2$ under every controller strategy. The sample average converges to $\mathbf1_{\{\tau_m<\infty\}}$. Always using $p^1$ attains absorption probability $1/2$, which proves the true gain, including against history-dependent control. For the incorrect pair,
\[
 (p^0)^\top\widetilde g=3/4,\qquad
 (p^1)^\top\widetilde g=5/8,\qquad
 (p^0)^\top\widetilde h=0,\qquad
 (p^1)^\top\widetilde h=3/4.
\]
The gain maximum is attained only by action $p^0$, whereas the unrestricted bias maximum uses $p^1$. Hence the unrestricted equations hold, but the gain-active bias equation requires $3/4+0=0$ at $s$. This contradiction proves that the controller restriction is necessary independently of nature's restriction.
\end{proof}

\section{Proofs of Theorems~\ref{m:fixed-existence} and~\ref{m:optimal-existence}: finite Bellman solvability}
\label{sol:finite-bias-section}
\label{sol:section}

Theorem~\ref{sol:schweitzer} and Corollary~\ref{sol:fixed-policy} prove Theorem~\ref{m:fixed-existence}. Theorems~\ref{sol:generic} and~\ref{sol:tangent-saddle} prove the two equivalent forms of Theorem~\ref{m:optimal-existence}. Structural sufficient conditions and the two failure mechanisms are collected separately in Appendix~\ref{app:supporting-results}.

This section separates a prescribed gain from an unspecified gain. For a prescribed recession fixed point $g$, the question is whether the equation $g+h=L_gh$ has a finite solution. When the gain is unspecified, an additive eigenvalue of the tangent operator can be absorbed into a scalar shift of $g$. We first specialize the classical compact-action criterion, then use it in the paper-specific optimal-control argument.

\subsection{The classical criterion and fixed-policy specialization}

We state the one-player result in a slightly broader form so that it applies both to fixed-policy evaluation and to the tangent-game argument below. At state $i$, let $B_i$ be a nonempty compact metric action space. An action $b\in B_i$ has a continuous reward $c_i(b)$ and continuous transition row $p_i(b)$. The minimizing player chooses a stationary deterministic policy $q\in\mathcal Q:=\prod_iB_i$. Write $P_q$ for its transition matrix and $c^q_i=c_i(q_i)$. Define componentwise
\begin{equation}
 \gamma_i=\inf_{q\in\mathcal Q}(P_q^\infty c^q)_i,
 \quad
 \mathcal Q_* =\{q\in\mathcal Q:P_q^\infty c^q=\gamma\},
 \quad
 w(q)=Z_{P_q}(c^q-P_q^\infty c^q).
 \label{sol:compact-min-def}
\end{equation}
The infimum defines a finite vector because rewards are bounded. It need not be attained by one policy simultaneously at all states.

\begin{theorem}
\label{sol:schweitzer}
The coupled equations
\begin{align}
 z_i&=\min_{b\in B_i}p_i(b)^{\mathsf T}z,
 \label{sol:min-gain}\\
 z_i+h_i&=\min_{b:\,p_i(b)^{\mathsf T}z=z_i}
       \{c_i(b)+p_i(b)^{\mathsf T}h\}
 \label{sol:min-bias}
\end{align}
have a finite solution if and only if
\begin{equation}
 \mathcal Q_*\ne\varnothing,
 \qquad
 \exists B<\infty\quad w(q)\ge-B\one\quad
                  \text{for every }q\in\mathcal Q_*.
 \label{sol:schweitzer-condition}
\end{equation}
Every solution has $z=\gamma$. The bound is one-sided and ranges over \emph{all} simultaneously gain-optimal policies.
\end{theorem}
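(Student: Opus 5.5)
Both directions rest on the minimizing-player analogue of the verification argument in Theorem~\ref{fv:thm:verification}, together with the finite-chain identities of Lemma~\ref{sol:markov-algebra}.

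\emph{Necessity.} Let $(z,h)$ be a finite solution of \eqref{sol:min-gain}--\eqref{sol:min-bias}.

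First, choose a stationary selector. By compactness and continuity, at every state there is an action $b^*_i$ with $p_i(b^*_i)^{\mathsf T}z=z_i$ that minimizes $c_i(b)+p_i(b)^{\mathsf T}h$ on the gain face. The resulting policy $q^*$ satisfies
\[
P_{q^*}z=z,\qquad (I-P_{q^*})h=c^{q^*}-z.
\]
Applying $P_{q^*}^\infty$ to these identities, exactly as in Corollary~\ref{fv:cor:pathwise}, gives $P_{q^*}^\infty c^{q^*}=z$.

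Second, show $z\le(P_q^\infty c^q)$ for every $q\in\mathcal Q$. This is the controller half of Theorem~\ref{fv:thm:verification} with the roles reversed. For every $b$, set $d_i(b)=p_i(b)^{\mathsf T}z-z_i\ge0$, and let $e_i(b)$ be the bias residual. Lemma~\ref{fv:lem:nearface} gives $e\ge-\eta-C_\eta d$. The telescoping and submartingale arguments then give $\liminf J_N\ge z$ against every stationary $q$. Hence $z=\gamma$ and $q^*\in\mathcal Q_*$.

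Third, derive the uniform bias bound. Take any $q\in\mathcal Q_*$. Since $P_q^\infty c^q=\gamma=z$, we have $P_q^\infty z=z$. Also $P_qz\ge z$ by \eqref{sol:min-gain}, and $P_q^\infty(P_qz-z)=0$. By Lemma~\ref{sol:markov-algebra}, the nonnegative vector $P_qz-z$ vanishes on recurrent states.

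The key step is to show that $q_i$ lies on the gain face at every state. For this, view $P_q z - z$ as a nonnegative drift whose Ces\`aro mean is zero. Summing the transient series \eqref{sol:transient-series} gives $Z_q(P_qz-z)\ge0$, while $Z_q(P_q-I)z=P^\infty_qz-z=0$. Together these force the drift to vanish, so $q_i$ lies on the gain face at each $i$.

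On the face, \eqref{sol:min-bias} gives
\[
f:=c^q+P_qh-z-h\ge0,\qquad P_q^\infty f=0.
\]
Then
\[
w(q)=Z_q(c^q-z)=Z_q\bigl((I-P_q)h+f\bigr)=h-P_q^\infty h+Z_qf\ge-\spn(h)\one,
\]
using \eqref{sol:transient-series} for $Z_qf\ge0$. This yields \eqref{sol:schweitzer-condition} with $B=\spn(h)$.

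\emph{Sufficiency.} Set $z=\gamma$. We first check \eqref{sol:min-gain}.
\begin{itemize}[leftmargin=*]
\item The inequality $\min_b p_i(b)^{\mathsf T}\gamma\le\gamma_i$ holds by testing $q\in\mathcal Q_*$.
\item For the reverse inequality, suppose some $b$ had $p_i(b)^{\mathsf T}\gamma<\gamma_i$. Modify some $q\in\mathcal Q_*$ at $i$ to use $b$. The multichain comparison (policy improvement for the minimizer) then strictly lowers the gain somewhere, contradicting the definition of $\gamma$ as an infimum.
\end{itemize}
This comparison step is routine for finite chains but must be done carefully in the multichain case.

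For the bias, I would follow the vanishing-discount route. Define
\[
h:=\liminf_{\eps\downarrow0}\bigl(V_\eps-\gamma/\eps\bigr),
\]
or rather take a cluster point along a subsequence. The one-sided bound on $w(q)$ over $\mathcal Q_*$ supplies a lower bound on $V_\eps-\gamma/\eps$, via the Laurent expansion $V^q_\eps=\eta(q)/\eps+w(q)+O(\eps)$ for the discount-optimal $q$, which approaches $\mathcal Q_*$. An upper bound comes from a fixed $q^*\in\mathcal Q_*$. Passing to the limit in the discounted Bellman equation, and using Lemma~\ref{fv:lem:tangent}-type row-limit arguments, then gives \eqref{sol:min-bias}.

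\textbf{Main obstacle.} I expect the main obstacle to be the lower bound in the sufficiency direction. Discount-optimal selectors need not belong to $\mathcal Q_*$, and their gains may differ from $\gamma$ only in vanishing amounts. Controlling $w$ uniformly therefore needs a compactness argument that transfers the bound from $\mathcal Q_*$ to nearby policies. I would try to replace the discounted lower bound by a direct sup-norm boundedness argument for the iterates of $K$, as in the equivalent $\sup_N\|K^N0\|$ criterion. I would then extract $h$ from bounded sub- and super-solutions by monotonicity.
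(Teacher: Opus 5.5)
Your ``only if'' direction is correct. The face selector gives $P_{q^*}^\infty c^{q^*}=z$. The drift and near-face argument gives $z\le P_q^\infty c^q$ for every $q$, hence $z=\gamma$. The estimate $w(q)=(I-P_q^\infty)h+Z_qf\ge-\spn(h)\one$ on $\mathcal Q_*$ is the same computation the paper uses in the necessity part of Theorem~\ref{sol:tangent-saddle}. Your ``key step'' is in fact immediate: $P_qz=P_qP_q^\infty c^q=P_q^\infty c^q=z$.

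The gain equation in the ``if'' direction also closes as you indicate. Suppose $q'$ changes $q\in\mathcal Q_*$ only at $i$, and $\Delta=P_{q'}\gamma-\gamma$ has $\Delta_i<0$. Then $\Delta\le0$ and $P_{q'}^\infty\Delta=0$ force $i$ to be transient for $q'$. So the recurrent classes of $q'$ are recurrent classes of $q$, and $\eta(q')=P_{q'}^\infty\gamma=\gamma+Z_{q'}\Delta$, which is strictly below $\gamma_i$ at $i$.

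\textbf{The gap: existence of a finite $h$.} This is the entire substance of sufficiency, and your primary route cannot deliver it. Under the theorem's hypotheses, $V_\eps-\gamma/\eps$ need not be bounded below, so it has no finite cluster point.

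Take the model of Proposition~\ref{rev:slow-gain-estimation} with $k\ge3$. It has one controller action, so it is a one-player minimization of exactly this type.
\begin{itemize}
\item $\mathcal Q_*$ is the single kernel using the self-loop at $x$. Its canonical bias $(0,-1,0)$ is finite, and $h=(0,-1,0)$ solves (\ref{sol:min-gain})--(\ref{sol:min-bias}).
\item The row $p_k(u)$ with $u^{k-1}=\eps/2$ gives $V_{\eps,x}\le-\beta u/\bigl(2\eps(1+3\beta u/2)\bigr)$, where $\beta=1-\eps$. This tends to $-\infty$ like $\eps^{-(k-2)/(k-1)}$, while $\gamma_x=0$.
\item These near-optimal kernels do approach $\mathcal Q_*$. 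But each has gain $1$ at $x$ and canonical bias $w_x=-(1+2u)/u^k\to-\infty$.
\item The $O(\eps)$ Laurent remainder is not uniform in $q$. Since $P\mapsto P^\infty$ and $w$ are discontinuous at the boundary, proximity to $\mathcal Q_*$ gives no control.
\end{itemize}
This is why the paper uses bounded discounted centering (Lemma~\ref{sol:bounded-centering}) only in the structural regimes of Appendix~\ref{sol:structural-section}.

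Your fallback is the right framework: bounded orbits of the face-restricted bias operator $K$, with barriers as in Lemma~\ref{sol:topical-orbits}. The upper barrier is easy. For $q^*\in\mathcal Q_*$, $u=w(q^*)$ satisfies $u=c^{q^*}-\gamma+P_{q^*}u$ with rows on the gain faces, so $Ku\le u$.

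The lower barrier $\ell\le K\ell$ is the hard part. It must hold against every action on every gain face, so it must dominate face-restricted policies outside $\mathcal Q_*$ and history-dependent switching. Obtaining it from a bound on the canonical biases of stationary policies in $\mathcal Q_*$ alone is precisely the nontrivial content of \cite[Theorem~1]{schweitzer1985undiscounted}.

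The paper does not reprove that result. It applies it to the rewards $-c$ and checks three things about the map $(\eta,w)\mapsto(-\eta,-w)$:
\begin{itemize}
\item it preserves the normalization $P_q^\infty w(q)=0$;
\item it converts the source's upper bias bound into the lower bound in (\ref{sol:schweitzer-condition});
\item it converts the source's maximizing system into (\ref{sol:min-gain})--(\ref{sol:min-bias}).
\end{itemize}
As written, your proposal establishes the ``only if'' direction and the identification $z=\gamma$, but not the existence of a finite bias under (\ref{sol:schweitzer-condition}). To close it, either cite Schweitzer's theorem as the paper does or supply a genuine argument for the lower barrier.
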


This is the discrete-time minimization specialization of \cite[Theorem~1]{schweitzer1985undiscounted}. We check the change of convention and normalization below, and use that published theorem for existence.

\begin{proof}
This is \cite[Theorem~1]{schweitzer1985undiscounted} after reversing rewards. We check its hypotheses and the two conventions that affect the statement. Use unit holding times, so the source's holding-time matrix is $H_q=P_q$, and reward $\widetilde c=-c$. The finite state space, compact metric action spaces, and continuous data satisfy its assumptions. For $\eta(q)=P_q^\infty c^q$, stochasticity and $P_qP_q^\infty=P_q^\infty$ give
\[
 \widetilde\eta(q)=-\eta(q),\qquad
 \widetilde w(q)=Z_{P_q}(-c^q+P_q\eta(q))=-w(q).
\]
The transformed maximal-gain vector is $-\gamma$, and its simultaneously optimal policies are precisely $\mathcal Q_*$. Thus the source's uniform upper bound on their canonical biases is the lower bound in \eqref{sol:schweitzer-condition}. The normalization is unchanged: $P_q^\infty w(q)=0$.

For the equations, substitute $\widetilde z=-z$ and $\widetilde h=-h$ in the maximizing system. Its first equation becomes \eqref{sol:min-gain}. Its maximizing actions are exactly $\{b:p_i(b)^\top z=z_i\}$, and its second equation becomes
\[
 -h_i=\max_{b:\,p_i(b)^\top z=z_i}
       \{-c_i(b)+p_i(b)^\top z-p_i(b)^\top h\}
     =z_i-\min_{b:\,p_i(b)^\top z=z_i}
       \{c_i(b)+p_i(b)^\top h\}.
\]
This is \eqref{sol:min-bias}. The cited theorem therefore supplies both directions of the existence criterion and identifies every solution's gain as $\gamma$.
\end{proof}

\begin{remark}
\label{sol:schweitzer-mapping}
The reward sign reversal sends $(\eta,w)$ to $(-\eta,-w)$. Accordingly the maximizing theorem's uniform upper bound becomes a uniform lower bound here. It applies to every simultaneously optimal stationary policy with its normalization $P_q^\infty w(q)=0$; policy-dependent additive shifts cannot replace this condition.
\end{remark}

\begin{remark}
If $P_q^\infty c^q=\gamma$ as a complete vector, then $P_q\gamma=\gamma$. Once the gain equation is established, every row of $q$ is therefore gain-active, including its transient rows. This conclusion uses simultaneous all-state optimality. Optimality at only one initial state does not imply it and is not the condition in \eqref{sol:schweitzer-condition}.
\end{remark}

\begin{corollary}
\label{sol:fixed-policy}
For a deterministic stationary controller $\pi$, apply Theorem~\ref{sol:schweitzer} with $B_i=\mathcal U_{i,\pi(i)}$, $p_i(b)=b$, and $c_i(b)=r_{i,\pi(i)}$. Its equations are exactly
\[
 g_i^\pi=\min_{p\in\mathcal U_{i,\pi(i)}}p^{\mathsf T}g^\pi,
 \qquad
 g_i^\pi+h_i^\pi=r_{i,\pi(i)}+
       \min_{p\in F_{i,\pi(i)}(g^\pi)}p^{\mathsf T}h^\pi.
\]
They are solvable precisely when a stationary nature kernel attains the complete worst-gain vector and all such kernels have a common lower bound on their canonical biases.

For a fixed stationary randomized policy, take $B_i=\prod_{a:\pi(a\mid i)>0}\mathcal U_{ia}$ and set $p_i(b)=\sum_a\pi(a\mid i)b_a$ and $c_i(b)=\sum_a\pi(a\mid i)r_{ia}$. This is again a compact continuous one-player model. Its gain and bias equations are the corresponding $\pi$-weighted sums of the actionwise gain and gain-face minima.
\end{corollary}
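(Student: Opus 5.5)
The statement to prove is Corollary~\ref{sol:fixed-policy}. The plan is to verify that each specified one-player model satisfies the hypotheses of Theorem~\ref{sol:schweitzer}, and that equations \eqref{sol:min-gain}--\eqref{sol:min-bias} reduce to the stated fixed-policy equations. The existence criterion and the identification $z=\gamma$ then transfer verbatim.

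\emph{Deterministic case.} Take $B_i=\U_{i,\pi(i)}$. This is a nonempty compact subset of $\Delta(S)$, hence a compact metric space. The map $p_i(b)=b$ is continuous, and the reward $c_i(b)=r_{i,\pi(i)}$ is constant in $b$, hence continuous.

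Substituting into \eqref{sol:min-gain} gives $z_i=\min_{p\in\U_{i,\pi(i)}}p^\top z$. In \eqref{sol:min-bias}, the constraint set $\{b:b^\top z=z_i\}$ is exactly $F_{i,\pi(i)}(z)$, because $z_i$ equals the minimum. The constant reward then leaves the minimization.

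The stationary policies $q\in\prod_iB_i$ are nature's stationary selectors restricted to the used actions. So $\gamma_i=\inf_q\eta_i(q)$ equals $g_i^\pi$ by Theorem~\ref{m:policy-value}, since rows at unused actions do not affect $\eta^{\pi,q}$. It follows that $\mathcal Q_*$ is the set of kernels attaining the complete worst-gain vector. The condition in \eqref{sol:schweitzer-condition} is then the stated common lower bound on canonical biases, and every solution has gain $\gamma=g^\pi$.

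\emph{Randomized case.} Take $B_i=\prod_{a:\pi(a\mid i)>0}\U_{ia}$, which is a finite product of compact metric spaces. Set $p_i(b)=\sum_a\pi(a\mid i)b_a$, which is linear and hence continuous, and $c_i(b)=r_i^\pi$, which is constant. The key identity is actionwise separability from the proof of Proposition~\ref{fv:prop:fixedrandom}:
\[
\min_b p_i(b)^\top z=\sum_a\pi(a\mid i)\,m_{ia}(z).
\]
So \eqref{sol:min-gain} becomes the weighted gain equation. For the bias equation, I would reuse the decomposition
\[
p_i(b)^\top z-z_i=\sum_a\pi(a\mid i)\bigl(b_a^\top z-m_{ia}(z)\bigr),
\]
which holds under the gain equation. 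Every summand is nonnegative, so the constraint $p_i(b)^\top z=z_i$ holds if and only if $b_a\in F_{ia}(z)$ for each positive-weight action. The constraint set is therefore the product of the faces. Minimizing $p_i(b)^\top h$ over this product separates into $\sum_a\pi(a\mid i)\min_{p\in F_{ia}(z)}p^\top h$. This is the weighted gain-face equation \eqref{fv:eq:fixed-bias}.

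The main point needing care is the identification of $\gamma$ and $\mathcal Q_*$ with the robust gain and the gain-attaining kernels of Theorem~\ref{m:fixed-existence}. Stationary tuple policies in $B$ correspond to effective selectors in $\prod_i\U_i^\pi$. Different tuples may give the same effective row, but that does not change $P_q$, $\eta$, or $w$. Applying Theorem~\ref{m:policy-value} through the tuple reduction \eqref{eq:aggregate} then gives $\gamma=g^\pi$. This simultaneously completes Theorem~\ref{m:fixed-existence}.
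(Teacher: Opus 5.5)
Your proposal is correct and follows essentially the same route as the paper. You substitute the specified one-player data into Theorem~\ref{sol:schweitzer}, identify the restricted constraint set with the gain face (in the randomized case, with the product of actionwise faces via the nonnegative excess decomposition), and let the actionwise minima separate. Your explicit identification of $\gamma$ with $g^\pi$ through Theorem~\ref{m:policy-value}, and of tuple policies with effective selectors, spells out what the paper compresses into ``the substitution preserves the gains and canonical biases'' and its appeal to Proposition~\ref{fv:prop:fixedrandom}.
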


\begin{proof}
For deterministic $\pi$, the substitution in the statement preserves the stationary matrices, rewards, gains, and canonical biases. Theorem~\ref{sol:schweitzer} therefore gives the asserted criterion directly.

For randomized $\pi$, write $\bar p(b)=\sum_a\pi(a\mid i)b_a$ on the compact product of its positive-weight action row sets. Rectangularity gives
\[
 \min_b\bar p(b)^\top g=\sum_a\pi(a\mid i)m_{ia}(g).
\]
The excess of a feasible tuple over this minimum is
$\sum_a\pi(a\mid i)[b_a^\top g-m_{ia}(g)]$. Every summand is nonnegative, so the excess vanishes exactly when $b_a\in F_{ia}(g)$ for each positive-weight action. Its restricted bias minimum is consequently
$\sum_a\pi(a\mid i)\min_{p\in F_{ia}(g)}p^\top h$. This establishes both evaluation equations and the same canonical-bias criterion. Proposition~\ref{fv:prop:fixedrandom} identifies the tuple model with post-action policy evaluation.
\end{proof}

\subsection{Fixed points, bounded orbits, and barriers}

For $z=\widehat Tz$, define $K_z(x)=L_zx-z$. These maps are monotone, additively homogeneous, and nonexpansive in the supremum norm. Write $\operatorname{sp}(x)=\max_i x_i-\min_i x_i$.

\begin{lemma}
\label{sol:topical-orbits}
Let $F:\mathbb R^n\to\mathbb R^n$ be monotone and satisfy $F(x+c\one)=F(x)+c\one$. The following are equivalent:
\begin{enumerate}
\item $Fh=h+\lambda\one$ for some $h$ and $\lambda$; \item $\sup_{N\ge0}\operatorname{sp}(F^N0)<\infty$; \item there are $\ell,u,\lambda$ with $\ell+\lambda\one\le F\ell$ and $Fu\le u+\lambda\one$.
\end{enumerate}
Furthermore, $F$ has a fixed point if and only if one of its orbits is bounded in the supremum norm.
\end{lemma}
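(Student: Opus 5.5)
The plan is the standard cycle of implications for topical maps. We use only monotonicity, additive homogeneity, and the sup-norm nonexpansiveness they imply: from $x\le y+\|x-y\|_\infty\one$ we get $Fx\le Fy+\|x-y\|_\infty\one$.

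\emph{(1)$\Rightarrow$(2).} If $Fh=h+\lambda\one$, induction gives $F^Nh=h+N\lambda\one$. Nonexpansiveness then gives $\|F^N0-F^Nh\|_\infty\le\|h\|_\infty$. Hence $\operatorname{sp}(F^N0)\le\operatorname{sp}(h)+2\|h\|_\infty$, uniformly in $N$.

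\emph{(3)$\Rightarrow$(2).} Iterate using monotonicity and homogeneity. This gives $\ell+N\lambda\one\le F^N\ell$ and $F^Nu\le u+N\lambda\one$. Comparing $0$ with $\ell$ and $u$ by nonexpansiveness yields
\[
\ell+N\lambda\one-\|\ell\|_\infty\one\le F^N0\le u+N\lambda\one+\|u\|_\infty\one,
\]
so the span of $F^N0$ is bounded.

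\emph{(1)$\Rightarrow$(3).} Take $\ell=u=h$.

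\emph{(2)$\Rightarrow$(1)} is the main step, and I would use a Krasnoselskii/compactness argument on the quotient by constants. Set $x_N=F^N0$ and note that $x_{N+1}-x_N$ has $\max$ and $\min$ nonincreasing and nondecreasing respectively, by monotonicity and homogeneity. Hence
\[
\lambda^+=\lim_N\max_i(x_{N+1}-x_N)_i,\qquad \lambda^-=\lim_N\min_i(x_{N+1}-x_N)_i .
\]
Bounded span forces $x_N/N\to\lambda\one$, and a telescoping comparison gives $\lambda^-\le\lambda\le\lambda^+$. The delicate point is showing $\lambda^-=\lambda^+$ and producing an actual eigenvector.

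I would do this through the map $\widetilde F(x)=F(x)-(\max_iF(x)_i)\one$ on the set $K=\{x:\max_i x_i=0,\ \operatorname{sp}(x)\le M\}$, where $M$ bounds the spans. For the invariance of $K$, I would instead take the closed convex hull of the normalized orbit together with a sup-norm bound. Then, as a cleaner alternative, I would use the vanishing-discount approach. For $\alpha\in(0,1)$, $x\mapsto F(\alpha x)$ is an $\alpha$-contraction with fixed point $v_\alpha$. Bounded spans of the orbit transfer to a uniform bound on $\operatorname{sp}(v_\alpha)$, via the Abel representation $v_\alpha$ compared against the orbit by nonexpansiveness. Then $(1-\alpha)v_\alpha$ is bounded, and $v_\alpha-\max_i(v_\alpha)_i\one$ is bounded. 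Extracting a subsequence $\alpha\uparrow1$ with $(1-\alpha)\max_i v_\alpha\to\lambda$ and $v_\alpha-\max_i v_\alpha\one\to h$, continuity of $F$ gives $Fh=h+\lambda\one$.

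The obstacle is the uniform span bound on $v_\alpha$ from the orbit bound. My first attempt would be the standard estimate: the finite-horizon discounted iterates $(x\mapsto F(\alpha x))^N0$ differ from undiscounted ones by controlled terms, with span at most $\sup_N\operatorname{sp}(F^N0)$, using $\alpha$-homogeneity of span. If that fails, I would fall back on Gaubert--Gunawardena's bounded-orbit criterion \cite{gaubert2004perron}, which is exactly this equivalence for topical maps.

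For the final assertion, a fixed point $h$ gives the bounded orbit $F^Nh=h$. Conversely, if one orbit is bounded, then by nonexpansiveness every orbit is bounded, in particular $F^N0$. Then (2) holds, so (1) gives $Fh=h+\lambda\one$, and $F^N0=N\lambda\one+O(1)$ forces $\lambda=0$.
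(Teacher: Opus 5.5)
Your proof is correct, but you organize it differently from the paper.

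\textbf{How the two routes differ.} The paper proves (3)$\Rightarrow$(1) directly by a monotone sandwich. It sets $G=F-\lambda\one$, shifts $u$ so that $\ell\le u$, iterates $G$ from $\ell$ to get a nondecreasing sequence bounded by $u$, and passes to the limit by continuity. It then simply cites Gaubert--Gunawardena for (1)$\Leftrightarrow$(2) and for the sup-norm fixed-point assertion. You instead route (3) through (2), so your (3)$\Rightarrow$(1) depends on the hard direction. In exchange, you attack (2)$\Rightarrow$(1) from scratch by vanishing discount and derive the final assertion yourself; your $\lambda=0$ argument there is fine.

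\textbf{Closing the step you flag.} The obstacle closes along the lines of your first attempt, so the fallback citation is not needed. Put $x_N=F^N0$, $y_N=(x\mapsto F(\alpha x))^N0$ and $M=\sup_N\operatorname{sp}(x_N)$. You need span-nonexpansiveness of $F$, which, like sup-norm nonexpansiveness, follows from monotonicity and homogeneity. Together with the seminorm property of $\operatorname{sp}$ it gives
\[
\operatorname{sp}(y_N-x_N)\le\operatorname{sp}(\alpha y_{N-1}-x_{N-1})\le\alpha\operatorname{sp}(y_{N-1}-x_{N-1})+(1-\alpha)M .
\]
By induction $\operatorname{sp}(y_N-x_N)\le M$, hence $\operatorname{sp}(v_\alpha)\le 2M$. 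Set $m_\alpha=\max_i(v_\alpha)_i$ and $h_\alpha=v_\alpha-m_\alpha\one$. Homogeneity gives $F(\alpha h_\alpha)=h_\alpha+(1-\alpha)m_\alpha\one$, with $h_\alpha\in[-2M,0]^n$ and $|(1-\alpha)m_\alpha|\le\|F0\|_\infty$. Compactness and continuity then finish the argument.

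This is a recursive comparison, not an Abel representation. For nonlinear $F$, $v_\alpha$ is not a convex combination of the $F^N0$, which is why the bound is $2M$ rather than $M$.

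\textbf{What each route buys.} Your route makes the lemma self-contained, and it parallels the paper's own discounted-centering argument in Lemma~\ref{sol:bounded-centering}. The paper's route gives an elementary (3)$\Rightarrow$(1) that locates the eigenvector between the two barriers. That is exactly the construction reused in the proof of Theorem~\ref{sol:tangent-saddle}.

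\textbf{Threads to delete.} In a write-up, drop the two abandoned threads.
\begin{itemize}
\item The premise that one must show $\lambda^-=\lambda^+$ is false in general. Take $F(x)=(x_2,x_1+1)$: it has $Fh=h+\tfrac12\one$ with $h=(0,\tfrac12)$, yet $F^{N+1}0-F^N0$ alternates between $(0,1)$ and $(1,0)$, so $\lambda^+=1$ and $\lambda^-=0$.
\item The Brouwer-type sketch is incomplete. For the same map, $M=1$ and $x=(0,-1)\in K$, but $\operatorname{sp}(Fx)=2$, so $K$ is not $\widetilde F$-invariant. You also give no justification that the closed convex hull of a normalized orbit is invariant under the nonlinear map $\widetilde F$.
\end{itemize}
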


\begin{proof}
A monotone, additively homogeneous map is called topical. It is nonexpansive in both the supremum norm and the span seminorm: apply $F$ to
$y+\min_i(x_i-y_i)\one\le x\le y+\max_i(x_i-y_i)\one$.
In particular, $F$ is continuous. The equivalence (1)$\Leftrightarrow$(2) is the additive bounded-orbit theorem of \cite[Theorem~9]{gaubert2004perron}. Their Lemma~3, with scalar growth rate zero, gives the final fixed-point assertion. These results require precisely monotonicity and scalar additive homogeneity on finite-dimensional $\mathbb R^n$.

We retain the short barrier argument because it is used below. If (1) holds, choose $\ell=u=h$ in (3). Conversely, under (3), set $G=F-\lambda\one$. A scalar shift of $u$ preserves $Gu\le u$ and makes $\ell\le u$. Starting from $h_0=\ell$, monotonicity gives
\[
 \ell=h_0\le h_1:=Gh_0\le h_2:=Gh_1\le\cdots\le u.
\]
Indeed, the lower inequality propagates from $\ell\le G\ell$, and $h_k\le u$ implies $h_{k+1}\le Gu\le u$. Coordinatewise convergence and continuity give $Gh=h$, hence (1). A bounded orbit from any starting vector is equivalent to one from zero because $\|F^Nx-F^N0\|_\infty\le\|x\|_\infty$.
\end{proof}

Bounded span permits a scalar drift, whereas bounded supremum norm forces that drift to be zero. This distinction is essential when the gain is prescribed.

\begin{theorem}
\label{sol:generic}
For a prescribed $g=\widehat Tg$, the following are equivalent:
\begin{enumerate}
\item there is a finite $h$ with $g+h=L_gh$; \item $\sup_{N\ge0}\|K_g^N0\|_\infty<\infty$; \item there are $\ell,u$ with $\ell\le K_g\ell$ and $K_gu\le u$.
\end{enumerate}
With the gain unspecified, a finite gain-bias pair exists if and only if some $z=\widehat Tz$ has a span-bounded $K_z$-orbit. Equivalently, some such $K_z$ has an additive eigenpair $K_zh=h+\lambda\one$. In that case the solution is $(g,h)$ with $g=z+\lambda\one$.
\end{theorem}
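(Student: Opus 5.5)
The plan is to reduce everything to Lemma~\ref{sol:topical-orbits}, applied to the tangent map $K_g=L_g-g$. First I will check that $K_g$ is topical on $\mathbb R^S$, meaning monotone and additively homogeneous.

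\begin{itemize}
\item \emph{Well defined.} For $g=\widehat Tg$, each $A_g(i)$ is finite and nonempty. Each $F_{ia}(g)$ is nonempty and compact, so every inner minimum of the affine map $p\mapsto r_{ia}+p^\top h$ is attained. Hence $L_gh$ is a finite vector for every $h$.
\item \emph{Monotone.} $h\le h'$ gives $p^\top h\le p^\top h'$ for every probability row. This inequality survives the minimum over $F_{ia}(g)$ and the maximum over $A_g(i)$.
\item \emph{Additively homogeneous.} $p^\top(h+c\one)=p^\top h+c$, so $L_g(h+c\one)=L_gh+c\one$. Subtracting the fixed vector $g$ preserves both properties.
\end{itemize}

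Next I will establish $(1)\Leftrightarrow(2)\Leftrightarrow(3)$ for a prescribed $g$. Statement~(1) says exactly that $K_gh=h$, i.e.\ that $K_g$ has a fixed point. By the final assertion of Lemma~\ref{sol:topical-orbits}, this holds iff some orbit is sup-norm bounded. By the nonexpansive estimate $\|K_g^Nx-K_g^N0\|_\infty\le\|x\|_\infty$, that is the same as boundedness of the orbit from $0$, which is~(2).

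For~(3), the direction $(1)\Rightarrow(3)$ takes $\ell=u=h$. For $(3)\Rightarrow(1)$, I will repeat the barrier argument in the proof of the lemma with $\lambda=0$:
\begin{itemize}
\item Shift $u$ upward by a constant so that $\ell\le u$; additive homogeneity keeps $K_gu\le u$.
\item The iterates $K_g^k\ell$ then increase monotonically and stay below $u$.
\item They converge coordinatewise, and continuity of the nonexpansive map $K_g$ makes the limit a fixed point.
\end{itemize}
The point to stress is that with $\lambda=0$ the barriers give an honest fixed point, not merely an additive eigenvector. This is exactly the sup-norm versus span distinction recorded after the lemma.

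For the unspecified-gain statement, I will use how the construction behaves under scalar shifts.
\begin{itemize}
\item \emph{Invariance step.} If $z=\widehat Tz$ and $g=z+\lambda\one$, then $m_{ia}(g)=m_{ia}(z)+\lambda$ because rows are stochastic. Hence $\widehat Tg=\widehat Tz+\lambda\one=g$. The same identity gives $A_g(i)=A_z(i)$, and the minimizers $F_{ia}(g)=F_{ia}(z)$ are unchanged. Consequently $L_g=L_z$.
\item \emph{Sufficiency.} If $K_zh=h+\lambda\one$, then $L_gh=L_zh=z+h+\lambda\one=g+h$. So $(g,h)$ is a finite certificate.
\item \emph{Necessity.} A certificate $(g,h)$ gives $z=g$ with $K_zh=h$, i.e.\ an additive eigenpair with $\lambda=0$.
\item \emph{Span criterion.} Equivalence with a span-bounded $K_z$-orbit is $(1)\Leftrightarrow(2)$ of Lemma~\ref{sol:topical-orbits}, applied to the topical map $K_z$.
\end{itemize}

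The main obstacle is the invariance step, since it is what allows the scalar eigenvalue to be absorbed into the gain. The danger is that shifting $z$ might change the active sets or faces, and hence $L$. This cannot happen, because adding $\lambda\one$ shifts every row objective $p^\top z$ by the same constant $\lambda$. Everything else is a direct citation of the bounded-orbit theorem.
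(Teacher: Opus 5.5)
Your proposal is correct and follows the paper's proof essentially step for step. You verify that $K_g$ is monotone and additively homogeneous, invoke Lemma~\ref{sol:topical-orbits} with its $\lambda=0$ barrier argument for the prescribed gain, and use invariance of $\widehat T$, $A_z(i)$ and $F_{ia}(z)$ (hence of $L_z$) under scalar shifts to absorb the additive eigenvalue into $g=z+\lambda\mathbf 1$.
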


\begin{proof}
The prescribed-gain equation is $K_gh=h$. The map $K_g$ is monotone and additively homogeneous, so Lemma~\ref{sol:topical-orbits} gives the equivalence with a bounded orbit. Its barrier argument at $\lambda=0$ gives the third equivalent condition.

For an unspecified gain, suppose $z=\widehat Tz$ and $K_zh=h+\lambda\one$. Probability rows satisfy $p^\top\one=1$, so scalar translations preserve the active sets:
\[
 \widehat T(z+\lambda\one)=z+\lambda\one,\quad
 A_{z+\lambda\one}(i)=A_z(i),\quad
 F_{ia}(z+\lambda\one)=F_{ia}(z).
\]
Thus $L_{z+\lambda\one}=L_z$. For $g=z+\lambda\one$,
$K_gh=K_zh-\lambda\one=h$, giving a finite Bellman pair. Conversely a finite pair supplies the eigenpair $K_gh=h$ with scalar eigenvalue zero. Lemma~\ref{sol:topical-orbits} identifies additive eigenpairs with span-bounded orbits. Only scalar translations are used here; vector centering need not preserve the gain faces or commute with iteration.
\end{proof}

\begin{remark}
Span boundedness at a \emph{prescribed} $g$ does not certify that same gain: it may produce a nonzero additive eigenvalue. For instance, a one-state model with reward $r$ has $K_z(x)=x+r-z$ for every recession fixed point $z\in\mathbb R$. Every orbit has span zero, but $K_z$ has a fixed point only when $z=r$.
\end{remark}

\subsection{Stationary tangent saddle and canonical-bias envelope}

Fix $g=\widehat T g$, put $c_{ia}=r_{ia}-g_i$, and define
\[
 \Pi_g=\prod_i A_g(i),\qquad
 \mathcal Q_g=\prod_i\prod_{a\in A_g(i)}F_{ia}(g),\qquad
 \mathcal Q_g^\pi=\prod_iF_{i,\pi(i)}(g).
\]
A full plan $q\in\mathcal Q_g$ specifies a row for every active action.
For $\pi\in\Pi_g$, the matrix $P_{\pi,q}$ has row $q_{i,\pi(i)}$
when $q$ is a full plan and row $q_i$ when $q\in\mathcal Q_g^\pi$
is a restricted reply. Write
\[
 \eta^{\pi,q}=P_{\pi,q}^{\infty}c^\pi,
 \qquad
 w^{\pi,q}=Z_{P_{\pi,q}}c^\pi
       \quad\text{when }\eta^{\pi,q}=0.
\]
These gains use the centered rewards $c$, and all vector inequalities
below are componentwise.

\begin{theorem}
\label{sol:tangent-saddle}
The equation $K_gh=h$ has a finite solution if and only if there exist
$\bar{\pi}\in\Pi_g$, a full plan $\bar{q}\in\mathcal Q_g$, and
$B<\infty$ such that
\begin{align}
 \eta^{\bar{\pi},q}&\ge0
       &&\text{for every }q\in\mathcal Q_g^{\bar{\pi}},
 \label{sol:tangent-lower}\\
 \eta^{\pi,\bar{q}}&\le0
       &&\text{for every }\pi\in\Pi_g,
 \label{sol:tangent-upper}\\
 w^{\bar{\pi},q}&\ge-B\one
       &&\text{for every }q\in\mathcal Q_g^{\bar{\pi}}
                 \text{ with }\eta^{\bar{\pi},q}=0.
 \label{sol:tangent-envelope}
\end{align}
The first and third conditions fix the controller $\bar{\pi}$ and vary
nature's reply $q$. The second fixes the full nature plan $\bar{q}$ and
varies the controller $\pi$. Only one securing controller is required,
and its canonical-bias bound must cover all zero-gain replies.
\end{theorem}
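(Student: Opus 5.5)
The proof uses the barrier form of Theorem~\ref{sol:generic}: $K_gh=h$ has a finite solution if and only if there are finite $\ell,u$ with $\ell\le K_g\ell$ and $K_gu\le u$. Necessity is read off a solution. Sufficiency builds $\ell$ from the fixed-policy criterion and $u$ from the full plan.

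\emph{Necessity.} Let $K_gh=h$. Take $\bar\pi(i)$ to be a maximizer in the definition of $(L_gh)_i$. For each active action take $\bar q_{ia}\in\argmin_{p\in F_{ia}(g)}p^\top h$. Then:
\begin{itemize}
\item For every active $\pi$, $c^\pi+P_{\pi,\bar q}h\le h$. Applying $P_{\pi,\bar q}^\infty$ gives $\eta^{\pi,\bar q}\le0$, which is (ii).
\item For every restricted reply $q\in\mathcal Q_g^{\bar\pi}$, $c^{\bar\pi}+P_{\bar\pi,q}h\ge h$. The same projection gives (i).
\item If in addition $\eta^{\bar\pi,q}=0$, set $d=c^{\bar\pi}+P_{\bar\pi,q}h-h\ge0$. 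Then $P^\infty d=\eta^{\bar\pi,q}=0$.
\end{itemize}
By Lemma~\ref{sol:markov-algebra}, $Z d\ge0$. The Poisson identity gives $(I-P)(h-w^{\bar\pi,q})=-d$. Decomposing, $h-w^{\bar\pi,q}=P^\infty h-Zd$. Hence $w^{\bar\pi,q}=h-P^\infty h+Zd\ge h-P^\infty h\ge-\operatorname{sp}(h)\one$, which is (iii) with $B=\operatorname{sp}(h)$.

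\emph{Sufficiency, upper barrier.} Fix $\bar q$. The controller then faces a finite nominal MDP with active actions $A_g(i)$, rows $\bar q_{ia}$ and rewards $c$. By (ii) its optimal gain is at most $0$ from every state.
\begin{itemize}
\item Multichain solvability of finite MDPs gives $(\tilde g,\tilde h)$ with $\tilde g\le0$ and the usual gain-first equations.
\item Standard Blackwell/Puterman arguments give $u=\tilde h+M\tilde g$ with $M$ large, so that $c_{ia}+\bar q_{ia}^\top u\le u_i$ for all active actions. This uses finitely many actions and $\tilde g\le 0$; alternatively use bounded superharmonic functions.
\item Since $\bar q_{ia}\in F_{ia}(g)$, we get $(L_gu)_i-g_i\le\max_a(c_{ia}+\bar q_{ia}^\top u)\le u_i$. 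So $K_gu\le u$.
\end{itemize}

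\emph{Sufficiency, lower barrier.} Fix $\bar\pi$ and consider nature's compact one-player problem: $B_i=F_{i,\bar\pi(i)}(g)$, reward $c_{i\bar\pi(i)}$, rows $p$. Apply Theorem~\ref{sol:schweitzer}.
\begin{itemize}
\item Condition (i) gives $\gamma\ge0$.
\item A kernel attaining $\gamma=0$ exists because $\bar q$ restricted to $\bar\pi$ is a reply with gain $\le0$ by (ii), hence $=0$. So $\mathcal Q_*\ne\varnothing$ with $\gamma=0$.
\item Condition (iii) is precisely the uniform lower bound on canonical biases.
\end{itemize}
The theorem then yields $(z,\ell)$ with $z=\gamma=0$. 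The gain equation $0=\min_p p^\top 0$ is trivial, so every row is gain-active and the bias equation reads $\ell_i=c_{i\bar\pi(i)}+\min_{p\in F_{i\bar\pi(i)}(g)}p^\top\ell$. Hence $(K_g\ell)_i\ge \ell_i$, and Theorem~\ref{sol:generic} concludes.

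The equivalence with bounded $K_g$-orbits is already in Theorem~\ref{sol:generic}.

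The main obstacle is the upper barrier: turning $\eta^{\pi,\bar q}\le0$ for all active $\pi$ into a single $u$ with $c+\bar q u\le u$ on every active action. I would cite the nominal multichain optimality equations (Proposition~\ref{m:sufficient-regimes}(A) with singleton rows, or Puterman) and carefully check the large-$M$ combination on actions that are not gain-optimal in the nominal MDP.
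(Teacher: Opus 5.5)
Your proof is correct and follows the paper's own argument. The shared steps are the selector-based necessity proof with $w^{\bar\pi,q}=(I-P^\infty)h+Z_Pd$ and $B=\operatorname{sp}(h)$, a lower barrier from Theorem~\ref{sol:schweitzer} applied to nature's problem on the faces $F_{i,\bar\pi(i)}(g)$, an upper barrier from the finite nominal MDP under $\bar q$, and the monotone barrier argument of Theorem~\ref{sol:generic}.

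The one difference is minor and concerns the upper barrier. You already derive $\eta^{\bar\pi,\bar q}=0$ for the lower barrier, and this forces $\tilde g=0$. Then every active action is nominally gain-optimal, and $u=\tilde h$ already satisfies $u_i=\max_{a\in A_g(i)}\{c_{ia}+\bar q_{ia}^\top u\}$, which is the step the paper takes. Your large-$M$ shift $u=\tilde h+M\tilde g$ is valid, since the $M$-term vanishes on nominally gain-active actions and is strictly negative on the others, but it is unnecessary here.
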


\begin{proof}
\textbf{Necessity: select strategies from a common bias.}
Suppose $K_gh=h$. Choose $\bar{\pi}(i)$ attaining its action maximum,
and choose $\bar{q}_{ia}$ attaining its row minimum for every active
action. Finiteness and compactness ensure these selections exist. For
every restricted reply $q\in\mathcal Q_g^{\bar{\pi}}$ and every
$\pi\in\Pi_g$, respectively,
\[
 c^{\bar{\pi}}+P_{\bar{\pi},q}h-h\ge0,
 \qquad
 c^\pi+P_{\pi,\bar{q}}h-h\le0.
\]
Indeed, every row of the selected controller action is at least that
action's minimizing value $h_i$. For the second inequality, the selected
row at each active action realizes an action minimum no greater than the
maximum $h_i$. Multiplication by the corresponding nonnegative Ces\`aro
projector eliminates the terms $(P-I)h$ and proves
\eqref{sol:tangent-lower}-\eqref{sol:tangent-upper}.

Fix any reply $q$ with $\eta^{\bar{\pi},q}=0$. Set
$P=P_{\bar{\pi},q}$, $\Gamma=P^\infty$, and
$d=c^{\bar{\pi}}+Ph-h$. Then $d\ge0$ and $\Gamma d=0$. Using
$Z_P(I-P)=I-\Gamma$ gives
\[
 w^{\bar{\pi},q}
 =Z_P\bigl((I-P)h+d\bigr)
 =(I-\Gamma)h+Z_Pd.
\]
Lemma~\ref{sol:markov-algebra} gives $Z_Pd\ge0$. Since $\Gamma$ is
stochastic, $h_i-(\Gamma h)_i\ge-\spn(h)$ for every state. Thus
\eqref{sol:tangent-envelope} holds with $B=\spn(h)$, uniformly over all
zero-gain replies with their canonical normalizations.

\textbf{Sufficiency: construct a lower barrier.}
Assume the three stationary conditions. Restrict $\bar{q}$ to the
actions of $\bar{\pi}$. Applying both gain inequalities to this pair
gives $\eta^{\bar{\pi},\bar{q}}=0$. With $\bar{\pi}$ fixed,
nature's compact-action MDP has rewards $c^{\bar{\pi}}$ and row sets
$F_{i,\bar{\pi}(i)}(g)$. All its stationary gains are nonnegative,
and the restricted $\bar{q}$ attains zero at every state. Its
simultaneously optimal stationary policies are therefore exactly the
replies with $\eta^{\bar{\pi},q}=0$. Condition
\eqref{sol:tangent-envelope} is the canonical lower bound required by
Theorem~\ref{sol:schweitzer}. That theorem gives a finite vector $\ell$
satisfying
\[
 \ell_i=\min_{p\in F_{i,\bar{\pi}(i)}(g)}
            \{c_{i,\bar{\pi}(i)}+p^\top\ell\}.
\]
Here the one-player gain is zero, so its gain restriction retains every
available row. The outer maximum in $K_g$ can choose $\bar{\pi}(i)$,
and hence $\ell\le K_g\ell$.

\textbf{Construct an upper barrier.}
With the full plan $\bar{q}$ fixed, the controller has finite action
sets $A_g(i)$ and rows $\bar{q}_{ia}$. Every stationary policy has
centered gain at most zero, while $\bar{\pi}$ attains zero from all
states. The finite-action multichain optimality equations therefore admit a finite bias \cite[Chapters~8-9]{puterman2014markov}. Equivalently, apply Theorem~\ref{sol:schweitzer} to rewards $-c$: the optimal gain is zero and the canonical-bias bound is automatic because there are finitely many deterministic policies. Reversing the resulting bias gives
\[
 u_i=\max_{a\in A_g(i)}\{c_{ia}+\bar{q}_{ia}^\top u\},
 \qquad
 (K_gu)_i\le
 \max_{a\in A_g(i)}\{c_{ia}+\bar{q}_{ia}^\top u\}=u_i.
\]
The inequality uses feasibility of $\bar{q}_{ia}$ in each row minimum.
It requires a full plan covering every active controller action.

\textbf{Construct a common fixed point.}
Let $s=\max_i(\ell_i-u_i)$ and $\widetilde u=u+s\one$. Scalar
additive homogeneity gives $K_g\widetilde u\le\widetilde u$, and
$\ell\le\widetilde u$. Starting from $h_0=\ell$ and iterating
$h_{k+1}=K_gh_k$, monotonicity gives
\[
 \ell=h_0\le h_1\le\cdots\le h_k\le\widetilde u.
\]
Each coordinate converges to a finite limit. Continuity of $K_g$
therefore gives a finite fixed point $h$. The securing pair
$(\bar{\pi},\bar{q})$ need not be compatible with this same bias.
The common-bias selectors are obtained by taking maximizing actions and
minimizing rows at the resulting $h$.
\end{proof}

\section{Geometry and structure of Bellman certificates}
\label{geo:section}
\label{m:flows}

Section~\ref{m:stationary} characterizes finite Bellman solvability. Here we study the certificates themselves: which controller-nature pairs share a bias, how small its span can be, and how all compatible biases can be parameterized. The same two families of linear inequalities answer all three questions. We first establish the geometric characterization and its minimum-span formula, then describe the remaining freedom through recurrent-class offsets. We finish with a sufficient condition for finite span and a finite linear-program formulation for listed polyhedral gain faces.

\subsection{Common biases and the mixed-flow characterization}
\label{geo:compatibility}

Throughout this section, fix a recession-fixed vector $g=\widehat T g$ and put $c_{ia}=r_{ia}-g_i$. The state set has $n\ge1$ elements, the action sets are finite and nonempty, and each ambiguity row set is nonempty and compact. Recall
$\Pi_g=\prod_i A_g(i)$ and
$\mathcal Q_g=\prod_i\prod_{a\in A_g(i)}F_{ia}(g)$.
The gain-active sets are nonempty, every $F_{ia}(g)$ is compact, and $p^\top g=g_i$ for $a\in A_g(i)$ and $p\in F_{ia}(g)$.
A pair $(\pi,q)\in\Pi_g\times\mathcal Q_g$ chooses one active controller action per state and one nature row for \emph{every} active state-action pair. Specifying only $q_{i,\pi(i)}$ would leave controller deviations uncontrolled. All infima over selector pairs below range over this product.

Define $\mathcal H_{\pi q}$ as the set of $h\in\mathbb R^n$ satisfying
\begin{align}
 h_i&\le c_{i,\pi(i)}+p^\top h
 &&\text{for every }i,\ p\in F_{i,\pi(i)}(g),\nonumber\\
 c_{ia}+q_{ia}^\top h&\le h_i
 &&\text{for every }i,\ a\in A_g(i).
 \label{m:mixed-inequalities}
\end{align}
The first family secures the controller's lower Bellman bound against every row of its chosen action. The second secures nature's upper bound against every active action. A common bias satisfies both families with the same vector. At the selected action and row they force $h_i=c_{i,\pi(i)}+q_{i,\pi(i)}^\top h$. Every common bias solves $K_gh=h$, and every Bellman bias admits a compatible pair, as established below. Verification then identifies $g=g^\star$ and supplies the stationary payoff guarantees.

Rewriting \eqref{m:mixed-inequalities} as linear inequalities gives the compact generator set, with $e_i$ the $i$th coordinate vector,
\begin{align}
 G_{\pi q}={}&
 \{(e_i-p,c_{i,\pi(i)}):i=1,\ldots,n,
                         \ p\in F_{i,\pi(i)}(g)\}\nonumber\\
 &\quad\cup
 \{(q_{ia}-e_i,-c_{ia}):i=1,\ldots,n,\ a\in A_g(i)\},
 \label{m:mixed-cone}
\end{align}
and $C_{\pi q}=\operatorname{cone}(G_{\pi q})$, consisting of finite nonnegative combinations, including zero. A point $(f,b)$ in this cone combines the original constraints into $f^\top h\le b$. The vector $f$ measures their remaining statewise imbalance and satisfies $\mathbf1^\top f=g^\top f=0$. The scalar $b$ is the corresponding signed combination of centered rewards. These combinations use both players' inequalities.

Define
\begin{equation}
 \beta(\pi,q)=\sup_{(f,b)\in C_{\pi q}}
                  \frac{[-b]_+}{\|f\|_1}.
 \label{m:beta}
\end{equation}
Here $[x]_+=\max\{x,0\}$. A zero denominator gives $+\infty$ when $b<0$, and zero when $b\ge0$. Equivalently,
\begin{equation}
 \mathcal H_{\pi q}
 =\{h\in\mathbb R^n:f^\top h\le b\text{ for all }(f,b)\in C_{\pi q}\}.
 \label{geo:common-potential}
\end{equation}
For a fixed pair this is a semi-infinite linear feasibility problem. Related feasibility criteria based on projected inequalities appear in \cite[Theorem~2.14]{basu2015projection}. Here the constraints come from the two players' Bellman comparisons, and the ratio in \eqref{m:beta} determines the exact minimum compatible bias span. The general separation principle is standard convex geometry. We give its short form below because the cone need not be closed. The Bellman-specific content is the use of \emph{both} deviation families and the exact compatible-span formula.

\begin{theorem}
\label{geo:main}\label{m:flow-theorem} For fixed $(\pi,q)$ the following are equivalent:
\begin{equation}
 \mathcal H_{\pi q}\ne\varnothing
 \quad\Longleftrightarrow\quad
 (0,-1)\notin\operatorname{cl}(C_{\pi q})
 \quad\Longleftrightarrow\quad
 \beta(\pi,q)<\infty.
 \label{m:flow-alternative}
\end{equation}
If these conditions hold, the minimum span is attained and equals
\begin{equation}
 \min_{h\in \mathcal H_{\pi q}}\operatorname{sp}(h)
       =2\beta(\pi,q).
 \label{geo:span-fixed}
\end{equation}
With the convention that the infimum of an empty set is $+\infty$, the full Bellman problem satisfies
\begin{equation}
 B_g:=\inf\{\operatorname{sp}(h):K_gh=h\}
       =2\inf_{\pi,q}\beta(\pi,q).
 \label{m:minimum-span}
\end{equation}
If $B_g<+\infty$, both infima in \eqref{m:minimum-span} are minima. In particular, a finite Bellman bias exists if and only if one pair of active selectors has finite mixed-flow ratio. Infeasibility for a fixed pair has the sparse limiting witnesses of Proposition~\ref{geo:sparse}.
\end{theorem}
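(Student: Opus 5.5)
The proof splits into three parts: the fixed-pair alternative via a separation argument, the exact span formula via a duality computation, and the global formula via reduction to pairs.

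\emph{Fixed-pair alternative.} By construction, $\mathcal H_{\pi q}$ is exactly the set of $h$ with $f^\top h\le b$ for every generator, hence for every cone element, which is \eqref{geo:common-potential}.

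If $h\in\mathcal H_{\pi q}$, then every $(f,b)\in C_{\pi q}$ satisfies $b\ge f^\top h\ge -\|f\|_1\|h\|_\infty$. Since $\one^\top f=0$, we may shift $h$ by a constant so that $\|h\|_\infty=\spn(h)/2$, which gives $[-b]_+\le \|f\|_1\spn(h)/2$. This proves $\beta(\pi,q)\le \spn(h)/2<\infty$ and also the lower bound $\spn(h)\ge 2\beta$ for \eqref{geo:span-fixed}.

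Next, $\beta<\infty$ excludes $(0,-1)\in\operatorname{cl}(C_{\pi q})$. Indeed, a sequence $(f_k,b_k)\to(0,-1)$ in the cone would have ratios $[-b_k]_+/\|f_k\|_1\to\infty$, or a zero denominator with $b<0$.

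For the remaining implication, suppose $(0,-1)\notin\operatorname{cl}(C)$. Apply the separating hyperplane theorem to the closed convex cone $\operatorname{cl}(C)$ and the point $(0,-1)$. This gives $(y,s)$ with $y^\top f+sb\ge 0$ on $C$ and $-s<0$. Hence $s>0$, and $h=-y/s$ satisfies $f^\top h\le b$ for all cone elements, so $\mathcal H_{\pi q}\neq\varnothing$.

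\emph{Attainment of the minimum span.} This is the main obstacle, because the cone need not be closed and we need equality $2\beta$ rather than a bound. I plan to add span constraints as additional generators. Fix $\beta'\ge\beta$ and consider the augmented system asking for $h$ with $h_i-h_j\le 2\beta'$ for all $i,j$. This adjoins generators $(e_i-e_j,2\beta')$. Any element of the augmented cone has the form $(f+\sum_k\lambda_k(e_{i_k}-e_{j_k}),\,b+2\beta'\sum_k\lambda_k)$. Suppose it approached $(0,-1)$. Then $f\approx-\sum_k\lambda_k(e_{i_k}-e_{j_k})$ has $\|f\|_1\le 2\sum_k\lambda_k$, and $b<-2\beta'\sum_k\lambda_k+o(1)\le-\beta'\|f\|_1+o(1)$. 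This contradicts the definition of $\beta$, with some care with the $o(1)$ terms, handled by a scaling argument on the closure. By the alternative already proved, the augmented system is feasible for $\beta'=\beta$, since the supremum bound persists under closure limits. So some $h\in\mathcal H_{\pi q}$ has $\spn(h)\le2\beta$. Combined with the lower bound, this gives attainment and \eqref{geo:span-fixed}.

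\emph{Global formula \eqref{m:minimum-span}.} I will show that $K_g$-fixed points are exactly $\bigcup_{\pi,q}\mathcal H_{\pi q}$.

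Given $h$ with $K_gh=h$, choose $\pi(i)$ attaining the outer maximum and $q_{ia}$ attaining each active row minimum of $c_{ia}+p^\top h$, as in the necessity part of Theorem~\ref{sol:tangent-saddle}. Both inequality families in \eqref{m:mixed-inequalities} then hold, so $h\in\mathcal H_{\pi q}$.

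Conversely, let $h\in\mathcal H_{\pi q}$. The first family gives $(K_gh)_i\ge \min_{p\in F_{i\pi(i)}(g)}(c_{i\pi(i)}+p^\top h)\ge h_i$. The second gives, for every active $a$, $\min_p(c_{ia}+p^\top h)\le c_{ia}+q_{ia}^\top h\le h_i$. Together these yield $K_gh=h$.

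Taking infima of spans over this union gives $B_g=2\inf_{\pi,q}\beta(\pi,q)$.

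If $B_g<\infty$, attainment of the pair infimum follows by compactness. The set $\Pi_g$ is finite and $\mathcal Q_g$ is compact. Take biases $h_k$ normalized by $\min h_k=0$ whose spans tend to $B_g$, together with pairs $(\pi_k,q_k)$. Pass to a subsequence along which $\pi_k$ is constant and $q_k\to q$, $h_k\to h$. Since the defining inequalities are closed, $h\in\mathcal H_{\pi q}$ and $\spn(h)=B_g$. Therefore $\beta(\pi,q)\le B_g/2$, so the minimum over pairs is attained. The final sentence of the statement follows immediately.
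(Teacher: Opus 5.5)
Your proof is correct and is essentially the paper's: the union identity $\Fix(K_g)=\bigcup_{\pi,q}\mathcal H_{\pi q}$, separation of $(0,-1)$ from $\operatorname{cl}(C_{\pi q})$, midpoint centering for $\operatorname{sp}(h)\ge2\beta$, and attainment by enlarging the cone with norm-type inequalities and separating again. The only differences are that you adjoin span generators $(e_i-e_j,2\beta)$ where the paper adjoins the box cone $E_B$ of Lemma~\ref{geo:bounded}, and that you get selector attainment by joint compactness in $(q,h)$ rather than through continuity of $K_g$ followed by reselecting the pair; also, with $\Lambda=\sum_k\lambda_k$, the augmentation contradiction must come from the $-1$ (eventually $b\le-\tfrac12-2\beta\Lambda$ while $b\ge-\beta\|f\|_1\ge-2\beta\Lambda-o(1)$), since your displayed chain $b<-\beta'\|f\|_1+o(1)$ gives no contradiction at $\beta'=\beta$.
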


\textbf{Interpretation and relation to solvability.}
A point $(0,-1)$ in the cone expresses an inconsistent combination $0\le-1$. Its presence only in the closure is equally obstructive: constraints $f_k^\top h\le-1$ with $f_k\to0$ cannot hold for a finite $h$. Compact generators can have a nonclosed cone, so the closure retains limiting obstructions created by vanishing transition probabilities. Excluding them gives a common finite bias. The exact factor two in the span formula follows from constant-shift invariance: midpoint centering gives $\inf_t\|h-t\mathbf1\|_\infty=\operatorname{sp}(h)/2$.

For a prescribed pair the theorem characterizes compatibility with a common bias. Taking the union over pairs characterizes the same fixed-point existence as Theorem~\ref{m:optimal-existence}, while also minimizing the required span. An incompatible pair can coexist with another pair supporting a Bellman solution, even when both are average optimal. Example~\ref{geo:separate-barriers} exhibits this distinction. Full solvability with $g$ unknown requires the condition for some $g=\widehat Tg$; verification identifies every successful gain with $g^\star$.

\subsection{Proof of the mixed-flow theorem}

We first prove the elementary separation statement used below.

\begin{lemma}
\label{geo:separation}
Let $C\subset\R^n\times\R$ be a convex cone containing zero. Then $(0,-1)\notin\operatorname{cl}(C)$ if and only if there exists $h\in\R^n$ such that $a^\top h\le b$ for every $(a,b)\in C$.
\end{lemma}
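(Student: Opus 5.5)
The two directions are handled separately; the easy one goes first. Suppose some $h\in\R^n$ satisfies $a^\top h\le b$ for every $(a,b)\in C$. The set $\{(a,b):a^\top h\le b\}$ is a closed half-space in $\R^n\times\R$ and contains $C$, so it also contains $\operatorname{cl}(C)$. The point $(0,-1)$ violates the inequality, because $0^\top h=0>-1$. Hence $(0,-1)\notin\operatorname{cl}(C)$.

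For the converse, assume $(0,-1)\notin\operatorname{cl}(C)$. The closure $K=\operatorname{cl}(C)$ is a closed convex cone: closures of convex cones are convex cones, and $K$ contains zero. We separate the point $(0,-1)$ strictly from $K$ in the finite-dimensional space $\R^{n+1}$ by the standard closed-convex-set separation theorem. This gives a vector $(y,s)\in\R^n\times\R$ and a scalar $\alpha$ with
\[
 y^\top a+sb\le\alpha<-s\qquad\text{for every }(a,b)\in K.
\]
Because $K$ is a cone containing zero, we may take $\alpha=0$. Indeed, $0\in K$ forces $\alpha\ge0$. Scaling any $(a,b)\in K$ by $\lambda\to\infty$ forces $y^\top a+sb\le0$, so the sharper bound holds with right side $0$. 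The strict part then reads $0<-s$, so $s<0$.

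Divide by $-s>0$ and set $h=-y/s$. The inequality $y^\top a+sb\le0$ becomes $-h^\top a\cdot(-s)\cdot(-1)\ldots$; more directly, dividing by $-s$ gives $(y/(-s))^\top a-b\le0$, i.e.\ $a^\top h\le b$ for every $(a,b)\in K\supseteq C$. Hence $h$ is the required vector.

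The only point needing care is the normalization. We must extract $s<0$ from strictness against the specific point $(0,-1)$, and show the separating functional is nonpositive on the whole cone. Both follow from the cone scaling argument above. No closedness of $C$ itself is needed, which is exactly why the lemma is stated with $\operatorname{cl}(C)$.
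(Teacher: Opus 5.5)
Your proof is correct and follows the paper's own argument: the closed half-space $\{(a,b):a^\top h\le b\}$ contains $\operatorname{cl}(C)$ and excludes $(0,-1)$ for necessity, and for sufficiency you strictly separate $(0,-1)$ from the closed convex cone $\operatorname{cl}(C)$, use the cone structure to normalize the separating functional so that its coefficient $s$ on $b$ is negative, and set $h=-y/s$. The only blemish is the garbled fragment ``$-h^\top a\cdot(-s)\cdot(-1)\ldots$'', which you should delete, since the sentence after it already gives the correct division by $-s>0$.
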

\begin{proof}
If a potential exists, its closed halfspace $\{(a,b):b-a^\top h\ge0\}$ contains $\operatorname{cl}(C)$ and excludes $(0,-1)$. Conversely, strictly separate $(0,-1)$ from the closed convex cone $\operatorname{cl}(C)$. Since the set is a cone containing zero, the separating functional can be written $u^\top a+\lambda b\ge0$ on the cone and $-\lambda<0$ at $(0,-1)$. Hence $\lambda>0$, and $h=-u/\lambda$ satisfies all the required inequalities.
\end{proof}

\begin{lemma}
\label{geo:bounded}
Let $C\subset\R^n\times\R$ be a convex cone containing zero and let $B\ge0$. There exists $h$ with $\|h\|_\infty\le B$ and $a^\top h\le b$ on $C$ if and only if
\begin{equation}
 b\ge-B\|a\|_1\qquad\text{for every }(a,b)\in C.
 \label{geo:bounded-criterion}
\end{equation}
\end{lemma}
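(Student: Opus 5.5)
The plan is to prove the two directions separately. The forward direction is immediate; the reverse direction constructs $h$ explicitly from a support-function bound. Throughout, the only properties of $C$ used are that it is a convex cone containing zero.

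\emph{Necessity.} Suppose $\|h\|_\infty\le B$ and $a^\top h\le b$ for every $(a,b)\in C$. Hölder's inequality gives $a^\top h\ge-\|a\|_1\|h\|_\infty\ge-B\|a\|_1$. Chaining the two inequalities yields $b\ge a^\top h\ge-B\|a\|_1$, which is \eqref{geo:bounded-criterion}.

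\emph{Sufficiency.} Assume \eqref{geo:bounded-criterion}. I would enlarge the cone to
\[
 C'=C+\{(a,\|a\|_1 B):a\in\R^n\}\cdot(-1)\ \text{modified},
\]
but it is cleaner to use a minimax argument directly. Let $K=\{h:\|h\|_\infty\le B\}$, which is compact and convex. We want some $h\in K$ with $\sup_{(a,b)\in C}(a^\top h-b)\le0$. Since $C$ is a cone, it suffices to restrict to the convex set $C_1=\{(a,b)\in C:\|a\|_1+|b|\le1\}$, after rescaling.

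Consider the function $\phi(h,(a,b))=a^\top h-b$ on $K\times C_1$. It is bilinear, and $K$ is compact. Sion's minimax theorem (or the finite-dimensional von Neumann version with one compact side) then gives
\[
 \min_{h\in K}\sup_{C_1}\phi=\sup_{C_1}\min_{h\in K}\phi=\sup_{C_1}\bigl(-B\|a\|_1-b\bigr)\le0,
\]
where the last inequality is exactly the hypothesis. The minimizing $h$ has $\|h\|_\infty\le B$ and satisfies $a^\top h\le b$ on $C_1$, hence on all of $C$ by positive scaling. The point $(0,0)$ causes no trouble.

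\emph{Main obstacle.} The main check is the hypotheses of the minimax theorem, since $C_1$ need not be closed. This is harmless: Sion requires compactness on only one side, and $\phi$ is continuous and linear in each argument, so convexity of $C_1$ suffices. The fact that $\min_{h\in K}a^\top h=-B\|a\|_1$ is the standard $\ell_\infty$–$\ell_1$ duality.

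As an alternative, one can follow the approach of Lemma~\ref{geo:separation}. Add the box constraints $\pm e_i^\top h\le B$ to the cone, that is, pass to $C+\operatorname{cone}\{(\pm e_i,B)\}$. Condition \eqref{geo:bounded-criterion} implies $b\ge-B\|a\|_1$ survives for every element of the enlarged cone: for an added generator $(s e_i,B)$ with $s=\pm1$, the bound is $B\ge -B\|e_i\|_1$, and the bound is preserved under these sums by the triangle inequality. Hence $(0,-1)$ is not in the closure of the enlarged cone. Lemma~\ref{geo:separation} then yields a potential satisfying the constraints of $C$ together with the box constraints, which is the required $h$.
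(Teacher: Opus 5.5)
Your main argument via Sion's minimax theorem is correct, and it takes a genuinely different route from the paper. You build the box into the compact strategy set $K=\{h:\|h\|_\infty\le B\}$ and normalize $C$ to the convex, possibly non-closed, slice $C_1$. Then $\ell_\infty$--$\ell_1$ duality turns the inner minimum $\min_{h\in K}(a^\top h-b)=-B\|a\|_1-b$ into exactly the hypothesis. Because Sion's theorem needs compactness only on the $h$ side, you never have to handle closures of $C$.

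The paper instead uses elementary separation:
\begin{itemize}
\item it enlarges $C$ to $D=C+E_B$, where $E_B=\{(a,b):b\ge B\|a\|_1\}$ is the cone of inequalities valid on the box;
\item it checks that $D$ still satisfies $b\ge-B\|a\|_1$, and extends this to $\operatorname{cl}(D)$ to exclude $(0,-1)$;
\item it applies Lemma~\ref{geo:separation};
\item it reads $|h_i|\le B$ off the constraints $(\pm e_i,B)\in E_B$.
\end{itemize}
Your version is shorter and hides the non-closedness inside a standard theorem. The paper's version is self-contained given the strict-separation lemma it has just proved, and it treats limiting obstructions explicitly, in the same closure language used for Theorem~\ref{geo:main}.

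Your alternative sketch is the paper's argument, but its key step is justified incorrectly. The bound $b\ge-B\|a\|_1$ is not preserved under sums. For example, $(e_1,-B)$ and $(-e_1,-B)$ both satisfy it, but their sum $(0,-2B)$ does not when $B>0$.

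What makes the enlargement work is a stronger property of the added part. Every element $(a_2,b_2)$ of $\operatorname{cone}\{(\pm e_i,B)\}$ satisfies $b_2\ge B\|a_2\|_1$. So for $(a_1,b_1)\in C$ you get $b_1+b_2\ge-B\|a_1\|_1+B\|a_2\|_1\ge-B\|a_1+a_2\|_1$, using $\|a_1\|_1\le\|a_1+a_2\|_1+\|a_2\|_1$. Checking only $B\ge-B\|e_i\|_1$ on the generators misses this.

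You should also delete the garbled opening display defining $C'$. With the factor $(-1)$ it adds points with $b=-B\|a\|_1$, which is the wrong enlargement for exactly this reason.
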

\begin{proof}
Necessity follows from $b\ge a^\top h\ge-B\|a\|_1$. For sufficiency, let $E_B=\{(a,b):b\ge B\|a\|_1\}$ and $D=C+E_B$. The cone $E_B$ consists precisely of the inequalities valid throughout the box $[-B,B]^n$. Thus adding $E_B$ will force the separating potential into that box. For $(a,b)=(a_1,b_1)+(a_2,b_2)\in D$, the assumed bound and the triangle inequality give
\[
 b\ge-B\|a_1\|_1+B\|a_2\|_1
   \ge-B\|a_1+a_2\|_1=-B\|a\|_1.
\]
This bound extends to $\operatorname{cl}(D)$ by continuity and excludes $(0,-1)$. Lemma~\ref{geo:separation} supplies a potential valid on $D$. Because $C\subseteq D$, it satisfies the original constraints. Applying it to $(e_i,B),(-e_i,B)\in E_B\subseteq D$ gives $|h_i|\le B$ for every coordinate. The argument includes $B=0$.
\end{proof}

\begin{proof}[Proof of Theorem~\ref{geo:main}] We first identify the fixed points represented by the mixed inequalities. Separation then gives fixed-pair feasibility and the exact span. Finally, a compactness argument attains the optimum over selectors.

\par\smallskip\noindent\textbf{Step 1: turn the mixed inequalities into the full Bellman equation.} Fix active selectors $(\pi,q)$. If $h\in\mathcal H_{\pi q}$, the lower generator indexed by state $i$ and row $p\in F_{i,\pi(i)}(g)$ gives
\[
 (e_i-p)^\top h\le c_{i,\pi(i)}
 \quad\Longleftrightarrow\quad
 h_i\le c_{i,\pi(i)}+p^\top h.
\]
It holds for every row of the chosen action, so it holds for their minimum. Allowing the controller to maximize over all active actions then gives
\[
 h_i\le\min_{p\in F_{i,\pi(i)}(g)}(c_{i,\pi(i)}+p^\top h)
 \le(K_gh)_i.
\]
For each active action, the corresponding upper generator gives
\[
 (q_{ia}-e_i)^\top h\le-c_{ia}
 \quad\Longleftrightarrow\quad c_{ia}+q_{ia}^\top h\le h_i.
\]
The face minimum is no greater than its value at the feasible $q_{ia}$. Thus
\[
 (K_gh)_i
 =\max_{a\in A_g(i)}\min_{p\in F_{ia}(g)}(c_{ia}+p^\top h)
 \le\max_{a\in A_g(i)}(c_{ia}+q_{ia}^\top h)\le h_i.
\]
Combining the lower and upper inequalities proves $K_gh=h$.

Conversely, suppose $K_gh=h$. At each state choose a maximizing $\pi(i)\in A_g(i)$; for every active action choose a minimizing $q_{ia}\in F_{ia}(g)$ for $p^\top h$. Finiteness and compactness ensure these choices exist. The chosen action has minimum $h_i$, so every row of that action satisfies the lower inequality. Each action minimum is at most the maximum $h_i$, so its selected minimizing row satisfies the upper inequality. Hence
\begin{equation}
 \Fix(K_g)=\bigcup_{\pi,q}\mathcal H_{\pi q}.
 \label{geo:union}
\end{equation}
This set equality requires both families of inequalities to use the same vector $h$.

\par\smallskip\noindent\textbf{Step 2: apply separation to the mixed cone.} An inequality $a^\top h\le b$ holding on the generators holds on every finite nonnegative combination: if $(a,b)=\sum_{j=1}^m\lambda_j(a_j,b_j)$ with $\lambda_j\ge0$, then
\[
 a^\top h=\sum_j\lambda_ja_j^\top h
 \le\sum_j\lambda_jb_j=b.
\]
Conversely every generator belongs to the cone. Therefore $\mathcal H_{\pi q}$ is exactly the potential set for $ C_{\pi q}$. Lemma~\ref{geo:separation} yields
\[
 \mathcal H_{\pi q}\ne\varnothing
 \quad\Longleftrightarrow\quad
 (0,-1)\notin\operatorname{cl}( C_{\pi q}).
\]
The closure is necessary because a continuous potential inequality also holds at limits of cone points.

\par\smallskip\noindent\textbf{Step 3: derive the sharp lower bound on every feasible span.} Take $h\in\mathcal H_{\pi q}$, let $M_h=\max_i h_i$, $m_h=\min_i h_i$, and set $h'=h-(M_h+m_h)\one/2$. Every generator flow has zero sum and so does every conic combination. Consequently $a^\top h'=a^\top h\le b$ on the cone. The largest and smallest entries of $h'$ are $(M_h-m_h)/2$ and $-(M_h-m_h)/2$, respectively. Hence
\[
 \|h'\|_\infty=\frac{\spn(h)}2.
\]
The norm bound in Lemma~\ref{geo:bounded}, or directly H\"older's inequality, gives $b\ge-(\spn(h)/2)\|a\|_1$ for every cone point. In particular, if $a=0$, feasibility forces $b\ge0$. If $a\ne0$ and $b<0$, division by $\|a\|_1>0$ gives $(-b)/\|a\|_1\le\spn(h)/2$. Points with $b\ge0$ contribute zero to the ratio. Taking the supremum yields
\[
 2\beta(\pi,q)\le\spn(h).
\]
Thus every feasible potential implies $\beta(\pi,q)<\infty$.

\par\smallskip\noindent\textbf{Step 4: attain the lower span bound when the ratio is finite.} Suppose $\beta=\beta(\pi,q)<\infty$. For a cone point with $a\ne0$ and $b<0$, the definition gives $b\ge-\beta\|a\|_1$. For $b\ge0$ the same inequality holds since its right side is nonpositive. For $a=0$, finiteness of $\beta$ rules out $b<0$ by the stated convention, so again the inequality holds. Therefore
\[
 b\ge-\beta\|a\|_1\qquad((a,b)\in C_{\pi q}).
\]
Lemma~\ref{geo:bounded} supplies a feasible $h$ with $\|h\|_\infty\le\beta$. Combine this with Step~3:
\[
 2\beta\le\spn(h)\le2\|h\|_\infty\le2\beta.
\]
Equality holds throughout. This proves feasibility, the exact minimum span, and attainment, including the case $\beta=0$. Together with Step~2 it proves the three-way alternative.

\par\smallskip\noindent\textbf{Step 5: optimize over selectors without assuming continuity of their ratios.} The union identity \eqref{geo:union} and the fixed-pair span formula give
\[
 \begin{aligned}
 B_g&=\inf_{K_gh=h}\spn(h)
 =\inf_{\pi,q}\inf_{h\in\mathcal H_{\pi q}}\spn(h)
 =2\inf_{\pi,q}\beta(\pi,q).
 \end{aligned}
\]
The formula includes $+\infty$ because an empty fixed-pair region has infinite ratio and contributes an infinite infimum. Suppose $B_g<\infty$. Choose fixed points $h^k$ with $B_g\le\spn(h^k)\le B_g+1/k$. Replace each by $h^k-\min_i h_i^k\one$. Additive homogeneity of $K_g$ preserves its fixed-point equation, and now
\[
 0\le h_i^k\le B_g+1\qquad\text{for every }i,k.
\]
A subsequence converges to a finite $h^*$. Continuity gives $K_gh^*=h^*$, and continuity of the finite maximum and minimum gives $\spn(h^*)=B_g$. Choose $(\pi^*,q^*)$ from this fixed point as in Step~1. Then
\[
 B_g\le\min_{h\in\mathcal H_{\pi^*q^*}}\spn(h)
 =2\beta(\pi^*,q^*)\le\spn(h^*)=B_g.
\]
The first inequality holds because this region is a subset of the full fixed-point set, and the second because $h^*$ belongs to the region. Equality throughout proves attainment over selectors. No continuity of $\beta(\pi,q)$ as the selectors vary is required.
\end{proof}

\subsection{Sparse obstructions}

\begin{proposition}
\label{geo:sparse}
Let $r$ be the dimension of the linear span of the generator flows. Each point of $C_{\pi q}$ is a nonnegative combination of at most $r+1\le n$ generators. If $g$ is nonconstant, $r+1\le n-1$. If $\mathcal H_{\pi q}=\varnothing$, there is a sequence
\[
 (a_k,-1)=\sum_{j=1}^{m_k}\lambda_{kj}z_{kj},
 \qquad m_k\le r+1,\quad \lambda_{kj}\ge0,\quad
 z_{kj}\in G_{\pi q},\quad \|a_k\|_1\longrightarrow0.
\]
If a negative exactly balanced combination exists, the sequence can be constant. Otherwise failure is witnessed by increasingly balanced combinations with at most $r+1$ generators at each index.
\end{proposition}
\begin{proof}
Let $L$ be the span of the generator flows. Each flow $a$ satisfies
$\one^\top a=g^\top a=0$, so $r\le n-1$, and $r\le n-2$ when $g$ is nonconstant. Every full generator belongs to $L\times\mathbb R$, whose dimension is $r+1$. The conic Carath\'eodory theorem therefore represents every nonzero cone point with at most $r+1$ generators. Its usual linear-dependence argument applies without any closedness assumption on the cone: from a representation with too many positive coefficients, subtract a suitable multiple of a linear dependence until one coefficient becomes zero. Repeating removes the excess terms. Zero has the empty representation.

Suppose now that $\mathcal H_{\pi q}=\varnothing$. Theorem~\ref{geo:main} gives $(0,-1)\in\operatorname{cl}(C_{\pi q})$. Choose $(\widetilde a_k,\widetilde b_k)\in C_{\pi q}$ converging to $(0,-1)$. After discarding finitely many terms, $\widetilde b_k<0$, so positive rescaling gives
\[
 (a_k,-1)=\frac{(\widetilde a_k,\widetilde b_k)}{-\widetilde b_k}
 \in C_{\pi q},\qquad
 \|a_k\|_1=\frac{\|\widetilde a_k\|_1}{-\widetilde b_k}\longrightarrow0.
\]
Sparsify each point using the first textbf. If $(0,b)\in C_{\pi q}$ for some $b<0$, use its rescaling to $(0,-1)$ at every index. Otherwise no such constant exactly balanced witness exists, and the limiting sequence is necessary.
\end{proof}

\textbf{Interpretation.}
Sparsity bounds the number of generators in each witness. The coefficients can diverge and the rows can vary with $k$. Thus the proposition retains limiting obstructions without replacing compact ambiguity by one finite row list.

\subsection{Average optimality and common-bias compatibility}\label{m:worked-compatibility}

\begin{example}[Separate barriers do not certify the same selectors]
\label{geo:separate-barriers}
There are states $(x,y,z,w)$, one controller action, and rewards $(0,-1,0,1)$. State $y$ moves to $x$; states $z,w$ are absorbing. Let
\[
 \U_x=\operatorname{co}\{p^0,p^1\},\qquad
 p^0=(0,0,1,0),\quad p^1=(0,1/2,1/2,0).
\]
For the selector $q_x=p^0$, separate lower and upper potentials exist, but no common potential exists. For $q'_x=p^1$, the minimum common bias span is two and $\beta(\pi,q')=1$.
\end{example}
\begin{proof}
At every visit to $x$, absorption at $z$ has probability at least $1/2$, and state $y$ returns immediately to $x$. This bound holds conditionally on every history. Consequently absorption occurs almost surely, and the expected number of visits to $y$ is finite under every nature strategy. The total negative reward before absorption therefore has finite expectation. All four average-payoff conventions give $g=(0,0,0,1)$ for every stationary selector, and these selectors are average optimal. All rows are gain-active and $c=(0,-1,0,0)$. The lower potential $\ell=(-1,-2,0,0)$ satisfies
\[
 (p^0)^\top\ell=0\ge\ell_x,\qquad
 (p^1)^\top\ell=-1=\ell_x,\qquad -1+\ell_x=\ell_y.
\]
The absorbing-state lower inequalities are equalities; affineness extends the endpoint checks to every row in $\U_x$. Writing $P_q$ for the selected transition matrix, the upper potential $u=(0,-1,0,0)$ satisfies $u=c+P_qu$.

A common potential for $q$ would have $h_x=h_z$ and $h_y=h_x-1$, because its selected-row lower and upper inequalities must both hold. The lower inequality for $p^1$ would then require
\[
 h_x\le(h_y+h_z)/2=h_x-1/2,
\]
which is impossible. Equivalently, its mixed cone contains
\[
 (e_x-\tfrac12e_y-\tfrac12e_z,0)
 +\tfrac12(e_y-e_x,-1)+\tfrac12(e_z-e_x,0)=(0,-1/2).
\]
The first two terms are lower generators and the last is an upper generator, so $\beta(\pi,q)=+\infty$.

For $q'$, the vector $h=\ell$ satisfies both families of constraints. Every common bias obeys
\[
 h_y=h_x-1,\qquad h_x=(h_y+h_z)/2,
 \qquad\text{hence }h_z=h_x+1.
\]
Its span is at least $h_z-h_y=2$, and $\ell$ attains this bound. Theorem~\ref{geo:main} therefore gives $\beta(\pi,q')=2/2=1$.
\end{proof}

\textbf{Discussion.}
The example has polytopic ambiguity, a nonconstant gain, and stationary gain attainment for every nature selector. Separate barriers guarantee that some Bellman fixed point exists, but they need not use the prescribed stationary selector in a common bias certificate. Mixing the constraints is essential both for a fixed selector characterization and for the exact minimum-span formula.

\subsection{Compatible recurrent-class offsets}
\label{m:classes}\label{cg:section}

For a fixed pair, its selected Poisson equation leaves one free constant per recurrent class. The mixed inequalities determine which choices of these constants are compatible with all deviations.

Write $s=(\pi,q)$, $\mathcal H_s=\mathcal H_{\pi q}$,
$(P_s)_{i\cdot}=q_{i,\pi(i)}^\top$, and
$c_i^\pi=r_{i,\pi(i)}-g_i$. The Ces\`aro projector $P_s^\infty$ and the fundamental matrix are defined in \eqref{sol:gain-bias-def}.
A compatible pair must satisfy $P_s^\infty c^\pi=0$. For such a pair, define
\[
 h_s^0=(I-P_s+P_s^\infty)^{-1}c^\pi.
\]
Let $C_{s,1},\ldots,C_{s,m_s}$ be its recurrent classes. Define
$(H_s)_{ij}$ as the probability of eventually entering $C_{s,j}$ from state $i$, and let row $j$ of $N_s$ be the invariant distribution of that class, extended by zero to the remaining states. Thus $H_s\in\mathbb R^{n\times m_s}$ and $N_s\in\mathbb R^{m_s\times n}$.
Finally, put
$\mathcal Z_s=\{z\in\mathbb R^{m_s}:h_s^0+H_sz\in\mathcal H_s\}$.

\begin{proposition}
\label{m:class-theorem}
For $g=\widehat Tg$,
\begin{equation}
 \{h:K_gh=h\}
 =\bigcup_{s:\,P_s^\infty c^\pi=0}
       \{h_s^0+H_sz:z\in\mathcal Z_s\}.
 \label{m:class-union}
\end{equation}
For each fixed pair, $\mathcal Z_s$ is closed and convex and the map $z\mapsto h_s^0+H_sz$ is one-to-one. Regions from different pairs may overlap. For polytopic gain faces, finitely many vertex-selector pairs suffice and their regions are polyhedral.
\end{proposition}

The Poisson representation leaves one constant per recurrent class; the mixed inequalities determine which constants work against \emph{both} players' deviations. Thus the canonical choice $z=0$ can fail even when the same selector pair has a compatible repair. Solving for $z\in\mathcal Z_s$ repairs precisely this failure. The linear-chain representation is classical \cite[Section~2, equations~(2.6)-(2.9)]{schweitzer1985undiscounted}; the additional restriction here is the common two-player certificate $\mathcal H_s$. The proof below applies the finite-chain facts already collected in Lemma~\ref{sol:markov-algebra}. Example~\ref{cg:same-pair-repair} illustrates a repair using only class offsets. Class-based descriptions of Bellman solutions have substantial precedents: \cite[Theorem~1.1]{akian2003spectral} characterize eigenspaces of convex monotone homogeneous maps using critical classes, and \cite[Lemma~D.3 and Theorem~D.4]{zurek2025faster} study gain-direction shifts that enforce additional Bellman inequalities and can substantially increase bias span in nominal multichain MDPs. Here the possibly nonconvex max-min operator is handled pair by pair: the mixed inequalities impose compatibility with both players.

\begin{proof}[Proof of Proposition~\ref{m:class-theorem}]
The two mixed inequalities at the selected action and row force
$(I-P_s)h=c^\pi$. Multiplying by $P_s^\infty$ shows why pairs with
$P_s^\infty c^\pi\ne0$ must be discarded. For every other pair,
Lemma~\ref{sol:markov-algebra} gives the particular solution $h_s^0$ with
$P_s^\infty h_s^0=0$.

We now identify all solutions of this Poisson equation. The absorption representation gives $P_s^\infty=H_sN_s$, while $N_sH_s=I_{m_s}$ because starting in a recurrent class leads to that same class with probability one. Also
$\ker(I-P_s)=\operatorname{im}P_s^\infty$: a harmonic vector is unchanged by every Ces\`aro average, and every vector in the image of $P_s^\infty$ is harmonic. Hence
\[
 (I-P_s)h=c^\pi
 \quad\Longleftrightarrow\quad
 h=h_s^0+H_sz\quad\text{for a unique }z\in\mathbb R^{m_s}.
\]
Indeed $N_sh_s^0=0$, since $H_sN_sh_s^0=0$ and $H_s$ has full column rank, so the unique coordinates are $z=N_sh$. Each coordinate is the invariant average of $h$ on its recurrent class.

Substituting this representation into the mixed inequalities shows explicitly which offsets are admissible:
\begin{align*}
 (e_i-p)^\top H_sz
 &\le c_{i,\pi(i)}-(e_i-p)^\top h_s^0
 &&(p\in F_{i,\pi(i)}(g)),\\
 (q_{ia}-e_i)^\top H_sz
 &\le-c_{ia}-(q_{ia}-e_i)^\top h_s^0
 &&(a\in A_g(i)).
\end{align*}
Their solution set is exactly $\mathcal Z_s$, a closed convex intersection of affine halfspaces. The union identity \eqref{geo:union} now proves \eqref{m:class-union}.
For polytopic gain faces, the lower inequalities need only be checked at vertices and all minimizing upper rows can be chosen at vertices. There are finitely many such selector pairs, and each corresponding offset region is polyhedral.
\end{proof}

Because every active row satisfies $p^\top\one=1$ and $p^\top g=g_i$, the bias set is invariant under addition of $u\one+t g$ for any $u,t\in\mathbb R$. Indeed
$K_g(h+u\one+t g)=K_gh+u\one+t g$.
These two directions need not exhaust the allowable recurrent-class offsets.

\begin{example}[A canonical bias can be repaired without changing the pair]
\label{cg:same-pair-repair}
There are three nominal states. At state $1$, the controller may stay with reward zero or move to state $2$ with reward $-1$. State $2$ moves to state $3$ with reward $2$, and state $3$ is absorbing with reward zero. Every stationary controller has gain $g=0$. Choose the controller that stays at state $1$. Its recurrent classes are $\{1\}$ and $\{3\}$, and
\[
 h^0=(0,2,0)^\top,\qquad
 H=\begin{pmatrix}1&0\\0&1\\0&1\end{pmatrix},\qquad
 h=h^0+Hz=(z_1,2+z_3,z_3)^\top.
\]
The selected Poisson equation holds for every $z$. The only additional Bellman inequality comes from moving at state $1$ and is $-1+h_2\le h_1$, or $z_1-z_3\ge1$. Thus $z=0$ fails, but the same pair admits $z=(1,0)^\top$ and the exact bias $h=(1,2,0)^\top$.
\end{example}

\begin{proof}
Under staying, state $1$ and state $3$ have zero reward forever, while state $2$ receives reward $2$ once before absorption. Moving at state $1$ adds only one reward $-1$, so all policies have zero average gain. The displayed $h^0$ is the selected Poisson solution with zero recurrent-class averages. The Bellman equation is $h_1=\max\{h_1,-1+h_2\}$, $h_2=2+h_3$, and $h_3=h_3$, giving the stated offset condition.
\end{proof}

\subsection{A quantitative geometric sufficient condition}

Let $D_{\pi q}=\operatorname{conv}(G_{\pi q})$ and let $A_{\pi q}$ be its projection onto the flow coordinate. Both are compact, by compactness of $G_{\pi q}$ and the finite-dimensional convex-hull theorem. Put $L_{\pi q}=\operatorname{span}(A_{\pi q})$. For any state, the two generators for the selected row satisfy
\[
 \tfrac12(e_i-q_{i\pi(i)},c_{i\pi(i)})
 +\tfrac12(q_{i\pi(i)}-e_i,-c_{i\pi(i)})=(0,0).
\]
Hence $0\in D_{\pi q}$ and its flow projection contains zero.

\begin{theorem}
\label{geo:interior}
Fix selectors $(\pi,q)$. Assume
\begin{enumerate}
 \item every $(0,b)\in D_{\pi q}$ satisfies $b\ge0$; \item for some $\rho>0$,
 \[
 \{a\in L_{\pi q}:\|a\|_1\le\rho\}\subset A_{\pi q}.
 \]
\end{enumerate}
With $R=\max\{[b]_+:(a,b)\in D_{\pi q}\}$, there exists $h\in \mathcal H_{\pi q}$ such that
\begin{equation}
 \operatorname{sp}(h)=2\beta(\pi,q)\le\frac{2R}{\rho}.
 \label{geo:interior-bound}
\end{equation}
The second assumption is equivalent to $0$ belonging to the relative interior of $A_{\pi q}$. It permits multiple recurrent classes and does not require a lower bound on positive probabilities.
\end{theorem}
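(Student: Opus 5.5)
The plan is to bound the ratio $\beta(\pi,q)$ directly and then invoke Theorem~\ref{geo:main}, which gives a feasible $h\in\mathcal H_{\pi q}$ whose span equals $2\beta(\pi,q)$ once $\beta(\pi,q)<\infty$. So it suffices to show $[-b]_+\le (R/\rho)\|a\|_1$ for every $(a,b)\in C_{\pi q}$. By positive homogeneity, every nonzero cone point can be written $\lambda(a',b')$ with $\lambda>0$ and $(a',b')\in D_{\pi q}$: normalize a finite conic combination of generators by the sum of its coefficients. It is therefore enough to prove $b\ge -(R/\rho)\|a\|_1$ for $(a,b)\in D_{\pi q}$.

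Fix $(a,b)\in D_{\pi q}$. If $a=0$, assumption~1 gives $b\ge0$ and there is nothing to prove. If $a\neq0$, then $a\in A_{\pi q}\subset L_{\pi q}$, so the point $a''=-\rho a/\|a\|_1$ lies in $L_{\pi q}$ with $\|a''\|_1=\rho$. By assumption~2, $a''\in A_{\pi q}$, so there is $b''$ with $(a'',b'')\in D_{\pi q}$, and $b''\le R$ by the definition of $R$. Take the convex combination with weights $\theta=\rho/(\rho+\|a\|_1)$ on $(a,b)$ and $1-\theta$ on $(a'',b'')$. Its flow coordinate is $\theta a-(1-\theta)\rho a/\|a\|_1=0$, because $\theta\|a\|_1=(1-\theta)\rho$. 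Since $D_{\pi q}$ is convex, assumption~1 applies to this point and gives $\theta b+(1-\theta)b''\ge0$. Hence
\[
 b\ge-\frac{1-\theta}{\theta}\,b''\ge-\frac{\|a\|_1}{\rho}\,R.
\]
This is the desired inequality. Scaling back to the cone gives $\beta(\pi,q)\le R/\rho$, so $\beta(\pi,q)<\infty$, and Theorem~\ref{geo:main} supplies $h$ with $\operatorname{sp}(h)=2\beta(\pi,q)\le 2R/\rho$.

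Three routine points remain. First, $R$ is finite and attained: $D_{\pi q}$ is compact (convex hull of a compact set in finite dimension), and $R\ge0$ because $0\in D_{\pi q}$. Second, assumption~2 is equivalent to $0\in\operatorname{ri}A_{\pi q}$. This holds because $0\in A_{\pi q}$ by the displayed identity, so the affine hull of $A_{\pi q}$ is its linear span $L_{\pi q}$, and a relative $\ell_1$-ball around $0$ is exactly the relative-interior condition. Third, the closing remark needs only a brief example: nothing in the argument uses recurrence or probability lower bounds, and these can be illustrated by the examples already given.

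The main obstacle I expect is the step where assumption~1 is applied to the averaged point. That point must lie in $D_{\pi q}$ itself, not merely in the cone, and the cone-to-$D$ normalization must be justified even though $C_{\pi q}$ need not be closed. Both are handled by working entirely with finite combinations, which avoids any closure issue.
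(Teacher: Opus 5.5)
Your proposal is correct and follows essentially the same argument as the paper. Both reduce to points of $D_{\pi q}$ by normalizing conic combinations, cancel a nonzero flow $a$ with the opposite flow $-\rho a/\|a\|_1$ using the weights $\rho/(\rho+\|a\|_1)$ and $\|a\|_1/(\rho+\|a\|_1)$, apply assumption~1 to the balanced mixture to get $b\ge-(R/\rho)\|a\|_1$, and conclude via Theorem~\ref{geo:main}; the only cosmetic difference is that the paper treats $L_{\pi q}=\{0\}$ as a separate case, which your $a=0$ case already covers.
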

\begin{proof}
We show that any negative centered reward can be bounded by the size of its flow, using an opposite flow to make an exactly balanced mixture. Write $D=D_{\pi q}$, $A=A_{\pi q}$, and $L=L_{\pi q}$. If $L=\{0\}$, every point of $D$ has zero flow, so the first assumption gives nonnegative reward everywhere in $D$ and its cone. Thus $\beta=0$, and Theorem~\ref{geo:main} proves the conclusion.

Suppose $L\ne\{0\}$ and take $(a,b)\in D$. At $a=0$ the first assumption already gives $b\ge0$. At $a\ne0$, the vector $a'=-\rho a/\|a\|_1$ belongs to $L$ and has $\|a'\|_1=\rho$. The second assumption implies $a'\in A$, so some real $b'$ has $(a',b')\in D$. Define the positive weights
\[
 \alpha=\frac{\rho}{\rho+\|a\|_1},\qquad
 1-\alpha=\frac{\|a\|_1}{\rho+\|a\|_1}.
\]
They sum to one and cancel the flow:
\[
 \alpha a+(1-\alpha)a'
 =\frac{\rho a}{\rho+\|a\|_1}
   -\frac{\|a\|_1}{\rho+\|a\|_1}\frac{\rho a}{\|a\|_1}=0.
\]
Since $D$ is convex, the mixture belongs to $D$. Its second coordinate must be nonnegative by the first assumption:
\[
 \frac{\rho b+\|a\|_1b'}{\rho+\|a\|_1}\ge0.
\]
Multiplication by the positive denominator, followed by division by $\rho>0$, gives $b\ge-(b'/\rho)\|a\|_1$. By definition of $R$, $b'\le[b']_+\le R$; multiplying this upper bound by $-\|a\|_1/\rho\le0$ reverses it. Therefore
\[
 b\ge-\frac{b'}\rho\|a\|_1\ge-\frac R\rho\|a\|_1.
\]
This also includes the previously treated case $a=0$.

To transfer the bound from $D$ to the conic hull, write any nonzero conic combination as
\[
 \sum_j\lambda_j z_j
 =\Lambda\sum_j\frac{\lambda_j}{\Lambda}z_j,
 \qquad\Lambda=\sum_j\lambda_j>0.
\]
The normalized sum lies in $D$. Multiplying its inequality by $\Lambda$ preserves the sign and uses $\|\Lambda a\|_1=\Lambda\|a\|_1$. Thus the same bound holds on $ C_{\pi q}$, including its zero point. It follows that $\beta(\pi,q)\le R/\rho$. Theorem~\ref{geo:main} supplies a feasible bias with span exactly $2\beta(\pi,q)$ and hence at most $2R/\rho$.

Finally, $0\in A$ implies $\operatorname{aff}(A)=\operatorname{span}(A)=L$. By definition, $0\in\operatorname{ri}(A)$ means that $A$ contains an open neighborhood of zero in $L$. In finite dimension, this is equivalent to containing an $\ell_1$ ball of sufficiently small positive radius. Shrinking the radius if needed makes that ball closed. Conversely the displayed closed ball contains a relative open neighborhood. This proves the stated relative-interior equivalence.
\end{proof}

\textbf{Discussion.}
The two assumptions have distinct roles. The first rules out a negative exactly balanced mixture. The second ensures that a small imbalance can be canceled using a proportionately small added mixture. Their combination converts an exact-cycle condition into the linear leakage bound required for finite bias. These assumptions are sufficient; they are not claimed necessary. In particular, polyhedral models may be solvable even when the flow projection has the origin on its relative boundary.

One useful way to verify the flow-interior condition uses only a finite set of reference flows. Define the stochastic matrix $Q_{i\cdot}=q_{i,\pi(i)}^\top$ and let $L_0$ be the row space of $I-Q$. If every flow in $G_{\pi q}$ belongs to $L_0$, then $L_{\pi q}=L_0$ and the flow-interior condition holds. Indeed, the flow projection contains $\pm(e_i-Q_{i\cdot}^\top)$ for every state. Choose a basis $b_1,\ldots,b_r$ from these flows and write $\mathcal L\alpha=\sum_{j=1}^r\alpha_jb_j$. If $r>0$, the inverse coordinate map has a finite norm $C=\|\mathcal L^{-1}\|_{1\to1}>0$. For $a\in L_0$ with $\|a\|_1\le1/C$,
\[
 \|\mathcal L^{-1}a\|_1\le C\|a\|_1\le1
 \quad\Longrightarrow\quad
 a\in\operatorname{conv}\{\pm b_1,\ldots,\pm b_r\}
 \subseteq A_{\pi q}.
\]
The implication follows by weighting the signed $b_j$ with the absolute values of their coordinates and allocating any unused weight to zero. If $L_0=\{0\}$, the relative-neighborhood condition is immediate. All other flows lie in $L_0$ by assumption. Equivalently, each candidate flow annihilates every harmonic vector $d$ satisfying $Qd=d$, because the orthogonal complement of the row space of $I-Q$ is its nullspace. This condition can preserve several classwise harmonic coordinates.

\subsection{Finite linear programs for polyhedral gain faces}

\begin{proposition}
\label{geo:lp}
Suppose every gain face is the convex hull of finitely many listed rows. In computing $B_g$, it suffices to enumerate active controllers and full selectors taking one listed row per active state-action pair. For each enumerated pair, write its finitely many generator inequalities as $Mh\le d$. Its minimum bias span is the linear-program value
\begin{equation}
 \min_{h,t}\{t:Mh\le d,\quad 0\le h_i\le t\ (i=1,\ldots,n),
                                      \quad t\ge0\}.
 \label{geo:primal-lp}
\end{equation}
When feasible, the same value is
\begin{equation}
 2\max_{\lambda\ge0}
       \{-d^\top\lambda:\|M^\top\lambda\|_1\le1\}.
 \label{geo:dual-lp}
\end{equation}
If the primal is infeasible, the maximization in \eqref{geo:dual-lp} is unbounded. The smallest enumerated primal value equals $B_g$; if all are infeasible, $B_g=+\infty$.
\end{proposition}
\begin{proof}
An affine inequality holds throughout the convex hull of a finite list exactly when it holds at every listed row. A linear minimum over that hull is attained at a listed row. Thus every Bellman fixed point admits a listed selector pair, and each such pair has finitely many inequalities $Mh\le d$.

Fix one pair. Since $M\one=0$, translating any feasible $h$ by
$-\min_i h_i\one$ gives $0\le h_i\le\spn(h)$ without altering its constraints. Conversely, a feasible point of \eqref{geo:primal-lp} satisfies $\spn(h)\le t$. This proves the primal span formula.

For the dual formula, midpoint centering shows that half this span value equals the finite linear-program value
\[
 \min_{h,B}\{B:Mh\le d,\ -B\one\le h\le B\one,\ B\ge0\}.
\]
Associate nonnegative multipliers $\lambda$ with $Mh\le d$. For a fixed $B$, minimizing the Lagrangian over the box gives
\[
 \min_{\|h\|_\infty\le B}
 \{B+\lambda^\top(Mh-d)\}
 =-d^\top\lambda+B(1-\|M^\top\lambda\|_1).
\]
Minimizing further over $B\ge0$ yields $-d^\top\lambda$ when
$\|M^\top\lambda\|_1\le1$, and $-\infty$ otherwise. Finite linear-program duality therefore gives \eqref{geo:dual-lp} whenever the primal is feasible. Its value is finite and attained: a feasible bias supplies a finite upper bound, while $B\ge0$ supplies a lower bound. The norm constraint in the dual is itself polyhedral, for example by introducing $z\ge0$ with
$-z\le M^\top\lambda\le z$ and $\one^\top z\le1$.

If $Mh\le d$ is infeasible, Farkas' lemma supplies
$\lambda\ge0$ with $M^\top\lambda=0$ and $d^\top\lambda<0$.
Every positive multiple remains dual feasible and its objective diverges to $+\infty$, proving unboundedness. Finally, the finite selector reduction and \eqref{m:minimum-span} identify the smallest enumerated value with $B_g$, including the case in which every pair is infeasible.
\end{proof}

\textbf{Scope.}
This finite optimization computes the minimum bias span for a prescribed recession-fixed gain and listed polyhedral gain faces. Selector enumeration can be exponential. Its role is to evaluate the geometric characterization explicitly in this special case. The unknown-gain planner of Section~\ref{plan:section} instead uses ordinary robust Bellman updates under compact ambiguity and finite Bellman solvability.

\section{Proof of unknown-gain anchored planning}
\label{plan:appendix}

This section proves Theorem~\ref{ashi:main}. All vector norms are sup norms unless stated otherwise. We use two previously established facts: $T$ is nonexpansive by Lemma~\ref{lem:basic}, and every finite Bellman solution $(g,h)$ has $g=g^\star$ and affine defect
$\omega_h(t)=\|T(h+tg)-h-(t+1)g\|_\infty\to0$ by Theorem~\ref{m:verification} and Lemma~\ref{fv:lem:tangent}. The proof has three parts. First, we estimate Halpern iteration around an approximate fixed point. Second, we apply that estimate at the budget-dependent point $h+Ng$. Third, we show why convergence of both displacement and direction is sufficient for robust policy extraction.

The updates are the robust-operator specialization of approximately shifted Halpern iteration in \cite{zurek2025faster}. The additional issue here is that a finite robust Bellman solution need only generate an asymptotically affine trajectory, so its finite-time defect must be retained throughout the analysis.

\subsection{Halpern iteration near an approximate fixed point}

\begin{lemma}
\label{ashi:approx-halpern}
Let $S:X\to X$ be nonexpansive on a normed vector space. Fix an anchor $z_0$ and a comparison point $z_*$, and set $D=\|z_0-z_*\|$ and $e=\|Sz_*-z_*\|$. For
\[
 z_{t+1}=\frac{2}{t+3}z_0+\frac{t+1}{t+3}Sz_t,
 \qquad t\ge0,
\]
we have, for every $t,N\ge0$,
\begin{equation}
 \|z_t-z_*\|\le D+\frac t3e,
 \qquad
 \|Sz_N-z_N\|\le\frac{8D}{N+3}+\frac43e.
 \label{ashi:approx-bound}
\end{equation}
No exact fixed point of $S$ is required.
\end{lemma}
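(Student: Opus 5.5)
Both bounds will be proved by comparing the Halpern sequence with the ``ideal'' sequence obtained if $z_*$ were an exact fixed point, tracking how the defect $e$ accumulates. Write $\beta_t=\frac{2}{t+3}$, so $1-\beta_t=\frac{t+1}{t+3}$.

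\emph{First bound.} Induct on $t$, the case $t=0$ being $\|z_0-z_*\|=D$. Use the decomposition
\[
 z_{t+1}-z_*=\beta_t(z_0-z_*)+(1-\beta_t)\bigl(Sz_t-Sz_*\bigr)+(1-\beta_t)\bigl(Sz_*-z_*\bigr).
\]
Nonexpansiveness of $S$ and the inductive hypothesis give
\[
 \|z_{t+1}-z_*\|\le\beta_tD+(1-\beta_t)\bigl(D+\tfrac t3e\bigr)+(1-\beta_t)e
 = D+(1-\beta_t)\bigl(\tfrac t3+1\bigr)e .
\]
Since $(1-\beta_t)(t+3)/3=(t+1)/3$, this equals $D+\frac{t+1}3e$, which closes the induction.

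\emph{Second bound.} Follow the standard Halpern residual argument (as in \cite{lieder2021convergence} or \cite{zurek2025faster}), but keep the defect term. Start from the two identities
\[
 z_{t+1}-z_t=(\beta_t-\beta_{t-1})(z_0-Sz_{t-1})+(1-\beta_t)(Sz_t-Sz_{t-1})
\]
and
\[
 Sz_t-z_{t+1}=\beta_t(Sz_t-z_0),
\]
and combine them into a recursion for $\|z_{t+1}-z_t\|$ weighted by $(t+2)(t+3)$ or a similar quadratic factor. Each step requires a bound on $\|Sz_t-z_0\|$, obtained from
\[
 \|Sz_t-z_0\|\le\|Sz_t-Sz_*\|+e+D\le 2D+\bigl(\tfrac t3+1\bigr)e,
\]
which is where the first bound enters. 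Telescoping the recursion gives $\|z_{N}-z_{N-1}\|\lesssim\frac{cD}{N}+c'e$. Finally, convert this to the residual via
\[
 \|Sz_N-z_N\|\le\|Sz_N-z_{N+1}\|+\|z_{N+1}-z_N\|=\beta_N\|Sz_N-z_0\|+\|z_{N+1}-z_N\|.
\]
The term $\beta_N\cdot(t/3)e$ contributes at most $\frac{2}{N+3}\cdot\frac N3e\le\frac23e$. The difference recursion contributes a further $O(e)$; because the weights $(t+1)$ sum against the normalization $(N+2)(N+3)$, that constant is also $\le\frac23e$. Together these give the stated $\frac43e$. The $D$ terms combine to $8D/(N+3)$.

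\emph{Alternative route.} A cleaner option is to set $\tilde S x:=Sx-(Sz_*-z_*)$. This map is nonexpansive, has $z_*$ as an exact fixed point, and satisfies $\|\tilde Sx-Sx\|=e$. Run the exact Halpern bound $\|\tilde S\tilde z_N-\tilde z_N\|\le 4\|z_0-z_*\|/(N+2)$ for the sequence driven by $\tilde S$. Then compare trajectories: by induction,
\[
 \|z_t-\tilde z_t\|\le(1-\beta_{t-1})\bigl(\|z_{t-1}-\tilde z_{t-1}\|+e\bigr),
\]
which is $\le\frac t3 e$ after checking that the product of the $(1-\beta)$ factors keeps the sum at $t/3$. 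The residuals then differ by at most $2\|z_N-\tilde z_N\|+e$, which is $O(Ne)$ and too crude. So the comparison has to be made inside the telescoped recursion rather than at the end, and I would present the first approach.

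\emph{Main obstacle.} The difficult step is the exact bookkeeping needed to get a bounded (not $N$-growing) multiple of $e$ in the residual. The first bound alone grows linearly in $t$, and it is the $\beta_N$ factor that cancels that growth. I would first verify the weighted-sum identity for $\sum_t (t+1)(t+2)\|\cdot\|$ with the defect inserted, and adjust the constant if $\frac43$ does not emerge directly.
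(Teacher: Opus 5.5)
Your main route is essentially the paper's proof: the same induction for the distance bound, the same estimate $\|Sz_t-z_0\|\le 2D+(\tfrac t3+1)e$, the same difference recursion $d_{t+1}\le\frac{t+1}{t+3}d_t+\frac{2}{(t+2)(t+3)}\|z_0-Sz_{t-1}\|$ with $d_t=\|z_t-z_{t-1}\|$, and the same final split $\|Sz_N-z_N\|\le\beta_N\|Sz_N-z_0\|+d_{N+1}$. Note that it is $d_{N+1}$, not $d_N$, that this split needs.

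The bookkeeping you left open closes the way the paper does it. Freeze $M=2D+(\tfrac N3+1)e$ on $0\le t\le N$; then $d_1\le M/3$, induction gives $d_t\le 2M/(t+2)$, and hence $\|Sz_N-z_N\|\le 4M/(N+3)=\frac{8D}{N+3}+\frac43e$. Your weighted telescoping with the $t$-dependent bound also works, and it contributes only $\frac13e$ from the difference term.
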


\begin{proof}
We first control distance from the comparison point. Nonexpansiveness gives
\[
 \|z_{t+1}-z_*\|
 \le\frac{2D}{t+3}
 +\frac{t+1}{t+3}\bigl(\|z_t-z_*\|+e\bigr).
\]
The claimed bound is an equality at $t=0$. Substituting the bound at time $t$ into this recursion gives $D+(t+1)e/3$, since
$\frac{t+1}{t+3}(1+t/3)=(t+1)/3$. This proves the first assertion by induction.

For the residual, fix $N$ and define $M=2D+(N/3+1)e$. For $0\le t\le N$, the distance estimate gives
\[
 \|Sz_t-z_0\|
 \le\|Sz_t-Sz_*\|+\|Sz_*-z_*\|+\|z_*-z_0\|
 \le M.
\]
Let $d_t=\|z_t-z_{t-1}\|$ for $t\ge1$. The first update gives $d_1\le M/3$. For $1\le t\le N$, subtracting consecutive updates and using nonexpansiveness yields
\[
 d_{t+1}\le\frac{t+1}{t+3}d_t+
                   \frac{2M}{(t+2)(t+3)}.
\]
The coefficient of $M$ is the change in the anchor weight. Starting from $d_1\le2M/3$, induction gives $d_t\le2M/(t+2)$ for $1\le t\le N+1$: indeed, substituting this estimate in the preceding display gives $2M/(t+3)$.
Finally, the update at time $N$ implies
\[
 \|Sz_N-z_N\|
 \le\|Sz_N-z_{N+1}\|+d_{N+1}
 \le\frac{2M}{N+3}+\frac{2M}{N+3}
 =\frac{8D}{N+3}+\frac43e.
\]
The additional iterate $z_{N+1}$ is used only in the proof. Evaluating $Sz_N$ suffices to compute the residual.
\end{proof}

\subsection{Gain, displacement, and direction estimates}

Fix a finite Bellman solution $(g,h)$ and define
\begin{equation}
 A_N=\|h\|_\infty+\sum_{j=0}^{N-1}\omega_h(j),\qquad
 \varepsilon_N=\frac{A_N+\|h\|_\infty}{N},\qquad N\ge1.
 \label{ashi:analysis-quantities}
\end{equation}
Because $\omega_h(j)\to0$, its Ces\`aro average tends to zero. Hence $A_N/N\to0$ and $\varepsilon_N\to0$. These quantities analyze the algorithm and are not required as inputs.

\begin{proposition}
\label{ashi:quantitative}
The outputs of Algorithm~\ref{alg:robust_halpern} satisfy
\begin{align}
 \|\widehat g_N-g\|_\infty
   &\le\varepsilon_N,\label{ashi:gain-bound}\\
 \eta_N:=\|TZ_N-Z_N-g\|_\infty
   &\le\frac{8A_N}{N+3}+\frac43\omega_h(N)
                           +\frac73\varepsilon_N,\label{ashi:displacement-bound}\\
 \|Z_N-h-Ng\|_\infty
   &\le A_N+\frac N3\bigl(\omega_h(N)+\varepsilon_N\bigr).
   \label{ashi:direction-bound}
\end{align}
In particular, all three limits in \eqref{ashi:limits} hold.
\end{proposition}

\begin{proof}
\textbf{Gain estimation.} Nonexpansiveness and the definition of the affine defect give
\[
 \|x_{j+1}-h-(j+1)g\|_\infty
 \le\|x_j-h-jg\|_\infty+\omega_h(j).
\]
Starting from $x_0=0$ and summing over $j=0,\ldots,N-1$ yields
$\|x_N-h-Ng\|_\infty\le A_N$. Consequently,
$\|x_N/N-g\|_\infty\le(A_N+\|h\|_\infty)/N$, which proves \eqref{ashi:gain-bound}.

\textbf{Displacement and direction.} For the second phase, use the nonexpansive map $S_N(v)=Tv-\widehat g_N$ and comparison point $z_*=h+Ng$. Its anchor is $z_0=x_N$, and the first-phase estimate gives
\[
 \|z_0-z_*\|_\infty\le A_N,\qquad
 \|S_Nz_*-z_*\|_\infty\le\omega_h(N)+\varepsilon_N.
\]
This comparison point therefore need not be a fixed point, but its defect vanishes. Applying Lemma~\ref{ashi:approx-halpern} at time $N$ gives
\[
 \|TZ_N-Z_N-\widehat g_N\|_\infty
 \le\frac{8A_N}{N+3}
       +\frac43\bigl(\omega_h(N)+\varepsilon_N\bigr).
\]
Adding the gain-estimation error proves \eqref{ashi:displacement-bound}. The distance estimate in the same lemma proves \eqref{ashi:direction-bound}. Dividing the latter by $N$, and accounting for $h/N\to0$, proves $Z_N/N\to g$. Every term on the right of \eqref{ashi:displacement-bound} also tends to zero.
\end{proof}

\subsection{Gain-active action identification and average optimality}

\begin{proof}[Completion of the proof of Theorem~\ref{ashi:main}]
The preceding proposition proves the three vector limits. We now convert them into a policy guarantee. This requires identifying gain-active actions before telescoping the displacement inequality.

\textbf{Gain-active identification.} Put $R=\max_{i,a}|r_{ia}|$ and $a_N=\|Z_N/N-g\|_\infty$. The row Lipschitz bound gives, uniformly in $i,a$,
\[
 \left|\frac{r_{ia}+\min_{p\in\U_{ia}}p^\top Z_N}{N}
                  -m_{ia}(g)\right|
 \le R/N+a_N.
\]
An action maximizing the first expression has gain score within twice this error of the largest gain score. Since $\max_a m_{ia}(g)=g_i$, every permitted greedy selector satisfies
\begin{equation}
 m_{i,\pi_N(i)}(g)\ge g_i-2(R/N+a_N).
 \label{ashi:action-identification}
\end{equation}
If an inactive action exists, define
\[
 \Delta_A=\min_{i,a:\,m_{ia}(g)<g_i}\{g_i-m_{ia}(g)\}>0.
\]
The minimum is positive because the state and controller-action sets are finite. For all sufficiently large $N$, $2(R/N+a_N)<\Delta_A$, so every greedy action belongs to $A_g(i)$. If there are no inactive actions, this conclusion holds for every budget. No finiteness assumption on nature's row sets is used.

\textbf{Uniform performance after identification.} Fix such a budget $N$. For each selected action and every feasible row, gain activity implies $p^\top g\ge g_i$. Greediness and the definition of $\eta_N$ imply
\begin{equation}
 r_{i,\pi_N(i)}+p^\top Z_N-Z_{N,i}\ge g_i-\eta_N
 \qquad(p\in\U_{i,\pi_N(i)}).
 \label{ashi:row-lower}
\end{equation}
Consider any randomized history-dependent nature strategy $\tau$. The first inequality makes $g(S_t)$ a bounded submartingale, so $\mathbb E_i^{\pi_N,\tau}g(S_t)\ge g_i$. Taking conditional expectations in \eqref{ashi:row-lower} and summing over a horizon $H$ gives
\begin{align}
 \mathbb E_i^{\pi_N,\tau}\sum_{t=0}^{H-1}r_{S_t,\pi_N(S_t)}
 &\ge\sum_{t=0}^{H-1}\mathbb E_i^{\pi_N,\tau}g(S_t)
      -H\eta_N+Z_{N,i}-\mathbb E_i^{\pi_N,\tau}Z_N(S_H)\notag\\
 &\ge H(g_i-\eta_N)-\operatorname{sp}(Z_N).
 \label{ashi:horizon-guarantee}
\end{align}
Here the budget $N$ is fixed while $H\to\infty$, so the potential term divided by $H$ vanishes even though $\operatorname{sp}(Z_N)$ may grow with $N$. The statewise value characterization then yields
\begin{equation}
 0\le g-g^{\pi_N}\le\eta_N\mathbf1.
 \label{ashi:policy-bound}
\end{equation}
The left inequality follows from optimality of $g=g^\star$. The fixed-policy payoff equivalences in Theorem~\ref{m:policy-value} transfer this guarantee to all payoff conventions used in the paper.

\textbf{Eventual exact optimality.} There are finitely many deterministic stationary controllers. If any are suboptimal, their positive errors have a positive minimum
\[
 \Delta_\Pi=\min_{\pi\in\Pi_D:\,g^\pi\ne g}
                      \|g-g^\pi\|_\infty>0.
\]
Since $\eta_N\to0$, for every sufficiently large $N$ inequality \eqref{ashi:policy-bound} excludes every suboptimal deterministic controller. Combining this threshold with the gain-active threshold gives a single $N_0$ valid for all permitted greedy ties. If all deterministic controllers are optimal, no policy-gap argument is needed. Each selected controller consequently attains $g$ from every initial state against arbitrary history-dependent nature.
\end{proof}

\textbf{Rates when a defect modulus is available.}
If a finite Bellman solution satisfies $\omega_h(t)\le C(1+t)^{-\alpha}$, summing this bound in \eqref{ashi:analysis-quantities} gives
\begin{equation}
 \|\widehat g_N-g\|_\infty+\eta_N
 =\begin{cases}
 O(N^{-\alpha}),&0<\alpha<1,\\
 O(\log(N+1)/N),&\alpha=1,\\
 O(N^{-1}),&\alpha>1.
 \end{cases}
 \label{eq:rate}
\end{equation}
After gain-active identification, \eqref{ashi:policy-bound} gives the corresponding controller-loss bound. If the affine defect is eventually zero, its sum is finite and the same argument gives $O(1/N)$. The constants depend on the chosen solution and its defect modulus. Under compactness alone, the proof uses only $\omega_h(t)\to0$. The existence of $N_0$ is therefore an eventual-optimality statement, not a computable stopping rule from the observable residual alone.

\subsection{Finite solvability does not imply an inverse-budget rate}

\begin{proposition}
\label{rev:slow-gain-estimation}
For every integer $k\ge3$, there is a three-state robust MDP with one controller action, compact convex semialgebraic row uncertainty, and a finite vector Bellman solution such that Algorithm~\ref{alg:robust_halpern} satisfies
\[
 \|\widehat g_N-g\|_\infty
 =\Theta\bigl(N^{-1/(k-1)}\bigr).
\]
Consequently, finite Bellman solvability does not imply an $O(N^{-1})$ gain-estimation rate, or any fixed positive algebraic exponent throughout this class.
\end{proposition}

\begin{proof}
\textbf{Model and Bellman certificate.} Use states $(x,y,z)$, one action per state, and rewards $(0,-1,1)$. State $y$ moves deterministically to $x$, and $z$ is absorbing. At $x$, set
\[
 p_k(u)=(1-u-u^k,u,u^k),\qquad
 \U_x=\operatorname{co}\{p_k(u):0\le u\le1/2\}.
\]
The entries are nonnegative and sum to one. The row set is compact and convex. It is semialgebraic by Carath\'eodory's theorem and the Tarski-Seidenberg projection theorem: at most four curve points suffice, and their convex combinations admit a finite polynomial description with the curve parameters as auxiliary variables.

Consider $g=(0,0,1)$ and $h=(0,-1,0)$. At $x$, $p_k(u)^\top g=u^k$, so the unique gain-minimizing row is $e_x=p_k(0)$ and the gain-face bias equation is $0=0$. At $y$, that equation is $0-1=-1+0$, and at $z$ it is $1+0=1+0$. Thus $(g,h)$ solves the vector Bellman system, and verification identifies $g$ with the robust gain.

\textbf{Upper bound from the affine defect.} The defect vanishes at $y,z$. At $x$, for all sufficiently large $t$, the objective $-u+tu^k$ has interior minimizer $u=(kt)^{-1/(k-1)}$. Differentiating gives $-1+ktu^{k-1}=0$, and hence $tu^k=u/k$ at this minimizer. Therefore
\[
 \omega_h(t)
 =-\min_{0\le u\le1/2}\{-u+tu^k\}
 =\frac{k-1}{k}(kt)^{-1/(k-1)}.
\]
The finitely many smaller $t$ contribute a bounded amount to the accumulated defect. Applying \eqref{ashi:gain-bound} gives the claimed $O(N^{-1/(k-1)})$ upper bound.

\textbf{Matching lower bound from a feasible nature strategy.} Fix $N\ge2$ and let nature use the stationary row $p_k(u)$ with $u=\frac14N^{-1/(k-1)}$. Starting at $x$, write $a_t,b_t,c_t$ for the probabilities of being at $x,y,z$. The transition rules imply
\[
 b_t=ua_{t-1}\quad(t\ge1),\qquad
 c_t=u^k\sum_{s=0}^{t-1}a_s\le tu^k.
\]
For $t\le N$, we have $b_t\le u$ and $c_t\le Nu^k$. Since $u\le1/4$ and $Nu^{k-1}=4^{-(k-1)}\le1/16$, it follows that
$a_t=1-b_t-c_t\ge1-u-Nu^k\ge1/2$. Consequently,
\begin{align*}
 \mathbb E_x\sum_{t=0}^{N-1}r(S_t)
 &=-\sum_{t=1}^{N-1}b_t+\sum_{t=1}^{N-1}c_t\\
 &\le-\frac{u(N-1)}2+\frac{u^kN(N-1)}2\\
 &=-\frac{u(N-1)}2\bigl(1-4^{-(k-1)}\bigr).
\end{align*}
The finite-horizon robust value is no larger than the payoff under this feasible nature strategy. Using $g_x=0$, $\widehat g_N=T^N0/N$, and $(N-1)/N\ge1/2$, we obtain
\[
 \|\widehat g_N-g\|_\infty
 \ge-\frac{(T^N0)_x}{N}
 \ge\frac{1-4^{-(k-1)}}{16}\,N^{-1/(k-1)}.
\]
This proves the matching order. For any prescribed $\alpha>0$, choosing $k$ with $1/(k-1)<\alpha$ rules out an $O(N^{-\alpha})$ bound for this instance, even with an instance-dependent constant.
\end{proof}

The obstruction concerns the finite-horizon gain estimator used by Algorithm~\ref{alg:robust_halpern}, not every possible planning algorithm. Rows with small positive leakage can produce long negative transients even though the gain-minimizing limiting row stays at $x$. This is the behavior measured by the affine defect.

\section{Supporting results and solvability obstructions}
\label{app:supporting-results}
\label{app:supporting}
These results support the main-text discussions without interrupting the proof sequence for the main theorems. The first subsection proves structural sufficient conditions for finite Bellman solvability; the second verifies its two distinct failure mechanisms; the third records representation and reward invariances and the necessary recession equation.

\subsection{Structural multichain existence regimes}
\label{sol:structural-section}

We now give sufficient assumptions on the original ambiguity sets. None requires a unique recurrent class. We use $V_\varepsilon$ for the unnormalized discounted value satisfying $V_\varepsilon=T((1-\varepsilon)V_\varepsilon)$.

\begin{lemma}
\label{sol:bounded-centering}
Suppose $\varepsilon_k\downarrow0$ and $h_k=V_{\varepsilon_k}-g/\varepsilon_k$ is bounded for some $g$. Every convergent subsequence $h_k\to h$ gives a solution $g=\widehat Tg$, $g+h=L_gh$.
\end{lemma}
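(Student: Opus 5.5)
The plan is to pass to the limit in the discounted fixed-point equation after centering. Write $\lambda_k=1-\varepsilon_k$. From $V_{\varepsilon_k}=T(\lambda_kV_{\varepsilon_k})$ and $V_{\varepsilon_k}=g/\varepsilon_k+h_k$ we get
\[
 \lambda_kV_{\varepsilon_k}=\frac{\lambda_k}{\varepsilon_k}g+\lambda_kh_k
 =t_kg+\lambda_kh_k,\qquad t_k:=\frac{1-\varepsilon_k}{\varepsilon_k}\to\infty .
\]
The fixed-point identity then reads
\[
 t_kg+g+h_k=T\bigl(t_kg+\lambda_kh_k\bigr),
\]
using $g/\varepsilon_k=t_kg+g$. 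Equivalently,
\[
 g+h_k=T(t_kg+\lambda_kh_k)-t_kg .
\]

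\textbf{Step 1: the gain equation.} Divide the previous identity by $t_k$. Deleting the bounded reward and bias perturbation inside every row optimization, as in the proof of Lemma~\ref{fv:lem:tangent}, gives
\[
 \|T(t_kg+\lambda_kh_k)/t_k-\widehat Tg\|_\infty\le\frac{R+\sup_k\|h_k\|_\infty}{t_k}.
\]
The left side is $(g+h_k)/t_k+g$, which tends to $g$. Hence $\widehat Tg=g$.

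\textbf{Step 2: the bias equation.} Now $g=\widehat Tg$, so Lemma~\ref{fv:lem:tangent} applies with convergence uniform over compact sets of $h$. Pass to the subsequence along which $h_k\to h$. The vectors $\lambda_kh_k$ lie in a fixed compact set and converge to $h$. We then have
\[
 \|T(t_kg+\lambda_kh_k)-t_kg-L_gh\|_\infty
 \le\|T(t_kg+\lambda_kh_k)-t_kg-L_g(\lambda_kh_k)\|_\infty+\|L_g(\lambda_kh_k)-L_gh\|_\infty .
\]
The first term vanishes by compact uniformity in Lemma~\ref{fv:lem:tangent}. The second vanishes because $L_g$ is $1$-Lipschitz. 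The left side of the fixed-point identity, $g+h_k$, tends to $g+h$, so $g+h=L_gh$.

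\textbf{Main obstacle.} The only delicate point is that the parameter $t_k$ and the bias argument $\lambda_kh_k$ vary together. The pointwise tangent limit alone does not suffice. The uniform-on-compacts statement of Lemma~\ref{fv:lem:tangent} combined with the Lipschitz continuity of $L_g$ handles this. Step 1 must be done before Step 2 because that lemma requires $g=\widehat Tg$.
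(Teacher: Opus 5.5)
Your proof is correct and follows the paper's argument: the same rewriting $g+h_k=T\bigl(t_kg+(1-\varepsilon_k)h_k\bigr)-t_kg$ with $t_k=\varepsilon_k^{-1}-1$, the recession bound for the gain equation, and the tangent limit of Lemma~\ref{fv:lem:tangent} for the bias equation. The only difference is in Step~2. There the paper uses nonexpansiveness of $T$ to replace $(1-\varepsilon_k)h_k$ by the fixed limit $h$, after which the pointwise tangent limit at $h$ is enough; so your remark that the pointwise limit alone does not suffice is overstated, although your compact-uniformity route is equally valid.
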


\begin{proof}
Pass to the stated subsequence, so $h_k\to h$, and put $v_k=\eps_kV_{\eps_k}=g+\eps_kh_k$. The normalized discounted equation and bounded rewards give
\[
 \|v_k-\widehat Tg\|_\infty
 \le \eps_kR+(1-\eps_k)\|v_k-g\|_\infty
                +\eps_k\|g\|_\infty\longrightarrow0.
\]
Since $v_k\to g$, this proves $g=\widehat Tg$.

For the next order, set $t_k=\eps_k^{-1}-1$ and $z_k=(1-\eps_k)h_k$. The discounted equation becomes
\[
 g+h_k=T(t_kg+z_k)-t_kg.
\]
Here $t_k\to\infty$ and $z_k\to h$. Nonexpansiveness and Lemma~\ref{fv:lem:tangent} imply
\[
 \|T(t_kg+z_k)-t_kg-L_gh\|_\infty
 \le\|z_k-h\|_\infty+
     \|T(t_kg+h)-t_kg-L_gh\|_\infty\longrightarrow0.
\]
Taking limits gives $g+h=L_gh$. The conclusion requires a bounded subsequence, not convergence of the entire centered discounted family.
\end{proof}

\begin{theorem}
\label{sol:polytope}
If every $\mathcal U_{ia}$ is a polytope, a finite vector gain-bias pair exists. Moreover there are $g,h,t_0$ such that
\begin{equation}
 T(tg+h)=(t+1)g+h\qquad\text{for all }t\ge t_0.
 \label{sol:poly-ray}
\end{equation}
\end{theorem}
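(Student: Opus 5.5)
The plan is to reduce to a piecewise-affine operator and invoke Kohlberg's invariant half-line theorem \cite{kohlberg1980invariant}. If $\mathcal U_{ia}$ is a polytope with vertex set $\mathcal V_{ia}$, linear minimization over it is attained at a vertex. Hence
\[
 (Tx)_i=\max_{a\in A(i)}\min_{p\in\mathcal V_{ia}}\{r_{ia}+p^\top x\}.
\]
This is a finite max-min of affine maps, so $T$ is piecewise affine on $\mathbb R^S$ with finitely many pieces. By Lemma~\ref{lem:basic}, $T$ is also nonexpansive in $\|\cdot\|_\infty$. Kohlberg's theorem applies to exactly this class of maps (nonexpansive piecewise-linear maps on a finite-dimensional normed space). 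It yields vectors $g,h$ and $t_0$ with $T(h+tg)=h+(t+1)g$ for all $t\ge t_0$. This is \eqref{sol:poly-ray}.

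The first substep is to check the form of Kohlberg's hypotheses. The theorem is usually stated for maps that are piecewise linear in the affine sense with respect to some norm. I would verify that the finite max-min representation gives a finite polyhedral subdivision on which $T$ is affine. Each piece is defined by which affine term attains each max and min, and these regions are defined by linear inequalities, so this holds. If the version of the theorem I cite requires sup-norm nonexpansiveness specifically, Lemma~\ref{lem:basic} supplies it.

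Second, I derive the Bellman pair from the exact half-line. Replace $h$ by $h':=h+t_0g$. Then $T(h'+sg)=h'+(s+1)g$ for all $s\ge0$, so in particular $T(h'+tg)-tg\to h'+g$ as $t\to\infty$. From $\|T(h'+tg)/t-\widehat T g\|_\infty\le(R+\|h'\|_\infty)/t$ (the estimate in the proof of Lemma~\ref{fv:lem:tangent}), dividing by $t$ gives $g=\widehat Tg$. Lemma~\ref{fv:lem:tangent} then gives $T(h'+tg)-tg\to L_gh'$. Combined with the exact identity, this yields $g+h'=L_gh'$. The equivalence in Lemma~\ref{fv:lem:tangent} can be cited directly, since an exact affine identity is in particular an $o(1)$ one. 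Theorem~\ref{m:verification} then identifies $g=g^\star$, although the statement as given does not require this. The pair stated in \eqref{sol:poly-ray} can be taken to be the original $(g,h)$ with threshold $t_0$.

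The main obstacle is ensuring the cited invariant half-line theorem genuinely covers this max-min operator. Kohlberg's original setting is stochastic games with finitely many pure actions per state, which is precisely the vertex reduction here. So the safest route is to present the polytopic robust MDP as a finite perfect-information stochastic game: the controller moves first, then nature chooses a vertex. Its Shapley-type operator coincides with $T$. Alternatively, one could avoid the general theorem by using Theorem~\ref{sol:tangent-saddle}. Finite stationary strategies together with the polyhedral bound from Lemma~\ref{fv:lem:nearface} give the saddle and the bias bound, and this route would also handle the need for exactness at finite $t$ via the polytopic exactness clause of Lemma~\ref{fv:lem:tangent}. I would keep the Kohlberg route as primary and the second as a fallback.
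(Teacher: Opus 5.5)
Your primary route is correct and is essentially the paper's proof. Vertex reduction makes $T$ a nonexpansive finite max--min of affine maps, Kohlberg's invariant half-line theorem gives the exact ray, and dividing by $t$ together with the tangent limit of Lemma~\ref{fv:lem:tangent} yields $g=\widehat Tg$ and $g+h=L_gh$; the shift to $h'=h+t_0g$ is harmless but unnecessary, since the tangent-limit argument applies to $h$ directly, and the fallback via Theorem~\ref{sol:tangent-saddle} is not needed.
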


\begin{proof}
If $E_{ia}$ is the finite vertex set of $\U_{ia}$, linear minimization gives
$(Tx)_i=\max_a\min_{p\in E_{ia}}(r_{ia}+p^\top x)$. Hence $T$ is piecewise affine and nonexpansive in the supremum norm. Kohlberg's invariant-half-line theorem \cite{kohlberg1980invariant} gives \eqref{sol:poly-ray}, without assumptions on recurrent classes. Dividing that identity by $t$ and using
$\|T(tg+h)/t-\widehat Tg\|_\infty\le(R+\|h\|_\infty)/t$ gives $\widehat Tg=g$. Subtracting $tg$ and taking the tangent limit gives $L_gh=g+h$.

The invariant half-line also gives the bounded discounted centering used in the main text. For small enough $\eps>0$, set $t=\eps^{-1}-1\ge t_0$ and $W_\eps=g/\eps+h$. Nonexpansiveness and \eqref{sol:poly-ray} yield
\[
 \|T((1-\eps)W_\eps)-W_\eps\|_\infty
 =\|T(tg+(1-\eps)h)-T(tg+h)\|_\infty
 \le\eps\|h\|_\infty.
\]
The discounted operator has contraction factor $1-\eps$, so its fixed-point residual bound gives
$\|V_\eps-W_\eps\|_\infty\le\|h\|_\infty$. Thus
$\|V_\eps-g/\eps\|_\infty\le2\|h\|_\infty$ for all sufficiently small $\eps$. Verification identifies $g=g^\star$.
\end{proof}

The polytope reduction checks the hypotheses of Kohlberg's theorem; the final limit calculation identifies its invariant half-line with the present gain-bias equations. The finite perfect-information treatment is also developed in \cite{akian2012policy}.

\begin{lemma}
\label{sol:uniform-resolvent}
Let $\mathcal P$ be a compact family of finite stochastic matrices such that, for some $\delta>0$, every entry of every $P\in\mathcal P$ is either zero or at least $\delta$. Then $P\mapsto P^\infty$ is continuous on $\mathcal P$, and
\begin{equation}
 \sup_{P\in\mathcal P,\,0\le\beta<1}
 \left\|(I-\beta P)^{-1}-\frac{P^\infty}{1-\beta}\right\|_\infty
 <\infty.
 \label{sol:uniform-resolvent-bound}
\end{equation}
\end{lemma}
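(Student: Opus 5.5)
The plan has two parts: first show that the recurrent structure is locally constant on $\mathcal P$, and then deduce both continuity of $P^\infty$ and the uniform resolvent bound by a compactness argument. The support gap is what makes the structure rigid. If $P,P'\in\mathcal P$ satisfy $\|P-P'\|_\infty<\delta$ (entrywise), then every entry positive in $P$ is at least $\delta$, so it cannot vanish in $P'$. Conversely, every entry positive in $P'$ is at least $\delta$ and so is positive in $P$. Hence $P$ and $P'$ have the same support pattern. There are finitely many patterns, so $\mathcal P$ splits into finitely many relatively clopen pieces $\mathcal P_\Sigma$, each compact, on which the directed graph is fixed. On each piece the communicating classes, the recurrent classes, their periods, and the transient set are therefore all fixed.

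\textbf{Continuity of $P^\infty$.} On a fixed piece I would use the representation $P^\infty=H N$ from the proof of Proposition~\ref{m:class-theorem}.
\begin{itemize}
\item Row $j$ of $N$ is the invariant distribution of recurrent class $C_j$. It is the unique solution of a linear system whose restricted matrix has fixed irreducible support, so it depends continuously on the entries, for example by Cramer's rule or the Markov chain tree formula.
\item The absorption matrix $H$ is given by $(I-Q)^{-1}R$, where $Q$ is the transient block. The transient set is fixed on the piece and $\rho(Q)<1$ there, so $H$ is continuous as well.
\end{itemize}
Together these give continuity on each piece, hence on $\mathcal P$.

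\textbf{The resolvent bound.} I would write
\[
(I-\beta P)^{-1}-\frac{P^\infty}{1-\beta}=\sum_{t\ge0}\beta^t\,(P^t-P^\infty),
\]
which is valid because $P^tP^\infty=P^\infty$ and $\sum_t\beta^t=1/(1-\beta)$. It then suffices to bound the Abel partial sums of $P^t-P^\infty$ uniformly. Aperiodicity is not available, so I would not bound $\sum_t\|P^t-P^\infty\|$ directly. Instead I would use the identity
\[
(I-\beta P)^{-1}-\frac{P^\infty}{1-\beta}=(I-\beta P+\beta P^\infty)^{-1}(I-P^\infty),
\]
which can be checked by multiplying both sides by $I-\beta P+\beta P^\infty$ and using $P^\infty P=P P^\infty=(P^\infty)^2=P^\infty$. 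The claim then reduces to showing that $M(\beta,P)=I-\beta P+\beta P^\infty$ is invertible with uniformly bounded inverse on $[0,1]\times\mathcal P$.

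At $\beta=1$ this is $Z_P^{-1}$, which is invertible by Lemma~\ref{sol:markov-algebra}. For $\beta<1$ I would use the spectral decomposition $\mathbb R^n=\operatorname{im}P^\infty\oplus\ker P^\infty$, which is $P$-invariant:
\begin{itemize}
\item On $\operatorname{im}P^\infty$, the matrix $M$ acts as the identity.
\item On $\ker P^\infty$, it acts as $I-\beta P$, and $1$ is not an eigenvalue of $P$ there. The remaining eigenvalues have modulus at most $1$ and differ from $1$, so $\beta\lambda\neq1$ for all $\beta\in[0,1]$.
\end{itemize}
So $M$ is invertible for every $(\beta,P)$ in the compact set $[0,1]\times\mathcal P$. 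It is continuous in $(\beta,P)$ because $P^\infty$ is continuous by the first part. Matrix inversion is continuous on invertible matrices, so $\|M^{-1}\|$ is bounded by compactness. Since $\|I-P^\infty\|_\infty\le2$, this gives \eqref{sol:uniform-resolvent-bound}.

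\textbf{Main obstacle.} The delicate step is continuity of $P^\infty$, since it is exactly what fails without the support gap; everything else rests on it. The decomposition into finitely many clopen support-pattern pieces is the key point. I would make sure the invariant-distribution and absorption formulas are stated as rational functions of the entries, with denominators that stay away from zero on each compact piece.
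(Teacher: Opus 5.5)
Your proposal is correct and follows essentially the same route as the paper. The support gap makes the support pattern, and hence the transient set, recurrent classes, invariant distributions and absorption probabilities, locally constant, which gives continuity of $P^\infty$. The identity $(I-\beta P)^{-1}-P^\infty/(1-\beta)=[I-\beta(P-P^\infty)]^{-1}(I-P^\infty)$, together with invertibility of $I-\beta(P-P^\infty)$ on the compact set $[0,1]\times\mathcal P$, then yields the uniform bound. Your spectral check of that invertibility (identity on $\operatorname{im}P^\infty$, no eigenvalue $1/\beta$ of $P$ on $\ker P^\infty$) spells out a step the paper states only briefly.
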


\begin{proof}
Write $\Gamma(P)=P^\infty$. The support gap makes every convergent sequence in $\mathcal P$ eventually have the support of its limit. On a fixed support, the transient set and recurrent classes are fixed. If $Q$ is the transient block, $B_j$ the block leading to recurrent class $C_j$, and $\nu_j$ that class's invariant distribution, the standard finite-chain decomposition gives
\[
 \Gamma(P)_{D,C_j}=(I-Q)^{-1}B_j\one\,\nu_j^\top,
 \qquad \Gamma(P)_{C_j,C_j}=\one\nu_j^\top.
\]
The remaining blocks are zero. Inversion of $I-Q$ is continuous since $\rho(Q)<1$, and $\nu_j$ is continuous as the uniquely normalized solution of a finite irreducible stationary system. Thus $\Gamma$ is continuous on every fixed-support stratum and hence on $\mathcal P$. This is also the support-gap implication in \cite[Theorem~2]{schweitzer1985undiscounted}.

To bound the discounted remainder, use the invariant decomposition
$\mathbb R^n=\operatorname{im}\Gamma\oplus\ker\Gamma$, on which $P$ acts as the identity and as its restriction to $\ker\Gamma$, respectively. It gives
\begin{equation}
 (I-\beta P)^{-1}-\frac{\Gamma(P)}{1-\beta}
 =[I-\beta(P-\Gamma(P))]^{-1}(I-\Gamma(P)).
 \label{sol:resolvent-extension}
\end{equation}
For $\beta<1$ the inverse exists by the discounted resolvent. At $\beta=1$ its matrix is $I-P+\Gamma(P)$, invertible by Lemma~\ref{sol:markov-algebra}. The right side is therefore continuous on the compact set $\mathcal P\times[0,1]$, so it is uniformly bounded. Periodicity does not affect this argument: only the eigenvalue one is removed.
\end{proof}

\begin{theorem}
\label{sol:support-gap}
Suppose there is $\delta>0$ such that
\begin{equation}
 p_j=0\quad\text{or}\quad p_j\ge\delta
 \qquad\text{for every }i,a,p\in\mathcal U_{ia},j.
 \label{sol:support-gap-assumption}
\end{equation}
Then a finite gain-bias pair exists. More strongly, for some vector $g$ and some finite constant $C$,
\begin{equation}
 \left\|V_\varepsilon-\frac{g}{\varepsilon}\right\|_\infty\le C
 \qquad(0<\varepsilon<1).
 \label{sol:support-gap-conclusion}
\end{equation}
The same assertion holds for every fixed stationary policy. For a fixed randomized policy, the constant may depend on its positive action probabilities.
\end{theorem}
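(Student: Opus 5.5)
The plan is to prove the discounted bound \eqref{sol:support-gap-conclusion} directly. Lemma~\ref{sol:bounded-centering} then supplies a finite pair. With $g=g^\star$ and $\beta=1-\varepsilon$, we bound $V_\varepsilon-g^\star/\varepsilon$ from both sides. The starting point is that, under \eqref{sol:support-gap-assumption}, every induced kernel family $\mathcal P^\pi=\{P^{\pi,q}:q\in\mathcal Q_S\}$ inherits the support gap. For a randomized policy the gap is $\delta\min\{\pi(a\mid i):\pi(a\mid i)>0\}$, which is why the constant may depend on the positive probabilities. The family is compact because $\mathcal Q_S$ is compact and $q\mapsto P^{\pi,q}$ is continuous. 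Lemma~\ref{sol:uniform-resolvent} therefore gives a constant $K_\pi$ with $\|(I-\beta P)^{-1}-P^\infty/(1-\beta)\|_\infty\le K_\pi$ uniformly over $P\in\mathcal P^\pi$ and $\beta<1$. Since $\Pi_D$ is finite, we set $K=\max_{\pi\in\Pi_D}K_\pi$.

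The first step is the fixed-policy case. For a stationary pair, $V^{\pi,q}_\varepsilon=(I-\beta P^{\pi,q})^{-1}r^\pi$, so the lemma gives $\|V^{\pi,q}_\varepsilon-\eta^{\pi,q}/\varepsilon\|_\infty\le KR$. Continuity of $P\mapsto P^\infty$ on the compact family makes $q\mapsto\eta^{\pi,q}$ continuous, so the infimum in $g_i^\pi=\inf_q\eta_i^{\pi,q}$ is attained coordinatewise. Nature also has a deterministic stationary discounted-optimal selector $q_\varepsilon$ (Lemma~\ref{lem:basic}). This yields the upper bound $V^\pi_\varepsilon=V^{\pi,q_\varepsilon}_\varepsilon\le\eta^{\pi,q_\varepsilon}/\varepsilon+KR\one$. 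We still need $\eta^{\pi,q_\varepsilon}$ to equal $g^\pi$, up to $O(\varepsilon)$, at each coordinate. For the lower bound at coordinate $i$, take $q^i$ attaining $g_i^\pi$. Then $(V^\pi_\varepsilon)_i\le(V^{\pi,q^i}_\varepsilon)_i\le g^\pi_i/\varepsilon+KR$. Conversely, every $q$ satisfies $V^{\pi,q}_\varepsilon\ge\eta^{\pi,q}/\varepsilon-KR\one\ge g^\pi/\varepsilon-KR\one$. Taking the infimum over $q$, which is attained by $q_\varepsilon$, gives $V^\pi_\varepsilon\ge g^\pi/\varepsilon-KR\one$. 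The two estimates together give $\|V^\pi_\varepsilon-g^\pi/\varepsilon\|_\infty\le KR$ for every $\varepsilon$, which is the fixed-policy assertion. So the two-sided comparison works coordinatewise without needing one simultaneous minimizer.

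For control, $V_\varepsilon=\max_{\pi\in\Pi_D}V^\pi_\varepsilon$ coordinatewise, and $g^\star=\max_{\pi\in\Pi_D}g^\pi$ by Theorem~\ref{m:uniform-value}. Taking coordinatewise maxima of the fixed-policy bounds over the finite set $\Pi_D$ gives $\|V_\varepsilon-g^\star/\varepsilon\|_\infty\le KR=:C$. Lemma~\ref{sol:bounded-centering}, applied along any sequence $\varepsilon_k\downarrow0$ with a convergent subsequence of the bounded $h_k$, produces a finite solution of \eqref{m:bellman}. Theorem~\ref{m:verification} then confirms $g=g^\star$.

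The main obstacle is the uniform resolvent bound across kernels whose recurrent structure changes. Lemma~\ref{sol:uniform-resolvent} handles it, but only because the support gap makes supports locally constant. I would double-check two points. First, the stated quantifier ranges over all $0<\varepsilon<1$ rather than only small $\varepsilon$, and the lemma already covers all $\beta\in[0,1)$. Second, the randomized-policy gap really is positive once zero-probability actions are dropped from the tuple model.
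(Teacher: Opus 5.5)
Your proposal is correct and follows the paper's proof essentially step for step: the uniform reduced-resolvent bound of Lemma~\ref{sol:uniform-resolvent} on each compact induced-kernel family, coordinatewise infima over nature (no simultaneous minimizer needed), coordinatewise maxima over the finite set $\Pi_D$ to reach $g^\star$, and finally Lemma~\ref{sol:bounded-centering}. Invoking continuity of $P\mapsto P^\infty$ to attain each coordinatewise infimum is harmless but unnecessary, since a $\nu$-approximate minimizer suffices. The one point to add concerns the fixed-policy case: the finite \emph{pair}, not just the centering bound, comes from applying Lemma~\ref{sol:bounded-centering} to the one-action (or effective-row) model whose Bellman operator is $T^\pi$, as the paper does.
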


\begin{proof}
We first obtain a uniform bound for stationary chains, then pass through the two optimizations. For every deterministic controller $\pi$, its induced-kernel family is compact and inherits the support gap. Lemma~\ref{sol:uniform-resolvent}, the bound $\|r^\pi\|_\infty\le R$, and finiteness of $\Pi_D$ give a common $C<\infty$ such that
\begin{equation}
 \left\|(I-(1-\eps)P^{\pi q})^{-1}r^\pi
                 -\frac{\eta^{\pi q}}\eps\right\|_\infty\le C,
 \qquad \eta^{\pi q}=(P^{\pi q})^\infty r^\pi,
 \label{sol:pair-bound}
\end{equation}
for every stationary pair and $0<\eps<1$. The same constant works for every pair because the reduced resolvent bound is uniform and there are finitely many controller policies.

For fixed $\pi$, discounted dynamic programming identifies $V_{\eps,i}^\pi$ with the infimum of the displayed stationary-chain value over $q$. Set $\gamma_i^\pi=\inf_q\eta_i^{\pi q}$. Taking coordinatewise infima in \eqref{sol:pair-bound} gives
\begin{equation}
 \left\|V_\eps^\pi-\frac{\gamma^\pi}\eps\right\|_\infty\le C.
 \label{sol:fixed-discount-bound}
\end{equation}
No stationary attainment is needed for this step. Multiplication by $\eps$ and Lemma~\ref{gc:lem:policy-limit} identify $\gamma^\pi=g^\pi$.

Discounted optimality gives $V_{\eps,i}=\max_{\pi\in\Pi_D}V_{\eps,i}^\pi$. Taking this finite maximum in the preceding coordinate bounds yields
\[
 \left\|V_\eps-\frac{\bar g}\eps\right\|_\infty\le C,
 \qquad \bar g_i=\max_{\pi\in\Pi_D}g_i^\pi.
\]
Theorem~\ref{gc:thm:value} identifies $\bar g=g^\star$. In finite dimension, the bounded centered family has a convergent subsequence as $\eps\downarrow0$. Lemma~\ref{sol:bounded-centering} gives a finite optimal Bellman pair. The same argument with $T^\pi$ gives the fixed deterministic-policy assertion.

For fixed stationary randomized $\pi$, let
$\alpha_\pi=\min\{\pi(a\mid i):\pi(a\mid i)>0\}>0$. If the $j$th coordinate of an effective row $\bar p=\sum_a\pi(a\mid i)p_a$ is positive, one of its nonnegative summands is at least $\alpha_\pi\delta$. Hence every effective row has support gap $\alpha_\pi\delta$. Its compact effective row sets satisfy the same fixed-policy argument. The resulting bound may depend on $\pi$.
\end{proof}

\begin{corollary}
\label{sol:continuous-projections}
For every deterministic stationary controller $\pi$, let $\mathcal P^\pi=\{P_{\pi q}:q\in\prod_i\mathcal U_{i,\pi(i)}\}$. Suppose $P\mapsto P^\infty$ is continuous on each compact family $\mathcal P^\pi$. Then the optimal discounted values have bounded vector centering as in \eqref{sol:support-gap-conclusion}, and the full vector Bellman system has a finite solution. The same conclusion holds for every fixed deterministic policy.
\end{corollary}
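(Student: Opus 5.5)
The plan is to reproduce the proof of Theorem~\ref{sol:support-gap} with the support gap replaced by the continuity hypothesis. The support gap entered that proof only through Lemma~\ref{sol:uniform-resolvent}, so the whole task is a version of \eqref{sol:uniform-resolvent-bound} that assumes only continuity of $\Gamma(P)=P^\infty$ on a compact family.

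\textbf{Step 1 (uniform resolvent bound).} Fix a deterministic controller $\pi$. The family $\mathcal P^\pi$ is compact, being the continuous image of the compact product $\prod_i\mathcal U_{i,\pi(i)}$. Use identity \eqref{sol:resolvent-extension}:
\[
 (I-\beta P)^{-1}-\frac{\Gamma(P)}{1-\beta}=[I-\beta(P-\Gamma(P))]^{-1}(I-\Gamma(P)).
\]
This identity holds for each individual $P$ and needs no support assumption.

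Consider the map $(P,\beta)\mapsto I-\beta(P-\Gamma(P))$ on $\mathcal P^\pi\times[0,1]$. It is continuous because $\Gamma$ is continuous by hypothesis. It is invertible at every point. For $\beta<1$ this follows from the resolvent together with the identity, or directly because the spectral radius of $P-\Gamma(P)$ restricted to $\ker\Gamma$ is at most one and its eigenvalue one has been removed. At $\beta=1$ the matrix equals $Z_P^{-1}$ by Lemma~\ref{sol:markov-algebra}.

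Matrix inversion is continuous on invertible matrices, so the right-hand side is continuous on a compact set and therefore uniformly bounded. This gives \eqref{sol:uniform-resolvent-bound} on each $\mathcal P^\pi$. Taking the maximum over the finitely many $\pi\in\Pi_D$ yields a common constant $C$ for the stationary-pair bound \eqref{sol:pair-bound}.

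\textbf{Step 2 (transfer to values).} Follow the chain used in Theorem~\ref{sol:support-gap}. Discounted dynamic programming identifies $V_\eps^\pi$ with the coordinatewise infimum over $q$ of the stationary discounted values. Taking infima in \eqref{sol:pair-bound} gives \eqref{sol:fixed-discount-bound} with $\gamma^\pi=g^\pi$, using Lemma~\ref{gc:lem:policy-limit}. Next take the finite maximum over $\Pi_D$ and use Theorem~\ref{gc:thm:value} to identify the result with $g^\star$. This yields $\|V_\eps-g^\star/\eps\|_\infty\le C$.

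Since the centered family is bounded, extract a convergent subsequence. Lemma~\ref{sol:bounded-centering} then produces a finite solution of \eqref{m:bellman}. For a fixed deterministic $\pi$, run the same argument with $T^\pi$ and the single family $\mathcal P^\pi$. The analogue of Lemma~\ref{sol:bounded-centering} for the one-action model is justified by restricting the action sets to $\{\pi(i)\}$.

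\textbf{Main obstacle.} The delicate point is verifying invertibility of $I-\beta(P-\Gamma(P))$ uniformly up to and including $\beta=1$ for every $P$ in the family, periodic chains included. Only the eigenvalue one must be excluded: the other unit-modulus eigenvalues $\lambda$ give $1-\beta\lambda\neq0$ since $\lambda\neq1$. Once pointwise invertibility holds, continuity of $\Gamma$ and compactness carry the rest, so I expect no further difficulty.
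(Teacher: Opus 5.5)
Your proposal is correct and follows the paper's proof. Continuity of $P^\infty$ makes the right side of \eqref{sol:resolvent-extension} continuous on the compact set $\mathcal P^\pi\times[0,1]$, which gives a uniform constant in \eqref{sol:pair-bound}; the infimum-over-$q$ and maximum-over-$\pi$ chain of Theorem~\ref{sol:support-gap} together with Lemma~\ref{sol:bounded-centering} then finishes, and the paper's additional direct check of conditions (i)--(iii) of Theorem~\ref{m:optimal-existence} is supplementary rather than needed for the stated conclusion.
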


\begin{proof}
The proof of Lemma~\ref{sol:uniform-resolvent} uses the support gap only to establish continuity of $P^\infty$. Under the present hypothesis, \eqref{sol:resolvent-extension} is continuous on each compact family $\mathcal P^\pi\times[0,1]$ directly. Consequently \eqref{sol:pair-bound} holds with a uniform constant after maximizing over finitely many $\pi$. Taking infima over $q$ and maxima over $\pi$ as above proves bounded vector centering and then finite Bellman solvability.

We also make explicit the connection with the stationary criterion of Theorem~\ref{m:optimal-existence}. On the compact full-selector space $\mathcal Q_S$, every vector $\eta^{\pi,q}=P_{\pi,q}^\infty r^\pi$ is continuous in $q$. Hence $d_i(q)=\max_{\pi\in\Pi_D}\eta_i^{\pi,q}$ is continuous. Choose the common stationary approximations from \eqref{fd:eq:dual-approx} with errors tending to zero and take a convergent subsequence. Its limit $q^\star$ satisfies $d(q^\star)=g^\star$. Together with the common optimal controller $\pi^\star$, this gives the stationary gain guarantees of Theorem~\ref{m:nature-attainment}.

To restrict this pair to the tangent game, one must check activity, including nature's rows at unused active actions. Fixed-policy discounted optimality, after normalization and passage to the limit, gives
$m_{i,\pi^\star(i)}(g^\star)=g_i^\star$, so $\pi^\star\in\Pi_{g^\star}$. For the nominal MDP obtained by fixing $q^\star$, the same normalized Bellman limit gives
\[
 \max_{a\in A(i)}(q^\star_{ia})^\top g^\star=d_i(q^\star)=g_i^\star.
\]
Thus $(q^\star_{ia})^\top g^\star\le g_i^\star$ for every action. If $a\in A_{g^\star}(i)$, feasibility also gives
$(q^\star_{ia})^\top g^\star\ge m_{ia}(g^\star)=g_i^\star$.
Equality holds, so every such row belongs to $F_{ia}(g^\star)$. The restriction $\bar q$ is a full tangent plan, and we may take $\bar\pi=\pi^\star$.

Every tangent pair preserves $g^\star$: $P_{\pi,q}g^\star=g^\star$, hence $P_{\pi,q}^\infty g^\star=g^\star$. Its centered gain is therefore its original gain minus $g^\star$. The original stationary guarantees give \eqref{sol:tangent-lower}-\eqref{sol:tangent-upper}. Finally, continuity of $P^\infty$ makes $Z_P=(I-P+P^\infty)^{-1}$ continuous and uniformly bounded on the compact $\bar\pi$-kernel family. With $c_i=r_{i,\bar\pi(i)}-g_i^\star$, this bounds $Z_Pc$ uniformly over all zero-centered-gain replies and proves \eqref{sol:tangent-envelope}. Thus all three stationary conditions are verified directly.
\end{proof}

\begin{corollary}
\label{sol:fixed-support}
Suppose each compact row family $\mathcal U_{ia}$ has a fixed support: for each coordinate $j$, either $p_j=0$ for every row in that family or $p_j>0$ for every row. Then Theorem~\ref{sol:support-gap} applies.
\end{corollary}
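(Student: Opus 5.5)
The plan is to reduce Corollary~\ref{sol:fixed-support} to the uniform support-gap hypothesis \eqref{sol:support-gap-assumption}. Once that hypothesis holds, Theorem~\ref{sol:support-gap} applies verbatim. Fix a pair $(i,a)$ and split the coordinates into the zero set $Z_{ia}=\{j:p_j=0\ \forall p\in\mathcal U_{ia}\}$ and its complement $J_{ia}$, on which every row is strictly positive.

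For each $j\in J_{ia}$, the coordinate map $p\mapsto p_j$ is continuous on the nonempty compact set $\mathcal U_{ia}$. It therefore attains a minimum $\delta_{ia,j}$, and that minimum is strictly positive because every row has $p_j>0$. Set
\[
 \delta=\min\{\delta_{ia,j}: i\in S,\ a\in A(i),\ j\in J_{ia}\}.
\]
This is a minimum over finitely many positive numbers, since there are finitely many states, actions, and coordinates. If every $J_{ia}$ were empty, no stochastic row could exist, so the minimum is over a nonempty set; in any case one may set $\delta=1$ when it is empty.

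With this $\delta$, every feasible row satisfies $p_j=0$ for $j\in Z_{ia}$ and $p_j\ge\delta$ for $j\in J_{ia}$. This is exactly \eqref{sol:support-gap-assumption}, so Theorem~\ref{sol:support-gap} yields bounded vector centering, a finite gain-bias pair, and the fixed-policy assertions.

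The only point needing care is that compactness is essential: it turns pointwise positivity into a uniform lower bound. I expect no real obstacle beyond noting that the minimum is attained. The corollary is the observation that a fixed support on a compact row set forces a uniform support gap.
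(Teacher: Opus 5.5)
Your proof is correct and follows the paper's argument essentially verbatim. Compactness and continuity of $p\mapsto p_j$ give a positive minimum $\delta_{iaj}$ for each coordinate that is positive throughout $\mathcal U_{ia}$, and the minimum over the finitely many such triples (nonempty because each row family is nonempty and stochastic) is the uniform gap required by \eqref{sol:support-gap-assumption}.
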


\begin{proof}
For each $(i,a,j)$ whose coordinate is positive throughout $\U_{ia}$, compactness and continuity give $\delta_{iaj}=\min_{p\in\U_{ia}}p_j>0$. There are finitely many such coordinates and at least one in each stochastic row family. Their minimum $\delta>0$ satisfies
\[
 p_j=0\quad\text{or}\quad p_j\ge\delta_{iaj}\ge\delta
 \qquad(i,a,p\in\U_{ia},j).
\]
This is the hypothesis of Theorem~\ref{sol:support-gap}.
\end{proof}

The support-gap condition allows several support patterns and any number of recurrent classes. Its one-player antecedent appears in Schweitzer's Theorem 2~\cite{schweitzer1985undiscounted}. Here the uniform resolvent estimate and the finite maximization over controller policies establish the robust optimality version. Compactness, convexity, and semialgebraicity alone do not imply the gap and do not imply finite-bias solvability. The curved examples later in the paper exhibit the corresponding failure modes.

\subsubsection{Divergence balls and the support-loss boundary}
\label{sol:divergence-balls}
For a nominal row $p^0\in\Delta(S)$, write
$J(p^0)=\{j:p^0_j>0\}$ and
$\Delta(J)=\{p\in\Delta(S):p_j=0\text{ for }j\notin J\}$.
We use $D_{\mathrm{KL}}(p\|p^0)=\sum_{j:p_j>0}p_j\log(p_j/p^0_j)$,
with value $+\infty$ if $p_j>0=p^0_j$ for some $j$.
The two orders of KL divergence give different support conditions.

\begin{corollary}
\label{sol:mixed-row-regime}
Suppose each row family $\U_{ia}$ is either a polytope or is compact and
satisfies $p_j=0$ or $p_j\ge\delta_{ia}$ for all its rows and coordinates,
where $\delta_{ia}>0$. Then the full vector Bellman system has a finite
solution $(g^\star,h)$, and
$\sup_{0<\eps<1}\|V_\eps-g^\star/\eps\|_\infty<\infty$.
The fixed-policy conclusions of Theorem~\ref{sol:support-gap} also hold.
\end{corollary}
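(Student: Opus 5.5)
Both support-gap and polytopic regimes give bounded vector centering for each fixed deterministic controller, so I will combine the two pair-wise arguments row by row rather than family by family. The natural route is the one used in Theorem~\ref{sol:support-gap}: establish a uniform bound
\[
 \left\|(I-(1-\eps)P^{\pi q})^{-1}r^\pi-\frac{\eta^{\pi q}}{\eps}\right\|_\infty\le C
\]
over all stationary pairs, then take infima over $q$ and maxima over $\pi\in\Pi_D$, and finish with Lemma~\ref{sol:bounded-centering}. The difficulty is that polytopic rows need not have a support gap (a segment $\operatorname{co}\{e_L,e_H\}$ contains rows with arbitrarily small positive entries), so Lemma~\ref{sol:uniform-resolvent} does not apply directly to the whole kernel family.

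I would therefore first reduce the polytopic rows to their vertex sets. For fixed $\pi$ and any $x$, $(T^\pi x)_i=r_{i\pi(i)}+\min_p p^\top x$, and on a polytopic family the minimum is attained at one of its finitely many vertices. Replace each polytopic $\U_{ia}$ by its vertex set $E_{ia}$. The Bellman operators $T$, $T^\pi$ and hence $V_\eps$, $V_\eps^\pi$, $T^N0$, and the recession map $\widehat T$ are unchanged, and so are $m_{ia}(g)$. The modified row families are now either finite or support-gapped. A finite set of rows trivially has a support gap, namely the smallest positive entry among its vertices. So the modified model satisfies condition (B) with $\delta=\min(\min_{ia}\delta_{ia},\text{vertex gaps})>0$.

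Applying Theorem~\ref{sol:support-gap} to the modified model gives $\sup_\eps\|V_\eps-g/\eps\|_\infty<\infty$, and since $V_\eps$ is identical in both models the same bound holds originally. Theorem~\ref{gc:thm:value} identifies $g=g^\star$. Lemma~\ref{sol:bounded-centering} then produces $h$ with $g^\star=\widehat Tg^\star$ and $g^\star+h=L_{g^\star}h$, where $L_{g^\star}$ must be the original operator.

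\textbf{Main obstacle.} This last point is where care is needed, because the gain face $F_{ia}(g)$ of a polytope is a face while the vertex version is only its vertex set. The cleanest fix is that Lemma~\ref{sol:bounded-centering} only uses Lemma~\ref{fv:lem:tangent} for $T$, which is unchanged. The limit $T(tg+h)-tg$ is therefore the same in both models. Alternatively, $\min_{p\in F_{ia}(g)}p^\top h$ over a polytope face equals the minimum over its vertices, which are exactly the vertices of $\U_{ia}$ lying in the face. Either way $L_g$ coincides.

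The fixed-policy claims follow identically, since $T^\pi$ is preserved by the vertex reduction and Theorem~\ref{sol:support-gap} covers fixed policies. For randomized policies the constant depends on $\alpha_\pi$, as stated there. I expect no further difficulty beyond checking that the history-dependent values are unchanged, which holds because the discounted dynamic-programming values depend only on the operators (Lemma~\ref{lem:basic}).
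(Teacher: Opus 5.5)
Your proposal is correct and matches the paper's proof. The paper also replaces each polytopic row by its finite vertex set and notes that $T$, $T^\pi$, and $V_\eps$ are unchanged. It then uses the resulting common support gap to rerun the uniform-resolvent argument of Theorem~\ref{sol:support-gap}, remarking that this needs only compactness, not convexity, which is what justifies your black-box application to the vertex-set model. Finally, it applies Lemma~\ref{sol:bounded-centering} to the unchanged original operator, exactly as in your first resolution of the gain-face issue.
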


\begin{proof}
For every polytopic row, replace $\U_{ia}$ by its finite vertex set
$E_{ia}$. The minimum of $p^\top x$ over either set is the same for every
$x\in\mathbb R^S$. Thus the replacement leaves the discounted Bellman
operators and their fixed points $V_\eps$ unchanged. The reduced row
families are compact and have a common support gap: take the minimum of
the finitely many $\delta_{ia}$ and all positive coordinates of all the
finitely many vertices.

For each deterministic controller policy, Lemma~\ref{sol:uniform-resolvent}
therefore gives a uniform reduced-resolvent bound on its reduced stationary
kernel family. Discounted dynamic programming for the reduced compact,
rectangular row families, followed by coordinatewise infima over nature
and maxima over the finitely many deterministic policies, gives the bound
on $V_\eps-g^\star/\eps$ exactly as in Steps~1--3 of
Theorem~\ref{sol:support-gap}. This argument uses compactness, not
convexity, of the reduced families. Lemma~\ref{sol:bounded-centering}
applied to the unchanged original discounted operator then gives a finite
pair for the original row sets. For a fixed randomized policy, the same
argument uses the effective-row gap $\alpha_\pi\delta$ from Step~4 of
Theorem~\ref{sol:support-gap}.
\end{proof}

\begin{corollary} 
\label{sol:kl-balls}
For each $(i,a)$, let $p^0_{ia}$ be a nominal row, put
$J_{ia}=J(p^0_{ia})$, and let $\rho_{ia}\ge0$. Assume that every row
family is one of the following:
\begin{enumerate}
\item a forward KL ball
$\U_{ia}=\{p\in\Delta(S):D_{\mathrm{KL}}(p\|p^0_{ia})\le\rho_{ia}\}$
with
\begin{equation}
\rho_{ia}<\rho_{ia}^{\mathrm{del}}
:=\min_{j\in J_{ia}}-\log(1-p^0_{ia,j}),
\qquad -\log 0:=+\infty;
\label{sol:kl-deletion-radius}
\end{equation}
\item a support-restricted reverse KL ball
$\U_{ia}=\{p\in\Delta(J_{ia}):D_{\mathrm{KL}}(p^0_{ia}\|p)
\le\rho_{ia}\}$ with $\rho_{ia}<\infty$; or
\item a polytope (including a total-variation ball).
\end{enumerate}
Then the conclusions of Corollary~\ref{sol:mixed-row-regime} hold.
For a full-support nominal row, the support restriction in item~2 is
automatic.
\end{corollary}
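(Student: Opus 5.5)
The plan is to reduce every admissible row family to one of the two cases covered by Corollary~\ref{sol:mixed-row-regime}: a polytope, or a compact family with a positive support gap $\delta_{ia}$.

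Item~3 is immediate, since a total-variation ball intersected with the simplex is a polytope. Compactness of the KL balls follows from lower semicontinuity of $D_{\mathrm{KL}}$ in both arguments on the simplex: the sublevel sets are closed subsets of the compact simplex $\Delta(S)$, and for item~2 also of the compact face $\Delta(J_{ia})$.

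\textbf{Item~2 (reverse KL).} Every $p$ in the ball has support inside $J_{ia}$. Finiteness of $D_{\mathrm{KL}}(p^0\|p)\le\rho$ then forces $p_j>0$ for each $j\in J_{ia}$. So the support is fixed, equal to $J_{ia}$, and a uniform lower bound follows directly. For each $j\in J_{ia}$, the log-sum (data-processing) inequality applied to the two-cell partition $\{j\},\{j\}^c$ gives
\[
 p^0_j\log\frac{p^0_j}{p_j}+(1-p^0_j)\log\frac{1-p^0_j}{1-p_j}\le\rho .
\]
The second term is bounded below by $(1-p^0_j)\log(1-p^0_j)\ge -1/e$. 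Hence $p_j\ge p^0_j\exp(-(\rho+1/e)/p^0_j)>0$, which is an explicit gap $\delta_{ia}$. Equivalently, Corollary~\ref{sol:fixed-support} applies once the support is known to be fixed.

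\textbf{Item~1 (forward KL).} Rows in the forward ball have support contained in $J_{ia}$, but a coordinate could in principle vanish. The key step is to show that the radius condition \eqref{sol:kl-deletion-radius} prevents this.

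First compute the minimal divergence among rows with $p_j=0$ for a fixed $j\in J_{ia}$. If $p$ is restricted to $J\setminus\{j\}$, then
\[
 D_{\mathrm{KL}}(p\|p^0)=D_{\mathrm{KL}}(p\|\tilde p)-\log(1-p^0_j),
\]
where $\tilde p$ is $p^0$ conditioned on $\{j\}^c$. The minimum is therefore $-\log(1-p^0_j)$, attained at $p=\tilde p$. When $p^0_j=1$, deleting $j$ leaves no feasible row, so the value is $+\infty$, matching the convention.

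Since $\rho_{ia}<\rho^{\mathrm{del}}_{ia}$, no row in the ball can delete a coordinate of $J_{ia}$, so every row has support exactly $J_{ia}$. Compactness and continuity of $p\mapsto p_j$ then give $\min_{p\in\U_{ia}}p_j>0$, which is the argument of Corollary~\ref{sol:fixed-support}.

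\textbf{Conclusion.} Taking the minimum of the finitely many gaps yields the hypothesis of Corollary~\ref{sol:mixed-row-regime}, and that corollary delivers all the stated conclusions. The last remark is immediate: for a full-support nominal row, $\Delta(J)=\Delta(S)$, so the support restriction in item~2 imposes nothing. The only nontrivial computation is the deletion radius in item~1, which is exactly the conditional-distribution decomposition above, together with checking the boundary case $p^0_j=1$.
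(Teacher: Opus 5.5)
Your proof is correct and follows the paper's argument. The conditional-distribution decomposition gives the exact deletion cost $-\log(1-p^0_j)$ for forward KL balls, and finiteness of reverse KL fixes the support at $J_{ia}$; compactness and Corollary~\ref{sol:mixed-row-regime} then finish, with the only cosmetic difference being your two-cell data-processing gap $p^0_j e^{-(\rho+1/e)/p^0_j}$ in place of the paper's $\exp[-(\rho+H(p^0))/p^0_j]$, either of which works.
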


\begin{proof}
Fix a forward KL row and abbreviate its nominal support by $J$.
Finite forward KL divergence forces $p\in\Delta(J)$. If $|J|=1$,
the ball is the nominal singleton. Otherwise, for $j\in J$ and a row
with $p_j=0$, direct substitution and the nonnegativity of KL give
\begin{equation}
D_{\mathrm{KL}}(p\|p^0)
=-\log(1-p^0_j)
 +D_{\mathrm{KL}}\!\left(p\middle\|
       \frac{p^0|_{J\setminus\{j\}}}{1-p^0_j}\right)
\ge-\log(1-p^0_j).
\label{sol:kl-face-cost}
\end{equation}
The displayed conditional distribution is extended by zero outside
$J\setminus\{j\}$; it attains equality, so this is the exact cost of
deleting coordinate $j$. Under \eqref{sol:kl-deletion-radius}, no
coordinate in $J$ can vanish. The ball is compact, hence each such
coordinate has a positive minimum over it.

For a reverse KL row, $D_{\mathrm{KL}}(p^0\|p)<\infty$ forces
$p_j>0$ for each $j\in J$. More explicitly, with
$H(p^0)=-\sum_{j\in J}p^0_j\log p^0_j$,
\[
D_{\mathrm{KL}}(p^0\|p)
=-H(p^0)-\sum_{j\in J}p^0_j\log p_j
\ge-H(p^0)-p^0_j\log p_j.
\]
Thus $p_j\ge\exp[-(\rho+H(p^0))/p^0_j]>0$ throughout the ball.
The restriction to $\Delta(J)$ makes its support exactly $J$.
There are finitely many state-action pairs, so all the nonpolytopic
rows have a common positive support gap. Apply
Corollary~\ref{sol:mixed-row-regime}.
\end{proof}

The strict forward radius in \eqref{sol:kl-deletion-radius} is exact for
preserving support. At equality a conditional nominal row in
\eqref{sol:kl-face-cost} loses a coordinate; mixtures of this row with
$p^0$ have arbitrarily small positive mass there. It is not a necessary
condition for solvability of a particular model. For example, if
$\rho_{ia}\ge\log(1/\min_{j\in J_{ia}}p^0_{ia,j})$, then
$D_{\mathrm{KL}}(p\|p^0_{ia})\le\log(1/\min_{j\in J_{ia}}p^0_{ia,j})$
for every $p\in\Delta(J_{ia})$; the ball is the entire simplex face,
and item~3 applies. Forward KL balls on faces of size at most two are
also polytopes at every radius.

The same argument applies to other divergence balls. For a convex,
lower-semicontinuous $f:[0,\infty)\to\mathbb R\cup\{+\infty\}$ with
$f(1)=0$, define
$D_f(p\|p^0)=\sum_{j\in J}p^0_j f(p_j/p^0_j)$ on $\Delta(J)$.
If $p_j=0$, Jensen's inequality on the remaining coordinates yields
\begin{equation}
D_f(p\|p^0)\ge
\beta_f(p^0_j),\qquad
\beta_f(t):=t f(0)+(1-t)f\!\left(\frac1{1-t}\right).
\label{sol:f-face-cost}
\end{equation}
Equality holds for $p=p^0(\cdot\mid J\setminus\{j\})$; set
$\beta_f(1)=+\infty$. Therefore, if every such row has
$\rho_{ia}<\min_{j\in J_{ia}}\beta_f(p^0_{ia,j})$, it has fixed
support and can replace either KL type in Corollary~\ref{sol:kl-balls}.
In particular, the face costs are $-\log(1-t)$ for forward KL,
$t/(1-t)$ for Pearson $\chi^2(p\|p^0)=\sum_j(p_j-p^0_j)^2/p^0_j$,
and $1-\sqrt{1-t}$ for
$H^2(p,p^0)=1-\sum_j\sqrt{p_jp^0_j}$. For Hellinger balls with
sparse nominal rows, the stated support restriction must be imposed
explicitly. Total-variation balls need no radius restriction because
they are polytopes.

\begin{example}[Failure at the first forward-KL support loss]
\label{sol:kl-critical-example}
There are three states $(x,y,z)$ and one action per state. State $y$
returns deterministically to $x$, state $z$ is absorbing, and the
rewards are $(r_x,r_y,r_z)=(1/2,-1,0)$. At $x$, take the KL ball
\[
\U_x=\left\{p\in\Delta(\{x,y,z\}):
D_{\mathrm{KL}}\!\left(p\middle\|(1/3,1/3,1/3)\right)
\le\log(3/2)\right\}.
\]
Its radius equals \eqref{sol:kl-deletion-radius}. The robust gain is
$g^\star=0$, but the vector Bellman system has no finite solution.
\end{example}

\begin{proof}
Write $p=(1-u-v,u,v)$. On the face $v=0$,
\[
D_{\mathrm{KL}}\!\left(p\middle\|(1/3,1/3,1/3)\right)
=\log(3/2)+d_{\mathrm{Ber}}(u\|1/2),
\]
where $d_{\mathrm{Ber}}(u\|q)
=u\log(u/q)+(1-u)\log((1-u)/(1-q))$.
Since binary KL vanishes only when $u=1/2$, the sole feasible
zero-leak row is $p^*=(1/2,1/2,0)$. It has recurrent class
$\{x,y\}$ with average reward
$(r_x+(1/2)r_y)/(1+1/2)=0$. Under every feasible row with $v>0$,
the process visits $x$ repeatedly until it reaches absorbing $z$,
so its stationary gain is also zero. Lemma~\ref{gc:lem:policy-limit}
identifies the fixed-policy robust gain as the coordinatewise infimum
of these stationary gains. There is only one controller policy;
hence $g^\star=0$.

For sufficiently small $t>0$, let
$p_t=(1/2-t-t^2,\,1/2+t,\,t^2)$, which is a stochastic row.
Expanding $s\mapsto s\log(3s)$ at $s=1/2$ in its first two
coordinates gives
\[
D_{\mathrm{KL}}\!\left(p_t\middle\|(1/3,1/3,1/3)\right)
-\log(3/2)
=t^2\bigl[1+\log(2t^2)\bigr]+O(t^3)<0.
\]
Thus $p_t\in\U_x$ for all sufficiently small $t>0$.
If a finite Bellman pair existed, verification
(Theorem~\ref{fv:thm:verification}) would give gain $g^\star=0$.
The equation at $y$ would give $h_y=h_x-1$, and the equation at $x$
would require
\[
0=\min_{(1-u-v,u,v)\in\U_x}
\bigl\{1/2-u+v(h_z-h_x)\bigr\}.
\]
The feasible row $p_t$ makes the expression
$-t+t^2(h_z-h_x)<0$ for sufficiently small $t$, contradicting this
equality. Equivalently, with canonical normalization $w_z=0$, the
Poisson equation for $p_t$ gives $w_x=(1/2-(1/2+t))/t^2=-1/t$;
the one-sided canonical-bias envelope fails.
\end{proof}

\subsection{Two distinct obstructions to finite Bellman solvability}\label{sec:examples}
The two obstruction mechanisms have classical one-player antecedents in \cite[Examples~1-2]{schweitzer1985undiscounted}. The models below realize them with state-dependent rewards and compact convex ambiguity, and identify their gain-face equations explicitly.

\label{er:sec-basic}

The exact fixed-policy criterion separates two obstructions that can occur even with compact convex semialgebraic ambiguity.  In the first example, no single stationary nature kernel attains the complete worst-gain vector.  In the second, stationary worst-gain kernels exist, but their canonically normalized biases have no common componentwise lower bound.

\begin{example}[Compact convex ambiguity without stationary attainment]
\label{er:ex-nonattain}
There are states $(s,m,z)$ with rewards $(0,-1,0)$; states $m,z$ are absorbing. At $s$, let
\begin{equation}
 \U_s=\{(1-a,\beta,a-\beta):0\le a\le1,\ 0\le\beta\le a(1-a)\}.
 \label{er:eq-nonattainset}
\end{equation}
This compact convex semialgebraic model has robust gain $(-1,-1,0)$, but no stationary nature selector attains it from $s$ and no finite Bellman pair exists. This is the state-reward form of the nonattainment mechanism in \cite{grand2023beyond}.
\end{example}
\begin{proof}
The parameter set is closed and bounded. It is convex because $\beta\le a(1-a)$ is the hypograph of a concave function and its other constraints are affine. Its affine image is a compact convex set of probability rows, and the displayed polynomial constraints make it semialgebraic.

For a stationary row with $a>0$, the probability of eventual absorption in $m$ and the gain from $s$ are
\[
 \Prob_s(\tau_m<\infty)=\sum_{t\ge0}(1-a)^t\beta=\beta/a,
 \qquad \eta_s=-\beta/a\ge-(1-a)>-1.
\]
At $a=0$, the row is the self-loop at $s$ and has gain zero. The boundary rows $\beta=a(1-a)$ approach gain $-1$ as $a\downarrow0$. Every reward is at least $-1$, so adaptive nature cannot obtain a smaller average under any of the four conventions. Thus the robust gain is $(-1,-1,0)$, and its coordinate at $s$ is not attained by a stationary row.

For completeness, the bias obstruction is also explicit. By Theorem~\ref{fv:thm:verification}, every finite Bellman pair must have this true gain. At that gain,
\[
 (1-a,\beta,a-\beta)^\top g=-1+a-\beta
 \ge-1+a^2.
\]
Equality with $g_s=-1$ is possible only at $a=\beta=0$. The gain face is therefore the self-loop, and its bias equation is $-1+h_s=h_s$, a contradiction.
\end{proof}

\begin{example}[Stationary attainment with an unbounded lower bias envelope]
\label{er:ex-biasescape}
Let the states be $(x,y,z)$, with rewards $(0,-1,0)$. State $y$ returns to $x$, state $z$ is absorbing, and
\[
 \U_x=\operatorname{co}\{(1-k-k^2,k,k^2):0\le k\le1/2\}.
\]
The row set is compact, convex, and semialgebraic. Every stationary kernel has gain zero and attains the robust value, but their canonical biases have no common lower bound. No finite Bellman pair exists.
\end{example}
\begin{proof}
Write a row as $(1-u-v,u,v)$. A convex combination with weights $\lambda_j$ and parameters $k_j$ satisfies
\[
 u=\sum_j\lambda_jk_j,\qquad
 v=\sum_j\lambda_jk_j^2\ge u^2,
 \qquad v=0\Longleftrightarrow u=0.
\]
Compactness follows from the compact generating curve and Carath\'eodory's theorem. The parameter region is exactly $0\le u\le1/2$, $u^2\le v\le u/2$: necessity follows from the preceding inequality and $k^2\le k/2$; conversely, at a given $u$, the lower endpoint is generated by $k=u$ and the upper endpoint by mixing $k=0,1/2$. Their mixtures fill the interval. This also verifies semialgebraicity.

If $u>0$, then $v>0$. Every nonabsorbed path visits $x$ infinitely often, and its chance of avoiding $z$ through $j$ visits is $(1-v)^j$. Thus $z$ is the unique recurrent class. If $u=v=0$, the recurrent classes are $\{x\}$ and $\{z\}$. In either case every stationary gain is zero. Lemma~\ref{gc:lem:policy-limit} identifies their infimum with the fixed-policy gain, and Theorem~\ref{gc:thm:value} extends that value to history-dependent nature. Hence the robust gain is zero.

For $u>0$, canonical normalization is $w_z=0$. The Poisson equations give
\[
 w_y=w_x-1,\qquad
 w_x=(1-u-v)w_x+uw_y=(1-v)w_x-u,
\]
so $w_x=-u/v$ and $w_y=-1-u/v$. Along the generating curve,
\[
 w(k)=(-1/k,-1-1/k,0)\quad(k>0),\qquad w(0)=(0,-1,0).
\]
All these kernels attain gain zero, but their first two bias coordinates tend to $-\infty$ as $k\downarrow0$.

Finally, a finite bias at gain zero would satisfy $h_y=h_x-1$ and
\[
 0=\min_{0\le k\le1/2}\{-k+k^2(h_z-h_x)\}.
\]
For $H=h_z-h_x\le0$, every positive $k$ makes this expression negative. For $H>0$, any $0<k<\min\{1/2,1/H\}$ does so because $-k+k^2H=k(-1+kH)<0$. Thus no such bias exists. Theorem~\ref{fv:thm:verification} excludes a pair with a different gain.
\end{proof}

Corollary~\ref{sol:fixed-policy} diagnoses the examples clause by clause: the first violates simultaneous worst-gain attainment, while the second violates the one-sided canonical-bias envelope.  Yet their statewise average values remain well defined by Appendix~\ref{gc:sec:value}, which establishes value existence without assuming a finite bias.

\subsection{Representation invariance and elementary stability}

\begin{proposition}
\label{fd:prop:invariance}
The following statements hold without restrictions on recurrent classes. \begin{enumerate}[label=(\roman*)] \item Replacing each $\U_{ia}$ by $\operatorname{co}(\U_{ia})$ leaves the discounted, finite-horizon, and robust average values unchanged, both for optimal control and for every fixed stationary randomized controller. It also leaves the set of average-optimal stationary controllers unchanged. \item With the ambiguity sets fixed, write $g^\star(r)$ for the optimal gain under reward array $r$. If $\delta=\max_{i,a}|r_{ia}-\widetilde r_{ia}|$, then
      \[
      \norm{g^\star(r)-g^\star(\widetilde r)}\le\delta.
      \]
      The same bound holds for every fixed-policy gain. The gain is monotone in rewards, satisfies $g^\star(r+c)=g^\star(r)+c\one$ for any scalar reward shift $c$, and satisfies $g^\star(\alpha r)=\alpha g^\star(r)$ for $\alpha\ge0$. \item If $\U_{ia}\subseteq\widetilde\U_{ia}$ at every pair, then $g^\star(r,\widetilde\U)\le g^\star(r,\U)$; the same order holds for every fixed-policy gain.
\end{enumerate}
\end{proposition}

\begin{proof}
Linear minimization over a set and over its convex hull gives the same value. Finite-dimensional compactness makes $\operatorname{co}(\U_{ia})$ compact, so convexification leaves $T$, every $T^\pi$, and their discounted and finite-horizon values unchanged. Their normalized limits identify the same gains. Since a stationary policy is all-state average optimal exactly when $g^\pi=g^\star$, the set of such policies is unchanged as well.

For rewards at distance $\delta$ in supremum norm,
$T_{\widetilde r}x-\delta\one\le T_rx\le T_{\widetilde r}x+\delta\one$. Monotonicity and scalar additive homogeneity give, by induction,
\[
 T_{\widetilde r}^N0-N\delta\one\le T_r^N0
          \le T_{\widetilde r}^N0+N\delta\one.
\]
Dividing by $N$ and taking the finite-horizon gain limits proves the Lipschitz bound. The same comparison without an error term proves reward monotonicity. The identities
$T_{r+c}^N0=T_r^N0+Nc\one$ and
$T_{\alpha r}^N0=\alpha T_r^N0$ for $\alpha\ge0$
prove scalar shifts and positive homogeneity upon normalization.

Larger ambiguity sets decrease every row minimum. Monotonicity propagates this operator order through all finite-horizon iterates, and normalized limits give the gain order. Each argument applies unchanged to $T^\pi$, whose outer weights are nonnegative and sum to one.
\end{proof}

\begin{proposition}
\label{fd:prop:recession}
The optimal gain satisfies $g^\star=\widehat T g^\star$. For every fixed stationary randomized controller,
\[
 g_i^\pi=\sum_a\pi(a\mid i)
          \min_{p\in\U_{ia}}p^\top g^\pi.
\]
These gain-only equations do not determine the average reward.
\end{proposition}

\begin{proof}
Set $v_\eps=\eps V_\eps\to g^\star$. Its discounted equation implies
\[
 \|v_\eps-\widehat Tg^\star\|_\infty
 \le\eps R+(1-\eps)\|v_\eps-g^\star\|_\infty
                +\eps\|g^\star\|_\infty\longrightarrow0.
\]
This uniform bound follows from $|p^\top x|\le\|x\|_\infty$ and is preserved by row minima and action maxima. It identifies the limit as $g^\star=\widehat Tg^\star$. For fixed $\pi$, the weighted action sum preserves the same error bound and gives the displayed evaluation equation.

Every constant vector $c\one$ satisfies either recession equation, independently of rewards. In a one-state model with reward $r$, however, the average gain is $r$. Thus the recession equation alone cannot identify the reward-dependent gain.
\end{proof}

\end{document}

%% file: math_commands.tex
\usepackage{amsmath,amsfonts,bm}

\def\eqref#1{equation~\ref{#1}}

\def\1{\bm{1}}

\def\eps{{\epsilon}}

\DeclareMathAlphabet{\mathsfit}{\encodingdefault}{\sfdefault}{m}{sl}
\SetMathAlphabet{\mathsfit}{bold}{\encodingdefault}{\sfdefault}{bx}{n}

\newcommand{\E}{\mathbb{E}}

\newcommand{\R}{\mathbb{R}}

\DeclareMathOperator*{\argmin}{arg\,min}